\documentclass[11pt]{article}
\usepackage{graphicx}
\usepackage[margin=2cm]{geometry}
\usepackage{amsmath,amssymb,amsthm}
\usepackage{mathtools} 
\usepackage{mleftright}  
\usepackage{bm,bbm} 
\usepackage{thmtools}
\usepackage{thm-restate}
\usepackage{siunitx}
\usepackage{subcaption}
\usepackage{hyperref}
\usepackage{natbib}
\usepackage{url}

\usepackage{booktabs}
\usepackage{multirow}
\usepackage[table]{xcolor}
\usepackage{makecell}

\definecolor{cbaseline}{RGB}{198,239,206}   
\definecolor{crisk}{RGB}{235,140,150}        
\definecolor{ccost}{RGB}{255,220,110}        
\definecolor{ctradeoff}{RGB}{230,220,255}    
\definecolor{criskdown}{RGB}{252,224,227}    
\definecolor{ccostdown}{RGB}{255,245,200}    
\definecolor{ourblue}{RGB}{210,230,255}
\definecolor{cneutral}{RGB}{200,200,200}    
\definecolor{ourgrey}{RGB}{211,211,211}
\let\classAND\AND
\let\AND\relax
\usepackage{algorithmic}

\let\AND\classAND
\AtBeginEnvironment{algorithmic}{\let\AND\algoAND}

\let\classOR\OR
\let\OR\relax

\let\OR\classOR
\AtBeginEnvironment{algorithmic}{\let\OR\algoOR}
\usepackage{algorithm}

\usepackage{soul}
\usepackage[dvipsnames]{xcolor}

\usepackage{graphicx}
\graphicspath{{Figures/}}  
\usepackage{epstopdf}
\usepackage{pgfplots}
\pgfplotsset{compat=1.18} 

\usepackage{float}
\usepackage{wrapfig}
\usepackage{subcaption}

\newcommand{\cX}{{\mathcal X}}
\newcommand{\cY}{{\mathcal Y}}

\def\mc{\mathcal}

\def\mbf{\mathbf}

\DeclarePairedDelimiter\parens{\lparen}{\rparen}  
\DeclarePairedDelimiter\abs{\lvert}{\rvert}
\DeclarePairedDelimiter\babs{\bigg\lvert}{\bigg\rvert}
\DeclarePairedDelimiter\norm{\lVert}{\rVert}

\DeclarePairedDelimiter\bracks{\lbrack}{\rbrack}

\newcommand{\R}{\mathbb{R}}

\newcommand{\Indic}[1]{\mathbbm{1}\parens*{#1}} 

\DeclareMathOperator{\rank}{\mathrm{rank}}

\DeclareFontFamily{U}{matha}{\hyphenchar\font45}
\DeclareFontShape{U}{matha}{m}{n}{
      <5> <6> <7> <8> <9> <10> gen * matha
      <10.95> matha10 <12> <14.4> <17.28> <20.74> <24.88> matha12
      }{}
\DeclareSymbolFont{matha}{U}{matha}{m}{n}
\DeclareFontSubstitution{U}{matha}{m}{n}
\DeclareFontFamily{U}{mathx}{\hyphenchar\font45}
\DeclareFontShape{U}{mathx}{m}{n}{
      <5> <6> <7> <8> <9> <10>
      <10.95> <12> <14.4> <17.28> <20.74> <24.88>
      mathx10
      }{}
\DeclareSymbolFont{mathx}{U}{mathx}{m}{n}
\DeclareFontSubstitution{U}{mathx}{m}{n}
\DeclareMathDelimiter{\vvvert}{0}{matha}{"7E}{mathx}{"17}
\DeclarePairedDelimiterX{\matnorm}[1]
  {\vvvert}
  {\vvvert}
  {\ifblank{#1}{\:\cdot\:}{#1}}

\NewDocumentCommand{\expect}{ e{_} s o >{\SplitArgument{1}{|}}m }{%
  \operatorname{\mathbb{E}}
  \IfValueT{#1}{{\!}_{#1}}
  \IfBooleanTF{#2}{
    \expectarg*{\expectvar#4}%
  }{
    \IfNoValueTF{#3}{
      \expectarg{\expectvar#4}%
    }{
      \expectarg[#3]{\expectvar#4}%
    }%
  }%
}
\NewDocumentCommand{\expectvar}{mm}{%
  #1\IfValueT{#2}{\nonscript\;\delimsize\vert\nonscript\;#2}%
}
\DeclarePairedDelimiterX{\expectarg}[1]{[}{]}{#1}
\newcommand{\E}{\ensuremath{\mathbb{E}}}
\NewDocumentCommand{\prob}{ e{_} s o >{\SplitArgument{1}{|}}m }{%
  \operatorname{\mathbb{P}}
  \IfValueT{#1}{{\!}_{#1}}
  \IfBooleanTF{#2}{
    \probarg*{\probvar#4}%
  }{
    \IfNoValueTF{#3}{
      \probarg{\probvar#4}%
    }{
      \probarg[#3]{\probvar#4}%
    }%
  }%
}
\NewDocumentCommand{\probvar}{mm}{%
  #1\IfValueT{#2}{\nonscript\;\delimsize\vert\nonscript\;#2}%
}
\DeclarePairedDelimiterX{\probarg}[1]{(}{)}{#1}
\renewcommand{\P}{\ensuremath{\mathbb{P}}}
\newcommand{\condon}{\,\ifnum\currentgrouptype=16 \middle\fi|\,} 

\newcommand{\iid}{\overset{\mathrm{i.i.d.}}{\sim}}

\newcommand{\Unif}{\mathsf{Unif}}

\newcommand{\largeset}{warehouse}
\newcommand{\smallset}{retail}

\newcommand{\datadist}{\pi}
\newcommand{\traindist}{\tau}
\newcommand{\param}{\theta}
\newcommand{\rparam}{\boldsymbol{\param}}
\newcommand{\hypclass}{\mc{F}}

\newcommand{\rtestlabel}{\mbf{y}}
\newcommand{\Ntrain}{N}
\newcommand{\dtrain}{\mc{D}_{\mathrm{train}}}
\newcommand{\Ntest}{N'}
\newcommand{\dtest}{\mc{D}_{\mathrm{test}}}
\newcommand{\rtest}{\mbf{x}}

\newcommand{\rf}{\mbf{f}}

\newcommand{\Nretail}{M}
\newcommand{\Nlib}{T}
\newcommand{\nogaussianpts}{S}
\newcommand{\nogaussianptslib}{P}
\newcommand{\nsampretail}{Q}
\newcommand{\nretailind}{k}

\newcommand{\localtest}{v}

\newcommand{\rlocaltest}{\mbf{v}}
\newcommand{\localtestmat}{{V}}
\newcommand{\rlocaltestmat}{\mbf{V}}

\newcommand{\gapmargin}{\mathrm{g}_{\mathrm{marg}}}
\newcommand{\gaplmargin}{\mathrm{g}_{\mathrm{lmarg}}}
\newcommand{\gapvar}{\mathrm{g}_{\mathrm{var}}}
\newcommand{\gaplvar}{\mathrm{g}_{\mathrm{lvar}}}
\newcommand{\rgapvar}{\mathrm{\mbf{g}}_{\mathrm{var}}}
\newcommand{\gap}{\mathrm{g}_{\mathrm{measure}}}
\newcommand{\gapl}{\mathrm{g}_{\mathrm{lmeasure}}}
\newcommand{\gapcondr}{\mathrm{g}_{\mathrm{A}}}
\newcommand{\gaplcondr}{\mathrm{g}_{\mathrm{C}}}
\newcommand{\gaprandr}{\mathrm{g}_{\mathrm{B}}}
\newcommand{\gaplrandr}{\mathrm{g}_{\mathrm{D}}}

\newcommand{\rashexpctparam}{\epsilon_2}
\newcommand{\rashlexpctparam}{\epsilon_5}
\newcommand{\slackparam}{\epsilon_{3}}
\newcommand{\slackgap}{\epsilon_{4}}
\newcommand{\slackmeasure}{\epsilon_{1}}
\newcommand{\testdiff}{\delta}
\newcommand{\rtestdiff}{\bm{\testdiff}}
\newcommand{\allavg}{\mbf{\Delta}}
\newcommand{\expallavg}{\overline{\Delta}}
\newcommand{\condavg}{h_{\mathrm{A}}}
\newcommand{\lcondavg}{h_{\mathrm{B}}}
\newcommand{\explallavg}{\overline{\Gamma}}
\newcommand{\lallavg}{\mbf{\Gamma}}

\newcommand{\baseline}{f_{\mathrm{base}}}
\newcommand{\modelerrorlabel}{e}

\newcommand{\riskassess}{r}

\DeclareMathOperator{\ratio}{\mathrm{ratio}}
\DeclareMathOperator{\portion}{\mathrm{frac}}
\DeclareMathOperator{\median}{\mathrm{median}}
\newcommand{\auditcostratio}{\ratio_{\mathrm{cost}}}
\newcommand{\auditriskratio}{\ratio_{\mathrm{risk}}}
\newcommand{\auditdivfrac}{\portion_{\mathrm{div}}}
\newcommand{\auditriskfrac}{\portion_{\mathrm{risk}}}
\newcommand{\dtestconf}{\mc{D}^{\mathrm{cons}}_{\mathrm{test}}}
\newcommand{\dtestlocalconf}{\mc{D}^{\mathrm{localcons}}_{\mathrm{test}}}

\newcommand{\randomalgo}{\mc{T}}
\DeclareMathOperator{\NN}{\mathrm{NN}}

\newcommand{\rashomonparam}{\epsilon}

\newcommand{\consistency}{\mathrm{c}_{\sigma}}
\DeclareMathOperator{\margin}{\mathrm{mg}}
\DeclareMathOperator{\lmg}{\mathrm{lmg}}
\DeclareMathOperator{\lmc}{\mathrm{lmc}}
\newcommand{\localmargin}{\lmg_{\sigma}}
\newcommand{\localmarginconsistency}{\lmc_{\sigma}}
\newcommand{\localconsistency}{\lmc_{\sigma}}
\newcommand{\pairwisedisag}{\mathrm{pd}}
\newcommand{\disagens}{\mathrm{d}}
\newcommand{\localdisagens}{\mathrm{ld}_{\sigma}}
\newcommand{\predvar}{\mathrm{pv}}
\newcommand{\predrange}{\mathrm{pr}}

\newcommand{\localvar}{\mathrm{lv}_{\sigma}}
\newcommand{\hammancons}{\mathrm{hc}_{\sigma}}

\newcommand{\perfmet}{\hat{\operatorname{err}}} 

\newcommand{\normtwo}[1]{\norm*{#1}_{2}}

\newtheorem{lemma}{Lemma}

\newtheorem{definition}{Definition}
\newtheorem{assumption}{Assumption}

\newcommand{\ltwohsphere}{\mathbb{S}^{(d)}_{2}}

\newcommand{\emprashomonset}{\mc{R}}
\newcommand{\retailset}{{R}}
\newcommand{\rretailset}{\mbf{\retailset}}

\newcommand{\ensmretset}{\bar{{f}}}

\newcommand{\loss}{\ell}

\usepackage[color=red!30,colorinlistoftodos]{todonotes}

\usepackage{comment}

\title{Measuring consistency via ensemble margin and local prediction variability: Auditing decision systems in the presence of predictive multiplicity}
\author{Sinjini Banerjee$^{1}$, Tim Marrinan$^{2}$, Anand D. Sarwate$^{1}$ \\ $^{1}$Department of Electrical and Computer Engineering, Rutgers University \\ $^{2}$Pacific Northwest National Lab \\ \texttt{sb1977@rutgers.edu}, \texttt{timothy.marrinan@pnnl.gov}, \texttt{anand.sarwate@rutgers.edu}}

\begin{document}

\maketitle

\begin{abstract}
The Rashomon effect is a machine learning phenomenon where equally accurate models produce different predictions for the same inputs (predictive multiplicity). Existing work primarily focuses on multiplicity within individual models, but in more complex decision systems, the impact of the Rashomon effect is less well understood.  In this work, we study multiplicity from the perspective of auditing incorrect ensemble predictions, where the decision to divert an instance for human review is based on a consistency criterion that combines the ensemble margin with a measure of local prediction variability for each constituent model. With mild assumptions about stability and smoothness, we show that the consistency scores of finite ensembles converge to the corresponding consistency score of the expected model from the Rashomon set as the ensemble size and the number of samples used to measure local prediction variability increase. To demonstrate the efficacy of the proposed criterion, we evaluate the framework with respect to transformer models applied to natural language understanding tasks and parameter-efficient fine-tuning of large language models used for tabular data classification tasks. Our experiments show that ensembling models from the Rashomon set substantially reduces the risk of incorrect predictions going unchecked compared with auditing a single model, while incurring only a moderate increase in the number of diversions. Moreover, the auditing behavior of the full Rashomon set can be closely approximated by finite ensembles of relatively modest size, with the risk approaching zero for some datasets. We further demonstrate that the proposed measure exhibits stronger agreement with established predictive multiplicity metrics than existing consistency measures, providing a more reliable way to capture multiplicity in the Rashomon set.
\end{abstract}

\section{Introduction} \label{sec:intro}

Centralized machine learning marketplaces have become a critical component of the modern AI ecosystem, providing practitioners with access to a wide range of pre-trained models for diverse applications. Platforms such as Hugging Face Hub~\citep{wolf2019huggingface}, AWS Marketplace~\citep{awsmarketplace}, and NVIDIA NGC~\citep{nvidia_ngc_2026} host thousands of models, allowing users to rapidly deploy state-of-the-art systems without training models from scratch. An underlying assumption in model selection from these marketplaces is often that selecting the ``best'' model for a task is straightforward, where ``best'' is defined by performance metrics such as accuracy or F1-score. This assumption overlooks a fundamental property, known as the \emph{Rashomon effect}, which describes the existence of multiple equally well-performing models achieving different predictions on a held-out test set~\citep{breiman2001statistical}. 

In over-parameterized networks, the ``black box" nature of these models makes it difficult to examine the flow of information through their billions of parameters and interpret their decisions. This lack of transparency can lead to low trust and confidence in these systems, creating challenges for their deployment in safety-critical domains such as healthcare \citep{peterson2019machine,he2019practical}, and autonomous vehicles \citep{perez2024artificial}. From a data-centric viewpoint, the Rashomon effect leads to a phenomenon known as \emph{predictive multiplicity}, where competing models can disagree on individual test instances~\citep{marx2020predictive}. In high-stakes domains such as healthcare, finance, and criminal justice, this instability can lead to inconsistent diagnoses, credit approvals, or risk assessments, undermining both fairness and accountability~\citep{dai2025intentional}. These costs motivate the need for tools that identify when predictions are unstable and should be treated with caution.

One way to mitigate predictive multiplicity is through ensembling. By aggregating predictions through multiple models, ensembles are expected to reduce variance, and, in the limit of large ensemble sizes, converge to the expected prediction under the model distribution. Deep ensembles are widely used in practice to improve accuracy and estimate uncertainty~\citep{lakshminarayanan2017simple, fort2020deep,ganaie2022ensemble,rahaman2021uncertainty}. In practical scenarios, ensembles are often small, so predictive inconsistency may persist across independently sampled ensembles, even when the constituent models in each ensemble are individually near-optimal.

This paper focuses on understanding and mitigating ensemble inconsistency arising from predictive multiplicity in small ensembles. While prior work has studied multiplicity across individual models, there is limited understanding of instance-level consistency guarantees for finite ensembles drawn from a Rashomon set. Moreover, consistency does not imply correctness: models may agree yet still be wrong, leading to missed diversions of erroneous predictions. We take \emph{auditing with diversion} as our motivating application: instances can either be classified by the model or be diverted to an expert. Such systems must balance \emph{risk} (the model's error) against the \emph{cost} of using the expert. We show that appropriately leveraging ensembles can reduce missed diversions and improve decision reliability.

\subsection{Auditing and diversion to human experts}\label{sec: Auditing and diversion}

Modern machine learning models are abundant; in many cases, they achieve state-of-the-art performance on complex tasks. However, even the best-performing models may eventually make mistakes on unseen data. Suppose that there exists an idealized \emph{baseline model}, $\baseline$, that is the ``gold-standard'' for a given deployment scenario. Despite being the best model, $\baseline$ will inevitably make a prediction that is contradicted by domain experts. To reduce the risk of making bad decisions due to model errors, we would like to implement an automated ``auditing'' system to flag untrustworthy predictions and divert those decisions to a human expert for review. Human intervention ensures that edge cases and instances where the model is uncertain are handled using expert judgment. However, deciding when to intervene can be difficult, and mistakes can be costly. Failing to intervene when the baseline model makes an incorrect prediction can harm people whose lives are affected by the decisions, but deferring to an expert is also expensive and may be infeasible given the number of available experts. Since ground-truth labels are typically not available in practice, proxy measures derived from the model may help us decide when to divert.

We consider a binary classification example, where $x \in \mathbb{R}^d$ has label $y \in \{-1,+1\}$. Let $f: \mathbb{R}^d \to [0,1]$ be a predictive model such that $f(x)$ is the estimated probability that $x$ has label $y=+1$, and let $h(x;f) = \Indic{f(x)\geq \frac{1}{2}}$ be an associated ``hard'' decision threshold for classifying $x$. The error of using $f$ to predict $y$ from $x$ can be written as $\modelerrorlabel(x;f) = \Indic{h(x,f)\neq y}.$

Now suppose we have an automated auditing system, $\riskassess: \mathcal{F} \times \mathbb{R}^d \to \{0,1\},$ that takes in a function, $f\in \mathcal{F},$ and an input instance, $x \in \mathbb{R}^d$, and returns a risk assessment for the prediction, $f(x)$. The auditing system returns $\riskassess(x;f)=0$ if the prediction made by the model is trustworthy, and it returns $\riskassess(x;f)=1$ if the case should be diverted to a domain expert for review, where we assume it can be correctly labeled by the reviewer. Discrepancies between $\modelerrorlabel(x;f)$ and $\riskassess(x;f)$ result in unnecessary costs to the system. When $\modelerrorlabel(x;f) = 0$ but $\riskassess(x;f) = 1$, the auditing system has diverted a case unnecessarily, creating extra work for the reviewers.  We refer to this as an ``audit cost''.  In contrast, when $\modelerrorlabel(x;f) = 1$ but $\riskassess(x;f) = 0$, the auditing system fails to flag an incorrect prediction, creating potential downstream consequences. We refer to this scenario as an ``audit risk.'' Audit risks and audit costs can be considered as errors of the auditing system itself.

Let us return to our idealized baseline model, $\baseline$. It may not be feasible to deploy the gold-standard model in all scenarios due to cost, resource availability, or other constraints. In this case, we may need to deploy a cheaper alternative, which we will refer to as the \emph{candidate model}, $f.$ Regardless of how well $f$ approximates $\baseline,$ both systems are liable to make mistakes on some test instances. Thus, the auditing system $\riskassess$ is used to reduce the risk of errors. The central problem of this work is how to design an auditing system, $\riskassess$, and a candidate function, $f$, such that the same set of samples is diverted for human review, whether we use $\baseline$ or $f$. In other words, we want $\riskassess(x;f) \approx \riskassess(x;\baseline)$, while minimizing the \emph{excess} audit costs and audit risks.

Comparing the behavior of an auditing system applied to the candidate model and the baseline model gives rise to 16 possible outcomes, corresponding to all combinations of $(\modelerrorlabel(x;f)$, $\riskassess(x;f))$ for the candidate and $(\modelerrorlabel(x;\baseline)$, $\riskassess(x;\baseline))$ for the baseline. Table~\ref{tab: combined audit outcomes} organizes these outcomes in a 4×4 matrix, where each row corresponds to one candidate outcome and each column corresponds to one baseline outcome. Each cell summarizes the difference in audit cost and audit risk for the candidate model relative to the baseline.

The most desirable outcomes are those in which the candidate meets the baseline, meaning that it reaches the same auditing decision as the baseline. These include cases where both systems correctly automate a prediction, both correctly divert it, or both incur the same unavoidable audit cost or audit risk. In Table \ref{tab: combined audit outcomes}, these outcomes are represented by the diagonal cells. We refer to these costs and risks as unavoidable because they arise from the behavior of the baseline model itself. For example, if a baseline prediction is diverted that was ultimately determined to be correct, or an erroneous baseline prediction is not diverted, the resulting audit risk and cost cannot be eliminated. Disagreements between the two systems instead introduce excess audit cost, excess audit risk, or a mixed outcome in which one quantity improves at the expense of the other. Some of these disagreements correspond to reduced audit cost or reduced audit risk, where the candidate performs better than the baseline on a particular instance. The objective of an auditing strategy is to minimize excess costs and risks while matching the baseline as closely as possible. In our experiments in Section~\ref{sec: Retailauditoutcomes} and Appendix~\ref{sec: Combinedoutcomes}, we observe that the candidate model typically achieves a lower audit cost than the baseline, but only at the expense of a higher audit risk.

\begin{table*}[t]
\centering
\renewcommand{\arraystretch}{1.4}
\setlength{\tabcolsep}{3pt}

\caption{Matrix of auditing outcomes comparing the audit system of a candidate, $f$, with the audit system of a baseline, $\baseline$. Rows correspond to the candidate outcome $(\modelerrorlabel(x;f)$, $\riskassess(x;f))$ and columns correspond to the baseline outcome $(\modelerrorlabel(x;\baseline)$, $\riskassess(x;\baseline))$. Each cell summarizes how the candidate changes audit risk and audit cost relative to the baseline. Green cells indicate agreement with the baseline, red cells indicate increased audit risk relative to baseline, light pink cells indicate reduced audit risk relative to  the baseline, yellow cells indicate increased audit cost relative to the baseline, and light yellow cells indicate reduced audit cost relative to baseline. Neutral outcomes in gray cells indicate that baseline and candidate models disagreed on their predicted labels but both correctly diverted or correctly automated the instances, and purple cells indicate mixed outcomes when baseline and candidate models disagreed on their predicted labels but both incorrectly diverted or incorrectly automated the instances.}
\label{tab: combined audit outcomes}

\begin{tabular}{c|>{\centering\arraybackslash}m{2.2cm}
                  |>{\centering\arraybackslash}m{3.2cm}
                  |>{\centering\arraybackslash}m{2.2cm}
                  |>{\centering\arraybackslash}m{3.2cm}}

&
\multicolumn{4}{c}{\textbf{Baseline model} $(\baseline)$ - $(\modelerrorlabel(x; \baseline)$, $\riskassess(x; \baseline))$}\\
\cline{2-5}

\makecell{\textbf{Candidate}\textbf{ model} $(f)$ - \\$(\modelerrorlabel(x; f)$, $\riskassess(x; f))$ }
&
$(0,0)$
&
$(0,1)$
&
$(1,0)$
&
$(1,1)$
\\
\hline

$(0,0)$
&
\cellcolor{cbaseline}
\makecell{\textbf{Outcome 0}\\Baseline}
&
\cellcolor{ccostdown}
\makecell{\textbf{Outcome 4}\\Cost $\downarrow$}
&
\cellcolor{criskdown}
\makecell{\textbf{Outcome 8}\\Risk $\downarrow$}
&
\cellcolor{cneutral}
\makecell{\textbf{Outcome 12}\\Neutral}
\\
\hline

$(0,1)$
&
\cellcolor{ccost}
\makecell{\textbf{Outcome 1}\\Cost $\uparrow$}
&
\cellcolor{cbaseline}
\makecell{\textbf{Outcome 5}\\Baseline}
&
\cellcolor{ctradeoff}
\makecell{\textbf{Outcome 9}\\Risk $\downarrow$\\Cost $\uparrow$}
&
\cellcolor{ccost}
\makecell{\textbf{Outcome 13}\\Cost $\uparrow$}
\\
\hline

$(1,0)$
&
\cellcolor{crisk}
\makecell{\textbf{Outcome 2}\\Risk $\uparrow$}
&
\cellcolor{ctradeoff}
\makecell{\textbf{Outcome 6}\\Risk $\uparrow$\\Cost $\downarrow$}
&
\cellcolor{cbaseline}
\makecell{\textbf{Outcome 10}\\Baseline}
&
\cellcolor{crisk}
\makecell{\textbf{Outcome 14}\\Risk $\uparrow$}
\\
\hline

$(1,1)$
&
\cellcolor{cneutral}
\makecell{\textbf{Outcome 3}\\Neutral}
&
\cellcolor{ccostdown}
\makecell{\textbf{Outcome 7}\\Cost $\downarrow$}
&
\cellcolor{criskdown}
\makecell{\textbf{Outcome 11}\\Risk $\downarrow$}
&
\cellcolor{cbaseline}
\makecell{\textbf{Outcome 15}\\Baseline}
\\
\hline

\end{tabular}
\end{table*}

\subsection{Our contributions}

We develop a framework for balancing the costs and risks of diverting to human experts by using an ensemble of candidate models within an auditing system. To do this, we leverage the framework of predictive multiplicity and the Rashomon set of models.  Prior work has largely focused on disagreement between individual models. We investigate how multiplicity propagates to downstream auditing systems and how finite ensembles can be used to improve reliability. Our baseline model $\baseline$ is the expected model over the Rashomon distribution, while a candidate model $f$ is an ensemble of finitely many models drawn from the Rashomon set. We can use this to assess how ensemble size, risks, and costs are related in auditing systems.  We call this choice of baseline a \emph{\largeset{} model} and the finite ensemble a \emph{\smallset{} model} as a metaphor for how the two ensembles are related. Our contributions are listed below.
\begin{itemize}
    \item We propose a new consistency measure, $(\beta,\sigma)$-consistency, that uses ensemble margin information and local perturbation stability in the embedding space of an input. This differs from prior measures of multiplicity that analyze a single candidate model. Sampling models allow us to study how well the \smallset{} set approximated the \largeset{} model by reducing variance induced by predictive multiplicity.
    \item Under stability and local Lipschitz assumptions, we provide probabilistic guarantees showing that if an instance has a high $(\beta,\sigma)$-consistency score with respect to the baseline model, then finite retail set ensembles will also assign high consistency scores to these instances with probability increasing exponentially in both the ensemble size $M$ and the number of perturbation samples $S$.
    \item We operationalize $(\beta,\sigma)$-consistency in an auditing application in which we threshold consistency scores to decide whether predictions should be diverted for expert review. 
    \item We empirically validate our results on fine-tuning applications with large language models. Our results quantify how increasing the number of models in the ensemble reduces predictive multiplicity and audit cost variability, while decreasing audit risk.
\end{itemize}

\subsection{Related work}

Our paper incorporates ideas from several different bodies of prior work.

\paragraph{Learning-to-reject and learning-to-defer.}
In the selective classification or learning-to-reject literature~\citep{herbei2006classification,bartlett2008classification,yuan2010classification, feng2022towards}, a classifier may abstain from making predictions on instances that are considered difficult or unreliable. Learning-to-defer extends this framework by explicitly incorporating the predictions or expertise of the downstream decision-maker. In many selective classification systems, the decision to reject a prediction is often implemented by thresholding confidence or uncertainty-based measures. Common choices include maximum softmax probability, which captures confidence in the top prediction, the margin between the top two classes, and entropy, which reflects overall uncertainty in the predictive distribution (see recent surveys by \citet{Zhang2023survey} and \citet{Hendrickx2024survey}). Applying a threshold to these measures determines if the instance is diverted. 

\paragraph{Rashomon sets and Predictive Multiplicity.} \label{sec: Rashomon sets and Predictive Multiplicity}
Modern machine learning models, particularly deep neural networks, operate in overparameterized regimes where many distinct solutions achieve nearly identical empirical performance. This Rashomon effect, first described by~\citet{breiman2001statistical}\footnote{Named after the movie \emph{Rash\={o}mon}, directed by Akira Kurosawa and based on the short story \emph{In a Grove} by Ry\={u}nosuke Akutagawa.} implies that empirical risk minimization yields a set of near-optimal models rather than a unique predictor, formalized as the \emph{Rashomon set}~\citep{fisher2019all,Semenova_2022,ciaperoni2024efficient,xin2022exploring}. Although these models are indistinguishable under aggregate metrics, they can produce different predictions or confidence scores on individual inputs, leading to predictive multiplicity.

This phenomenon has been widely studied in classification settings~\citep{marx2020predictive,watson2023predictive,black2022model,hsu2024dropout,zhu2024predictive}, including its dependence on data complexity~\citep{cavus2024investigating} and information-theoretic characterizations~\citep{hsu2022rashomon}. Predictive multiplicity arises even with identical architectures and data due to non-convex optimization and training stochasticity~\citep{gundersen2023sources}, with factors such as random initialization leading to substantially different learned functions despite similar accuracy~\citep{banerjee2024measuring,summers2021nondeterminism,jordan2023calibrated,bouthillier2021accounting,picard2021torch,henderson2019deep,melis2018state,reimers-gurevych-2017-reporting,dodge2020finetuning}. In high-stakes settings, such variability can lead to inconsistent decisions, undermining reliability and trust.

\paragraph{Consistency Analysis in Multiplicity.}
In the multiplicity literature, consistency across models has been used to characterize the stability of individual predictions. Consistency has been studied through model agreement~\citep{marx2020predictive, black2022consistent}, probabilistic prediction variance across the Rashomon set~\citep{hsu2022rashomon, watson2023predictive}, stochastic perturbations such as dropout~\citep{hsu2024dropout}, invariance of counterfactual explanations~\citep{black2022model,hamman, KR2023-78}, and stability under fine-tuning variability in large language models~\citep{hamman2024quantifying}. Some of the works provide probabilistic guarantees on a consistency measure, ensuring that one model’s decision will not be invalidated by another. A key insight from these works is that consistency depends not only on confidence but also on local stability. Predictions far from the decision boundary and robust to small input perturbations are more likely to remain invariant across models.

\paragraph{Local perturbation analysis.}
Local perturbation analysis has been widely used in the machine learning literature as a principled way to probe and quantify robustness and consistency of machine learning models, especially in over-parametrized settings. For instance, randomized smoothing yields classifiers with provable local robustness guarantees~\citep{cohen2019certified}, while combining smoothing with ensembles improves stability under perturbations~\citep{horvath2022boosting}. Similarly, in the multiplicity literature, stability of counterfactual explanations depends on local smoothness~\citep{black2022consistent}, and Lipschitz-based analyses provide probabilistic guarantees for prediction invariance under model perturbations~\citep{hamman}. Sampling local neighborhoods in representation space further enables efficient estimation of agreement across fine-tuned models~\citep{hamman2024quantifying}.

Despite their promise, existing consistency measures in the multiplicity literature have important limitations. First, many approaches rely on hard labels, such as model disagreement, discarding richer information contained in predictive confidence. Second, while these methods are effective at identifying consistent test instances, they can misclassify inconsistent instances as consistent, leading to high missed diversions. More fundamentally, consistency does not imply correctness: multiple models may agree on a prediction and still be jointly wrong. As a result, consistency-based approaches may fail to divert erroneous predictions, which is particularly problematic in high-stakes settings where missed errors carry significant risk.

\paragraph{Ensembling.}\label{sec:related:ensemble}

A widely adopted approach for mitigating predictive multiplicity is ensembling, in which predictions from multiple models are aggregated, typically by averaging their outputs. The core idea behind deep ensembling is that a group of diverse baseline predictors often outperforms a single predictor by reducing variance, mitigating overfitting, and capturing multiple modes of the hypothesis space. Lakshminarayanan et al. popularized deep ensembles as a scalable alternative to Bayesian neural networks, showing that ensembles of independently trained models provide strong empirical uncertainty estimates and improved robustness to distribution shift~\citep{lakshminarayanan2017simple}. Subsequent work has analyzed deep ensembles from a loss-landscape perspective, demonstrating that random initialization leads to convergence to distinct modes that are effectively exploited through the weighted averaging of predictions~\citep{fort2020deep}.
Empirically, deep ensembles differing only in random seed have been shown to perform competitively across vision and language tasks~\citep{lakshminarayanan2017simple,lee2015m}, while alternative ensemble construction methods such as bootstrapping may not offer additional benefits in deep learning settings~\citep{nixon2020bootstrapped}. Ensembles have also been studied as variance-reduction tools in randomized smoothing frameworks~\citep{horvath2022boosting}, further reinforcing their role in reducing predictive multiplicity. Ensembles of fine-tuned language models have also been shown to improve predictive performance and uncertainty estimation~\citep{wang2023loraensembleslargelanguage, arango2024ensemblingfinetunedlanguagemodels}.

Despite the benefits of ensembling, sampling a small number of models from the Rashomon set can introduce significant Monte Carlo error in the ensemble's estimate of the expected prediction. As a result, two ensembles of the same size, drawn independently from the same training procedure, can differ in their constituent models enough that ensemble prediction lies near a decision boundary for certain test points, making the predicted label unstable. While asymptotically, large ensemble predictions converge to posterior predictive decision and eliminate variability by law of large numbers, practical constraints (eg cost, latency, licensing) often limit ensembles to small sizes, where variance due to finite sampling remains non-negligible. Thus predictive inconsistency across ensembles is an inherent finite sample phenomenon, and motivates the need for instance-specific measures that can identify which instances cannot be trusted to produce stable predictions for fixed-size ensembles.

\section{Problem Setup}\label{sec: Problem Setup}

\noindent \textbf{Notation:} For a positive integer n, let $[n] = \{1,2,\ldots,n\}$. Random variables will be denoted in boldface, with realizations being non-bolded, so $\rtest$ is a random variable and $x$ is a realization. 

\subsection{Binary classification}\label{sec: bc DNN}

We consider the standard statistical learning setting for binary classification\footnote{We focus on binary classification to keep our notations simple. Our setting can be extended to multi-class classification by fixing any class and analyzing the predicted probability of that class.}. Let $\cX = \R^d$ and $\cY = \{-1,+1\}$ be a space of feature vectors and labels, respectively. Let $\dtrain = \{ (x_i, y_i) \in \cX \times \cY \colon i \in [\Ntrain] \}$ denote the training data. Here $x_{i}$ denotes the $i$-th realization of the feature vector $\mbf{x} \in \cX$ and $y_{i}$ the label corresponding to $x_{i}$. We assume the pairs in $\dtrain$ are generated by independent and identically distributed (i.i.d.) draws from an unknown distribution $\datadist$ on $\cX \times \cY$.

A training algorithm $\randomalgo$ is a randomized map from $\cX^{\Ntrain} \times \cY^{\Ntrain} \to \hypclass$, where $\hypclass = \{f_{\param}(x): \cX \to [0,1] \colon \param \in \varTheta \}$ is a set of models parameterized by $\param \in \varTheta$. We interpret the model $f_{\param} \in \hypclass$ as approximating a probability distribution $\hat{\datadist}(y | x)$ on $\cY$ where $\hat{\datadist}(y = 1 | x) = f_{\param}(x)$. The function $f_{\param}$ would correspond to the output of a softmax operation on logit values associated with $\cY$.  We define $h_{\param} = \Indic{f_{\param} \geq \frac{1}{2}}$ as the ``hard'' decision function associated with $f_{\theta}$.

We consider the setting where $\randomalgo$ selects parameters/weights for a fixed DNN architecture by minimizing a loss function, $\loss : \mc{Y} \times \mc{Y} \to \mathbb{R}$ (for example, cross entropy loss), resulting in the population risk
    \begin{align}
    L(f_{\param},\mbf{x},\mbf{y}) = \E_{\rtest,\rtestlabel \sim\datadist}[\loss(f_{\param}(\mbf{x}),\mbf{y})].
    \end{align}
The data-generating distribution $\datadist$ is unknown, so we replace the population risk with the empirical risk for training,
    \begin{align}
    \hat{L}(f_{\param},\dtrain) = \frac{1}{\Ntrain}\sum_{i=1}^{\Ntrain}(\loss(f_{\param}({x_{i}),y_{i}})).
    \end{align}
Since the training algorithm itself may be randomized, we can think of it as sampling a function $\rf_{\rparam} \in \hypclass$.
 
We assume that we are also given a test set $\dtest \in \cX^{\Ntest} \times \cY^{\Ntest}$ also sampled $\iid$ from $\datadist$. The test error for model $f_{\param}$ with a decision function $h_{\theta}$ is
\begin{align}\label{eq: empirical error}
    \perfmet(f_{\param}) = \frac{1}{\Ntest}\sum_{i=1}^{\Ntest}{[\Indic{h_{\param}({x_{i}}) \neq {y_{i}}})]}.
\end{align}
For notational simplicity, we henceforth suppress the parameter $\param$ and use $f$ to denote $f_{\theta}$.

\subsection{The empirical Rashomon set and predictive multiplicity}\label{sec: Empirical Rashomon sets}

We are interested in the \emph{Rashomon set} of models~\citep{breiman2001statistical,Semenova_2022}, which captures a class of competing models with similar aggregate predictive performance. Following~\citet{hamman2024quantifying}, we define the Rashomon set relative to a reference model $f_0$ with satisfactory predictive performance.

\begin{definition}[Empirical Rashomon set]\label{def: empirical rashomon set}
Given a hypothesis class $\hypclass$, a dataset $\dtest= \{{(x_i,y_i)},  i \in [\Ntest]\}$, a reference model $f_0\in\hypclass$, the empirical error  $\perfmet(f_{0})$, defined in Equation~\ref{eq: empirical error}, and an error tolerance $\rashomonparam\in(0,1)$, the $\rashomonparam$-empirical Rashomon set is
\begin{align}
\emprashomonset(\rashomonparam) = \{f\in\hypclass:
\perfmet(f)
\leq
\perfmet(f_0)+\rashomonparam\}.
\end{align}
\end{definition}

Once again, we suppress the dependence on $\rashomonparam$ and write $\emprashomonset$ in place of $\emprashomonset(\rashomonparam)$ to simplify our notations. The Rashomon set $\emprashomonset$ contains models whose empirical error is within $\rashomonparam$ of the reference model $f_0$. The choice of $\rashomonparam$ determines the range of models considered to be equally well-performing and is application dependent~\citep{marx2020predictive,watson2023predictive}.
Models in the Rashomon set may nevertheless produce different predictions on individual instances. In particular, an instance that is misclassified by one model may be correctly classified by another, while some instances may receive different predictions across many models in $\emprashomonset$. This phenomenon, known as \emph{predictive multiplicity}~\citep{marx2020predictive}, can be quantified in several ways. We review the multiplicity metrics considered in this work in Section~\ref{sec: Multiplicity metrics}.

\subsection{Ensembles using the Rashomon set}

We propose using ensembles of models sampled from the empirical Rashomon set as a proxy for the full Rashomon set. 
This involves running the training algorithm several times to generate the models. By using rejection sampling, we can assume that $\randomalgo$ is sampling from a distribution $\traindist$ on $\emprashomonset$. Conditioned on $\dtrain$, repeatedly running the algorithm $\randomalgo$ generates conditionally i.i.d.~samples $\rf_{1}, \rf_{2}, \ldots ,\rf_{\Nretail}$.

We define the Rashomon \smallset{} as this set of samples
     \begin{align}
    \rretailset =  \{ \rf_{\nretailind} :  k \in [\Nretail]\} \subseteq \mc{R}.
    \end{align}
 
As mentioned in Section \ref{sec:related:ensemble}, the basic approach to ensembling is to average the constituent model outputs. Given a realization $\retailset$ of the \smallset{} set, we define the ensemble of the \smallset{} set at $x$ as
    \begin{align}\label{eq: ensemble at x}
    \ensmretset(\retailset,x) = \frac{1}{\Nretail} \sum_{\nretailind=1}^{\Nretail} f_{\nretailind}(x).
    \end{align}
By modeling an ensemble of the \smallset{} set, $\ensmretset(\rretailset,x)$, as randomly sampled from a Rashomon set, we make explicit that ensemble predictions are themselves random variables. Even when all constituent models are near-optimal, different realizations of  $\ensmretset(\rretailset,x)$ may lead to different ensemble predictions at the same test point. The inconsistency in predictions arising from this randomness is the core difficulty addressed in this paper. In Appendix \ref{sec: Related measures}, we describe other measures that we can compute using a single \smallset{} of size $\Nretail$, and in Section \ref{sec: Comp consistency multiplicity}, we show how these measures relate to multiplicity. We also demonstrate that our proposed consistency measure, defined in the next section, performs better at preemptively capturing multiplicity than related measures.

\section{Proposed consistency measure}\label{sec: Proposed consistency measure}

To probe local behavior around a test instance
$x$, we introduce small random perturbations to the input and evaluate the ensemble’s predictions on these perturbed points.
For example, we may perturb the raw input $x \in \cX$ by adding noise when looking at deep ensembles. If we are studying ensembles produced by fine-tuning a transformer-based model, we could perturb the embedding, which serves as input to the fine-tuning layers. We can then use the perturbed values to test if the ensemble is consistent at a given input $x$.

\subsection{Measuring ensemble consistency}

For simplicity, we will prove our results for additive noise drawn from the uniform distribution on a sphere, although our results should hold for any isotropic subgaussian noise distribution~\citep{vershynin2018high}.  For a fixed $x$, define $\ltwohsphere(x,\sigma) = \{\localtest : \normtwo{x - \localtest} = \sigma\}$ to be the sphere of radius $\sigma$ centered at $x$.

Let $\rlocaltestmat = \{ \rlocaltest_j : j \in [\nogaussianpts] \}$ be drawn i.i.d.~from a uniform distribution $\Unif(\ltwohsphere(x,\sigma))$. We will measure how sensitive predictions are to small perturbations of $x$. 

\begin{definition}[Ensembles with perturbed inputs]\label{def: local ensemble}
For a fixed $x$, Rashomon \smallset{} set $\retailset$, and noise realizations $\localtest_j$, for each $j$ we define the ensemble value at $\localtest_j$ by
    \begin{align}
    \ensmretset(\retailset,\localtest_j) 
    =  \frac{1}{\Nretail}\sum_{k=1}^{\Nretail} f_{k}(\localtest_j).
    \end{align}
Averaging over the $\nogaussianpts$ noise realizations, we have
     \begin{align}
    \ensmretset(\retailset,\localtestmat) 
    = \frac{1}{\Nretail} \sum_{k=1}^{\Nretail}
    \frac{1}{\nogaussianpts}\sum_{j=1}^{\nogaussianpts}f_{k}(\localtest_{j}).
    \end{align}
\end{definition}

To understand the stability of an ensemble's predictions at $x$, we can analyze the ensemble from an independently sampled \smallset{} set. The goal is to develop a test based on $(x, \retailset,\localtestmat)$ that can identify instances $x$ that are inconsistent. In some applications, these instances will then be reviewed by a more complex system or a human auditor.

We would like to characterize test instances $x$, whose predictions should be stable with respect to any \smallset{} sets drawn from the Rashomon set. To that end, we propose a novel consistency measure called the $(\beta,\sigma)$-consistency. We first define consistent test points in Definition~\ref{Definition: beta conf} as those that are assigned high scores of consistency under the expected model in the full empirical Rashomon set $\emprashomonset$. We then show that if an individual test point is consistent with respect to the expected model, under some mild assumptions, the ensemble of a \smallset{} set $\retailset$, will also be consistent with high probability (exponential in the number of samples $\nogaussianpts$ from a local neighborhood around $x$ and the size $\Nretail$ of the \smallset{} set).

\begin{definition}[$(\beta,\sigma)$-consistency of $\emprashomonset$]\label{Definition: beta conf}
Given $\emprashomonset$ with distribution $\tau$, a random model $\rf$ $\iid \tau$, an instance $x \in \dtest$, the expected model $\E_{\rf}[\rf(x)]$ of the empirical Rashomon set $\emprashomonset$ at $x$, a random perturbation $\rlocaltest$ $\iid \Unif(\ltwohsphere(x,\sigma))$, and parameters $\beta \in [-\frac{1}{2},\frac{1}{2}]$ and $\sigma > 0$, let 
	\begin{align}
	 \consistency(\emprashomonset,x)
    =\abs*{\E_{\rf}\bracks*{\rf(x)} 
        - \frac{1}{2}} 
    - \E_{\rf,\rlocaltest}\bracks*{\abs*{\rf(x) - \rf(\rlocaltest)}},
    \end{align}
be the margin of the expected model of $\emprashomonset$ at $x$, penalized by the expected variability around $x$, where the expectation is taken jointly with respect to the perturbation $\rlocaltest$, and the model $\rf$. Then the expected model of $\emprashomonset$ is said to be $(\beta,\sigma)$-consistent at $x$ if, 
    \begin{align}\label{eq: beta consistency} 
    \consistency(\emprashomonset,x)  \ge \beta.
    \end{align}
\end{definition}

The consistency condition in Definition \ref{Definition: beta conf} characterizes test instances at which the margin of the empirical Rashomon set's expected model remains large after accounting for local variability. Our theoretical guarantees require two additional conditions, namely stability of model predictions within the Rashomon set and local Lipschitz continuity. We formalize these conditions below.

\begin{definition}[$\rashexpctparam$-stability of  $\emprashomonset$ at $x$]
\label{def: Rashomon stability}    
Given  a Rashomon set $\emprashomonset$, a test point $x \in \dtest$, and a small constant $\rashexpctparam > 0$, $\emprashomonset$ is said to be $\rashexpctparam$ stable at $x$ if, 
\begin{align}
    \abs{f(x)-\E_{\rf}[\rf(x)]} \le \rashexpctparam, \:\: \text{for all}\:\: f \in \emprashomonset.
\end{align}
\end{definition}

\begin{definition}[Local Lipschitz continuity]\label{Definition: eta Lipschitz}
   A function $f : \mc{A} \subset \mc{X} \to \mathbb{R}$ is said to be locally Lipschitz if for each $x \in \mc{A}$ there exists constant $\eta > 0$ and $\sigma > 0$ such that,
    \begin{align}
      \abs{f(x) - f(\localtest)} \leq  \eta\normtwo{x-\localtest}, \:\: \text{for all}\:\:\localtest \in \ltwohsphere(x,\sigma).
    \end{align}
\end{definition}

Global Lipschitz continuity requires that a model's output cannot change too rapidly anywhere in the input space. For modern machine learning models, particularly deep neural networks and large language models, this is both unrealistic and unnecessary. These models are known to exhibit highly non-smooth and complex behavior in regions of the input space far from the data distribution's support, even when they behave predictably near the data. Imposing a global Lipschitz constraint would therefore exclude most practical models or require overly conservative constants that are dominated by worst-case behaviors irrelevant to the test instances of interest. Rather than assuming smoothness everywhere, we require smoothness only within a neighborhood $\ltwohsphere(x,\sigma)$. 

\begin{definition}[$(\beta - \slackmeasure, \sigma)$-consistency of a \smallset{} set]\label{def: Consistency measure} Given a Rashomon \smallset{} set $\retailset = \{f_k : k \in [\Nretail]\} \subseteq \emprashomonset$ of size $\Nretail$, a set of perturbed samples $\localtestmat = \{\localtest_{j} : j \in [\nogaussianpts]\} \subseteq \ltwohsphere(x,\sigma)$, the ensemble prediction $\ensmretset(\retailset,x)$ at $x$ as in Equation~\eqref{eq: ensemble at x}, and a constant $\slackmeasure \ge 0$, let $ \consistency(\retailset,\localtestmat, x)$ be defined as,
\begin{align}
    \consistency(\retailset,\localtestmat, x) = \babs{\ensmretset(\retailset,x)-\frac{1}{2}} - \frac{1}{\nogaussianpts}\sum_{j=1}^{\nogaussianpts}\frac{1}{\Nretail}\sum_{j=1}^{\Nretail}\babs{f_k(x) - f_k(\localtest_j)}.
    \label{eq:consistency}
\end{align}
 Then $\retailset$ is said to be $(\beta - \slackmeasure, \sigma)$-consistent if  
 \begin{align}
    \consistency(\retailset,\localtestmat, x) \ge \beta - \slackmeasure
 \end{align}
\end{definition}

To prove probabilistic guarantees on our proposed consistency measure, we make some common assumptions.

\begin{restatable}[Rashomon consistency assumptions]{assumption}{assume}\label{assume:rashomon}
The set $\rretailset  = \{\rf_{k} : k \in [\Nretail]\}$ is an i.i.d. sample from a distribution $\tau$ on $\emprashomonset$. Every model $f$ in the Rashomon set $\emprashomonset$ is $\eta$-locally Lipschitz as in Definition \ref{Definition: eta Lipschitz}. For fixed parameters $\beta, \sigma$ and $\rashexpctparam$, $x$ belongs to the subset of test instances for which the Rashomon set is $(\beta,\sigma)$-consistent according to Definition \ref{Definition: beta conf}, and $\rashexpctparam$-stable according to Definition \ref{def: Rashomon stability}. We denote this set as $ \dtestconf(\beta,\sigma,\rashexpctparam) = \{ x \in \dtest : \consistency(\emprashomonset, x) \ge \beta \:\:\text{and} \:\:\emprashomonset \:\:\text{is} \:\: \rashexpctparam \:\:\text{stable at} \:\: x\}$  $\subseteq \dtest$.
\end{restatable}

The concentration results that follow are stated only for a fixed $x \in  \dtestconf(\beta,\sigma,\rashexpctparam)$. For notational simplicity, we suppress the dependence of $\dtestconf$ on $\beta, \sigma, \rashexpctparam$. Theorem \ref{Theorem: Main Theorem} provides a probabilistic guarantee that an ensemble of a randomly drawn \smallset{} set $\rretailset$, is $(\beta - \slackmeasure,\sigma)$-consistent at $x$ under Assumption \ref{assume:rashomon}. 

\begin{restatable}[Consistency guarantee]{theorem}{consistent}
\label{Theorem: Main Theorem}

  Let $\emprashomonset$, $\rretailset$, and $x$ satisfy Assumption \ref{assume:rashomon} for parameters  $\beta \in [-\frac{1}{2},\frac{1}{2}]$, and $\sigma, \rashlexpctparam > 0$. For a test instance $x \in \dtestconf$, let $\rlocaltestmat =  \{\rlocaltest_{j}:  j \in [\nogaussianpts]\}$ be a random set of perturbations of $x$, distributed $\iid \Unif(\ltwohsphere(x,\sigma))$, and  let $\ensmretset(\retailset,x)$ be the ensemble prediction at $x$ as in Equation \eqref{eq: ensemble at x}. Given the consistency measure $\consistency(\retailset,\localtestmat,x) = \babs{\ensmretset(\retailset,x)-\frac{1}{2}} - \frac{1}{\nogaussianpts}\sum_{j=1}^{\nogaussianpts}\frac{1}{\Nretail}\sum_{k=1}^{\Nretail}\babs{f_k(x) - f_k(\localtest_j)} $ from Definition \ref{def: Consistency measure}, for all $\slackmeasure \ge 0$,
\begin{align}
    \P_{\rretailset,\rlocaltestmat}\bracks*{\consistency(\rretailset,\rlocaltestmat,x) \ge \beta - \slackmeasure} 
    \ge 1 - 2\exp\parens*{
        \frac{-\Nretail\slackmeasure^2}{8\rashexpctparam^2}} - 2\exp\parens*{\frac{
        -\nogaussianpts\slackmeasure^{2}}{8\eta^2\sigma^{2}}} -  2\exp\parens*{\frac{-\Nretail\slackmeasure^2}{8\eta^2\sigma^{2}}}. 
\end{align}
\end{restatable}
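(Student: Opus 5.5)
The plan is to decompose the deviation of $\consistency(\rretailset,\rlocaltestmat,x)$ from $\consistency(\emprashomonset,x)$ into three centred fluctuations. Each is bounded by Hoeffding's inequality, and a union bound then produces the three exponential terms. Write $\mu(x)=\E_{\rf}[\rf(x)]$ and $D=\E_{\rf,\rlocaltest}[\abs{\rf(x)-\rf(\rlocaltest)}]$, so that $\consistency(\emprashomonset,x)=\abs{\mu(x)-\frac12}-D\ge\beta$ since $x\in\dtestconf$. Further set
\begin{align*}
\mathbf{A}=\ensmretset(\rretailset,x)-\mu(x),\qquad
\mathbf{B}=\frac{1}{\Nretail}\sum_{k=1}^{\Nretail}\bigl(g_k-D\bigr),\qquad
\mathbf{C}=\frac{1}{\Nretail}\sum_{k=1}^{\Nretail}\frac{1}{\nogaussianpts}\sum_{j=1}^{\nogaussianpts}\bigl(\abs{\rf_k(x)-\rf_k(\rlocaltest_j)}-g_k\bigr),
\end{align*}
where $g_k=\E_{\rlocaltest}[\abs{\rf_k(x)-\rf_k(\rlocaltest)}\mid\rf_k]$. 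By the reverse triangle inequality, $\abs{\ensmretset(\rretailset,x)-\frac12}\ge\abs{\mu(x)-\frac12}-\abs{\mathbf{A}}$. Therefore $\consistency(\rretailset,\rlocaltestmat,x)\ge\beta-\abs{\mathbf{A}}-\mathbf{B}-\mathbf{C}$. The event $\{\consistency<\beta-\slackmeasure\}$ is thus contained in the union of the events where one of $\abs{\mathbf{A}},\mathbf{B},\mathbf{C}$ exceeds a fixed share of $\slackmeasure$. The constant $8$ in each exponent suggests a threshold of $\slackmeasure/2$ for each term together with ranges of width $2\rashexpctparam$ and $2\eta\sigma$; the exact split will be adjusted to match the stated constants.

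\textbf{Term $\mathbf{A}$.} The $\rf_k(x)$ are i.i.d.\ with mean $\mu(x)$. By $\rashexpctparam$-stability (Definition~\ref{def: Rashomon stability}) they lie in $[\mu(x)-\rashexpctparam,\mu(x)+\rashexpctparam]$. Two-sided Hoeffding then gives $\P[\abs{\mathbf{A}}\ge t]\le 2\exp(-2\Nretail t^2/(2\rashexpctparam)^2)$, and with $t=\slackmeasure/2$ this is $2\exp(-\Nretail\slackmeasure^2/(8\rashexpctparam^2))$.

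\textbf{Term $\mathbf{B}$.} The $g_k$ are i.i.d.\ with mean $D$. Local Lipschitz continuity (Definition~\ref{Definition: eta Lipschitz}) on $\ltwohsphere(x,\sigma)$ gives $0\le g_k\le\eta\sigma$. Hoeffding then yields the $\exp(-\Nretail\slackmeasure^2/(8\eta^2\sigma^2))$ term, with a generous constant.

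\textbf{Term $\mathbf{C}$.} Condition on $\rretailset$. For each $j$, the averaged quantity $Z_j=\frac1\Nretail\sum_k\abs{\rf_k(x)-\rf_k(\rlocaltest_j)}$ is an i.i.d.\ function of $\rlocaltest_j$ with values in $[0,\eta\sigma]$, and its conditional mean is $\frac1\Nretail\sum_k g_k$. Hoeffding over the $\nogaussianpts$ perturbations gives a bound uniform in $\rretailset$; taking expectation over $\rretailset$ yields $2\exp(-\nogaussianpts\slackmeasure^2/(8\eta^2\sigma^2))$. A union bound over the three events completes the argument.

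The main obstacle is the constant bookkeeping. Splitting $\slackmeasure$ three ways naively gives $\slackmeasure/3$ thresholds, so reaching the stated $8$ in each exponent requires care. I would first try using half-ranges: for example, Hoeffding in the form $\exp(-2nt^2/(b-a)^2)$ with $b-a=2\rashexpctparam$ for $\mathbf{A}$, and possibly combining $\mathbf{B}$ and $\mathbf{C}$ through the one-sided nature of their bounds. If that fails, I would accept slightly different constants. A secondary subtlety is that $\mathbf{C}$ involves the dependence between $\rretailset$ and the fluctuations; conditioning on $\rretailset$ resolves this, since the perturbations $\rlocaltest_j$ are drawn independently of the models.
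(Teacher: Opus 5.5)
Your three-term decomposition is essentially the paper's proof. $\mathbf{A}$ is its margin gap, which the paper bounds by McDiarmid with $c_k = 2\rashexpctparam/\Nretail$; this is equivalent to your Hoeffding step. $\mathbf{B}$ and $\mathbf{C}$ are the model-level and perturbation-level pieces into which the paper splits the variability gap, handled as you do by Hoeffding, conditioning on $\rretailset$, and a union bound.

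The constant bookkeeping you leave open closes with no change to the constants. Use thresholds $\slackmeasure/2$ for $\lvert\mathbf{A}\rvert$ (width $2\rashexpctparam$) and $\slackmeasure/4$ each for $\mathbf{B}$ and $\mathbf{C}$. The width for $\mathbf{B}$ and $\mathbf{C}$ is $\eta\sigma$, not $2\eta\sigma$ as you wrote. Then $2(\slackmeasure/4)^2/(\eta\sigma)^2 = \slackmeasure^2/(8\eta^2\sigma^2)$, and likewise $2(\slackmeasure/2)^2/(2\rashexpctparam)^2 = \slackmeasure^2/(8\rashexpctparam^2)$, so every exponent gets the stated $8$. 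This is exactly the paper's split, and your one-sided reduction would even let you drop the factor $2$ in front of the last two exponentials.
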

\begin{proof}
    We refer the readers to Section \ref{sec: Proofs} for the proof.
\end{proof}

Theorem \ref{Theorem: Main Theorem} states that if $x$ is $(\beta,\sigma)$-consistent under the expected model in the Rashomon set $\emprashomonset$, then with probability increasing exponentially in both $\Nretail$ and $\nogaussianpts$, over the random draws of the retail set, $x$ will remain $(\beta-\slackmeasure,\sigma)$-consistent under the resulting retail sets. For $x \notin \dtestconf$, the theorem provides no consistency guarantee: the local variability of the ensemble predictions may be too large, the prediction may be too close to the decision boundary, or both. Failure to belong to $\dtestconf$ does not imply $x$ is inconsistent under the resulting retail sets, rather, the theorem is simply inconclusive for such points.

\subsection{Interpretation of the consistency measure}
Our goal is to determine, for a fixed instance $x$, whether the prediction produced by a finite ensemble sampled from a Rashomon set is consistent, or whether it is likely to change if a different ensemble were sampled. In Definition~\ref{def: Consistency measure}, we introduce a consistency measure $\consistency(\retailset,\localtestmat,x)$ that achieves this goal. The first term captures how far the ensemble prediction at $x$ lies from the decision boundary, i.e., the margin gap, rather than raw confidence. This distinction is important, as prior consistency measures often rely either on hard-label agreement~\citep{black2022consistent}, which discards useful soft information, or on a single model's confidence, which can be misleading in the presence of multiplicity. More recent approaches \citep{hamman2024quantifying, hamman} incorporate confidence penalized by variance for a specific class, but can assign low scores to consistent predictors (see Section~\ref{sec: hamman note}). In contrast, our margin-based formulation avoids this issue by evaluating distance from the decision boundary, ensuring that such predictors are correctly identified as consistent.

However, margin alone does not guarantee robustness. In a Rashomon set, models may agree at $x$ while having decision boundaries arbitrarily close to it, leading to instability under small perturbations. The second term in $\consistency(\retailset, \localtestmat, x)$ explicitly penalizes this by measuring the average variability of predictions of constituent models in the ensemble between $x$ and nearby perturbed points $\localtest$. This term acts as a local proxy for predictive multiplicity, capturing whether agreement at $x$ persists in its neighborhood. Crucially, unlike prior work that evaluates consistency with a single competing model, our approach leverages ensembles of models sampled from the Rashomon set to reduce variance.  Empirically, we show that combining this consistency measure with ensembling reduces predictive multiplicity, mitigates variability in audit decisions, and lowers the risk of incorrect predictions going unchecked.

Theorem~\ref{Theorem: Main Theorem} shows that under Assumption~\ref{assume:rashomon}, retail ensembles of size $\Nretail$ increasingly recover test instances in $\dtestconf(\beta,\sigma,\rashexpctparam)$ as $(\beta - \slackmeasure, \sigma)$-consistent with an increase in the number of perturbations $\nogaussianpts$ and the size $\Nretail$ of the retail set. The purpose of the theorem is to establish the concentration phenomenon of retail ensembles' consistency score rather than to prescribe the choice of $\slackmeasure$ or $\beta$. Since the proof relies on the distribution-free McDiarmid and Hoeffding inequalities, the resulting finite sample bound is conservative. Accordingly, we fix $\slackmeasure$ to a small positive value in our experiments and defer a discussion on the empirical selection of $\beta$ to Section~\ref{sec: Thresholdvsauditoutcomes}.

To empirically validate this concentration phenomenon, we repeatedly sample retail ensembles of size $\Nretail$ from the Rashomon set following the experimental protocol described in Section~\ref{sec: Scenario specification}. For each retail ensemble, we compute
$\frac{1}{|\dtestconf(\beta,\sigma,\rashexpctparam)|}
\sum_{x\in\dtestconf(\beta,\sigma,\rashexpctparam)}
\Indic{\consistency(\retailset,\localtestmat,x)\ge \beta-\slackmeasure},$
the fraction of test instances in $\dtestconf(\beta,\sigma,\rashexpctparam)$ identified as $(\beta - \slackmeasure,\sigma)$-consistent by the retail ensemble. Figure~\ref{fig:consistency_concentration} shows the empirical distribution of this quantity over $500$ randomly sampled retail ensembles for different values of $\Nretail$. Even for small-sized retail ensembles, the fraction is close to one, indicating that most retail ensembles recover nearly all $(\beta,\sigma)$-consistent test instances as $(\beta - \slackmeasure,\sigma)$-consistent. As $\Nretail$ increases, the distributions become more concentrated near one, demonstrating increasingly consistent recovery across random ensemble draws, in agreement with Theorem~\ref{Theorem: Main Theorem}.

\begin{figure*}[hbt]
\centering
\begin{subfigure}[t][][t]{0.48\textwidth}
    \centering
    \includegraphics[width=\textwidth]{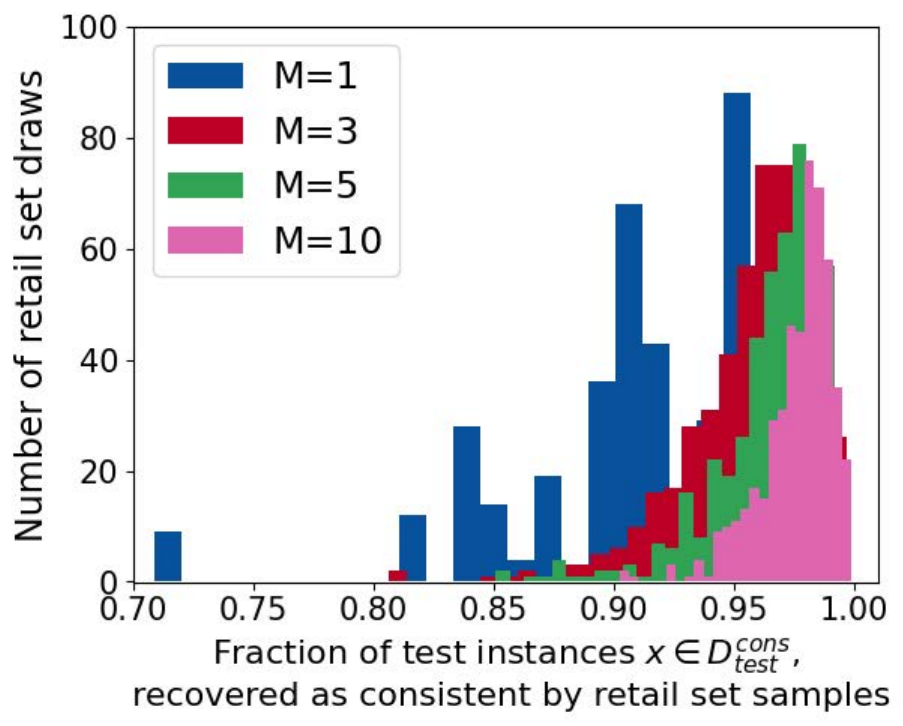}
    \caption{Binary classification on the \textbf{Bank} dataset using  ensembles of fine-tuned \textbf{BIG-SCIENCE T0} models.}
    \label{fig:consistency_concentration_bank}
\end{subfigure}
\hfill
\begin{subfigure}[t][][t]{0.48\textwidth}
    \centering
    \includegraphics[width=\textwidth]{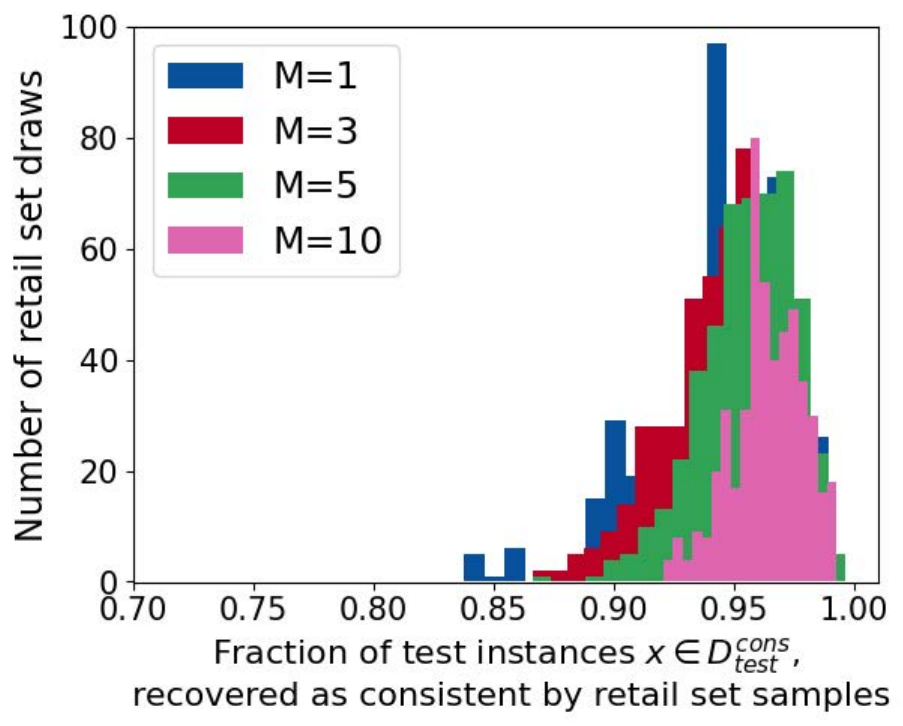}
    \caption{Paraphrase detection on the \textbf{MRPC} dataset using ensembles of fine-tuned \textbf{BERT} models.}
    \label{fig:consistency_concentration_mrpc}
\end{subfigure}
\caption{Empirical illustration of the concentration guarantee in Theorem~\ref{Theorem: Main Theorem}. For each of $500$ independently sampled retail ensembles of size $\Nretail$, we compute the fraction of test instances in $\dtestconf(\beta,\sigma,\rashexpctparam)$ that remain $(\beta - \slackmeasure,\sigma)$-consistent under the retail set,
$\frac{1}{|\dtestconf(\beta,\sigma,\rashexpctparam)|}
\sum_{x\in\dtestconf(\beta,\sigma,\rashexpctparam)}
\Indic{\consistency(\retailset,\localtestmat,x)\ge \beta - \slackmeasure}.$
As the \smallset{} ensemble size increases, the distributions become increasingly concentrated near $1$, indicating that most randomly sampled \smallset{} ensembles recover nearly all $(\beta,\sigma)$-consistent test instances as $(\beta - \slackmeasure,\sigma)$-consistent.}
\label{fig:consistency_concentration}
\end{figure*}

\section{Experimental Setup}\label{sec: Experiments}

The theoretical results in Section \ref{sec: Proposed consistency measure} establish that the proposed consistency measure identifies test instances whose predictions remain consistent across finite ensembles sampled from the Rashomon set. In this section, we empirically evaluate whether these guarantees translate into practical benefits for auditing and predictive multiplicity analysis. Our experiments are designed to answer three questions that directly follow from our theory.

\begin{itemize}
    \item \textbf{Can the proposed consistency measure be used as an effective auditing criterion}?
   Consistency in predictions alone does not imply correctness in predictions. A prediction may be stable across models and still be incorrect. Our objective is not to show that consistency perfectly identifies incorrect predictions, but rather to investigate the measure's usefulness in auditing.
   We therefore evaluate whether thresholding the proposed consistency score produces a favorable trade-off between audit risk (missed erroneous predictions) and audit cost (unnecessary diversions of correct predictions). We do not attempt to optimize the auditing policy itself or compare against every possible rejection strategy, since our goal is to evaluate the usefulness of the consistency measure rather than design a novel auditing framework.
   We find that increasing the consistency threshold can reduce audit risk, in some setups close to 0, while requiring a only moderate increase in diversions.

    \item \textbf{Do finite \smallset{} ensembles approximate the audit decision of the Rashomon set as ensemble size increases}?
    Theorem \ref{Theorem: Main Theorem} shows that consistency estimates obtained from finite ensembles should concentrate around those of the expected model of the Rashomon set. We therefore study how diversion decisions and audit outcomes change as the ensemble size $\Nretail$ increases. We observe that as the number of models in the retail ensemble increases, auditing outcomes become increasingly concentrated around the ensemble model of the full Rashomon set. This empirical behavior is consistent with the theoretical guarantees established in Section \ref{sec: Proposed consistency measure}.

    Since enumerating the Rashomon set is infeasible, practitioners can only deploy ensembles with a finite number of models due to constraints on computation, latency, or memory. Demonstrating that relatively small retail ensembles approximate the Rashomon set behavior is therefore essential for showing that the proposed method is practically deployable. We intentionally restrict our study to the standard ensemble formed by uniformly averaging model predictions, since this estimator is the one analyzed throughout our theoretical development. Alternative ensemble strategies, including learned weighted ensembles and mixture-of-experts architectures, assign non-uniform or input-dependent weights to constituent models, thereby changing the estimator and requiring separate theoretical analysis. Understanding how the proposed consistency measure interacts with these more sophisticated ensemble constructions is an interesting direction for future work.
    \item \textbf{Does the proposed consistency measure capture predictive multiplicity?} Accurate estimation of existing multiplicity metrics outlined in Section \ref{sec: Multiplicity metrics} requires access to the Rashomon set and is therefore difficult to estimate in practice. We investigate whether the proposed consistency measure, which jointly accounts for ensemble margin and local prediction stability, provides a more informative characterization of predictive multiplicity than prior measures that consider individual models or rely solely on confidence scores. 
    Our experiments show that our proposed consistency measure exhibits strong correlations with established multiplicity metrics. By combining ensemble margin and local perturbation stability, our measure also shows stronger correlations with multiplicity metrics than other related measures while remaining practical to estimate from finite retail ensembles.
\end{itemize}

The remainder of this section is organized as follows. In Section \ref{sec: Scenario specification}, we describe the experimental protocol and how we construct the Rashomon set and sample retail ensembles from it. In Section \ref{sec: Auditing with proposed measure}, we answer Questions 1 and 2, and in Section \ref{sec: Comp consistency multiplicity}, we answer Question 3.

\subsection{Scenario Specification}\label{sec: Scenario specification}

To avoid potential application-specific bias, we implement the proposed consistency measure in two disparate application domains, natural language processing (NLP) and classification of tabular data, and evaluate the behavior of the consistency measure with respect to ensembles drawn from the respective Rashomon sets. In the NLP domain, we consider two binary classification tasks from the GLUE benchmark. The first is sentiment classification, using the SST-2 dataset,~\citep{socher-etal-2013-recursive}. SST-2 consists of movie reviews labeled as positive or negative. We use approximately 67,000 instances for training and 872 instances for validation. The second task is paraphrase detection, using the MRPC dataset~\citep{dolan-brockett-2005-automatically}. MRPC contains sentence pairs labeled as semantically equivalent, with approximately 3,700 training instances and 408 validation instances. For both tasks, we fine-tune encoder-only transformer architectures, namely BERT~\citep{devlin2019bertpretrainingdeepbidirectional} and RoBERTa~\citep{zhuang-etal-2021-robustly}, using standard supervised fine-tuning.

In the second evaluation setting, we consider binary classification on tabular data using the Adult~\citep{adult_2} and Bank Marketing~\citep{bank_marketing_222} datasets. The Adult dataset predicts whether an individual's annual income exceeds $\$50,000$ based on demographic and employment-related attributes, while the Bank Marketing dataset predicts whether a client subscribes to a term deposit using demographic, financial, and campaign-related features. We adopt the T-Few prompt-based fine-tuning framework~\citep{liu2022fewshot}, which reformulates each row of a tabular dataset as natural language before it is processed by a large language model. Following this framework, each tabular instance is serialized by representing each feature along with its corresponding value, and then combined with a task-specific prompt. This serialization enables structured tabular prediction to be formulated as a natural language understanding problem. We use the BIG-SCIENCE T0 encoder-decoder language model~\citep{sanh2022multitask} together with IA$^3$ adapters~\citep{liu2022fewshot} for parameter-efficient fine-tuning. We adopt this experimental protocol because it uses the same T-Few framework as \citet{hamman2024quantifying}, enabling comparison with their analysis of predictive multiplicity under fine-tuning. 

The source of randomness depends on the fine-tuning strategy. For encoder-based models (BERT and RoBERTa) on NLP tasks, all runs start from the same pre-trained backbone, while the random seed determines the initialization of the task-specific classification head and other stochastic components of training, including data shuffling and dropout. For the BIG-SCIENCE T0 models on the tabular tasks, the pre-trained backbone remains frozen, and only the IA$^3$ adapter parameters are randomly initialized and optimized during fine-tuning.
 
\subsubsection{Empirical Rashomon set generation}\label{sec: Empirical Rashomon Setup}

As previously discussed, it is infeasible to enumerate the entire empirical Rashomon set for any of the scenarios we wish to evaluate. Thus, for each dataset-model pair, we approximate the empirical Rashomon set by fine-tuning a large pool of models from the same pre-trained initialization under different random seeds. For each dataset-model pair, we select the model with the lowest empirical error in the pool and choose $\rashomonparam = 0.03$ to define the Rashomon set relative to its empirical error following Definition~\ref{def: empirical rashomon set}. We then apply rejection sampling to retain $\Nlib = 100$ models that satisfy the Rashomon set criterion, yielding our empirical approximation of the Rashomon set
\footnote{The authors acknowledge the Office of Advanced Research Computing (OARC) at Rutgers, The State University of New Jersey, for providing access to the Amarel cluster and associated research computing resources that have contributed to the results reported here~\citep{OARC_Amarel}. The choice of $\Nlib$ as a result is influenced by the available GPU resources and per-job walltime limits on the cluster}.
The resulting set of $\Nlib$ retained models is referred to as the Rashomon \largeset{} and is denoted by $\retailset_{\Nlib}$.

\subsubsection{Selecting the radius \texorpdfstring{$\mbf{\sigma}$}{\sigma} of local perturbations}
For both the encoder-based models and the T-Few framework, we compute the proposed consistency measure in the model's representation space. Each input is first mapped to an embedding representation before being processed by the fine-tuning layers. We abuse notation and use $x \in \mc{X}$ to denote both the original input and its embedding representation. Thus, all perturbations described in this section are applied to $x$ in the embedding space. Following \citet{hamman2024quantifying}, we set the perturbation radius as
\begin{align}
    \sigma = \alpha \cdot \median_x(\normtwo{x - \NN(x)}),
\end{align}
where $\NN(x)$ denotes the nearest neighbor of $x$ in the training set and $\alpha$ is a scaling parameter. The median nearest neighbor distance provides a data-driven estimate of the size of the local neighborhood around a sample. Choosing $\sigma$ as a fraction of this distance ensures that perturbations remain within the local neighborhood of each test instance, allowing the proposed consistency measure to evaluate local prediction stability rather than probing behavior under unrealistically large perturbations. A radius of $\sigma = 0.01$ computed this way worked well across all settings. Similar neighborhood-based scaling is commonly used in machine learning algorithms, where hyperparameters are chosen relative to the intrinsic geometry of the data, such as in the literature of certified robustness techniques~\citep{cohen2019certified,salman_hadi,alfarra2022data}.

\subsubsection{Computing retail set consistency}
From the Rashomon \largeset{} $\retailset_{\Nlib}$, we sample $\Nretail$ models, to form a \smallset{} set $\retailset_{\Nretail}$, of size $\Nretail$, to reflect realistic deployment scenarios where only a limited ensemble size is feasible. 
We explore different values of $\Nretail$, i.e., $\Nretail = [1, 3, 5, 10, 30]$, to analyze the benefits of ensembling more models over individual models. We estimate consistency using Monte Carlo samples drawn uniformly from the surface of the $l_{2}$ sphere of radius $\sigma$ centered at $x$. For each consistency computation, all models in a sampled retail ensemble must be evaluated on the same perturbation set. 

Ideally, a new perturbation set would be generated for every Monte Carlo sample of the retail set, as well as for different sizes $\Nretail$ of the retail set. However, this would require rerunning model inference for every replication and every $\Nretail$, which is computationally prohibitive, since we observed GPU memory limits restrict inference to at most 100 perturbations per forward pass. Instead, we use  $\Nlib$ independently trained models in the warehouse to predict on a library of $\nogaussianptslib = 200$ randomly generated perturbations for each $x$. Each model is evaluated once for each perturbation in the perturbation library, and the resulting predictions are cached. For each Monte Carlo replication $q=1,\dots.,\nsampretail$, we sample without replacement the ensemble predictions corresponding to a retail set $\retailset_{\Nretail,q}$ of size $\Nretail$, and perturbation set $\localtestmat_{\nogaussianpts,q}$ of size $\nogaussianpts$ to compute the consistency score $\consistency(\retailset_{\Nretail,q},\localtestmat_{\nogaussianpts,q},x)$ for the $q-th$ draw. Repeating this procedure for $\nsampretail = 500$ replications yields an empirical distribution of the consistency estimator that captures the variability due to both retail ensemble sampling and perturbation sampling. For comparison, the Rashomon warehouse consistency score is computed once using the entire warehouse ensemble $\retailset_{\Nlib}$ together with the perturbation set $\localtestmat
_{\nogaussianptslib}$
, yielding $\consistency(\retailset_{\Nlib},x)$. Code to reproduce the experiments in this paper will be made available shortly, following final cleanup\footnote{\url{https://github.com/Sinjini77/Measuring-ensemble-consistency}}.

\subsection{Operationalizing the consistency measure through auditing}\label{sec: Auditing with proposed measure}

We return to the auditing framework introduced in Section~\ref{sec: Auditing and diversion}, where we compared the decisions made using an idealized baseline reference model and a practical candidate model. Table~\ref{tab: combined audit outcomes} enumerated all possible combinations of correct/incorrect predictions and diversion decisions of these models. The remaining question is how to decide when to divert an input. 

Our proposed approach decides when to divert by thresholding the consistency measure. The complete decision procedure is summarized in Algorithm~\ref{alg:consistent test point search}. Given a set of models $\retailset$, let $\ensmretset(\retailset,x)$ denote the ensemble prediction for input $x$, which produces an estimated label $h(x; \retailset)=\Indic{\bar{f}(\retailset,x)\ge \frac{1}{2}}$ and an error $\modelerrorlabel(x; \retailset)=\Indic{h(x; \retailset)\neq y}$. Specifically, we define the estimated model risk as $\riskassess(x; \retailset)=\Indic{\consistency(\retailset,\localtestmat,x) \ge \beta - \slackmeasure}$, where $\beta - \slackmeasure$ is the audit threshold for retail set ensembles. Inputs satisfying $\consistency(\retailset,\localtestmat,x)\ge \beta - \slackmeasure$ are diverted for human review, while the ensemble prediction is accepted for the remaining inputs.

In the subsequent sections, we show how varying $\beta$ induces a trade-off between audit cost and audit risk, and how this trade-off can guide the choice of $\beta$. We characterize this trade-off using the Rashomon warehouse ensemble, which serves as our baseline model and represents the reference auditing strategy against which finite sized ensembles are compared. Decreasing $\beta$ reduces the number of audited instances, thereby decreasing unnecessary human reviews of correctly classified samples, but may also increase the number of incorrect predictions that bypass auditing. Conversely, increasing $\beta$ results in more conservative auditing, reducing missed errors at the expense of auditing more correct predictions.

Finally, we evaluate whether the auditing decisions made using retail ensembles of fixed size $\Nretail$ approach those of the Rashomon warehouse ensemble as the ensemble size increases. Consistent with the objective outlined in Section~1.1, a desirable auditing strategy is one that minimizes excess audit cost and risk by reproducing the warehouse baseline's audit outcomes while requiring substantially fewer models. 

\begin{algorithm}[thb]
\caption{ Retail set ensembling consistency test}
\label{alg:consistent test point search}
\begin{algorithmic}
\renewcommand{\algorithmicrequire}{\textbf{Input:}}
\renewcommand{\algorithmicensure}{\textbf{Output:}}
\REQUIRE Test point $x$, Rashomon \smallset{} set $\retailset=\{f_{k} : k \in [\Nretail]\}$  of $\Nretail$  models, threshold $\beta$, slack parameter $\slackmeasure$
\ENSURE  Decision to divert $x$ to an expert
\STATE Sample $\nogaussianpts$ points $V = \{ \localtest_j :  j \in [\nogaussianpts]\} \iid \Unif(\ltwohsphere(x,\sigma))$.
\STATE Compute $ \consistency(\retailset, \localtestmat, x)$ as in Definition \ref{def: Consistency measure}.
\STATE Test if $ \consistency(\retailset, \localtestmat, x) \ge \beta - \slackmeasure$ 
\IF {Test is false}

\STATE Ensemble diverts 
\ELSE

\STATE Ensemble predicts
\ENDIF
 
\end{algorithmic}
\end{algorithm}
\subsubsection{Audit metrics}
We quantify the trade-off between risk and cost highlighted in the previous section using the auditing metrics introduced below, which measure both unnecessary audits for correct predictions and erroneous predictions that escape human review.

\begin{definition}[Costs and risks for auditing] \label{def:costsandrisks}
Let $\retailset$ be a retail set and $\ensmretset(\retailset,x)$ be the ensemble prediction of $\retailset$ for a fixed instance $x \in \dtest$, with $\Ntest$ as the size of $\dtest$. Let $h(x; \retailset)$ be the model's estimated label, $\modelerrorlabel(x; \retailset)= \Indic{h(x; \retailset) \ne y}$ the error label, and $\riskassess(x; \retailset) = \Indic{\consistency(\retailset,\localtestmat, x) \ge \beta - \slackmeasure}$ be the risk assessment.  The costs and risks of the auditing system in Algorithm \ref{alg:consistent test point search} can be measured in several ways.
    \begin{itemize}
    \item \textbf{Relative cost and risk.} The relative cost and risk compare the auditing system behavior in Algorithm \ref{alg:consistent test point search} in relation to the ensemble model predictions. Define the following two ratios:
    \begin{align}
    \label{def: Audit cost ratio}
    \auditcostratio(\retailset) 
        &= \frac{\sum_{x \in \dtest}\Indic{\modelerrorlabel(x; \retailset) = 0, \riskassess(x; \retailset) = 1}
            }{
            \sum_{x \in \dtest}\Indic{\modelerrorlabel(x; \retailset) = 0}}
    \\
    \label{def: Audit risk ratio}
    \auditriskratio(\retailset) 
        &= \frac{\sum_{x \in \dtest}\Indic{\modelerrorlabel(x; \retailset) = 1, \riskassess(x;\retailset) = 0}
            }{
            \sum_{x \in \dtest}\Indic{\modelerrorlabel(x; \retailset) = 1}}
    \end{align}
The audit cost ratio in \eqref{def: Audit cost ratio} is the fraction of points that the ensemble classified correctly but the auditing system sent for human intervention. The audit risk ratio in \eqref{def: Audit risk ratio} is the fraction of points that the ensemble classified incorrectly, but the auditing system did not send for human intervention. 
    \item \textbf{Absolute cost and risk.} We also measure the overall auditing burden and audit risk as fractions of the entire test set. Define the following two fractions:
    \begin{align}
    \label{def: Diversion fraction}
    \auditdivfrac(\retailset) 
        &= \frac{\sum_{x \in \dtest}\Indic{\riskassess(x; \retailset) = 1}}{\Ntest}
    \\
    \label{def: Audit risk fraction}
    \auditriskfrac(\retailset) 
        &= \frac{\sum_{x \in \dtest}\Indic{\modelerrorlabel(x; \retailset) = 1, \riskassess(x; \retailset) = 0}}{\Ntest}
    \end{align}
The diversion fraction \eqref{def: Diversion fraction} is the fraction of the test set sent for human intervention. The audit risk fraction \eqref{def: Audit risk fraction} is the fraction of the test set on which the auditing system fails to divert an erroneous prediction.
\end{itemize} 
\end{definition}
We use $\auditriskratio(\retailset_{\Nretail})$, $\auditcostratio(\retailset_{\Nretail})$, $\auditdivfrac(\retailset_{\Nretail})$, and $\auditriskfrac(\retailset_{\Nretail})$ to analyze audit behavior of a retail set $\retailset_{\Nretail}$, of size $\Nretail$. Similarly, for a Rashomon \largeset{} $\retailset_{\Nlib}$, with ensemble prediction $\ensmretset(\retailset_{\Nlib},x)$, estimated label $h(x; \retailset_{\Nlib})$, error label $\modelerrorlabel(x; \retailset_{\Nlib})= \Indic{h(x; \retailset_{\Nlib}) \ne y}$, and risk assessment $\riskassess(x; \retailset_{\Nlib}) =  \Indic{\consistency(\retailset_{\Nlib},x) \ge \beta}$, we use $\auditriskratio(\retailset_{\Nlib})$, $\auditcostratio(\retailset_{\Nlib})$, $\auditdivfrac(\retailset_{\Nlib})$, and $\auditriskfrac(\retailset_{\Nlib})$ to analyze audit behavior of the Rashomon \largeset{}.

\subsubsection{High consistency thresholds substantially reduce audit risk for Rashomon \largeset{}}\label{sec: Thresholdvsauditoutcomes}

In this section, we study how the auditing behavior of the Rashomon warehouse ensemble varies with the audit threshold $\beta$ in Equation~\eqref{eq: beta consistency}. The threshold determines the audit decision $\riskassess(x; \retailset_{\Nlib})=\Indic{\consistency(\retailset_{\Nlib}, x)\ge\beta}$ and therefore controls the trade-off between audit risk and audit cost in Definition~\ref{def:costsandrisks}. Throughout this section, we use paraphrase detection on the MRPC dataset with an ensemble of fine-tuned BERT models as a representative example. Results for additional datasets and architectures are provided in Appendix~\ref{sec: Additional beta tradeoff}.

Figure~\ref{fig: basecostratio} reports the \largeset{} audit cost ratio, $\auditcostratio(\retailset_{\Nlib})$, and audit risk ratio, $\auditriskratio(\retailset_{\Nlib})$ for thresholds $\beta=[-0.40,-0.30,-0.20,-0.10,0.0,0.10,0.20,0.25,0.30,0.35,0.40,0.45,0.50]$, using a coarser grid for smaller values of $\beta$ and finer grid for $\beta \ge 0.2$. Since our goal is to identify highly consistent test instances, the region of larger $\beta$ values is of greater practical interest, and the finer resolution allows us to more accurately characterize the corresponding trade-off between audit cost and audit risk. As $\beta$ increases, the auditing system becomes more conservative, diverting more test instances for human review. As a result, the audit risk ratio decreases while the audit cost ratio increases. Importantly, as we increase $\beta$, the audit risk ratio falls much more rapidly than the audit cost ratio rises, demonstrating the benefits of setting $\beta$ to higher values. This behavior is particularly desirable in high-risk applications, where reducing missed errors is more valuable than avoiding a moderate increase in human review.

 Figure~\ref{fig: basedivriskfrac} reports the \largeset{} diversion fraction and audit risk fraction as functions of $\beta$. Increasing $\beta$ substantially reduces the fraction of test instances that incur audit risk before causing a sharp increase in the diversion fraction. For example, increasing $\beta$ from $-0.5$ to $0.25$ reduces the audit risk fraction (represented by the solid red line) from approximately $0.174$ to almost $0.0$.
 
 Based on this, we set $\beta$ using a held-out test set to evaluate the trade-off between diversion fraction and audit risk fraction. We explicitly specify the maximum diversion fraction we are willing to incur and set an upper bound equal to the ensemble's prediction error plus an additional tolerance of $0.15$. Starting from $\beta=-0.5$, we increase $\beta$ in increments mentioned above and compute the resulting diversion fraction. We then select the largest value of $\beta$ for which the diversion fraction does not exceed the prescribed upper bound. This procedure allows us to tolerate only a modest increase in diversions beyond the baseline prediction error while maximizing the reduction in audit risk. Across all datasets and models considered in this work, this criterion consistently identifies operating points with audit risk fraction close to $0.05$ without incurring a substantial increase in diversion fraction.

\begin{figure*}[hbt]
\centering
\begin{subfigure}[t][][t]{0.45\textwidth}
    \includegraphics[width=\textwidth]{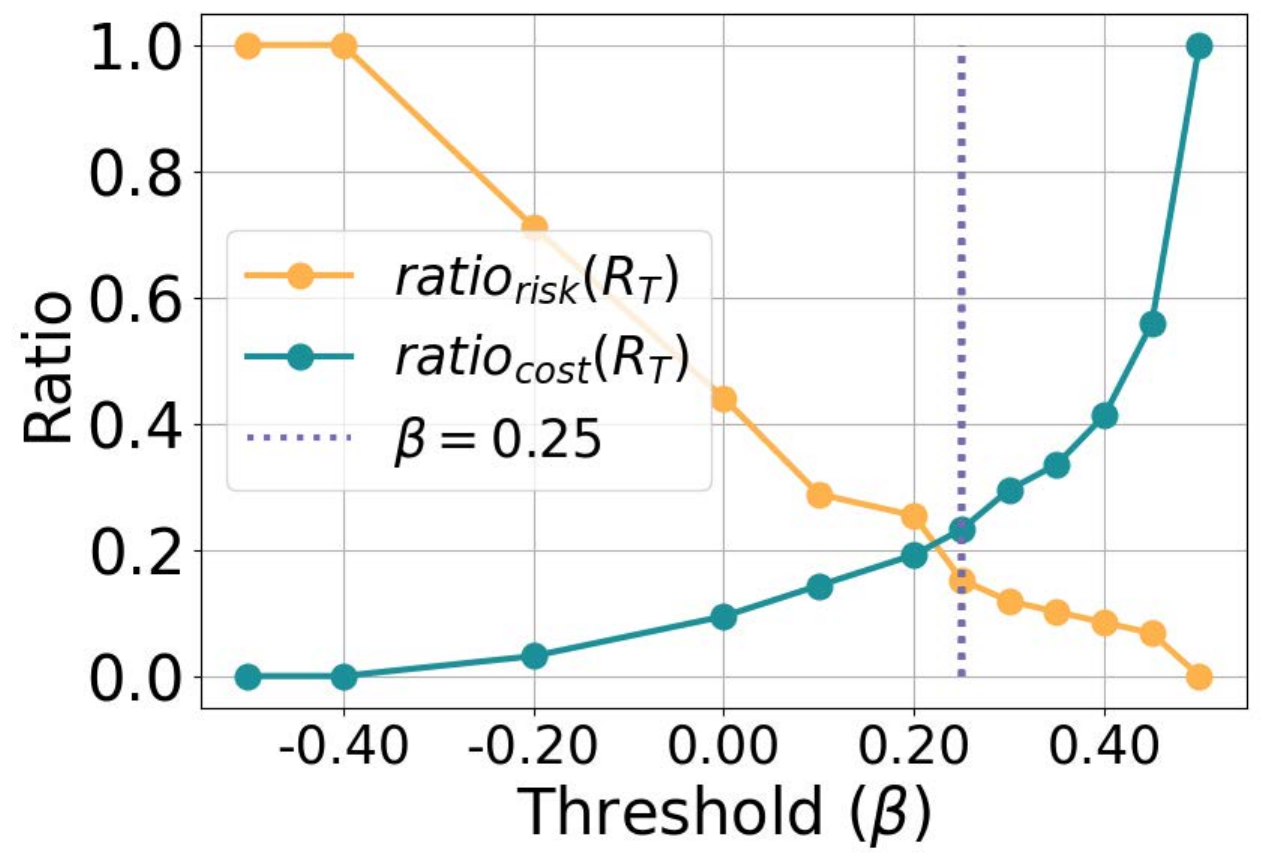}
    \caption{Comparing audit cost ratio ($\auditcostratio(\retailset_{\Nlib})$) and risk ratio ($\auditriskratio(\retailset_{\Nlib}))$. Increasing the threshold $\beta$ decreases risk at the expense of a higher cost.}
    \label{fig: basecostratio} 
\end{subfigure}
\hfill
\begin{subfigure}[t][][t]{0.45\textwidth}
    \includegraphics[width=\textwidth]{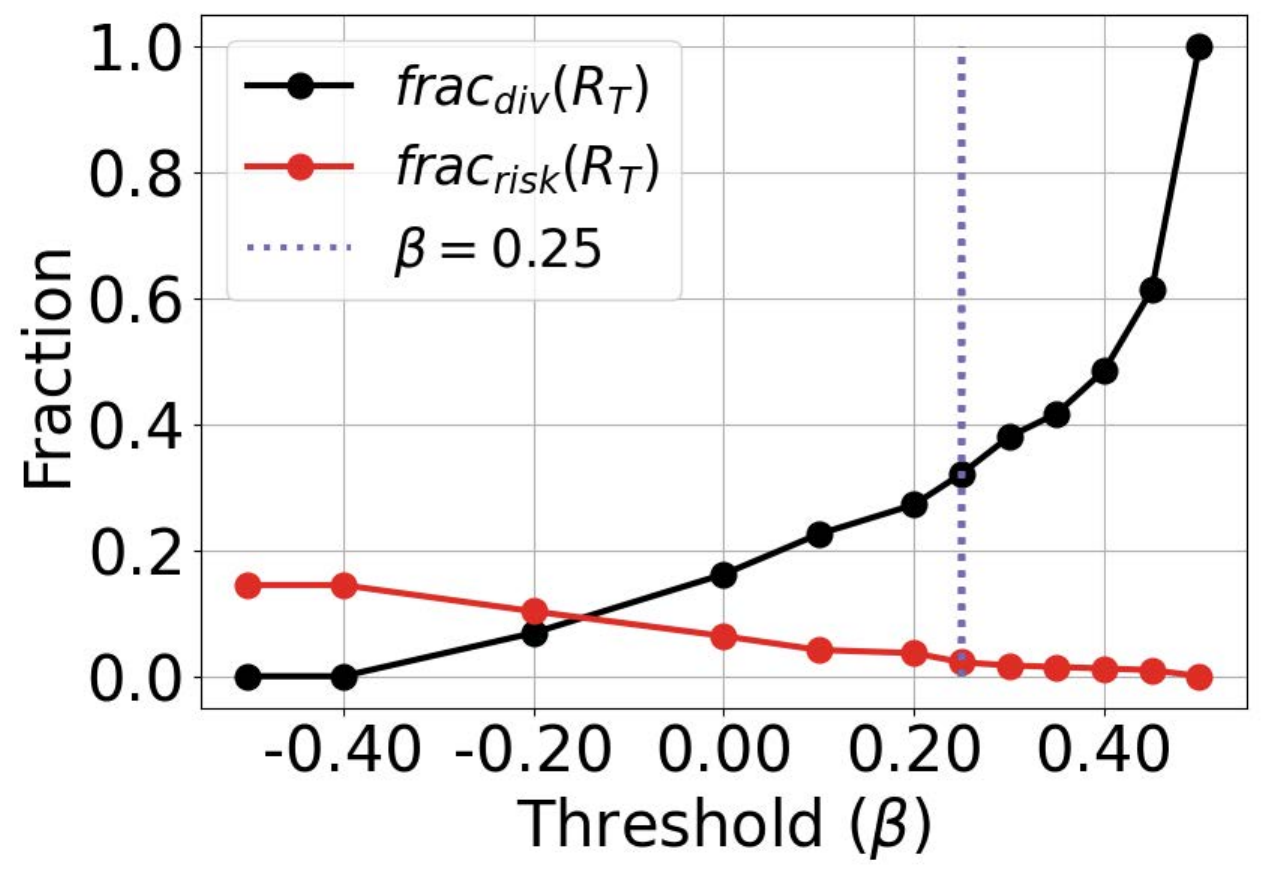}
\caption{Comparing diversion fraction ($\auditdivfrac(\retailset_{\Nlib})$) and audit risk fraction ($\auditriskfrac(\retailset_{\Nlib})$). Increasing the threshold $\beta$ causes the fraction of diverted points to increase and brings down the overall risk close to 0.}
    \label{fig: basedivriskfrac}
\end{subfigure}
\caption{Comparing the performance metrics in Definition \ref{def:costsandrisks} for the \largeset{} ensemble model as a function of the threshold $\beta$ used in the consistency check of Algorithm \ref{alg:consistent test point search}. Results are shown for paraphrase detection on the \textbf{MRPC} dataset using an ensemble of fine-tunings of the \textbf{BERT} model.}
\end{figure*}
        
\subsubsection{Retail set audit outcomes match \largeset{} audit outcomes as we increase $\Nretail$}\label{sec: Retailauditoutcomes}

Section~\ref{sec: Thresholdvsauditoutcomes} showed that thresholding consistency scores provides an effective auditing criterion for the Rashomon \largeset. In practice, however, consistency scores must be estimated from a finite \smallset{} ensemble. In this section, we investigate how closely auditing decisions based on \smallset{} ensembles approximate those of the \largeset{} as the ensemble size $\Nretail$ increases. We fix the audit threshold based on our discussion in Section~\ref{sec: Thresholdvsauditoutcomes}, and evaluate retail ensembles of size $\Nretail=[1,3,5,10,30]$. As described in Section~\ref{sec: Empirical Rashomon Setup}, we generate $\nsampretail=500$ independent retail ensembles $\retailset_{\Nretail,q}$ and compute the audit metrics $\auditcostratio(\retailset_{\Nretail,q})$, $\auditriskratio(\retailset_{\Nretail,q})$, $\auditdivfrac(\retailset_{\Nretail,q})$, and $\auditriskfrac(\retailset_{\Nretail,q})$  from Definition \ref{def:costsandrisks}. Figures~\ref{fig: praccostratio} and~\ref{fig: pracdivriskfrac} report the mean $\pm$ standard deviation over the $\nsampretail$ draws for paraphrase detection on the MRPC dataset using ensembles of independently fine-tuned BERT models. Results with additional datasets and architectures are provided in Appendix~\ref{sec:Addition retailauditoutcomes}.

The concentration result in Theorem~\ref{Theorem: Main Theorem} implies that retail consistency scores converge to the warehouse consistency scores as $\Nretail$ and $\nogaussianpts$ increase. Consequently, retail auditing decisions based on the consistency scores should increasingly align with those of the warehouse ensemble. Rather than using the potentially conservative theoretical value of $\slackmeasure$, we adopt a practical tolerance of $\slackmeasure=0.01$, yielding a retail auditing threshold of $\beta-\slackmeasure$.

Figure~\ref{fig: praccostratio} shows that small retail ensembles exhibit substantially higher audit risk ratios than the warehouse ensemble. For no ensembling, and ensemble size $\Nretail = 3$, the mean audit risk ratio (represented by the solid yellow line) is approximately $0.4$ and $0.25$, respectively. In contrast, the corresponding \largeset{} value (represented by the dashed yellow line) is much lower, around $0.15$. As $\Nretail$ increases, the audit risk ratio steadily decreases toward the warehouse value, while the audit cost ratio increases toward the warehouse audit cost ratio, indicating that larger retail ensembles more faithfully reproduce the warehouse auditing decisions.

A similar trend is observed in Figure~\ref{fig: pracdivriskfrac}. The audit risk fraction corresponding to no ensembling $(\Nretail = 1)$ represented by the solid red line,  is close to 0.07. In contrast, for the \largeset{} ensemble represented by the dashed red line, the value is close to $0$. The audit risk fraction decreases rapidly as $\Nretail$ increases, while the diversion fraction decreases to align with the warehouse diversion fraction. Most of the improvement occurs between $\Nretail=3$ and $\Nretail=10$, after which the curves largely stabilize.

Overall, these results provide empirical support for Theorem~\ref{Theorem: Main Theorem}. As the retail ensemble size increases, retail consistency scores and the resulting audit decisions converge toward those of the warehouse ensemble. Importantly, this behavior is achieved with relatively modest ensemble sizes, suggesting that practical retail ensembles can closely approximate the auditing performance of the Rashomon warehouse without requiring an excessive number of models. A more detailed analysis of the individual audit outcomes, including how retail and warehouse audit decisions differ across the outcome categories introduced in Table~\ref{tab: combined audit outcomes}, is provided in Appendix~\ref{sec: Combinedoutcomes}.

\begin{figure*}[hbt]
\centering
\begin{subfigure}[t][][t]{0.45\textwidth}
    \includegraphics[width=\textwidth]{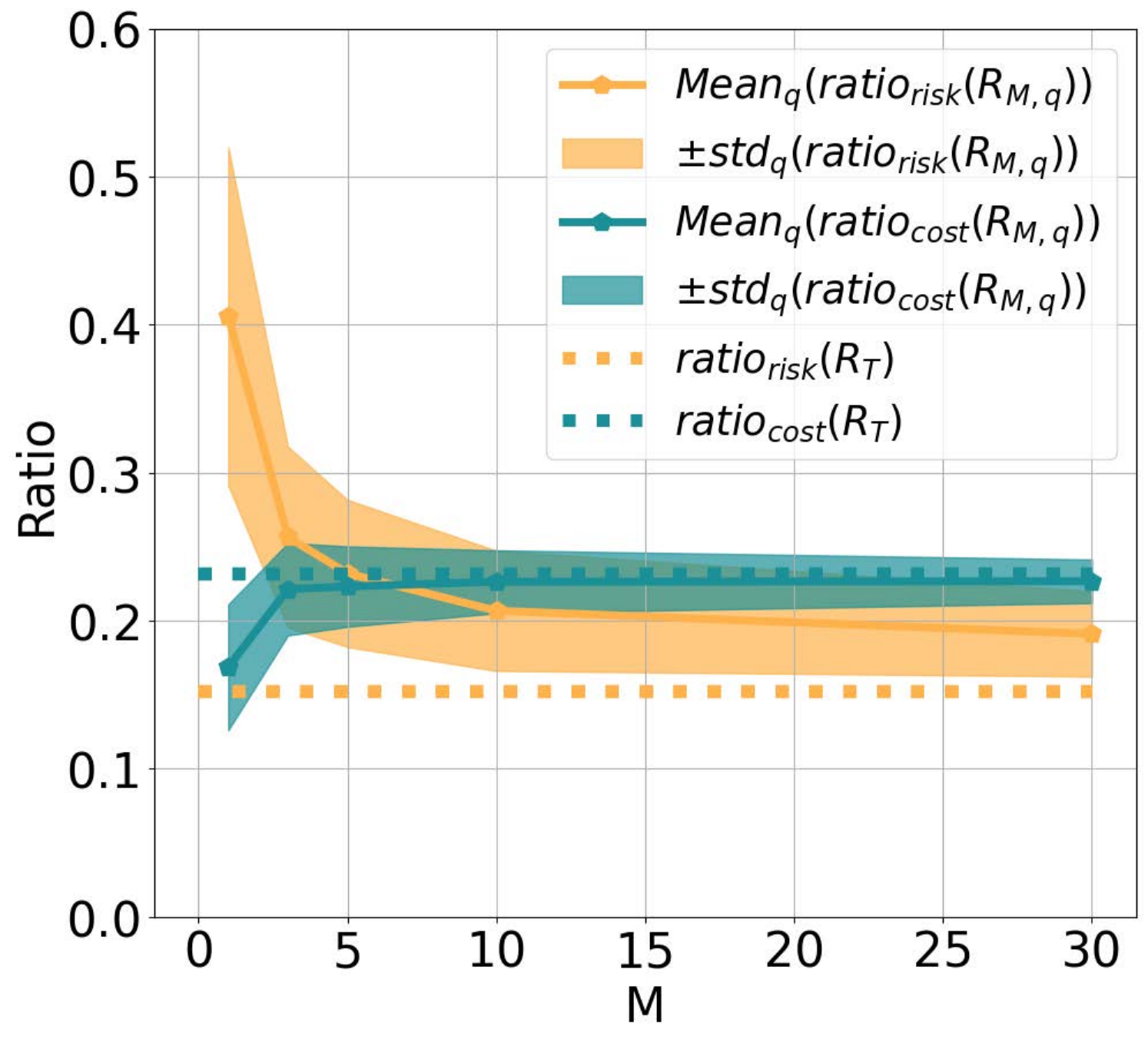}
    \caption{Comparing the audit cost ratio ($\auditcostratio$) and risk ratio ($\auditriskratio$) of retail sets for different ensemble sizes. Increasing the ensemble size has a small impact in matching \largeset{} once the ensemble is sufficiently large (around \Nretail = 10).}
    \label{fig: praccostratio} 
\end{subfigure}
\hfill
\begin{subfigure}[t][][t]{0.45\textwidth}
    \includegraphics[width=\textwidth]{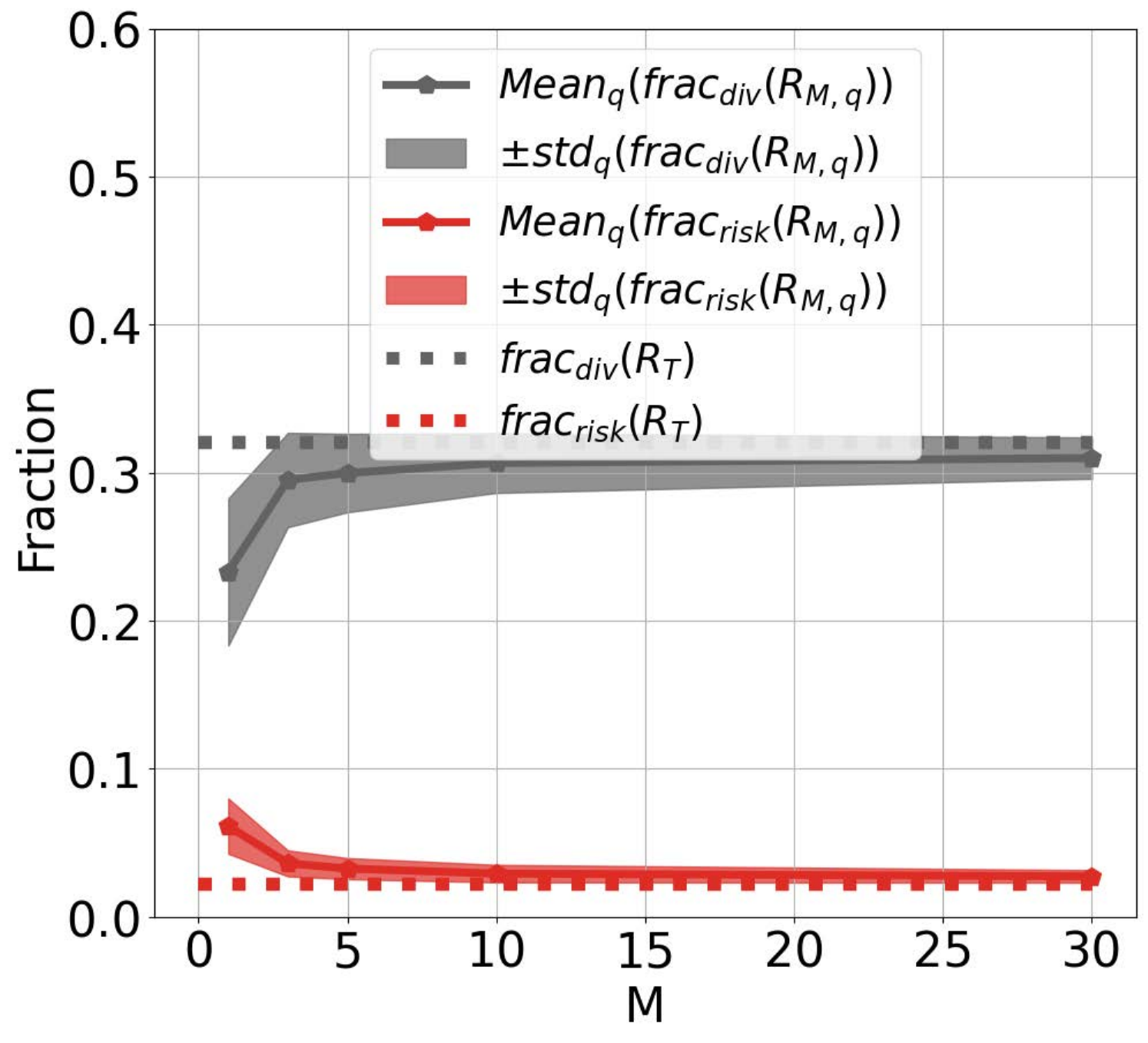}
\caption{Comparing the diversion fraction ($\auditdivfrac$) and audit risk fraction ($\auditriskfrac$) of retail sets for different ensemble sizes. Diversion and risk fractions nearly match those of the warehouse, even with relatively modest ensemble sizes.}
    \label{fig: pracdivriskfrac}
\end{subfigure}
\caption{Comparing the audit metrics in Definition \ref{def:costsandrisks} for  \smallset{} ensemble models as a function of the number of models $\Nretail$. For each size $\Nretail$, $500$ ensembles of size $\Nretail$ were generated by sampling with replacement from the \largeset{} set. Figures show the mean $\pm$ standard deviation (std) of the metrics in Definition \ref{def:costsandrisks} taken over the $500$ \smallset{} ensembles. Results are shown for paraphrase detection on the \textbf{MRPC} dataset using an ensemble of fine-tunings of the \textbf{BERT} model.}
\end{figure*}

\subsection{Comparison of proposed consistency measure against warehouse multiplicity metrics}\label{sec: Comp consistency multiplicity}

\begin{table*}[t]
\caption{Absolute Spearman correlation between consistency measures and multiplicity metrics when applied to the \eqref{tab:mrpc_corr} paraphrase detection task on the \textbf{MRPC} dataset, and \eqref{tab:sst2_corr} sentiment classification task on the \textbf{SST2} dataset using an ensemble of independently fine-tuned \textbf{BERT} models. Higher values indicate greater agreement between the consistency measures (defined in Appendix~\ref{sec: Related measures}) computed for small retail ensembles and the multiplicity metrics (defined in Appendix~\ref{sec: Multiplicity metrics}) computed from the entire Rashomon warehouse set. Rows highlighted in light blue correspond to the proposed consistency measures, $\consistency(x)$ and $\localmarginconsistency(x)$.} 
\label{tab:correlation_with_multiplicity_metrics}
\centering

\begin{subtable}[t]{0.572\textwidth}
\centering
\scriptsize
\setlength{\tabcolsep}{2.5pt}

\caption{\textbf{MRPC}}
\label{tab:mrpc_corr}

\resizebox{\linewidth}{!}{
\begin{tabular}{|p{1.6cm}|c|c|c|c|c|}
\hline
\multirow{4}{1.6cm}{\centering
\textbf{Consistency}\\[-0.5mm]
\textbf{measures}\\[-0.5mm]
\textbf{using \smallset}\\[-0.5mm]
\textbf{ensembles}}
&
\multicolumn{5}{c|}{}\\
&
\multicolumn{5}{c|}{\textbf{Multiplicity metrics}}\\
&
\multicolumn{5}{c|}{\textbf{using Rashomon \largeset{}}}\\
\cline{2-6}
&
$\pairwisedisag(x)$ &
$\disagens(x)$ &
$\localdisagens(x)$ &
$\predvar(x)$ &
$\predrange(x)$\\
\hline
\rowcolor{ourgrey}
\multicolumn{6}{|c|}{$\Nretail=1$}\\
\hline
\rowcolor{ourblue}
$\consistency(x)$ & $\mathbf{0.793}$ & $\mathbf{0.793}$ & $\mathbf{0.798}$ & $\mathbf{0.866}$ & $\mathbf{0.838}$\\
\hline
$\margin(x)$ & $0.786$ & $0.786$ & $0.762$ & $0.865$ & $0.836$\\
\hline
\rowcolor{ourblue}
$\localmarginconsistency(x)$ & $0.755$ & $0.755$ & $0.778$ & $0.841$ & $0.816$\\
\hline
$\localmargin(x)$ & $0.696$ & $0.696$ & $0.725$ & $0.773$ & $0.752$\\
\hline
$\localvar(x)$ & $0.712$ & $0.712$ & $0.721$ & $0.772$ & $0.751$\\
\hline
$\hammancons(x)$ & $0.679$ & $0.679$ & $0.668$ & $0.781$ & $0.760$\\
\hline
\rowcolor{ourgrey}
\multicolumn{6}{|c|}{$\Nretail=5$}\\
\hline
\rowcolor{ourblue}
$\consistency(x)$ & $\mathbf{0.885}$ & $\mathbf{0.885}$ & $\mathbf{0.891}$ & $\mathbf{0.957}$ & $\mathbf{0.929}$\\
\hline
$\margin(x)$ & $0.851$ & $0.851$ & $0.821$ & $0.927$ & $0.898$\\
\hline
\rowcolor{ourblue}
$\localmarginconsistency(x)$ & $0.848$ & $0.848$ & $0.888$ & $0.926$ & $0.897$\\
\hline
$\localmargin(x)$ & $0.814$ & $0.814$ & $0.864$ & $0.896$ & $0.868$\\
\hline
$\localvar(x)$ & $0.851$ & $0.851$ & $0.881$ & $0.928$ & $0.903$\\
\hline
$\hammancons(x)$ & $0.713$ & $0.713$ & $0.691$ & $0.823$ & $0.804$\\
\hline
\rowcolor{ourgrey}
\multicolumn{6}{|c|}{$\Nretail=10$}\\
\hline
\rowcolor{ourblue}
$\consistency(x)$ & $\mathbf{0.893}$ & $\mathbf{0.893}$ & $0.897$ & $\mathbf{0.964}$ & $\mathbf{0.937}$\\
\hline
$\margin(x)$ & $0.884$ & $0.884$ & $0.875$ & $0.957$ & $0.928$\\
\hline
\rowcolor{ourblue}
$\localmarginconsistency(x)$ & $0.883$ & $0.883$ & $\mathbf{0.899}$ & $0.953$ & $0.927$\\
\hline
$\localmargin(x)$ & $0.850$ & $0.850$ & $0.869$ & $0.926$ & $0.898$\\
\hline
$\localvar(x)$ & $0.887$ & $0.887$ & ${0.883}$ & $0.956$ & $0.932$\\
\hline
$\hammancons(x)$ & $0.660$ & $0.660$ & $0.636$ & $0.777$ & $0.762$\\
\hline
\end{tabular}
}
\end{subtable}
\begin{subtable}[t]{0.4\textwidth}
\centering
\scriptsize
\setlength{\tabcolsep}{2.5pt}

\caption{\textbf{SST2}}
\label{tab:sst2_corr}

\resizebox{\linewidth}{!}{%
\begin{tabular}{|c|c|c|c|c|}
\hline
\multicolumn{5}{|c|}{}\\
\multicolumn{5}{|c|}{\textbf{Multiplicity metrics}}\\
\multicolumn{5}{|c|}{\textbf{using Rashomon \largeset{}}}\\
\hline
$\pairwisedisag(x)$ &
$\disagens(x)$ &
$\localdisagens(x)$ &
$\predvar(x)$ &
$\predrange(x)$\\
\hline
\rowcolor{ourgrey}
\multicolumn{5}{|c|}{$\Nretail=1$}\\
\hline
\rowcolor{ourblue}
$\mathbf{0.686}$ & $\mathbf{0.686}$ & $0.709$ & $0.899$ & $\mathbf{0.884}$\\
\hline
$0.674$ & $0.674$ & $0.682$ & $0.889$ & $0.874$\\
\hline
\rowcolor{ourblue}
$0.682$ & $0.682$ & $\mathbf{0.711}$ & $\mathbf{0.900}$ & $\mathbf{0.884}$\\
\hline
$0.633$ & $0.633$ & $0.670$ & $0.827$ & $0.808$\\
\hline
$0.653$ & $0.653$ & $0.676$ & $0.830$ & $0.817$\\
\hline
$0.244$ & $0.244$ & $0.269$ & $0.430$ & $0.419$\\
\hline
\rowcolor{ourgrey}
\multicolumn{5}{|c|}{$\Nretail=5$}\\
\hline
\rowcolor{ourblue}
$\mathbf{0.753}$ & $\mathbf{0.753}$ & $0.768$ & $\mathbf{0.956}$ & $\mathbf{0.939}$\\
\hline
$0.748$ & $0.748$ & $0.747$ & $0.950$ & $0.933$\\
\hline
\rowcolor{ourblue}
$0.751$ & $0.751$ & $\mathbf{0.774}$ & $0.951$ & $0.934$\\
\hline
$0.740$ & $0.740$ & $0.767$ & $0.936$ & $0.919$\\
\hline
$0.744$ & $0.744$ & $0.765$ & $0.947$ & $0.931$\\
\hline
$0.202$ & $0.202$ & $0.224$ & $0.413$ & $0.403$\\
\hline
\rowcolor{ourgrey}
\multicolumn{5}{|c|}{$\Nretail=10$}\\
\hline
\rowcolor{ourblue}
$\mathbf{0.762}$ & $\mathbf{0.762}$ & $0.776$ & $\mathbf{0.971}$ & $\mathbf{0.956}$\\
\hline
$0.758$ & $0.758$ & $0.762$ & $0.970$ & $0.953$\\
\hline
\rowcolor{ourblue}
$0.758$ & $0.758$ & $\mathbf{0.780}$ & $0.964$ & $0.950$\\
\hline
$0.751$ & $0.751$ & $0.776$ & $0.957$ & $0.942$\\
\hline
$0.758$ & $0.758$ & $0.777$ & $0.961$ & $0.948$\\
\hline
$0.187$ & $0.187$ & $0.199$ & $0.390$ & $0.383$\\
\hline
\end{tabular}}
\end{subtable}
\end{table*}

In the previous sections, we showed that the proposed consistency measure can be used effectively as an auditing criterion. An equally important question is whether it can also pre-emptively capture predictive multiplicity. In this section, we evaluate the extent to which the proposed consistency score computed using a single \smallset{} ensemble of size $\Nretail$ correlates with multiplicity metrics computed using the Rashomon \largeset. A strong correlation would indicate that consistency computed from a single retail ensemble can serve as a practical proxy for predictive multiplicity in deployment settings where the full Rashomon \largeset{} is unavailable.


For a fixed test instance $x\in\dtest$, the multiplicity metrics in Appendix \ref{sec: Multiplicity metrics},  pairwise disagreement \citep{black2022model}, discrepancy \citep{marx2020predictive}, local discrepancy, prediction variance \citep{hamman2024quantifying,watson2023predictive}, and prediction range \citep{hamman2024quantifying,watson2023predictive} (Section~\ref{sec: Multiplicity metrics}), quantify different forms of disagreement among models in the Rashomon set. Since the Rashomon set is approximated by a \largeset{}\ $\retailset_{\Nlib}$ of size $\Nlib=100$, these multiplicity metrics are likewise estimated using this set. 


For notational simplicity, we suppress the explicit dependence of our proposed consistency measure $\consistency(x)$ and other related measures defined in Appendix \ref{sec: Related measures} on $\retailset_{\Nretail}$ and $\localtestmat_{\nogaussianpts}$. To assess whether the proposed consistency measure captures predictive multiplicity, we analyze the correlation of $\consistency(x)$ with these warehouse multiplicity metrics. We repeat this analysis for several related consistency measures that can also be computed from a single retail ensemble, to compare them against our proposed consistency measure.  In particular, we compare against the local ensemble consistency measure, $\localconsistency(x)$ (Definition~\ref{def: local consistency}), for which we also establish concentration guarantees in Appendix \ref{sec: Proof for local consistency}. While our proposed consistency measure evaluates the ensemble margin at the test instance and penalizes it by the average local prediction variability, $\localconsistency(x)$ instead computes the average ensemble prediction over the local neighborhood of $x$ before applying the same variability penalty term. We also compare against measures that isolate individual components of consistency, namely margin ($\margin(x)$, Definition~\ref{def: margin}), local margin ($\localmargin(x)$, Definition~\ref{def: Local margin}), and local variability ($\localvar(x)$, Definition~\ref{def: Local variability}). Finally, we compare with the consistency score $\hammancons(x)$ (Definition~\ref{def: Hamman stability}), a prior consistency measure proposed by \citet{hamman2024quantifying} that evaluates the confidence of a single model at the test instance and penalizes it by local prediction variability. As a result, it neither measures the margin relative to the decision boundary nor exploits ensembling over multiple models. 
For fair evaluation, we compare the measures for different ensemble sizes. By comparing their correlations with the warehouse multiplicity metrics, we evaluate whether our proposed consistency measure provides a more informative characterization of predictive multiplicity.

To quantify the relationship between consistency measures and predictive multiplicity, we use the Spearman rank correlation coefficient. Unlike linear correlation, Spearman correlation measures the strength of the monotonic relationship between two quantities based on their rankings, making it well suited for comparing consistency measures and multiplicity metrics that may not exhibit a linear relationship. For each consistency measure and multiplicity metric pair, we compute the Spearman correlation coefficient across the test set, defined as follows.
\begin{definition}[Spearman rank correlation coefficient]
\label{def: Spearman's corr}

Let $a=(a_1,\dots,a_N)$ and, $ b=(b_1,\dots,b_N)]$ be two sets of paired observations. Let $\rank(a_i)$ and $\rank(b_i)$ denote the ranks of $a_i$ and $b_i$, respectively, and define $d_i= \rank(a_i) - \rank(b_i)$.
Assuming there are no tied ranks, the Spearman  correlation coefficient between $a$ and $b$ is
\begin{align}
\rho(a,b)
=
1-\frac{6\sum_{i=1}^{N} d_i^2}
{N(N^2-1)}.
\end{align}

\end{definition}

Table \ref{tab:correlation_with_multiplicity_metrics} reports these correlation values for retail ensembles of size $\Nretail = [1, 5, 10]$. Rows highlighted in light blue correspond to the proposed consistency measures, $\consistency(x)$ and $\localmarginconsistency(x)$, while values in bold represent the highest value achieved for a fixed multiplicity metric and a fixed ensemble size.
Across all multiplicity metrics and for all ensemble sizes, our proposed consistency measure, $\consistency(x)$, and $\localmarginconsistency(x)$ achieve the highest correlation among all consistency measures. In particular, our measure exhibits strong correlations with pairwise disagreement, discrepancy, and local discrepancy, and particularly high correlations with prediction variance and prediction range. Test instances with high consistency scores correspond to a large ensemble margin and low variability of model predictions under local perturbations. A large margin indicates that the ensemble prediction lies far from the decision boundary, while low local variability indicates that the prediction is stable in the neighborhood of the test instance. As a result, these instances tend to exhibit low predictive multiplicity among the models in the \largeset, whereas instances with low consistency scores tend to exhibit substantially higher predictive multiplicity.

Another important observation is the benefit of computing consistency using a retail ensemble rather than a single model. Across nearly all consistency measures, the correlations with the warehouse multiplicity metrics increase as the retail ensemble size increases from $\Nretail=1$ to $\Nretail=5$ and $\Nretail=10$. This demonstrates that aggregating predictions from independently trained models provides a substantially more informative estimate of predictive multiplicity than a single model can. The gains are particularly pronounced when increasing the retail ensemble size from one to five models, with more modest improvements thereafter, suggesting that relatively small retail ensembles are sufficient to capture much of the multiplicity information present in the Rashomon warehouse. Additional experiments with other architectures and datasets are included in Appendix~\ref{sec: Comp consistency multiplicity app}.

We also note that measures based solely on margin information or solely on local variability perform consistently worse than measures that combine both. While margin captures distance relative to the decision boundary, it does not account for the stability of predictions under local perturbations. As a result, high-confidence predictions can still exhibit substantial multiplicity locally. Conversely, local variability measures only the stability of predictions under perturbations and does not account for how close the ensemble prediction lies to the decision boundary. As a result, predictions with low local variability may still have small margins and therefore be susceptible to predictive multiplicity. Combining both margin and local variability provides a more complete characterization of prediction consistency, which explains the stronger correlations achieved by the rows in blue.

\begin{figure*}[t]
\centering
\begin{minipage}[c]{1\textwidth}
\centering
\includegraphics[width=0.24\linewidth]{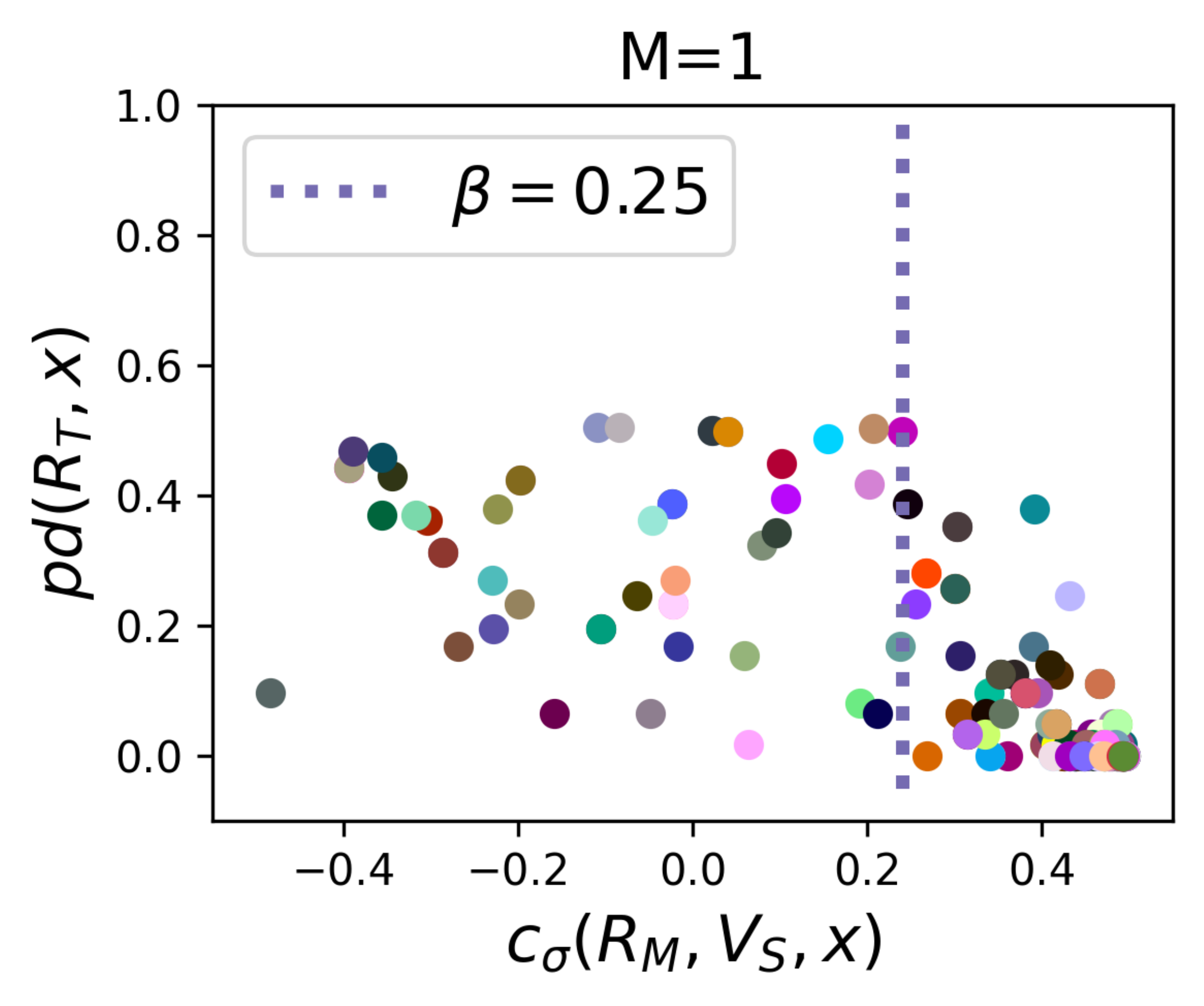}
\includegraphics[width=0.24\linewidth]{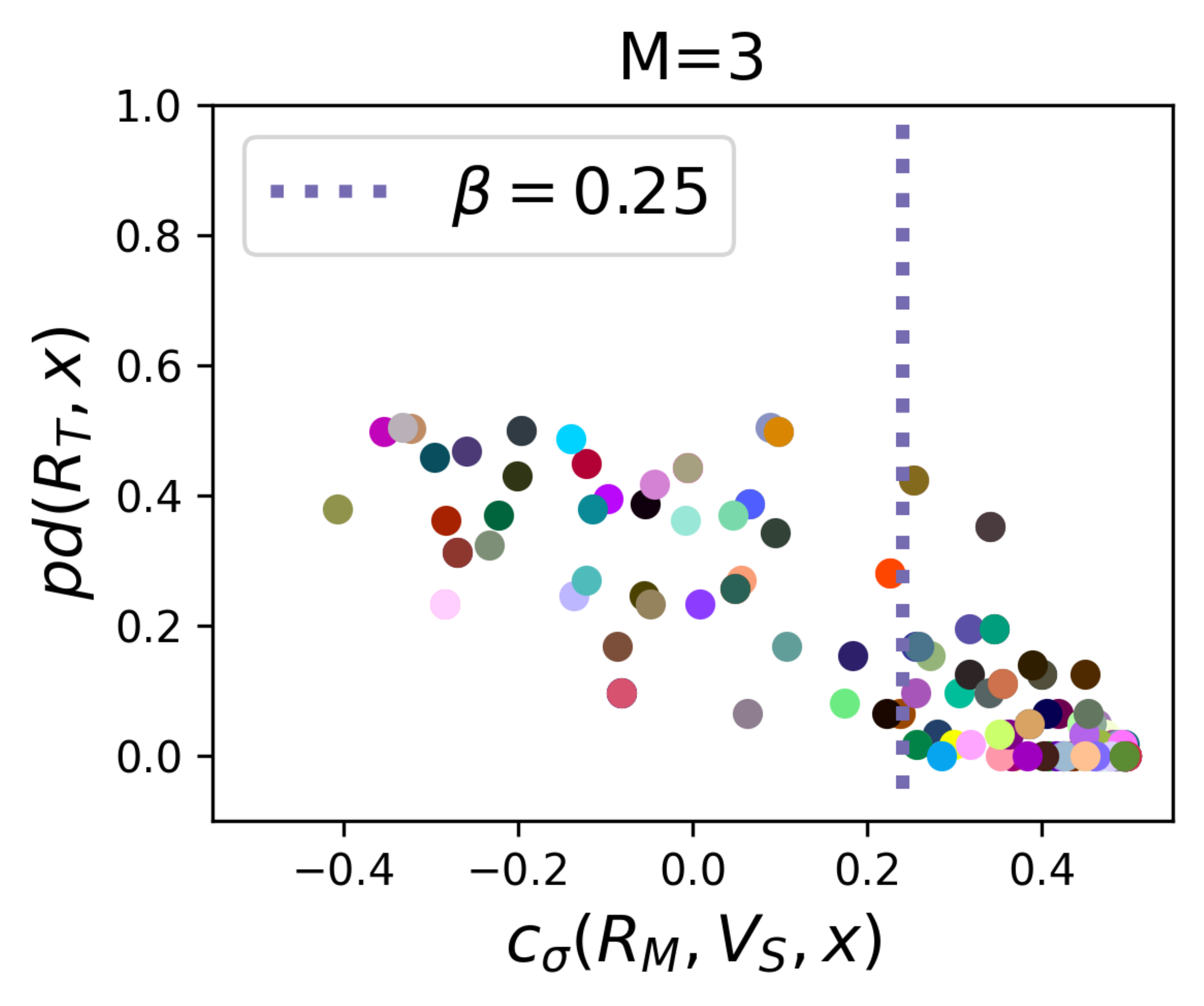}
\includegraphics[width=0.24\linewidth]{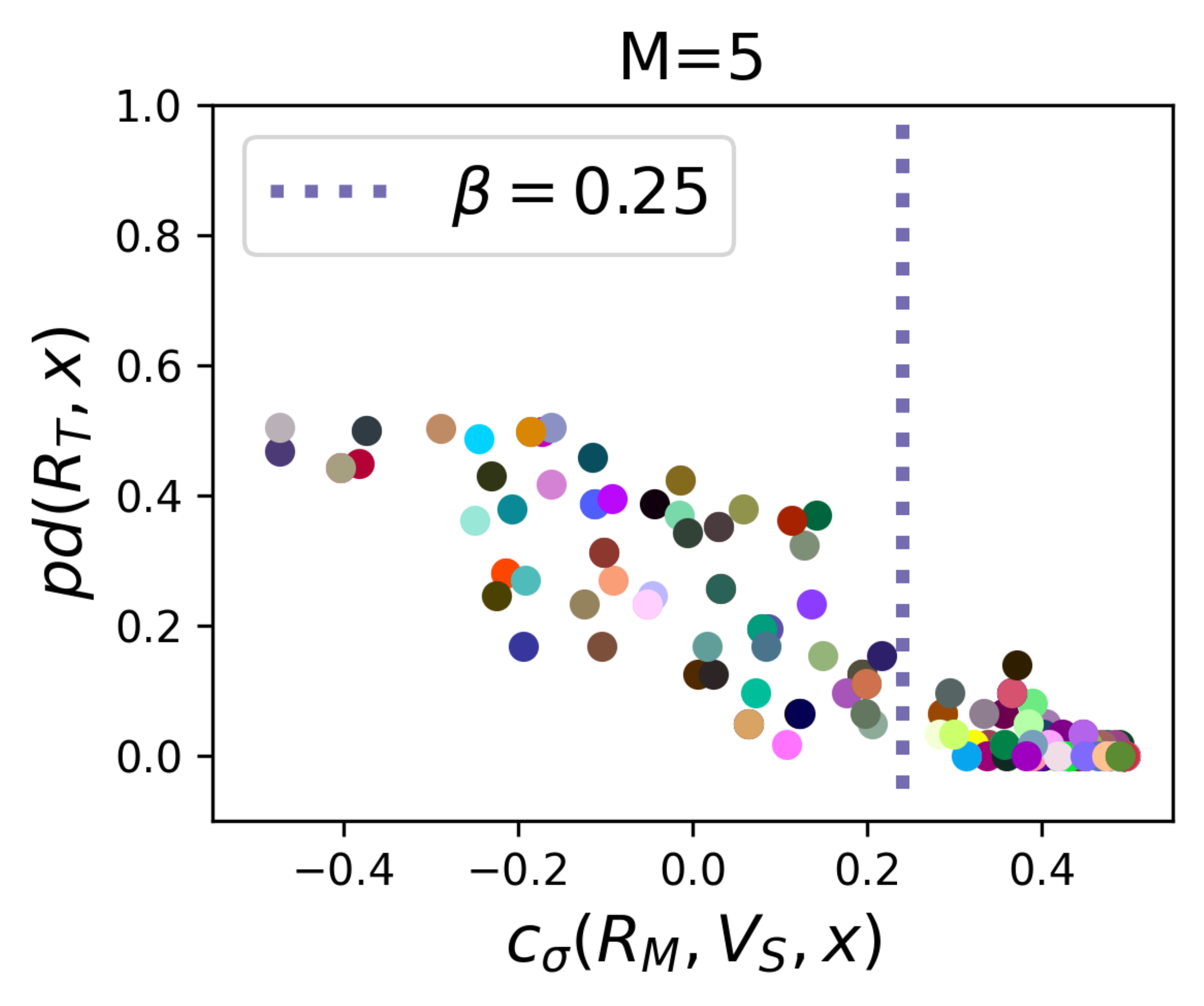}
\includegraphics[width=0.24\linewidth]{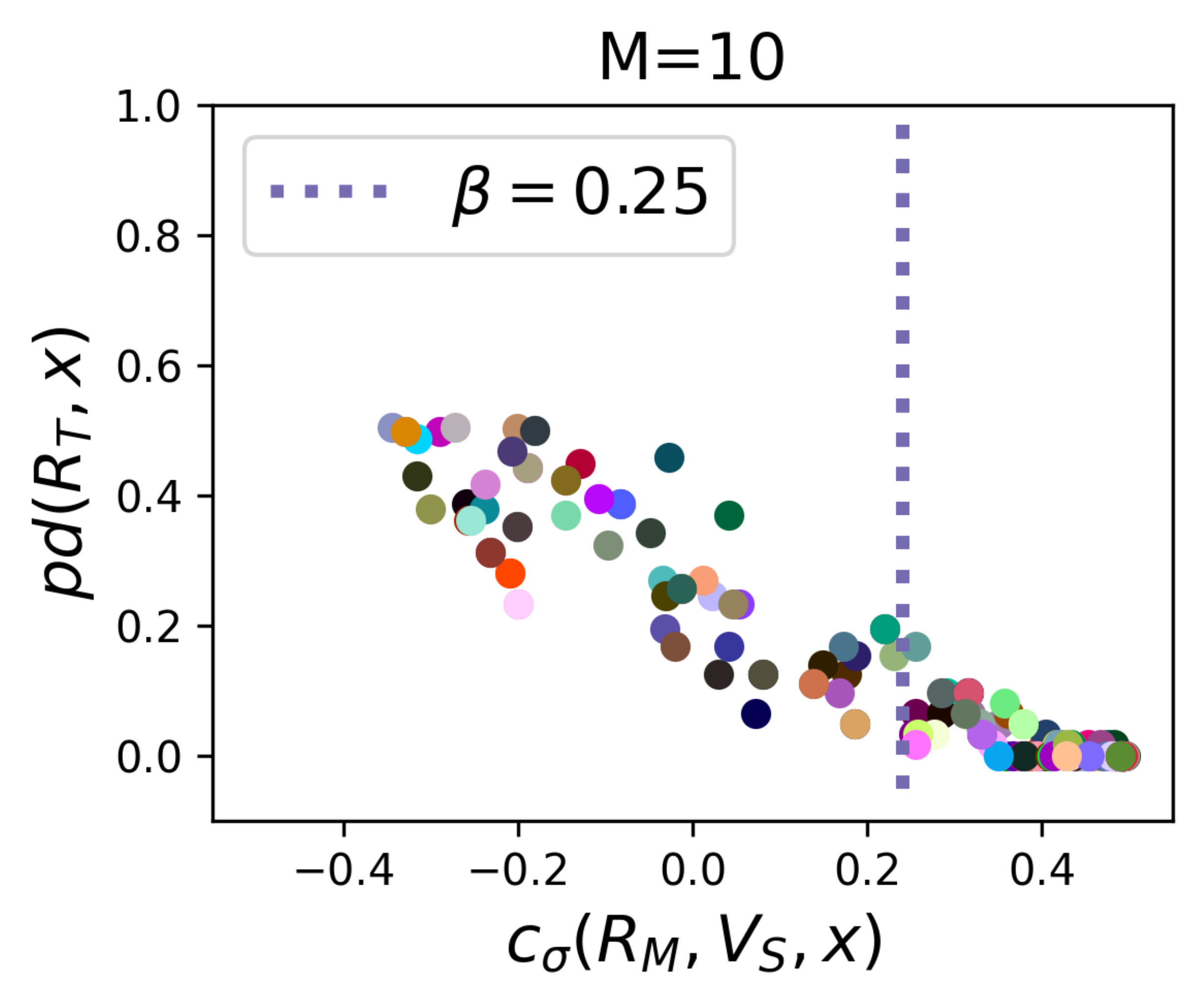}
\end{minipage}
\vspace{1ex}
\begin{minipage}[c]{1\textwidth}
\centering
\includegraphics[width=0.24\linewidth]{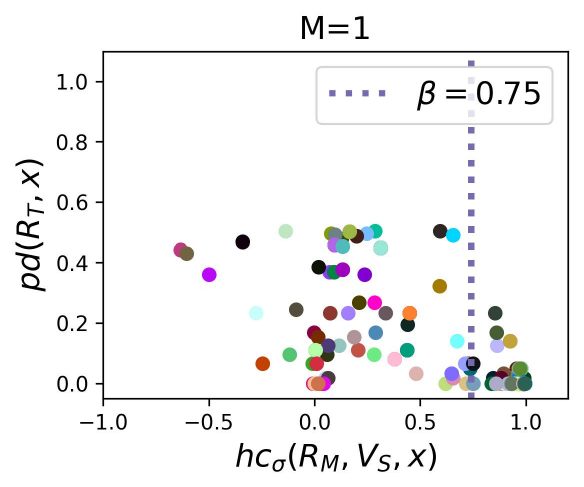}
\includegraphics[width=0.24\linewidth]{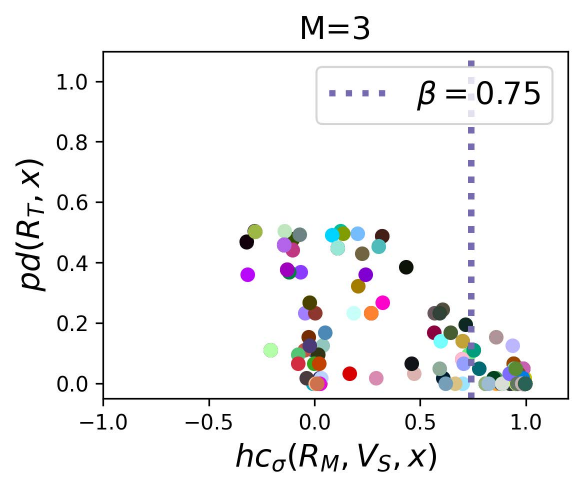}
\includegraphics[width=0.24\linewidth]{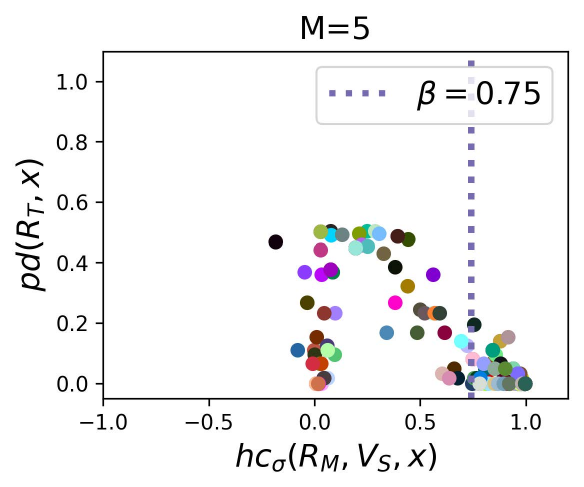}
\includegraphics[width=0.24\linewidth]{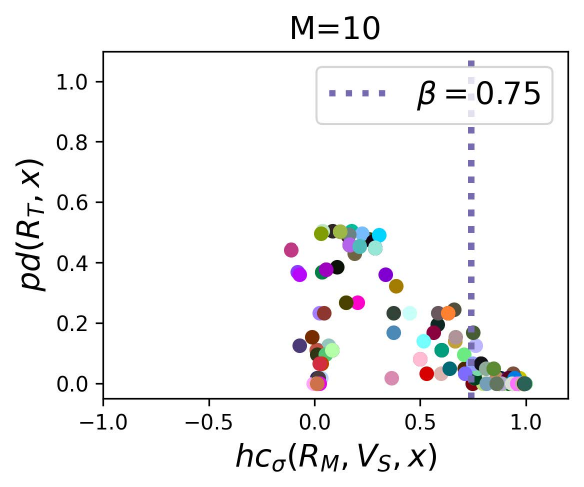}
\end{minipage}

\caption{
Scatter plots showing the relationship between consistency scores computed from a retail ensemble (x-axis) and the pairwise disagreement multiplicity metric~\citep{black2022model}, computed using the Rashomon warehouse (y-axis) for individual test instances. The top row corresponds to the proposed consistency measure $\consistency$ (Definition~\ref{def: Consistency measure}), while the bottom row corresponds to the consistency score $\hammancons$ of  \citet{hamman2024quantifying} (Definition~\ref{def: Hamman stability}). Columns show results for different retail ensemble sizes, and each colored dot represents one of 200 randomly sampled test instances. The proposed consistency measure exhibits a substantially stronger correlation with pairwise disagreement across all retail ensemble sizes than $\hammancons$. Results are shown for paraphrase detection on the \textbf{MRPC} dataset using ensembles of independently fine-tuned \textbf{BERT} models.}\label{fig:bert_mrpc_hammancomp}
\end{figure*}
\subsubsection{A note on \texorpdfstring{$\mbf{hc_{\sigma}}(x)$}{hc}}\label{sec: hamman note}

The score $\hammancons(x)$ in \citet{hamman2024quantifying} exhibits moderate correlations with the multiplicity metrics on MRPC, but its performance deteriorates substantially on SST2, where the correlations are considerably lower than those achieved by the proposed measures. However, the discrepancy between the high correlation of $\hammancons(x)$ with the multiplicity metrics in one experiment and the low correlation in the other is not surprising.  By definition, $\hammancons(x)$ measures the confidence assigned to a class and penalizes it by the average variability of that confidence under local perturbations. As a result, a predictor that consistently predicts the negative class with only moderate confidence can receive a low consistency score despite exhibiting highly stable predictions. An example of this behavior can be seen in Figure~\ref{fig:bert_mrpc_hammancomp}, where test instances with low multiplicity are assigned both high and low scores by $\hammancons(x)$. Several test instances with pairwise disagreement close to $0$ are assigned $\hammancons(x)$ scores near the extremes, close to $0$ or close to $1$. Thus, $\hammancons(x)$ incorrectly identifies many highly consistent test instances as inconsistent, because it depends on the confidence assigned to the predicted class rather than the distance from the decision boundary. We observe this behavior across all multiplicity metrics, the results of which are included in Section \ref{sec: MM vs cons scatter plot} of the Appendix.

In contrast, the proposed consistency measure assigns high scores to nearly all test instances exhibiting low pairwise disagreement in the Rashomon warehouse. Instances with pairwise disagreement close to $0$ are assigned the highest scores (above $0.4$). The low multiplicity of instances with high scores from the proposed consistency measure is by construction, because the score is high for instances with large ensemble margins and low local prediction variability, which implies that predictions across the Rashomon $\largeset$ are stable at these instances.

We also note that in~\citet{hamman2024quantifying}, $\hammancons(x)$  is evaluated with respect to the predicted class. As different models sampled from a Rashomon set may predict different classes for the same test instance, the quantity being evaluated changes whenever the predicted label changes. From our perspective, this complicates both the empirical interpretation and theoretical analysis of the measure, since the consistency score no longer corresponds to a fixed underlying quantity across different models. In contrast, our proposed consistency measure is defined with respect to an ensemble prediction for a fixed class, yielding a quantity whose concentration toward the corresponding Rashomon set value can be analyzed theoretically.

\section{Conclusion}
This work establishes a principled framework for analyzing predictive multiplicity and its impact on downstream auditing using finite ensembles drawn from the Rashomon set. We introduced a consistency measure that combines ensemble margin with local prediction variability, enabling the identification of predictions that remain stable across competing models. We further established theoretical guarantees showing that, under certain mild assumptions, consistency estimates computed from finite ensembles converge to those of the full Rashomon set as the ensemble size and the number of perturbation samples increase.

Our experiments demonstrate three key findings. First, finite ensembles provide a trade-off between audit cost and audit risk. Although a single model is considerably less expensive to deploy than approximating the full Rashomon set with larger ensembles, they exhibit much greater variability in auditing decisions. Additionally, individual models result in a higher volume of erroneous predictions escaping human review, but fewer unnecessary diversions, compared to an ensemble of the full Rashomon set. Even with a modest increase in ensemble size, audit decisions become increasingly stable and closely approximate those of the full Rashomon set. Second, contrary to prior work, our results show that using a single model to characterize predictive multiplicity is insufficient. Even relatively small ensembles consistently achieve substantially stronger correlations with established predictive multiplicity metrics, demonstrating that aggregating independently trained models provides a more faithful characterization of multiplicity. Finally, we show that how consistency is measured is equally important. A margin-based consistency measure penalized by local prediction variability consistently outperforms confidence-based approaches across most experimental settings.

Several directions remain for future work. Our theoretical analysis considers uniformly weighted ensembles, whereas modern prediction systems increasingly employ weighted ensembles and mixture-of-experts architectures, indicating that extending the concentration guarantees to these settings would be valuable. While this work focuses on probabilistic classification, extending the proposed consistency measure and auditing framework to regression and generative foundation models presents an important direction for future research. More broadly, we hope this work encourages the study of predictive multiplicity not only as a property of machine learning models but also of their effects on downstream decision systems that rely on them.

\bibliography{refs}
\bibliographystyle{tmlr}


\appendix

\section{Appendix}\label{sec : Appendix}

\subsection{Notation Summary}

We recap the notation used in the proofs in Table~\ref{Table: Notation summary }.

\begin{table}[htb]
\caption{Table of notation used in this paper}
\label{Table: Notation summary }
\centering
\renewcommand\arraystretch{1.2}
\begin{tabular}{cl}
\toprule
    \textbf{Symbol} & \textbf{Meaning} \\
\midrule
$f_{\param}$ & model from a set $\hypclass$ parameterized by $\varTheta$ \\
$h_{\param}$ & thresholded model output $\Indic{ f_{\param} \ge 1/2}$ \\
$\traindist$ & distribution on $\hypclass$ induced by training \\
$\datadist$ & data distribution on $\cX \times \cY$ \\

$\dtest$ & test dataset \\
$\Ntest$ & size of test data \\
$\emprashomonset(\rashomonparam)$ 
    & empirical Rashomon set with tolerance level $\rashomonparam$ \\
$\retailset = \{f_{k} : k \in [\Nretail]\}$ 
    & \smallset{} set with $\Nretail$ models sampled  $\iid \traindist$ \\ 
$\ensmretset(\retailset,x)$ & ensemble model formed from the \smallset{} set \\
$\ltwohsphere(x,\sigma)$ & Euclidean sphere of radius $\sigma$ centered at $x$ \\ 
$\localtestmat = \{\localtest_{j} : j \in [\nogaussianpts]\}$ & set of perturbed samples of size $\nogaussianpts$ sampled $\iid \Unif{\ltwohsphere(x,\sigma)}$ \\
$\ensmretset(\retailset,\localtestmat)$
    & average prediction of an ensemble of \smallset{} set on $\localtestmat$ \\
$\consistency(\retailset,\localtestmat, x)$
    & consistency measure in Equation \eqref{eq:consistency} \\
$\beta$ & consistency threshold \\ 
$\slackmeasure$ & deviation tolerance in the finite ensemble consistency guarantee\\
$\eta$ & Lipschitz constant of $f_{\param}$\\
$\rashexpctparam$ & stability tolerance of the Rashomon set\\
$\dtestconf(\beta,\sigma,\rashexpctparam)$ & Test instances that are $\beta,\sigma$ consistent and $\rashexpctparam$ stable.\\

 \bottomrule
\end{tabular}

\end{table}

\subsection{Measure concentration tools}

We restate McDiarmid's inequality~\citep{mcdiarmid1989method} in the form in which we use it to make the manuscript more self-contained.

\begin{lemma}[McDiarmid's inequality~\citep{mcdiarmid1989method}]
\label{lemma: mcdiarmid ineq}
Let $\mbf{z}_1,\dots,\mbf{z}_N$ be independent random variables taking values in a set $\mathcal{Z}$.
Let $\Phi:\mathcal{Z}^N \to \mathbb{R}$ be a function of $(\mbf{z}_1,\dots,\mbf{z}_N)$.
Assume $\Phi$ satisfies the bounded differences property: for each $i\in\{1,\dots,n\}$ there exists $c_i\ge0$ such that for all
$z_1,\dots,z_n,z'_i \in \mathcal{Z}$,
\begin{align}
    \big|\Phi(z_1,\dots,z_i,\dots,z_N) - \Phi(z_1,\dots,z'_i,\dots,z_N)\big|
    \le c_i .
    \label{eq:mcdiarmid:boundeddiff}
\end{align}
Then for every $t>0$,
\begin{align}
    \prob_{\mbf{z}}*{
        \abs*{ \Phi(\mbf{z}_1,\dots,\mbf{z}_N)
        - \E[\Phi(\mbf{z}_1,\dots,\mbf{z}_N)]}
        \ge t
        }
    \le
    2\exp\parens*{ -\frac{2t^2}{\sum_{i=1}^N c_i^2} }
\end{align}
\end{lemma}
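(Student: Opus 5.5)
The plan is the standard martingale (Doob) argument combined with Hoeffding's lemma and a Chernoff bound. The index range in the statement mixes $n$ and $N$; I take both to mean $N$ throughout. Write $\Phi = \Phi(\mbf{z}_1,\dots,\mbf{z}_N)$ and let $\mathcal{F}_i = \sigma(\mbf{z}_1,\dots,\mbf{z}_i)$, with $\mathcal{F}_0$ trivial. The Doob martingale increments are
\begin{align*}
\mbf{D}_i = \E[\Phi \mid \mathcal{F}_i] - \E[\Phi \mid \mathcal{F}_{i-1}], \qquad i \in [N].
\end{align*}
They satisfy $\sum_{i=1}^N \mbf{D}_i = \Phi - \E[\Phi]$ and $\E[\mbf{D}_i \mid \mathcal{F}_{i-1}] = 0$. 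I first prove the one-sided bound $\P(\Phi - \E\Phi \ge t) \le \exp(-2t^2/\sum_i c_i^2)$. Applying the same bound to $-\Phi$, which has the same bounded differences $c_i$, and taking a union bound then gives the factor $2$ in the two-sided statement.

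\textbf{Step 1 (each increment lies in a short interval).} This is the step I expect to be the main obstacle, since it is where independence is essential. Because the $\mbf{z}_j$ are independent, the conditional expectation $\E[\Phi \mid \mathcal{F}_i]$ equals $g_i(\mbf{z}_1,\dots,\mbf{z}_i)$, where
\begin{align*}
g_i(z_1,\dots,z_i) = \int \Phi(z_1,\dots,z_i,z_{i+1},\dots,z_N)\, dP_{i+1}(z_{i+1})\cdots dP_N(z_N).
\end{align*}
Conditionally on $\mathcal{F}_{i-1}$, the increment $\mbf{D}_i$ therefore lies between
\begin{align*}
\mbf{L}_i = \inf_{z} g_i(\mbf{z}_{1:i-1},z) - g_{i-1}(\mbf{z}_{1:i-1})
\quad\text{and}\quad
\mbf{U}_i = \sup_{z} g_i(\mbf{z}_{1:i-1},z) - g_{i-1}(\mbf{z}_{1:i-1}).
\end{align*}
Both endpoints are $\mathcal{F}_{i-1}$-measurable. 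Independence is what lets the same product measure over $(z_{i+1},\dots,z_N)$ be used for every value $z$ of the $i$-th coordinate. Hence the bounded differences property \eqref{eq:mcdiarmid:boundeddiff} passes through the integral, giving $\mbf{U}_i - \mbf{L}_i \le c_i$. To avoid measurability issues with the $\sup$ and $\inf$, I would use the bound $g_i(\cdot,z) - g_i(\cdot,z') \le c_i$ directly for pairs of values rather than working with the extremes.

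\textbf{Step 2 (Hoeffding's lemma, conditionally).} A mean-zero variable supported in an interval of length $c$ satisfies $\E e^{\lambda \mbf{D}} \le e^{\lambda^2 c^2/8}$. I would prove this in the usual way: use convexity of the exponential to bound it by the chord on the interval, then show that the resulting log-moment-generating function $\psi(\lambda)$ satisfies $\psi(0) = \psi'(0) = 0$ and $\psi'' \le 1/4$. The bound on $\psi''$ holds because $\psi''$ is the variance of a two-point distribution on an interval of unit length. Applied conditionally on $\mathcal{F}_{i-1}$, this gives
\begin{align*}
\E[e^{\lambda \mbf{D}_i} \mid \mathcal{F}_{i-1}] \le e^{\lambda^2 c_i^2/8}.
\end{align*}

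\textbf{Step 3 (tower property and Chernoff bound).} Peeling off one increment at a time with the tower property gives
\begin{align*}
\E\, e^{\lambda(\Phi - \E\Phi)} \le \exp\parens*{\lambda^2 \textstyle\sum_i c_i^2 / 8}.
\end{align*}
Markov's inequality then yields
\begin{align*}
\P(\Phi - \E\Phi \ge t) \le \exp\parens*{-\lambda t + \lambda^2 \textstyle\sum_i c_i^2/8}.
\end{align*}
Choosing $\lambda = 4t/\sum_i c_i^2$ produces $\exp(-2t^2/\sum_i c_i^2)$. Combining this with the symmetric bound for $-\Phi$ completes the proof.
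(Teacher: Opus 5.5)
The paper does not prove this lemma. It restates McDiarmid's inequality with a citation to \citet{mcdiarmid1989method} for self-containment, so there is no in-paper argument to compare against. Your proposal is the classical Doob-martingale proof of the cited result, and it is correct.

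\begin{itemize}
\item \textbf{Increments.} The increments $D_i$ have conditional mean zero. Given $\mathcal{F}_{i-1}$, each lies in an interval of length at most $c_i$, because independence lets the same product measure on $(z_{i+1},\dots,z_N)$ be integrated against any two values of the $i$-th coordinate.
\item \textbf{Chernoff step.} Conditional Hoeffding's lemma, the tower property, and the choice $\lambda = 4t/\sum_i c_i^2$ give exactly the exponent $-2t^2/\sum_i c_i^2$. Applying the same bound to $-\Phi$ and taking a union bound gives the factor $2$.
\end{itemize}

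You correctly use the conditional \emph{range} bound $U_i - L_i \le c_i$ rather than just $|D_i| \le c_i$. The latter (Azuma's inequality) would only give $\exp\bigl(-t^2/(2\sum_i c_i^2)\bigr)$. That is too weak for the constants the paper actually uses, e.g.\ $2\exp\bigl(-\Nretail\slackgap^2/(2\rashexpctparam^2)\bigr)$ in Lemma~\ref{lemma: Concentration ensemble pred at x}.

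Two small points. The corner case $\sum_i c_i^2 = 0$ is not addressed: there $\Phi$ is constant and the claim is trivial, so your choice of $\lambda$ is only needed when the sum is positive. Reading the stray $n$ as $N$ is the right interpretation.
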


\begin{lemma}[Hoeffding's Inequality]\label{lemma: Hoeffding's ineq}
Let ${\mbf{z}_{1}, \mbf{z}_{2}, \ldots, \mbf{z}_{N}}$ be independent random variables such that for each $i$, $z_i \in [a,b]$ almost surely. Then for any $\varepsilon > 0$, we have
	\begin{align}
	\prob_{\mbf{z}}*{
		\abs*{ \frac{1}{N}\sum_{i=1}^{N} z_i - \mathbb{E}[z_i] }
		\ge \varepsilon 
		}
	\leq 2 \exp\parens*{ - \frac{2N \varepsilon^2}{(b-a)^2}  }.
	\end{align}
\end{lemma}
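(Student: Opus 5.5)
The plan is to obtain the lemma as a special case of McDiarmid's inequality (Lemma~\ref{lemma: mcdiarmid ineq}), which the paper already states. As a self-contained backup, I would also sketch the classical Chernoff--Hoeffding argument.

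\textbf{Reading of the statement.} I read $\mathbb{E}[z_i]$ in the display as the mean of the average, $\frac{1}{N}\sum_{i=1}^N \mathbb{E}[\mbf{z}_i]$. This coincides with $\mathbb{E}[\mbf{z}_i]$ when the variables are identically distributed, which is the only case used later in the paper.

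\textbf{Main route via McDiarmid.}
\begin{itemize}
\item Take $\mathcal{Z}=[a,b]$ and $\Phi(z_1,\dots,z_N)=\frac{1}{N}\sum_{i=1}^N z_i$. By linearity, $\mathbb{E}[\Phi]=\frac{1}{N}\sum_i \mathbb{E}[\mbf{z}_i]$.
\item Bounded differences: replacing $z_i$ by $z_i'\in[a,b]$ changes $\Phi$ by $\abs{z_i-z_i'}/N\le (b-a)/N$. So \eqref{eq:mcdiarmid:boundeddiff} holds with $c_i=(b-a)/N$.
\item Then $\sum_i c_i^2=(b-a)^2/N$. Substituting $t=\varepsilon$ into McDiarmid's bound gives $2\exp\parens*{-2\varepsilon^2 N/(b-a)^2}$, which is exactly the claimed inequality.
\item The almost-sure boundedness lets us restrict to the event where every $\mbf{z}_i\in[a,b]$ without changing any probability.
\end{itemize}

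\textbf{Self-contained route via Chernoff.}
\begin{itemize}
\item \emph{Upper tail.} For $\lambda>0$, Markov's inequality applied to $e^{\lambda\sum_i(\mbf{z}_i-\mathbb{E}\mbf{z}_i)}$ bounds the upper-tail probability by $e^{-\lambda N\varepsilon}\prod_i \mathbb{E}\, e^{\lambda(\mbf{z}_i-\mathbb{E}\mbf{z}_i)}$. Independence is what allows the expectation of the product to factor.
\item \emph{Hoeffding's lemma.} For a centered variable $\mbf{w}$ supported in an interval of length $b-a$, we have $\mathbb{E}\, e^{\lambda \mbf{w}}\le e^{\lambda^2(b-a)^2/8}$.
\item \emph{Optimizing.} The upper tail is then at most $\exp(-\lambda N\varepsilon+N\lambda^2(b-a)^2/8)$. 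Choosing $\lambda=4\varepsilon/(b-a)^2$ gives $\exp(-2N\varepsilon^2/(b-a)^2)$.
\item \emph{Lower tail and union.} Applying the same argument to $-\mbf{z}_i$ gives the lower tail, and a union bound over the two tails produces the factor $2$.
\end{itemize}

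\textbf{Main obstacle.} The only nontrivial step is Hoeffding's lemma, and I would prove it in three steps.
\begin{itemize}
\item \emph{Convexity.} Since $w\mapsto e^{\lambda w}$ is convex, $e^{\lambda w}$ lies below the chord joining the endpoints of the support interval. Taking expectations and using $\mathbb{E}\mbf{w}=0$ gives $\mathbb{E}\, e^{\lambda\mbf{w}}\le e^{L(h)}$, with $h=\lambda(b-a)$ and $L(h)=-hp+\log(1-p+pe^h)$ for the appropriate $p\in[0,1]$.
\item \emph{Taylor expansion.} We have $L(0)=L'(0)=0$. Moreover, $L''(h)=q(1-q)\le 1/4$ for some $q\in[0,1]$.
\item \emph{Conclusion.} Taylor's theorem then gives $L(h)\le h^2/8$, which is the lemma.
\end{itemize}
Since McDiarmid's inequality is already available in the paper, I would present the first route as the proof.
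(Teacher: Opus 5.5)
Both of your routes are correct. Note that the paper itself gives no argument for this lemma: it only defers to Ledoux (2001), so either route supplies more than the paper does.

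\textbf{The McDiarmid reduction.} It checks out: $c_i=(b-a)/N$ gives $\sum_i c_i^2=(b-a)^2/N$, and $t=\varepsilon$ returns exactly the stated bound. Your reading of $\mathbb{E}[z_i]$ as $\frac{1}{N}\sum_i \mathbb{E}[\mathbf{z}_i]$ is the right one. Taken literally, the statement only makes sense for identically distributed variables, and that is the only case used in the later lemmas.

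\textbf{Comparing your two routes.} The McDiarmid specialization is a one-liner within the paper, since McDiarmid's inequality is already stated there. However, it only shifts the dependency onto another cited result. That result is strictly more general, and its standard proof (Doob martingale plus Azuma--Hoeffding) itself goes through Hoeffding's lemma. So this route is shorter but not more elementary.

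Your Chernoff argument, with Hoeffding's lemma proved via convexity of the exponential and $L''\le 1/4$, is the genuinely self-contained proof. It is essentially the classical argument the cited references give. It also yields, at no extra cost, the version with heterogeneous ranges $[a_i,b_i]$ and $\sum_i (b_i-a_i)^2$ in the denominator.

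If the aim is a self-contained appendix rather than one result resting on another citation, the Chernoff route is the one worth writing out.
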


 \begin{proof}
     We refer the readers to~\citet{ledoux2001concentration} for proof of this lemma.
 \end{proof}

\subsection{Probabilistic guarantee on our proposed consistency measure}\label{sec: Proofs}

Before proving our main results, we state and prove some lemmas that we will leverage. All our proofs make some common assumptions which we restate below.

\assume*

\begin{lemma}\label{lemma: Concentration ensemble pred at x}
Let $\emprashomonset$, $\rretailset$, and $x$ satisfy Assumption \ref{assume:rashomon} for parameter $\rashexpctparam > 0$. Let $\ensmretset(\retailset,x)$ be the ensemble model defined in Equation \eqref{eq: ensemble at x}.
Define
	\begin{align}
	\gapmargin(\retailset,x) = \abs*{\abs*{\ensmretset(\retailset, x)-\frac{1}{2}}  - \abs*{\E_{\rretailset}\bracks*{\ensmretset(\rretailset,x)} - \frac{1}{2}}}.
    \label{eq:margingap}
	\end{align}
Then for all $\slackgap > 0$,
	\begin{align}
	\prob_{\rretailset}*{
		{\gapmargin(\rretailset, x)} \ge \slackgap } 
	\le 2\exp\parens*{ - \frac{\Nretail\slackgap^2}{2\rashexpctparam^2} }.
	\end{align}
\end{lemma}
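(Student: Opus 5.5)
The plan is to reduce the margin gap to the deviation of an empirical mean from its expectation, and then apply Hoeffding's inequality (Lemma~\ref{lemma: Hoeffding's ineq}) to the i.i.d.\ models in the \smallset{} set.

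\textbf{Step 1: remove the absolute values.} By the reverse triangle inequality, $\abs{\abs{a-\tfrac12}-\abs{b-\tfrac12}} \le \abs{a-b}$. Taking $a=\ensmretset(\retailset,x)$ and $b=\E_{\rretailset}[\ensmretset(\rretailset,x)]$ gives
\begin{align}
\gapmargin(\retailset,x) \le \abs*{\ensmretset(\retailset,x) - \E_{\rretailset}\bracks*{\ensmretset(\rretailset,x)}}.
\end{align}
Hence the event $\{\gapmargin(\rretailset,x)\ge \slackgap\}$ is contained in the event that this deviation is at least $\slackgap$.

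\textbf{Step 2: identify the mean.} Under Assumption~\ref{assume:rashomon}, the models $\rf_1,\dots,\rf_\Nretail$ are i.i.d.\ from $\traindist$. By linearity, $\E_{\rretailset}[\ensmretset(\rretailset,x)] = \E_{\rf}[\rf(x)]$, the expected model at $x$. So the deviation above is exactly $\abs{\frac{1}{\Nretail}\sum_k \mbf{z}_k - \E[\mbf{z}_k]}$ with $\mbf{z}_k = \rf_k(x)$.

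\textbf{Step 3: bound the range of each term.} The main point is that we should not use the trivial range $[0,1]$. Instead we use $\rashexpctparam$-stability at $x$ (Definition~\ref{def: Rashomon stability}), which holds for every $f\in\emprashomonset$ and hence almost surely for each $\rf_k$. It confines $\mbf{z}_k$ to the interval $[\E_{\rf}[\rf(x)]-\rashexpctparam,\ \E_{\rf}[\rf(x)]+\rashexpctparam]$, whose width is $b-a = 2\rashexpctparam$.

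\textbf{Step 4: apply Hoeffding.} Lemma~\ref{lemma: Hoeffding's ineq} then gives
\begin{align}
\prob_{\rretailset}*{\gapmargin(\rretailset,x)\ge\slackgap} \le 2\exp\parens*{-\frac{2\Nretail\slackgap^2}{4\rashexpctparam^2}} = 2\exp\parens*{-\frac{\Nretail\slackgap^2}{2\rashexpctparam^2}},
\end{align}
which is the claimed bound.

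There is no serious obstacle. The only step needing care is Step 3: the interval width must come from stability rather than from the $[0,1]$ range of $f$, since that is what produces $\rashexpctparam^2$ in the exponent. The same argument could equivalently be phrased through McDiarmid's inequality (Lemma~\ref{lemma: mcdiarmid ineq}) applied to $\ensmretset$, with bounded differences $c_k = 2\rashexpctparam/\Nretail$.
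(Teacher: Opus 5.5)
Your proof is correct and essentially the paper's: the paper applies McDiarmid's inequality to $\ensmretset(\rretailset,x)$ with bounded differences $c_k = 2\rashexpctparam/\Nretail$ obtained from $\rashexpctparam$-stability, then passes to $\gapmargin$ via the reverse triangle inequality. Your Hoeffding step with range $2\rashexpctparam$ is the same argument specialized to a sample mean and gives the identical exponent $\Nretail\slackgap^2/(2\rashexpctparam^2)$, as you note yourself.
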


\begin{proof}
Fix $\rashexpctparam > 0$ and $x \in \dtestconf$.  
Since $\rretailset$ is sampled i.i.d. according to $\traindist$, we will use McDiarmid's inequality (Lemma \ref{lemma: mcdiarmid ineq}) on the ensemble prediction $\ensmretset(\retailset,x)$. To do so, we must verify the bounded differences property in \eqref{eq:mcdiarmid:boundeddiff}. Let $\retailset = \{f_{1},\dots,f_{k},\dots, f_{\Nretail}\} \subseteq \emprashomonset$ and $\retailset'= \retailset \setminus \{f_k\} \cup \{f'_k\}$, where $f_k' \in \emprashomonset$. Using triangle inequality and the fact that $\emprashomonset$ is $\rashexpctparam$-stable at $x$ (Definition \ref{def: Rashomon stability}):
    \begin{align}
    \abs*{ \ensmretset(\retailset,x) - \ensmretset(\retailset',x) }
    &= \abs*{\frac{1}{\Nretail}\sum_{k=1}^{\Nretail}f_{k}(x) 
        - \frac{1}{\Nretail}\sum_{k=1}^{\Nretail}f'_{k}(x)} 
        \notag \\
    &= \frac{1}{\Nretail}\abs*{f_k{(x)} - f'_k{(x)}} 
        \notag \\
    &\le \frac{1}{\Nretail}\abs*{f_k{(x)} - \E_{\rf}[\rf{(x)}]} 
        + \frac{1}{\Nretail}\abs*{f'_k{(x)} - \E_{\rf}[\rf{(x)}]}
        \notag \\
    &\le \frac{2 \rashexpctparam}{\Nretail}.
\end{align}
Now applying Lemma \ref{lemma: mcdiarmid ineq} with $c_k = \frac{2 \rashexpctparam}{\Nretail}$, we have
    \begin{align}
      \prob_{\rretailset}*{
            \abs*{ \ensmretset(\rretailset,x)
                 - \expect_{\rretailset}*{\ensmretset(\rretailset,x)} }
          \ge \slackgap }
      \le 
      2\exp\parens*{ \frac{-\Nretail\slackgap^2}{2\rashexpctparam^2} }.
    \end{align}
Let $\gapmargin(\retailset,x )$ be defined as in \eqref{eq:margingap}. From the reverse triangle inequality,
    \begin{align}
    {\gapmargin(\retailset,x)} 
        \le \abs*{\ensmretset(\retailset,x) 
            - \expect_{\rretailset}*{\ensmretset(\rretailset,x)}}.
    \end{align}
Therefore the event $\{{\gapmargin(\retailset, x)} \ge \slackgap\}$ implies  $\{\abs*{\ensmretset(\retailset,x) - \E_{\rretailset}\big[\ensmretset(\rretailset,x)\big] } \ge \slackgap\}$, so
    \begin{align}
    \prob_{\rretailset}*{{\gapmargin(\rretailset,x)} \ge \slackgap} 
    \le 2\exp\parens*{
        \frac{-\Nretail\slackgap^2}{2\rashexpctparam^2} }.
    \end{align}
\end{proof}

Our main result involves showing measure concentration over perturbations and ensemble models. Since both are generated randomly, we are interested in the random variables corresponding to the differences
    \begin{align}
    \rtestdiff_{j,k}(x) = \abs*{\rf_{k}(x) - \rf_{k}(\rlocaltest_{j})} \qquad j \in [\nogaussianpts], k \in [\Nretail],
    \label{eq:rtestdiff}
    \end{align}
where $\rretailset = \{\rf_k : k \in [\Nretail]\}$ are drawn i.i.d.~from $\emprashomonset$ and $\rlocaltestmat = \{ \rlocaltest_j : j \in [\nogaussianpts] \}$ are draw  uniformly from $\ltwohsphere(x,\sigma)$. Define 
    \begin{align}
    \allavg(x) &= \frac{1}{\Nretail} \sum_{k=1}^{\Nretail}
            \frac{1}{\nogaussianpts} \sum_{j=1}^{\nogaussianpts} \rtestdiff_{j,k}(x).
    \label{eq:conc:allavg} \\
    \expallavg(x) &= \expect_{\rretailset,\rlocaltestmat}*{ \allavg(x) }.
    \label{eq:conc:expallavg}
    \end{align}
 We will analyze the concentration of these quantities against their expectations.   
\begin{lemma}\label{lemma:difference:conditional}
Let $\emprashomonset$, $\rretailset$, and $x$ satisfy Assumption \ref{assume:rashomon} for parameters $\sigma > 0$. Let $\rlocaltestmat =  \{\rlocaltest_{j}: j \in [\nogaussianpts]\}$ be a random set of perturbations of $x$, distributed $\iid \Unif(\ltwohsphere(x,\sigma))$. 
Define
    \begin{align}
    \condavg(\retailset,x)
        = \expect_{\rlocaltestmat}*{ 
            \frac{1}{\Nretail} \sum_{k=1}^{\Nretail}
            \frac{1}{\nogaussianpts} \sum_{j=1}^{\nogaussianpts}
                \abs*{f_{k}(x) - f_{k}(\rlocaltest_{j})} 
             }.
    \end{align}
Then for all $\slackparam > 0$
    \begin{align}
    \label{eq:ball:condbound}
    \prob_{\rretailset}*{
        \abs*{
        \condavg(\rretailset,x) 
            - \expallavg(x)
            }
        \ge
        \slackparam
    }
    \le
    2 \exp\parens*{\frac{-2\Nretail\slackparam^2}{\eta^2\sigma^{2}} }.
    \end{align}
\end{lemma}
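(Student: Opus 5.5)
The plan is to apply McDiarmid's inequality (Lemma~\ref{lemma: mcdiarmid ineq}) to the function $\retailset \mapsto \condavg(\retailset,x)$ of the i.i.d.\ models $\rf_1,\dots,\rf_{\Nretail}$. This mirrors the proof of Lemma~\ref{lemma: Concentration ensemble pred at x}, with the local Lipschitz assumption taking the place of $\rashexpctparam$-stability.

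\textbf{Step 1: simplify $\condavg$.} Since the $\rlocaltest_j$ are i.i.d.\ $\Unif(\ltwohsphere(x,\sigma))$ and independent of $\retailset$, linearity of expectation gives
\begin{align}
\condavg(\retailset,x) = \frac{1}{\Nretail}\sum_{k=1}^{\Nretail} g(f_k), \qquad g(f) = \E_{\rlocaltest}\bracks*{\abs*{f(x)-f(\rlocaltest)}}.
\end{align}
Conditioning on $\rretailset$ and then using the tower property, we get $\E_{\rretailset}[\condavg(\rretailset,x)] = \expallavg(x)$. So $\expallavg(x)$ is exactly the mean that McDiarmid's inequality centers on.

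\textbf{Step 2: bounded differences.} Every $f\in\emprashomonset$ is $\eta$-locally Lipschitz at $x$ with radius $\sigma$ (Assumption~\ref{assume:rashomon}). Each $\localtest$ in the support satisfies $\normtwo{x-\localtest}=\sigma$, so
\begin{align}
0\le \abs*{f(x)-f(\localtest)} \le \eta\sigma,
\end{align}
and hence $g(f)\in[0,\eta\sigma]$. Now replace $f_k$ by some $f_k'\in\emprashomonset$. Only one term of the average changes, so $\condavg$ changes by $\frac{1}{\Nretail}\abs*{g(f_k)-g(f_k')}\le \frac{\eta\sigma}{\Nretail}$. We therefore take $c_k=\eta\sigma/\Nretail$. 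The key point is to compare the two values through the range $[0,\eta\sigma]$, not through the triangle inequality. The triangle inequality would give $2\eta\sigma/\Nretail$ and lose a factor of $4$ in the exponent.

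\textbf{Step 3: conclude.} McDiarmid's inequality with $\sum_k c_k^2 = \eta^2\sigma^2/\Nretail$ gives
\begin{align}
\P\bracks*{\abs*{\condavg(\rretailset,x)-\expallavg(x)}\ge\slackparam}\le 2\exp\parens*{-2\Nretail\slackparam^2/(\eta^2\sigma^2)},
\end{align}
which is \eqref{eq:ball:condbound}. Equivalently, Hoeffding's inequality (Lemma~\ref{lemma: Hoeffding's ineq}) applied to the i.i.d.\ bounded variables $g(\rf_k)\in[0,\eta\sigma]$ gives the same bound directly.

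\textbf{Main obstacle.} The only delicate step is justifying Step 1. This needs measurability of $g$ and independence of $\rlocaltestmat$ from $\rretailset$, so that conditioning and exchanging the finite sums with expectations is legitimate. Here this is routine, because everything is bounded.
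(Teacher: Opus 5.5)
Your proposal is correct and follows the paper's proof: the paper uses local Lipschitzness to place each per-model quantity $\E_{\rlocaltestmat}\bigl[\frac{1}{\nogaussianpts}\sum_{j}\abs{f_k(x)-f_k(\rlocaltest_j)}\bigr]$ in $[0,\eta\sigma]$, then applies Hoeffding's inequality over the i.i.d.\ models, which is your Step~3 alternative; your McDiarmid argument with $c_k=\eta\sigma/\Nretail$ gives the identical bound. Your tower-property check that $\E_{\rretailset}[\condavg(\rretailset,x)]=\expallavg(x)$ is a step the paper leaves implicit.
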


\begin{proof}
Since each element of $\emprashomonset$ is locally Lipschitz in the neighborhood $\ltwohsphere(x,\sigma)$ with constant $\eta$, for every perturbation $\localtest_{j}$,
    \begin{align}
    \abs*{f_{k}(x) - f_{k}(\localtest_{j})} \le \eta\normtwo{x - \localtest_{j}} \le \eta \sigma.
    \end{align}
This in turn means, 
    \begin{align}
    \E_{\localtestmat}\bracks*{\frac{1}{\nogaussianpts} \sum_{j=1}^{\nogaussianpts} \abs*{f_{k}(x) - f_{k}(\rlocaltest_{j})}} 
    \le \eta \sigma.
    \end{align}
Thus, $ \E_{\localtestmat}\bracks*{\frac{1}{\nogaussianpts} \sum_{j=1}^{\nogaussianpts} \abs*{\rf_{k}(x) - \rf_{k}(\rlocaltest_{j})}} \in [0, \eta\sigma]$ is an i.i.d. random variable since $\{\rf_{k} : k \in [\Nretail]\}$ are i.i.d..  Applying Hoeffding's inequality (Lemma \ref{lemma: Hoeffding's ineq}) over this randomness, we have 
    \begin{align}
    \prob_{\rretailset}*{
        \abs*{ \condavg(\rretailset,x) - \expallavg(x) }
        \ge \slackparam }
    &\le 2 \exp\parens*{ \frac{-2\Nretail\slackparam^2}{\eta^2\sigma^{2}} }.
    \end{align}
as desired.
\end{proof}

\begin{lemma}\label{Lemma: Subgaussian conc random}
Let $\emprashomonset$, $\rretailset$, and $x$ satisfy Assumption \ref{assume:rashomon} for parameters  $\sigma > 0$. Let $\rlocaltestmat =  \{\rlocaltest_{j}: j \in [\nogaussianpts]\}$ be a random set of perturbations of $x$, distributed $\iid \Unif(\ltwohsphere(x,\sigma))$. 
Then for any $\slackgap > 0$, 
    \begin{align}
    \prob_{\rretailset,\rlocaltestmat}*{
        \abs*{\allavg(x) - \expallavg(x)}
    \ge \slackgap }
    \le 2\exp\parens*{\frac{ -\nogaussianpts\slackgap^{2}}{2\eta^2\sigma^{2}} } 
        +  2\exp\parens*{\frac{-\Nretail\slackgap^2}{2\eta^2\sigma^{2}} }
    \end{align}
\end{lemma}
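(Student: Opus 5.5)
The plan is to split the deviation of $\allavg(x)$ from $\expallavg(x)$ through the intermediate conditional mean $\condavg(\rretailset,x)$, which averages over the perturbations with the models held fixed. By the triangle inequality,
\begin{align}
\abs*{\allavg(x) - \expallavg(x)} \le \abs*{\allavg(x) - \condavg(\rretailset,x)} + \abs*{\condavg(\rretailset,x) - \expallavg(x)}.
\end{align}
If the left side is at least $\slackgap$, then one of the two terms on the right is at least $\slackgap/2$. A union bound therefore reduces the claim to bounding the probability of each of these two events.

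\textbf{Second term (model randomness).} This is exactly Lemma \ref{lemma:difference:conditional} with $\slackparam = \slackgap/2$. It gives the bound $2\exp\parens*{-2\Nretail(\slackgap/2)^2/(\eta^2\sigma^2)} = 2\exp\parens*{-\Nretail\slackgap^2/(2\eta^2\sigma^2)}$, which matches the second exponential in the statement.

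\textbf{First term (perturbation randomness).} Here we condition on a realization $\retailset$ of the retail set. The perturbations $\rlocaltest_1,\dots,\rlocaltest_{\nogaussianpts}$ are drawn i.i.d. from $\Unif(\ltwohsphere(x,\sigma))$, independently of $\rretailset$. Define the per-perturbation averages
\begin{align}
\mbf{z}_j = \frac{1}{\Nretail}\sum_{k=1}^{\Nretail}\abs*{f_k(x) - f_k(\rlocaltest_j)}.
\end{align}
Conditionally on $\retailset$, these are i.i.d., and $\allavg(x) = \frac{1}{\nogaussianpts}\sum_j \mbf{z}_j$. Since $\normtwo{x-\rlocaltest_j} = \sigma$, local Lipschitz continuity (Assumption \ref{assume:rashomon}) gives $\mbf{z}_j \in [0,\eta\sigma]$. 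Their conditional mean is $\condavg(\retailset,x)$. Hoeffding's inequality (Lemma \ref{lemma: Hoeffding's ineq}) with $\varepsilon = \slackgap/2$ then yields the conditional bound $2\exp\parens*{-\nogaussianpts\slackgap^2/(2\eta^2\sigma^2)}$. This bound does not depend on $\retailset$, so taking the expectation over $\rretailset$ (tower property) gives the same bound for the joint probability.

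\textbf{Main obstacle.} The delicate step is this conditioning argument. We must make sure the conditional expectation of $\allavg(x)$ given $\rretailset$ is exactly $\condavg(\rretailset,x)$ as defined in Lemma \ref{lemma:difference:conditional}, which relies on the independence of $\rretailset$ and $\rlocaltestmat$. We must also make sure the Hoeffding range $[0,\eta\sigma]$ holds uniformly over every $f\in\emprashomonset$. Once both points are checked, adding the two bounds gives the claimed inequality.
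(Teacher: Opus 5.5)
Your proposal is correct and matches the paper's proof essentially step for step: the same triangle-inequality split through $\condavg(\rretailset,x)$, Lemma~\ref{lemma:difference:conditional} with $\slackparam=\slackgap/2$ for the model-randomness term, Hoeffding on the conditionally i.i.d.\ averages $\mbf{z}_j\in[0,\eta\sigma]$ followed by unconditioning, and a union bound. Your tower-property unconditioning and your explicit ``one of the two terms is at least $\slackgap/2$'' union bound are, if anything, slightly cleaner than the paper's sum over realizations of $\rretailset$ and its joint-event (``and'') inequality.
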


\begin{proof}
Let $\rgapvar(\rretailset,\rlocaltestmat,x) = \abs*{\allavg(x) - \expallavg(x)}$ be the random deviation of the empirical variability term from its population counterpart. We decompose this gap as follows:
    \begin{align}
    \rgapvar(\rretailset,\rlocaltestmat,x)
    \le \underbrace{\abs*{ \allavg(x) - \condavg(\rretailset,x) }}_{\gapcondr(\rretailset,\rlocaltestmat,x)}
        + \underbrace{ \abs*{ \condavg(\rretailset,x) - \expallavg(x) } }_{\gaprandr(\rretailset,x)}.
    \end{align}
Lemma \ref{lemma:difference:conditional} shows that for any $\slackparam > 0$
    \begin{align}
    \prob_{\rretailset}*{ \gaprandr(\rretailset,x) \ge \slackparam } \le 2 \exp\parens*{ \frac{-2\Nretail\slackparam^2}{\eta^2\sigma^{2}} }.
    \end{align}
We now argue that $\gapcondr(\rretailset,\rlocaltestmat,x)$ is also bounded with high probability. Let $R = \{f_k :  k \in [\Nretail]\}$ be a realization of $\rretailset$ and let
    \begin{align}
    \mbf{z}_j = \frac{1}{\Nretail} \sum_{k=1}^{\Nretail} \abs*{ f_k(x) - f_k(\rlocaltest_j) }.
    \end{align}
The random variables $\mbf{z}_j$ are i.i.d. since $\{\rlocaltest_j :  j \in [\nogaussianpts]\}$ are i.i.d.. Since $\abs*{ f_k(x) - f_k(\rlocaltest_j)} \le \eta \sigma$, applying Hoeffding's inequality to $\{\mbf{z}_j :  j \in [\nogaussianpts]\}$ yields
    \begin{align}\label{eq: variability cond bound}
    \prob_{\rlocaltestmat}*{ 
        \abs*{ \frac{1}{\nogaussianpts} \sum_{j=1}^{\nogaussianpts } \mbf{z}_j 
            -{\condavg(\retailset,x)} } \ge \slackparam }
    &\le 2\exp\parens*{ \frac{-2S\slackparam^2}{\eta^2\sigma^{2}} }
    \end{align}
This is an upper bound on $\gapcondr(\retailset,\rlocaltestmat,x)$.
Since Equation \eqref{eq: variability cond bound} holds uniformly for all $\rretailset = \retailset$, applying the law of total probability yields, 
\begin{align}
    \P_{\rretailset,\rlocaltestmat}\bracks*{\abs*{\gapcondr(\rretailset, \rlocaltestmat, x)} \ge \slackparam}
    = \sum_{\retailset}\P_{\rlocaltestmat \lvert \rretailset=\retailset}\bracks*{\gapcondr(\retailset,\rlocaltestmat,x) \ge {\slackparam} \lvert \rretailset=\retailset } \P\bracks*{\rretailset=\retailset}
      \le    2\exp\parens*{ \frac{-2S\slackparam^2}{\eta^2\sigma^{2}} }, \label{eq: variability uniform bound}
\end{align}
Then setting $\slackparam = \slackgap/2$ we have 
    \begin{align}
    \prob_{\rretailset,\rlocaltestmat}*{ 
        \gapvar(\rretailset,\rlocaltestmat,x) \ge \slackgap }
    &\le 
    \prob_{\rretailset,\rlocaltestmat}*{
        \gapcondr(\rretailset,\rlocaltestmat,x) \ge \slackgap/2, 
        \gaprandr(\rretailset,x) \ge \slackgap/2 } 
     \\ \notag
    &\le 
    \prob_{\rretailset,\rlocaltestmat}*{
        \gapcondr(\rretailset,\rlocaltestmat, x) \ge \slackgap/2 }
    + \prob_{\rretailset,\rlocaltestmat}*{ 
        \gaprandr(\rretailset, x) \ge \slackgap/2 }
    \\ \notag
    &\le 2\exp\parens*{ \frac{-S\slackgap^2}{2\eta^2\sigma^{2}} } +
        2\exp\parens*{ \frac{-\Nretail\slackgap^2}{2 \eta^2\sigma^{2}} }.
    \end{align}
 Here we used the fact that for $\gapvar(\rretailset,\rlocaltestmat,x) \ge \slackgap$ to hold, at least one of the terms $\gapcondr(\rretailset,\rlocaltestmat,x)$ and $\gaprandr(\rretailset ,x)$ should be larger than $\slackgap/2$.
 \end{proof}

\subsubsection{Proof of main results}\label{sec: Proof of main results}
Theorem \ref{Theorem: Main Theorem} bounds the probability that the consistency score computed using a finite retail ensemble falls significantly below the corresponding consistency score of the Rashomon set.  Let 
\begin{align}
    \gap(\retailset,\localtestmat, x) = \abs*{\consistency(\retailset,\localtestmat,x) - \consistency(\emprashomonset,x)},
\end{align}
where $\consistency(\retailset,\localtestmat,x) =  \abs*{\ensmretset(\retailset, x)-\frac{1}{2}} - \frac{1}{\nogaussianpts}\sum_{j=1}^{\nogaussianpts}\frac{1}{\Nretail}\sum_{k=1}^{\Nretail}\abs*{f_{k}(x) - f_{k}(\localtest_{j})}$ from Definition \ref{def: Consistency measure} and  $\consistency(\emprashomonset,x) = \abs*{\E_{\rf}\bracks*{\rf(x)} - \frac{1}{2}} - \E_{\rf,\rlocaltest}\bracks*{\abs*{\rf(x) - \rf(\localtest)}}$. Since the models in $\emprashomonset$ are $\iid \traindist$ and the perturbations in $\localtestmat$ are $\iid$ $\Unif(\ltwohsphere(x, \sigma))$, equivalently, $\consistency(\emprashomonset,x) = \abs*{\E_{\rretailset}\bracks*{\ensmretset(\rretailset,x)} - \frac{1}{2}} - \E_{\rretailset,\rlocaltestmat}\bracks*{\frac{1}{\nogaussianpts}\sum_{j=1}^{\nogaussianpts}\frac{1}{\Nretail}\sum_{k=1}^{\Nretail}\abs*{\rf_{k}(x) - \rf_{k}(\localtest_{j})}}$.  Since $x \in \dtestconf$, by Definition \ref{Definition: beta conf} of $(\beta,\sigma)$ consistency of $x$, $\consistency(\emprashomonset,x) \ge \beta$. Therefore, if we can bound the deviation ${\gap(\retailset, \localtestmat,x)}$ as ${\gap(\retailset, \localtestmat, x)} \le \slackmeasure$, we can bound $\consistency(\retailset,\localtestmat,x)$ as,
    \begin{align}\label{eq: consbeta}
        \consistency(\retailset,\localtestmat,x) \ge \consistency(\emprashomonset,x) - \slackmeasure \ge \beta - \slackmeasure.
    \end{align}
Thus, to show $\consistency(\retailset,\localtestmat,x) \ge \beta - \slackmeasure$, it suffices to upper bound the term ${\gap(\retailset,\localtestmat, x)}$ which we prove next.

\consistent*
\begin{proof}
 Since ${\gap(\rretailset,\rlocaltestmat,x)} = \abs*{\consistency(\rretailset,\rlocaltestmat,x) - \consistency(\emprashomonset,x)}$, we have
 \begin{align}\label{eq: gmeasure}
     &{\gap(\retailset,\localtestmat, x)} = \notag\\
     &\abs*{\abs*{\ensmretset(\retailset, x)-\frac{1}{2}} - \frac{1}{\nogaussianpts}\sum_{j=1}^{\nogaussianpts}\frac{1}{\Nretail}\sum_{k=1}^{\Nretail}\abs*{f_{k}(x) - f_{k}(\rlocaltest_{j})} - \abs*{\E_{\rretailset}\bracks*{\ensmretset(\rretailset,x)} - \frac{1}{2}} + \E_{\rretailset,\rlocaltestmat}\bracks*{\frac{1}{\nogaussianpts}\sum_{j=1}^{\nogaussianpts}\frac{1}{\Nretail}\sum_{k=1}^{\Nretail}\abs*{\rf_{k}(x) - \rf_{k}(\rlocaltest_{j})}}}.
 \end{align}

After rearranging the terms as in Equation \eqref{eq: gmeasure}, we represent them with two functions for notation convenience. The function
 \begin{align}
     \gapmargin(\retailset,x) = \abs*{\abs*{\ensmretset(\retailset, x)-\frac{1}{2}}  - \abs*{\E_{\rretailset}\bracks*{\ensmretset(\rretailset,x)} - \frac{1}{2}}},
 \end{align} 
 represents the difference between the margin of a single ensemble and the margin of the expected ensemble. The other term
 \begin{align}
     \gapvar(\retailset,\localtestmat, x) =  \abs*{\frac{1}{\nogaussianpts}\sum_{j=1}^{\nogaussianpts}\frac{1}{\Nretail}\sum_{k=1}^{\Nretail}\abs*{f_{k}(x) - f_{k}(\localtest_{j})} - \E_{\rretailset,\rlocaltestmat}\bracks*{\frac{1}{\nogaussianpts}\sum_{j=1}^{\nogaussianpts}\frac{1}{\Nretail}\sum_{k=1}^{\Nretail}\abs*{\rf_{k}(x) - \rf_{k}(\rlocaltest_{j})}}},
 \end{align}
 captures how predictions vary when we perturb the input $x$. The function compares the average local stability of an ensemble to its population counterpart.
  From the triangle inequality we have,
  \begin{align}
     \gap(\retailset,\localtestmat,x) \le  \gapmargin(\retailset,x) +  \gapvar(\retailset,\localtestmat,x).
  \end{align}

From Lemma \ref{lemma: Concentration ensemble pred at x}, $ \P_{\rretailset}\bracks*{{\gapmargin(\rretailset, x)} \ge {\slackgap}} \le 2\exp\Big(
        \frac{-\Nretail\slackgap^2}{2\rashexpctparam^{2}}\Big)$, and from Lemma \ref{Lemma: Subgaussian conc random}, $\P_{\rretailset,\rlocaltestmat}\bracks*{{\gapvar(\rretailset,\rlocaltestmat, x)} \ge {\slackgap}} \le 2\exp\parens*{\frac{
        -\nogaussianpts\slackgap^{2}}{2\eta^2\sigma^{2}}\bigg) +  2\exp\Big(
        \frac{-\Nretail\slackgap^2}{2\eta^2\sigma^{2}} }$.
For $\gap(\rretailset,\rlocaltestmat,x)$ to be greater than $\slackmeasure$, at least one of the terms $\gapmargin(\rretailset, x)$ or $\gapvar(\rretailset,\rlocaltestmat,x)$ should be greater than $\frac
{\slackmeasure}{2}$. Fixing $\slackgap = \frac{\slackmeasure}{2}$, we get
\begin{align}
    \P_{\rretailset,\rlocaltestmat}\bracks*{{\gap(\rretailset,\rlocaltestmat,x)} \ge \slackmeasure} \le \P_{\rretailset}\bracks*{{\gapmargin(\rretailset,x)} \ge \frac{\slackmeasure}{2}} + \P_{\rretailset,\rlocaltestmat}\bracks*{{\gapvar(\rretailset,\rlocaltestmat,x)} \ge \frac{\slackmeasure}{2}}.
\end{align}
Thus,
\begin{align}
    \P_{\rretailset,\rlocaltestmat}\bracks*{{\gap(\rretailset,\rlocaltestmat,x)} \ge \slackmeasure} \le 2\exp\Big(
        \frac{-\Nretail\slackmeasure^2}{8\rashexpctparam^{2}}\Big) + 2\exp\parens*{\frac{
        -\nogaussianpts\slackmeasure^{2}}{8\eta^2\sigma^{2}}\bigg) +  2\exp\Big(
        \frac{-\Nretail\slackmeasure^2}{8\eta^2\sigma^{2}}}.
\end{align}
It follows,
\begin{align}
      \P_{\rretailset,\rlocaltestmat}\bracks*{{\gap(\rretailset,\rlocaltestmat,x)} \le \slackmeasure} \ge 1 - 2\exp\Big(
        \frac{-\Nretail\slackmeasure^2}{8\rashexpctparam^{2}}\Big) - 2\exp\parens*{\frac{
        -\nogaussianpts\slackmeasure^{2}}{8\eta^2\sigma^{2}}\bigg) -  2\exp\Big(
        \frac{-\Nretail\slackmeasure^2}{8\eta^2\sigma^{2}}}.
\end{align}
When combined with Equation $\eqref{eq: consbeta}$ yields,
\begin{align}
    \P_{\rretailset,\rlocaltestmat}\bracks*{\consistency(\rretailset,\rlocaltestmat, x) \ge \beta - \slackmeasure} \ge 1 - 2\exp\Big( \frac{-2\Nretail\slackmeasure^2}{8\rashexpctparam^{2}}\Big) - 2\exp\parens*{\frac{
        -\nogaussianpts\slackmeasure^{2}}{8\eta^2\sigma^{2}}\bigg) -  2\exp\Big(
        \frac{-\Nretail\slackmeasure^2}{8\eta^2\sigma^{2}}}.
\end{align}

\end{proof}

\subsection{Local consistency}
We also provide a consistency guarantee using a related measure that assesses the average local margin of an ensemble of models and penalizes it with the same variability term as before. We call this measure the \emph{local consistency} (Definition~\ref{def: local consistency}). Under this new measure we define consistent test points with respect to the empirical Rashomon set $\emprashomonset$ in Definition~\ref{Definition: local beta conf}, and show that the ensemble of a \smallset{} set $\retailset$ will also remain consistent at these test points with high probability (exponential in the number of samples $\nogaussianpts$ from a local neighborhood around $x$ and the size $\Nretail$ of the \smallset{} set).

\begin{definition}[$(\beta,\sigma)$-local consistency of $\emprashomonset$]
\label{Definition: local beta conf}
     Given $\emprashomonset$ with distribution $\tau$,  a random model $\rf$ $\iid \tau$, an instance $x \in \dtest$ , a random perturbation $\rlocaltest$ $\iid \Unif(\ltwohsphere(x,\sigma))$, the local expected prediction $\E_{\rf,\rlocaltest}[\rf(\rlocaltest)]$ at x, and parameters $\beta \in [-\frac{1}{2},\frac{1}{2}]$ and $\sigma > 0$, let $  \localconsistency(\emprashomonset,x)
    =\abs*{\E_{\rf,\rlocaltest}\bracks*{\rf(\rlocaltest)} 
        - \frac{1}{2}} 
    - \E_{\rf,\rlocaltest}\bracks*{\abs*{\rf(x) - \rf(\rlocaltest)}} $, the local expectation of the ensemble margin of $\emprashomonset$ at $x$, penalized by the expected ensemble variability around $x$ . Then given  $x \in \cX$, the ensemble of  $\emprashomonset$ is said to be $(\beta,\sigma)$-locally consistent at $x$ if
    \begin{align}\label{eq: local beta confidence}
       \localconsistency(\emprashomonset,x) \ge \beta.
    \end{align}
\end{definition}

\begin{definition}[$(\beta-\slackmeasure,\sigma)$-local consistency of a retail set]\label{def: local consistency} Given a Rashomon \smallset{} set $\retailset = \{f_k :  k \in [\Nretail]\} \subseteq \emprashomonset$ of size $\Nretail$, a set of perturbed samples $\localtestmat = \{\localtest_{j}: j \in [\nogaussianpts]\} \subseteq \ltwohsphere(x,\sigma)$, the ensemble prediction $\ensmretset(\retailset,\localtestmat)$ in Definition \ref{def: local ensemble}, and a constant $\slackmeasure >0$, let  $ \localmarginconsistency(\retailset,\localtestmat,x) $ be defined as
\begin{align}
    \localmarginconsistency(\retailset,\localtestmat, x) = \babs{\ensmretset(\retailset,\localtestmat)-\frac{1}{2}} - \frac{1}{\nogaussianpts}\sum_{j=1}^{\nogaussianpts}\frac{1}{\Nretail}\sum_{j=1}^{\Nretail}\babs{f_k(x) - f_k(\localtest_j)}.
\end{align} 
for a fixed test instance $x$. Then, $\retailset$ is said to be $(\beta-\slackmeasure,\sigma)$-locally consistent if,
\begin{align}
    \localmarginconsistency(\retailset,\localtestmat,x) \ge \beta-\slackmeasure
\end{align}
\end{definition}

Similar to the conditions needed to establish the theoretical guarantees for our proposed consistency measure, $\consistency(\retailset,\localtestmat,x)$, we require local stability of model predictions in the Rashomon set $\emprashomonset$ along with Lipshchitz continuity as in Definition \ref{Definition: eta Lipschitz}.

\begin{definition}[$\rashlexpctparam$ local stability of  $\emprashomonset$ at $x$]
\label{def: local Rashomon stability}    
Given  a Rashomon set $\emprashomonset$, a test point $x \in \dtest$, a perturbed sample of $x$, $\localtest \in \ltwohsphere(x,\sigma)$, and a small constant $\rashlexpctparam > 0$, $\emprashomonset$ is said to be locally $\rashlexpctparam$ stable if, 
\begin{align}
    \abs{f(\localtest)-\E_{\rf}[\rf(\localtest)]} \le \rashlexpctparam, \:\: \text{for all}\:\: f \in \emprashomonset, \localtest \in \ltwohsphere(x,\sigma).
\end{align}
\end{definition}

We list the assumptions required to establish a probabilistic guarantee on $\localmarginconsistency$ below.
\begin{assumption}[Rashomon local consistency assumptions] \label{assume:rashomon local}
The set $\rretailset  = \{\rf_{k} : k \in [\Nretail]\}$ is an i.i.d. sample from a distribution $\tau$ on $\emprashomonset$. Every model $f$ in the Rashomon set $\emprashomonset$ is $\eta$-locally Lipschitz as defined in \ref{Definition: eta Lipschitz}. For fixed parameters $\beta, \sigma$ and $\rashlexpctparam$, $x$ belongs to the subset of test instances which are $(\beta,\sigma)$-locally consistent according to Definition \ref{Definition: local beta conf}, and $\rashlexpctparam$ stable according to Definition \ref{def: local Rashomon stability}. We denote this set as $  \dtestlocalconf(\beta,\sigma,\rashlexpctparam) = \{ x \in \dtest : \localconsistency(\emprashomonset, x) \ge \beta \:\:\text{and} \:\:\emprashomonset \:\:\text{is} \:\: \rashlexpctparam \:\:\text{locally stable at} \:\: x\}$.
\end{assumption}

The concentration results that follow are stated only for a fixed $x \in  \dtestlocalconf$. For notational simplicity, we suppress the dependence $\dtestlocalconf$ on $\beta, \sigma, \rashlexpctparam$.

For a fixed $x \in \dtestlocalconf$, Theorem \ref{Theorem: local consistency guarantee} provides the following probabilistic guarantee that the local consistency measure on the ensemble of the \smallset{} set $\localconsistency(\retailset,\localtestmat,x)$ will not deviate too much from the local consistency measure on the empirical Rashomon set $\localconsistency(\emprashomonset,x)$.

\begin{restatable}[Local consistency guarantee]{theorem}{localconsistent}
\label{Theorem: local consistency guarantee}
 Let $\emprashomonset$, $\rretailset$, and $x$ satisfy Assumption \ref{assume:rashomon local} for parameters  $\beta \in [-\frac{1}{2},\frac{1}{2}]$, and $\sigma, \rashlexpctparam > 0$. For a test instance $x \in \dtestlocalconf$, let $\rlocaltestmat =  \{\rlocaltest_{j} : j \in [\nogaussianpts]\}$ be a random set of perturbations of $x$, distributed $\iid \Unif(\ltwohsphere(x,\sigma))$, and  let $\ensmretset(\retailset,\localtestmat)$ be the ensemble prediction as in Definition \eqref{def: local ensemble}. Given the local consistency measure $\localmarginconsistency(\retailset,\localtestmat,x) = \babs{\ensmretset(\retailset,\localtestmat)-\frac{1}{2}} - \frac{1}{\nogaussianpts}\sum_{j=1}^{\nogaussianpts}\frac{1}{\Nretail}\sum_{k=1}^{\Nretail}\babs{f_k(x) - f_k(\localtest_j)} $ from Definition \ref{def: local consistency}, for all $\slackmeasure \ge 0$,
\begin{align}
    \P_{\rretailset,\rlocaltestmat}\bracks*{\localconsistency(\rretailset,\rlocaltestmat,x) \ge \beta - \slackmeasure} 
    \ge 1 - 2\exp\parens*{ \frac{-\Nretail\slackmeasure^2}{32\rashlexpctparam^2} } - 2\exp\parens*{ \frac{-S\slackmeasure^2}{32\eta^2\sigma^{2}} } - 2\exp\parens*{\frac{
        -\nogaussianpts\slackmeasure^{2}}{8\eta^2\sigma^{2}}\bigg) -  2\exp\Big(
        \frac{-\Nretail\slackmeasure^2}{8\eta^2\sigma^{2}}}. 
\end{align}
\end{restatable}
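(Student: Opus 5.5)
We follow the proof of Theorem~\ref{Theorem: Main Theorem}. Because $x \in \dtestlocalconf$, we have $\localconsistency(\emprashomonset,x)\ge\beta$. It therefore suffices to show that the deviation
\[
\gapl(\retailset,\localtestmat,x)=\abs*{\localconsistency(\retailset,\localtestmat,x)-\localconsistency(\emprashomonset,x)}
\]
exceeds $\slackmeasure$ only with small probability. We first rewrite $\localconsistency(\emprashomonset,x)$ in terms of the retail variables, using that $\rretailset$ and $\rlocaltestmat$ are i.i.d. and independent of each other:
\[
\localconsistency(\emprashomonset,x)=\abs*{\E_{\rretailset,\rlocaltestmat}\bracks*{\ensmretset(\rretailset,\rlocaltestmat)}-\tfrac12}-\expallavg(x).
\]
The triangle and reverse triangle inequalities then give $\gapl\le \gaplmargin+\gapvar$, where
\[
\gaplmargin(\retailset,\localtestmat,x)=\abs*{\ensmretset(\retailset,\localtestmat)-\E\bracks*{\ensmretset(\rretailset,\rlocaltestmat)}}
\]
and $\gapvar$ is the variability gap already controlled by Lemma~\ref{Lemma: Subgaussian conc random}. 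We split $\slackmeasure$ in half between the two terms. With $\slackgap=\slackmeasure/2$, Lemma~\ref{Lemma: Subgaussian conc random} directly gives the last two exponentials, $2\exp(-\nogaussianpts\slackmeasure^2/(8\eta^2\sigma^2))+2\exp(-\Nretail\slackmeasure^2/(8\eta^2\sigma^2))$.

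The new work is the concentration of the local margin term $\gaplmargin$. Its randomness is two-layered, so we mirror the decomposition in Lemma~\ref{Lemma: Subgaussian conc random}. We introduce the conditional mean
\[
\lcondavg(\retailset,x)=\E_{\rlocaltestmat}\bracks*{\ensmretset(\retailset,\rlocaltestmat)}=\frac1\Nretail\sum_k \E_{\rlocaltest}\bracks*{f_k(\rlocaltest)},
\]
and split
\[
\gaplmargin\le \underbrace{\abs*{\ensmretset(\retailset,\localtestmat)-\lcondavg(\retailset,x)}}_{\gaplcondr}+\underbrace{\abs*{\lcondavg(\retailset,x)-\E\bracks*{\ensmretset}}}_{\gaplrandr},
\]
allotting $\slackmeasure/4$ to each piece.

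For $\gaplrandr$, the summands $\E_{\rlocaltest}[\rf_k(\rlocaltest)]$ are i.i.d. over $k$. By local stability (Definition~\ref{def: local Rashomon stability}), replacing one model changes the average by at most $2\rashlexpctparam/\Nretail$. McDiarmid's inequality (Lemma~\ref{lemma: mcdiarmid ineq}) with $t=\slackmeasure/4$ then yields $2\exp(-\Nretail\slackmeasure^2/(32\rashlexpctparam^2))$.

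For $\gaplcondr$, we condition on $\rretailset=\retailset$ and set $\mbf{w}_j=\frac1\Nretail\sum_k f_k(\rlocaltest_j)$, which are i.i.d. over $j$. Local Lipschitzness gives $\abs{f_k(\rlocaltest_j)-f_k(x)}\le\eta\sigma$, so each $\mbf{w}_j$ lies in an interval of length $2\eta\sigma$ around $\ensmretset(\retailset,x)$. Hoeffding's inequality (Lemma~\ref{lemma: Hoeffding's ineq}) with $\varepsilon=\slackmeasure/4$ gives $2\exp(-2\nogaussianpts(\slackmeasure/4)^2/(4\eta^2\sigma^2))=2\exp(-\nogaussianpts\slackmeasure^2/(32\eta^2\sigma^2))$. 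Since this bound is uniform in $\retailset$, the law of total probability, as in \eqref{eq: variability uniform bound}, removes the conditioning.

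Finally, $\gapl\ge\slackmeasure$ forces at least one of $\gaplcondr\ge\slackmeasure/4$, $\gaplrandr\ge\slackmeasure/4$, or $\gapvar\ge\slackmeasure/2$. A union bound over these three events gives the five stated terms, and combining with $\localconsistency(\emprashomonset,x)\ge\beta$ yields $\localconsistency(\rretailset,\rlocaltestmat,x)\ge\beta-\slackmeasure$ with the claimed probability.

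The main obstacle is matching the constants exactly. In particular, the range used for $\mbf{w}_j$ in the Hoeffding step must be $2\eta\sigma$, centered at $\ensmretset(\retailset,x)$, to get the factor $32$. If the constants fail to match, we would instead try the range $[0,1]$ or a McDiarmid bounded difference of $2\eta\sigma/\nogaussianpts$, which gives the same factor.
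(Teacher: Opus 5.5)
Your proposal is correct and follows essentially the same route as the paper. Both use the three-way split $\gapl \le \gaplcondr + \gaplrandr + \gaplvar$ with budgets $\slackmeasure/4,\ \slackmeasure/4,\ \slackmeasure/2$, apply McDiarmid with $c_k = 2\rashlexpctparam/\Nretail$ (from local stability) to the model-randomness margin term, and reuse Lemma~\ref{Lemma: Subgaussian conc random} for the variability term. The differences are cosmetic. You apply the reverse triangle inequality once up front, and for the perturbation piece you use Hoeffding with range $2\eta\sigma$ where the paper uses McDiarmid with $c_j = 2\eta\sigma/\nogaussianpts$. Both give exactly the constant $32$, so your closing hedge is unnecessary, and the $[0,1]$ fallback would not reproduce the stated bound. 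Also, the union bound yields four exponentials, not five.
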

\begin{proof}
    We refer the readers to Section~\ref{sec: Theorem local consistency} for the proof.
\end{proof}
\subsubsection{Probabilistic guarantee on local ensemble consistency}\label{sec: Proof for local consistency}
Our main result in this section is a concentration bound for the average of perturbed predictions from an ensemble model. Before proving the probabilistic guarantee on the local consistency measure $\localconsistency(\retailset,\localtestmat,x)$, we first prove the following lemmas that we will leverage for our proof.
\begin{lemma}\label{lemma: Concentration of local expected ensemble margin}
Let $\emprashomonset$, $\rretailset$, and $x$ satisfy Assumption \ref{assume:rashomon local} for parameter $\rashlexpctparam > 0$. Let $\rlocaltestmat =  \{\rlocaltest_{j} : j \in [\nogaussianpts]\}$ be a random set of perturbations of $x$, distributed $\iid \Unif(\ltwohsphere(x,\sigma))$, and $\ensmretset(\retailset,\localtestmat)$ be the ensemble model defined in \ref{def: local ensemble}. 
Define
    \begin{align}\label{eq: gcond}
    \lcondavg(\retailset,x)
        &= \abs*{\expect_{\rlocaltestmat}*{\ensmretset(\retailset,\rlocaltestmat)} - \frac{1}{2}},
    \\
    \label{eq: expgaplcmargin}
    \explallavg(x) &= \abs*{\expect_{\rretailset,\rlocaltestmat}*{\ensmretset(\rretailset,\rlocaltestmat)}-\frac{1}{2}}
\end{align}
Then
    \begin{align}
    \label{eq:ball: local condbound}
    \prob_{\rretailset}*{
        \abs*{
        \lcondavg(\rretailset,x)
            -  \explallavg(x)
            }
        \ge
        \slackparam
    }
    \le
    2 \exp\parens*{\frac{-\Nretail\slackparam^2}{2\rashlexpctparam^2}}.
    \end{align}
\end{lemma}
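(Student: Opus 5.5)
The plan mirrors the proof of Lemma~\ref{lemma: Concentration ensemble pred at x}: replace the point prediction $f_k(x)$ by its local average over the sphere, and use local stability in place of pointwise stability.

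For a fixed realization $\retailset$, define
\[
g_k = \expect_{\rlocaltest}*{f_k(\rlocaltest)}, \qquad \rlocaltest\sim\Unif(\ltwohsphere(x,\sigma)).
\]
Since the $\rlocaltest_j$ are i.i.d., linearity of expectation gives
\[
\expect_{\rlocaltestmat}*{\ensmretset(\retailset,\rlocaltestmat)}=\frac{1}{\Nretail}\sum_{k=1}^{\Nretail} g_k .
\]
Thus $\lcondavg(\retailset,x)=\abs*{\frac{1}{\Nretail}\sum_k g_k-\frac12}$, where the $g_k$ are i.i.d.\ functions of the i.i.d.\ models $\rf_k$. By Fubini and the tower property,
\[
\expect_{\rretailset}*{\tfrac{1}{\Nretail}\sum_k g_k}=\expect_{\rf,\rlocaltest}*{\rf(\rlocaltest)}=\expect_{\rretailset,\rlocaltestmat}*{\ensmretset(\rretailset,\rlocaltestmat)},
\]
so $\explallavg(x)=\abs*{\expect*{\tfrac{1}{\Nretail}\sum_k g_k}-\frac12}$.

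Next comes the bounded differences property. Replacing $f_k$ by $f_k'\in\emprashomonset$ changes $\frac1\Nretail\sum_k g_k$ by
\[
\frac{1}{\Nretail}\abs*{\expect_{\rlocaltest}*{f_k(\rlocaltest)-f_k'(\rlocaltest)}}\le\frac1\Nretail\,\expect_{\rlocaltest}*{\abs{f_k(\rlocaltest)-\E_{\rf}[\rf(\rlocaltest)]}+\abs{f'_k(\rlocaltest)-\E_{\rf}[\rf(\rlocaltest)]}}\le\frac{2\rashlexpctparam}{\Nretail}.
\]
The last step uses $\rashlexpctparam$ local stability (Definition~\ref{def: local Rashomon stability}), which holds uniformly over $\localtest\in\ltwohsphere(x,\sigma)$, so it survives the expectation over the sphere. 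McDiarmid's inequality (Lemma~\ref{lemma: mcdiarmid ineq}) with $c_k=2\rashlexpctparam/\Nretail$ then gives
\[
\prob_{\rretailset}*{\abs*{\tfrac1\Nretail\textstyle\sum_k g_k-\expect*{\tfrac1\Nretail\sum_k g_k}}\ge\slackparam}\le 2\exp\parens*{-\frac{\Nretail\slackparam^2}{2\rashlexpctparam^2}}.
\]

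Finally, the reverse triangle inequality $\abs{\abs{a-\frac12}-\abs{b-\frac12}}\le\abs{a-b}$ shows that the event $\{\abs{\lcondavg(\rretailset,x)-\explallavg(x)}\ge\slackparam\}$ is contained in the deviation event above. This yields the claimed bound.

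The main point needing care is the stability step. We must ensure the uniform-over-sphere form of local stability is what is used, so that the bounded difference constant does not depend on the perturbation. We must also check that the expectation of the conditional mean over $\rretailset$ coincides with the joint expectation defining $\explallavg(x)$. This needs measurability, so that Fubini applies to $(\rf,\rlocaltest)\mapsto \rf(\rlocaltest)$; boundedness in $[0,1]$ makes that routine.
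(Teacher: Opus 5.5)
Your proposal is correct and follows essentially the same route as the paper's proof. It uses linearity to write the conditional mean as an average of per-model local means, McDiarmid's inequality with $c_k = 2\rashlexpctparam/\Nretail$ obtained from $\rashlexpctparam$ local stability, and the reverse triangle inequality to pass to $\abs{\lcondavg(\rretailset,x) - \explallavg(x)}$. The only differences are minor. You apply the stability bound pointwise inside the expectation over the sphere, whereas the paper triangulates around $\E_{\rf,\rlocaltest}[\rf(\rlocaltest)]$ outside it and tacitly needs that same step. You also explicitly check the tower-property identification of $\explallavg(x)$, which the paper takes for granted.
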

\begin{proof}
Fix $\rashlexpctparam > 0$ and $x \in \dtestlocalconf$. Since $\rretailset = \{\rf_{k} : k \in [\Nretail]\}$ is sampled i.i.d. according to $\traindist$, we will use McDiarmid's inequality (Lemma \ref{lemma: mcdiarmid ineq}) on the expected local prediction of the ensemble, $\E_{\rlocaltestmat}[\ensmretset(\retailset,\rlocaltestmat)]$. To do so, we must verify the bounded differences property in Equation \eqref{eq:mcdiarmid:boundeddiff}. Let $\retailset = \{f_{1},\dots,f_{k},\dots, f_{\Nretail}\} \subseteq \emprashomonset$ and $\retailset'= \retailset \setminus \{f_k\} \cup \{f'_k\}$, where $f_k' \in \emprashomonset$. Using the triangle inequality and the fact that $\emprashomonset$ is $\rashlexpctparam$-stable at $x$ (Definition \ref{def: local Rashomon stability})

\begin{align}
    \abs*{
        \E_{\rlocaltestmat}[{\ensmretset(\retailset,\rlocaltestmat)}] 
        - {\E_{\rlocaltestmat}}[\ensmretset(\retailset',\rlocaltestmat) 
        }
    &=\abs*{
        \frac{1}{\Nretail}\sum_{k=1}^{\Nretail} 
            \E_{\rlocaltestmat}\bracks*{
                \frac{1}{\nogaussianpts} \sum_{j=1}^{\nogaussianpts}
                    f_{k}(\rlocaltest_{j})} 
        - \frac{1}{\Nretail}\sum_{k=1}^{\Nretail}   
            \E_{\rlocaltestmat}\bracks*{
                \frac{1}{\nogaussianpts}\sum_{j=1}^{\nogaussianpts}
                    f'_{k}(\rlocaltest_{j})}}
        \label{eq: linearity expec}\\
    &= \frac{1}{\Nretail}\abs*{
            \E_{\rlocaltest}\bracks*{f_{k}(\rlocaltest)} 
            - \E_{\rlocaltest}\bracks*{f'_{k}(\rlocaltest)}
            } 
         \label{eq: mcdiarmid diff1} \\
    &\leq \frac{1}{\Nretail}\abs*{
            \E_{\rlocaltest}\bracks*{f_{k}(\rlocaltest)}
            - \E_{\rf,\rlocaltest}\bracks*{\rf(\rlocaltest)}
            } 
        + \frac{1}{\Nretail} \abs*{
            \E_{\rlocaltest}\bracks*{f'_{k}(\rlocaltest)}
            - \E_{\rf,\rlocaltest}\bracks*{\rf(\rlocaltest)}
            }
        \notag \\
    &\leq \frac{\rashlexpctparam}{\Nretail} 
        + \frac{\rashlexpctparam}{\Nretail} 
        \notag \\
    &= \frac{2\rashlexpctparam}{\Nretail}.
        \label{eq: mcdiarmid diff2}
\end{align}
Now applying Lemma \ref{lemma: mcdiarmid ineq} with $c_k = \frac{2 \rashexpctparam}{\Nretail}$, we have
 \begin{align}
    \prob_{\rretailset}*{
        \abs*{
    \expect_{\rlocaltestmat}{\ensmretset(\rretailset,\rlocaltestmat)}
            -  \expect_{\rretailset,\rlocaltestmat}*{\ensmretset(\rretailset,\rlocaltestmat)}
            }
        \ge
        \slackparam
    }
    \le
    2 \exp\parens*{\frac{-\Nretail\slackparam^2}{2\rashlexpctparam^2}}.
    \end{align}
    Let $\lcondavg$, and $\explallavg$ be defined as in Equation \eqref{eq: gcond},and \eqref{eq: expgaplcmargin}. From the reverse triangle inequality,
    \begin{align}
        \abs*{\lcondavg(\retailset,x) -  \explallavg(x)}  \le \abs*{
    \expect_{\rlocaltestmat}{\ensmretset(\retailset,\rlocaltestmat)}
            -  \expect_{\rretailset,\rlocaltestmat}*{\ensmretset(\rretailset,\rlocaltestmat)}
            }.
    \end{align}
    Therefore, the event $\{\abs*{\lcondavg(\retailset,x) -  \explallavg(x)} \ge \slackparam\}$ implies $\{ \abs*{
    \expect_{\rlocaltestmat}{\ensmretset(\retailset,\rlocaltestmat)}
            -  \expect_{\rretailset,\rlocaltestmat}*{\ensmretset(\rretailset,\rlocaltestmat)}
            }
        \ge
        \slackparam\}$, so
    \begin{align}
    \prob_{\rretailset}*{
        \abs*{
        \lcondavg(\rretailset,x)
            -  \explallavg(x)
            }
        \ge
        \slackparam
    }
    \le
    2 \exp\parens*{\frac{-\Nretail\slackparam^2}{2\rashlexpctparam^2}}.
    \end{align}
\end{proof}

\begin{lemma}\label{Lemma: Subgaussian local conc random}
Let $\emprashomonset$, $\rretailset$, and $x$ satisfy Assumption \ref{assume:rashomon} for parameters $\sigma, \rashlexpctparam > 0$. Let $\rlocaltestmat =  \{\rlocaltest_{j}: j \in [\nogaussianpts]\}$ be a random set of perturbations of $x$, distributed $\iid \Unif(\ltwohsphere(x,\sigma))$, and $\ensmretset(\retailset,\localtestmat)$ be the ensemble model defined in \ref{def: local ensemble}.
Define
\begin{align}\label{eq: gaplmargin}
   \lallavg(x) = \abs*{\ensmretset(\rretailset,\rlocaltestmat) - \frac{1}{2}}.
\end{align}
Then for any $\slackgap > 0$, 
    \begin{align}
    \prob_{\rretailset,\rlocaltestmat}*{
        \abs*{  \lallavg(x) - \explallavg(x)}
    \ge \slackgap }
    \le 2\exp\parens*{ \frac{-S\slackgap^2}{8\eta^2\sigma^{2}} } +
        2\exp\parens*{ \frac{-\Nretail\slackgap^2}{8\rashlexpctparam^2} }.
    \end{align}
\end{lemma}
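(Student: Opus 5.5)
The plan mirrors the proof of Lemma~\ref{Lemma: Subgaussian conc random}. I decompose the deviation through the conditional quantity $\lcondavg(\rretailset,x)$ from Lemma~\ref{lemma: Concentration of local expected ensemble margin}. By the triangle inequality,
\begin{align}
\abs*{\lallavg(x) - \explallavg(x)} \le \underbrace{\abs*{\lallavg(x) - \lcondavg(\rretailset,x)}}_{\gaplcondr(\rretailset,\rlocaltestmat,x)} + \underbrace{\abs*{\lcondavg(\rretailset,x) - \explallavg(x)}}_{\gaplrandr(\rretailset,x)} .
\end{align}
If the left side is at least $\slackgap$, then at least one of the two terms is at least $\slackgap/2$. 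A union bound therefore reduces the claim to bounding each term separately at level $\slackgap/2$.

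\textbf{The model-randomness term $\gaplrandr$.} This term is controlled directly by Lemma~\ref{lemma: Concentration of local expected ensemble margin} with $\slackparam = \slackgap/2$, which gives probability at most $2\exp(-\Nretail\slackgap^2/(8\rashlexpctparam^2))$. That is exactly the second term in the claim. Local stability (Definition~\ref{def: local Rashomon stability}) is used only inside that lemma.

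\textbf{The perturbation-randomness term $\gaplcondr$.} I condition on a realization $\rretailset = \retailset$. By the reverse triangle inequality,
\begin{align}
\gaplcondr \le \abs*{\ensmretset(\retailset,\rlocaltestmat) - \E_{\rlocaltestmat}[\ensmretset(\retailset,\rlocaltestmat)]} .
\end{align}
I define $\mbf{z}_j = \frac{1}{\Nretail}\sum_k f_k(\rlocaltest_j)$. These are i.i.d.\ over $j$, and $\ensmretset(\retailset,\rlocaltestmat)$ is their empirical mean.

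To apply Hoeffding's inequality (Lemma~\ref{lemma: Hoeffding's ineq}) with the $\eta\sigma$ scale, I need the range of $\mbf{z}_j$ to have width of order $\eta\sigma$. The local Lipschitz property gives $\abs{f_k(\localtest) - f_k(x)} \le \eta\sigma$ for every $\localtest \in \ltwohsphere(x,\sigma)$. Hence $\mbf{z}_j \in [\ensmretset(\retailset,x) - \eta\sigma, \ensmretset(\retailset,x) + \eta\sigma]$, an interval of width $2\eta\sigma$. Hoeffding with deviation $\slackgap/2$ then yields
\begin{align}
2\exp\parens*{\frac{-2\nogaussianpts(\slackgap/2)^2}{4\eta^2\sigma^2}} = 2\exp\parens*{\frac{-\nogaussianpts\slackgap^2}{8\eta^2\sigma^2}} ,
\end{align}
which matches the first term in the claim. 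Since this bound does not depend on $\retailset$, the law of total probability lifts it to the joint probability over $(\rretailset,\rlocaltestmat)$, exactly as in \eqref{eq: variability uniform bound}.

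\textbf{Main obstacle.} The delicate step is the range of $\mbf{z}_j$. One must center at $\ensmretset(\retailset,x)$, which is deterministic once $\retailset$ is fixed, so that the Lipschitz constant enters and the trivial range $[0,1]$ is avoided. I expect everything else to be routine. Summing the two bounds gives the lemma.
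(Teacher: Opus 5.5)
Your proposal is correct and is essentially the paper's proof. It uses the same split through $\lcondavg(\rretailset,x)$, the same appeal to Lemma~\ref{lemma: Concentration of local expected ensemble margin} at level $\slackgap/2$, and the same reverse-triangle-inequality and law-of-total-probability treatment of the perturbation term. The only cosmetic difference is that you apply Hoeffding to the $\mbf{z}_j$ on an interval of width $2\eta\sigma$ centered at $\ensmretset(\retailset,x)$, while the paper applies McDiarmid with bounded differences $c_j = 2\eta\sigma/\nogaussianpts$; both give the identical exponent $-\nogaussianpts\slackgap^2/(8\eta^2\sigma^2)$.
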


\begin{proof}
Our goal is to show concentration for the local margin term $ \lallavg(x)$ to $\explallavg(x)$. The variability comes from two distinct sources of randomness,
\begin{itemize}
    \item The randomness in the perturbation samples, $\rlocaltestmat = \{\rlocaltest_{j}: j \in [\nogaussianpts]\}$ when $\retailset$ is fixed.
    \item The randomness in the retail set $\rretailset = \{\rf_{k}: k \in [\Nretail]\}$.
\end{itemize}
Let $ \gaplmargin(\rretailset,\rlocaltestmat,x) =  \abs*{\lallavg(x) - \explallavg(x)}$. We can thus decompose the gap as follows
\begin{align}
    {\gaplmargin(\rretailset,\rlocaltestmat,x)} \le \underbrace{\abs*{ \lallavg(x) - \lcondavg(\rretailset,x)}}_{\gaplcondr(\rretailset,\rlocaltestmat,x)} + \underbrace{\abs*{ \lcondavg(\rretailset,x) - \explallavg(x)}}_{\gaplrandr(\rretailset,x)}.
\end{align}
Lemma \ref{lemma: Concentration of local expected ensemble margin} shows that for any $\slackparam$,
\begin{align}
    \prob_{\rretailset}*{
        {\gaplrandr(\rretailset,x)}
        \ge
        \slackparam
    }
    \le
    2 \exp\parens*{\frac{-\Nretail\slackparam^2}{2\rashlexpctparam^2}}.
    \end{align}
We now argue that $\gaplcondr(\rretailset,\rlocaltestmat,x) = \abs*{ \gaplmargin(\rretailset,\rlocaltestmat,x) - \lcondavg(\rretailset,x)}$ is also bounded with high probability.
 Let $R = \{f_k : k \in [\Nretail]\}$ be a realization of $\rretailset$ and let $\ensmretset(\retailset,\rlocaltestmat,x)$ be deterministic in $\retailset$. Since $\rlocaltestmat = \{\rlocaltest_{j}: j \in [\nogaussianpts]\}$ is sampled $\iid \Unif(\ltwohsphere(x,\sigma))$ we will use McDiarmid's inequality (Lemma \ref{lemma: mcdiarmid ineq}) on the ensemble prediction $\ensmretset(\retailset,\rlocaltestmat)$. To do so we must verify the bounded differences property in \eqref{eq:mcdiarmid:boundeddiff}. Let $\localtestmat = \{\localtest_{1},\dots,\localtest_{j},\dots, \localtest_{\nogaussianpts}\} \subseteq \ltwohsphere(x,\sigma)$ and $\localtestmat'= \localtestmat \setminus \{\localtest_j\} \cup \{
 \localtest'_j\}$, where $\localtest_j' \in  \ltwohsphere(x,\sigma)$. Using triangle inequality and the fact every element of $\emprashomonset$ is locally Lipschitz with constant $\eta$ in the neighborhood $\ltwohsphere(x,\sigma)$, for every perturbation $\localtest_{j}$, 
\begin{align}
    \abs*{ \ensmretset(\retailset,\localtestmat) - \ensmretset(\retailset,\localtestmat') }
    &= \abs*{\frac{1}{\nogaussianpts}\sum_{j=1}^{\nogaussianpts}\ensmretset(\retailset,\localtest_j) 
        -\frac{1}{\nogaussianpts}\sum_{j=1}^{\nogaussianpts}\ensmretset(\retailset,\localtest'_j) } 
        \notag \\
    &= \frac{1}{\nogaussianpts}\abs*{\ensmretset{(\retailset,\localtest_j)} - \ensmretset{(\retailset,\localtest'_j)}} 
        \notag \\
    &\le \frac{1}{\nogaussianpts}\frac{1}{\Nretail}\sum_{k=1}^{\Nretail}\abs*{f_k{(\localtest)} - f_k{(x)}} 
        +\frac{1}{\nogaussianpts}\frac{1}{\Nretail}\sum_{k=1}^{\Nretail}\abs*{f_k{(x)} - f_k{(\localtest')}}
        \notag \\
    &\le \frac{2 \eta\sigma}{\nogaussianpts}.
\end{align}
Now applying Lemma \ref{lemma: mcdiarmid ineq} with $c_k = \frac{2 \eta\sigma}{\nogaussianpts}$, we have
    \begin{align}
      \prob_{\rlocaltestmat}*{
            \abs*{ \ensmretset(\retailset,\rlocaltestmat)
                 - \expect_{\rlocaltestmat}*{\ensmretset(\retailset,\rlocaltestmat)} }
          \ge \slackparam }
      \le 
      2\exp\parens*{ \frac{-\nogaussianpts\slackparam^2}{2\eta^2\sigma^2} }.
    \end{align}
From the triangle inequality we have,
\begin{align}
    \gaplcondr(\retailset,\rlocaltestmat) \le \abs*{ \ensmretset(\retailset,\rlocaltestmat)
                 - \expect_{\rlocaltestmat}*{\ensmretset(\retailset,\rlocaltestmat,x)} }.
\end{align}
Therefore the event  $\{\abs*{ \ensmretset(\retailset,\rlocaltestmat,x)
                 - \expect_{\rlocaltestmat}*{\ensmretset(\retailset,\rlocaltestmat,x)} } \ge \slackparam\}$ implies $\{\gaplcondr(\retailset,\rlocaltestmat,x)\ge \slackparam\}$, so
    \begin{align}\label{eq: local margin cond bound}
    \prob_{\rlocaltestmat}*{\abs*{\gaplcondr(\retailset,\rlocaltestmat,x)} \ge \slackparam} 
    \le   2\exp\parens*{ \frac{-\nogaussianpts\slackparam^2}{2\eta^2\sigma^2} }.
    \end{align}
    Since Equation \eqref{eq: local margin cond bound} holds uniformly for all $\rretailset = \retailset$, applying the law of total probability yeilds, 
\begin{align}
    \prob_{\rretailset,\rlocaltestmat}*{\abs*{\gaplcondr(\rretailset,\rlocaltestmat, x)} \ge \slackparam}
    = \sum_{\retailset}\prob_{\rlocaltestmat \lvert \rretailset=\retailset}*{\gaplcondr(\retailset,\rlocaltestmat,x) \ge {\slackparam} \lvert \rretailset=\retailset } \prob*{\rretailset=\retailset}
      \le     2\exp\parens*{ \frac{-\nogaussianpts\slackparam^2}{2\eta^2\sigma^2} }, \label{eq:local margin uniform bound}
\end{align}
Then setting $\slackparam = \slackgap/2$ we have 
    \begin{align}
    \prob_{\rretailset,\rlocaltestmat}*{ 
        \gaplmargin(\rretailset,\rlocaltestmat,x) \ge \slackgap }
    &\le 
    \prob_{\rretailset,\rlocaltestmat}*{
        \gaplcondr(\rretailset,\rlocaltestmat,x) \ge \slackgap/2, 
        \gaplrandr(\rretailset,\rlocaltestmat,x) \ge \slackgap/2 } 
     \\ \notag
    &\le 
    \prob_{\rretailset,\rlocaltestmat}*{
        \gaplcondr(\rretailset,\rlocaltestmat,x) \ge \slackgap/2 }
    + \prob_{\rretailset,\rlocaltestmat}*{ 
        \gaplrandr(\rretailset,x) \ge \slackgap/2 }
    \\ \notag
    &\le 2\exp\parens*{ \frac{-S\slackgap^2}{8\eta^2\sigma^{2}} } +
        2\exp\parens*{ \frac{-\Nretail\slackgap^2}{8\rashlexpctparam^2} }.
    \end{align}
 Here we used the fact that for $\gaplmargin(\rretailset,\rlocaltestmat,x) \ge \slackgap$ to hold, at least one of the terms $\gaplcondr(\rretailset,\rlocaltestmat, x)$ and $\gaplrandr(\rretailset, x)$ should be larger than $\slackgap/2$.
 \end{proof}

\subsubsection{Proof of Theorem \ref{Theorem: local consistency guarantee}}\label{sec: Theorem local consistency}

Theorem~\ref{Theorem: local consistency guarantee} bounds the probability that the local consistency score computed using a finite retail ensemble falls significantly below the corresponding consistency score of the full Rashomon set.  Let $\gapl(\retailset,\localtestmat, x) = \abs*{\localmarginconsistency(\retailset,\localtestmat,x) - \localmarginconsistency(\emprashomonset,x)}$, where $\localmarginconsistency(\retailset,\localtestmat,x) =  \abs*{\ensmretset(\retailset,\localtestmat, x)-\frac{1}{2}} - \frac{1}{\nogaussianpts}\sum_{j=1}^{\nogaussianpts}\frac{1}{\Nretail}\sum_{k=1}^{\Nretail}\abs*{f_{k}(x) - f_{k}(\rlocaltest_{j})}$ from Definition~\ref{def: local consistency} and $\localmarginconsistency(\emprashomonset,x) = \abs*{\E_{\rretailset,\rlocaltestmat}\bracks*{\ensmretset(\rretailset,\rlocaltestmat)} - \frac{1}{2}} - \E_{\rretailset,\rlocaltestmat}\bracks*{\frac{1}{\nogaussianpts}\sum_{j=1}^{\nogaussianpts}\frac{1}{\Nretail}\sum_{k=1}^{\Nretail}\abs*{\rf_{k}(x) - \rf_{k}(\localtest_{j})}}$.  Since $x \in \dtestlocalconf$, by Definition \ref{Definition: local beta conf} of $(\beta,\sigma)$-local consistency of $x$, $\localmarginconsistency(\emprashomonset,x) \ge \beta$. Therefore, if we can bound the deviation ${\gapl(\retailset, \localtestmat,x)}$ as ${\gapl(\retailset, \localtestmat, x)} \le \slackmeasure$, we can bound $\localmarginconsistency(\retailset,\localtestmat,x)$ as,
    \begin{align}\label{eq: local consbeta}
        \localmarginconsistency(\retailset,\localtestmat,x) \ge \localmarginconsistency(\emprashomonset,x) - \slackmeasure \ge \beta - \slackmeasure.
    \end{align}
Thus to show $\consistency(\retailset,\localtestmat,x) \ge \beta - \slackmeasure$, it suffices to upper bound the term ${\gap(\retailset,\localtestmat)}$ which we prove next.

\localconsistent*
\begin{proof}
 Given ${\gapl(\rretailset,\rlocaltestmat,x)} = \abs*{\localmarginconsistency(\rretailset,\rlocaltestmat,x) - \localmarginconsistency(\emprashomonset,x)}$, we have
 \begin{align}\label{eq: glmeasure}
     &{\gapl(\retailset,\localtestmat)} = \notag\\
     &\abs*{\abs*{\ensmretset(\retailset,\localtestmat)-\frac{1}{2}} - \frac{1}{\nogaussianpts}\sum_{j=1}^{\nogaussianpts}\frac{1}{\Nretail}\sum_{k=1}^{\Nretail}\abs*{f_{k}(x) - f_{k}(\rlocaltest_{j})} - \abs*{\E_{\rretailset,\rlocaltestmat}\bracks*{\ensmretset(\rretailset,\rlocaltestmat)} - \frac{1}{2}} + \E_{\rretailset,\rlocaltestmat}\bracks*{\frac{1}{\nogaussianpts}\sum_{j=1}^{\nogaussianpts}\frac{1}{\Nretail}\sum_{k=1}^{\Nretail}\abs*{\rf_{k}(x) - \rf_{k}(\rlocaltest_{j})}}}
 \end{align}

After rearranging the terms, as in Equation \eqref{eq: glmeasure}, we represent them with two functions for notation convenience. The function
 \begin{align}
     \gaplmargin(\retailset,\localtestmat,x) = \abs*{\abs*{\ensmretset(\retailset,\localtestmat)-\frac{1}{2}}  - \abs*{\E_{\rretailset,\rlocaltestmat}\bracks*{\ensmretset(\rretailset,\rlocaltestmat)} - \frac{1}{2}}},
 \end{align} 
 represents the difference between the average margin over perturbed samples of a single ensemble and the expected margin of the expected ensemble of models. The remaining terms
 \begin{align}
     \gaplvar(\retailset,\localtestmat, x) =  \abs*{\frac{1}{\nogaussianpts}\sum_{j=1}^{\nogaussianpts}\frac{1}{\Nretail}\sum_{k=1}^{\Nretail}\abs*{f_{k}(x) - f_{k}(\localtest_{j})} - \E_{\rretailset,\rlocaltestmat}\bracks*{\frac{1}{\nogaussianpts}\sum_{j=1}^{\nogaussianpts}\frac{1}{\Nretail}\sum_{k=1}^{\Nretail}\abs*{\rf_{k}(x) - \rf_{k}(\rlocaltest_{j})}}},
 \end{align}
 capture how predictions vary when we perturb the input $x$. 
  From the triangle inequality,
  \begin{align}
     \gapl(\retailset,\localtestmat,x) \le  \gaplmargin(\retailset,\localtestmat,x) +  \gaplvar(\retailset,\localtestmat,x).
  \end{align}

From Lemma \ref{Lemma: Subgaussian local conc random}, $ \P_{\rretailset,\rlocaltestmat}\bracks*{{\gaplmargin(\rretailset,\rlocaltestmat,x)} \ge {\slackgap}} \le 2\exp\parens*{ \frac{-S\slackgap^2}{8\eta^2\sigma^{2}} } +
        2\exp\parens*{ \frac{-\Nretail\slackgap^2}{8\rashlexpctparam^2} }$, and from Lemma \ref{Lemma: Subgaussian conc random}, $\P_{\rretailset,\rlocaltestmat}\bracks*{{\gaplvar(\rretailset,\rlocaltestmat, x)} \ge {\slackgap}} \le 2\exp\parens*{\frac{
        -\nogaussianpts\slackgap^{2}}{2\eta^2\sigma^{2}}\bigg) +  2\exp\Big(
        \frac{-\Nretail\slackparam^2}{2\eta^2\sigma^{2}} }$.
For $\gapl(\rretailset,\rlocaltestmat,x)$ to be greater than $\slackmeasure$, at least one of the terms $\gaplmargin(\rretailset,\rlocaltestmat,x)$ and $\gaplvar(\rretailset,\rlocaltestmat,x)$ should be greater than $\frac
{\slackmeasure}{2}$. Fixing $\slackgap = \frac{\slackmeasure}{2}$, we get
\begin{align}
    \P_{\rretailset,\rlocaltestmat}\bracks*{{\gapl(\rretailset,\rlocaltestmat,x)} \ge \slackmeasure} \le \P_{\rretailset}\bracks*{{\gaplmargin(\rretailset, \rlocaltestmat, x)} \ge \frac{\slackmeasure}{2}} + \P_{\rretailset,\rlocaltestmat}\bracks*{{\gaplvar(\rretailset,\rlocaltestmat,x)} \ge \frac{\slackmeasure}{2}}.
\end{align}
Thus,
\begin{align}
    \P_{\rretailset,\rlocaltestmat}\bracks*{{\gapl(\rretailset,\rlocaltestmat,x)} \ge \slackmeasure} \le 
        2\exp\parens*{ \frac{-\Nretail\slackmeasure^2}{32\rashlexpctparam^2} } + 2\exp\parens*{ \frac{-S\slackmeasure^2}{32\eta^2\sigma^{2}} } + 2\exp\parens*{\frac{
        -\nogaussianpts\slackmeasure^{2}}{8\eta^2\sigma^{2}}\bigg) +  2\exp\Big(
        \frac{-\Nretail\slackmeasure^2}{8\eta^2\sigma^{2}}}.
\end{align}
It follows that
\begin{align}
      \P_{\rretailset,\rlocaltestmat}\bracks*{{\gapl(\rretailset,\rlocaltestmat,x)} \le \slackmeasure} \ge 1 - 2\exp\parens*{ \frac{-\Nretail\slackmeasure^2}{32\rashlexpctparam^2} } - 2\exp\parens*{ \frac{-S\slackmeasure^2}{32\eta^2\sigma^{2}} } - 2\exp\parens*{\frac{
        -\nogaussianpts\slackmeasure^{2}}{8\eta^2\sigma^{2}}\bigg) -  2\exp\Big(
        \frac{-\Nretail\slackmeasure^2}{8\eta^2\sigma^{2}}}.
\end{align}
When combined with Equation $\eqref{eq: local consbeta}$ yields,
\begin{align}
    \P_{\rretailset,\rlocaltestmat}\bracks*{\localmarginconsistency(\rretailset,\rlocaltestmat, x) \ge \beta - \slackmeasure} \ge 1 - 2\exp\parens*{ \frac{-\Nretail\slackmeasure^2}{32\rashlexpctparam^2} } - 2\exp\parens*{ \frac{-S\slackmeasure^2}{32\eta^2\sigma^{2}} } - 2\exp\parens*{\frac{
        -\nogaussianpts\slackmeasure^{2}}{8\eta^2\sigma^{2}}\bigg) -  2\exp\Big(
        \frac{-\Nretail\slackmeasure^2}{8\eta^2\sigma^{2}}}.
\end{align}

\end{proof}

\subsection{Multiplicity metrics}\label{sec: Multiplicity metrics}
As mentioned in the Introduction, predictive multiplicity means that models may disagree on an individual test instance $x$. When these models are in the Rashomon set, this disagreement may indicate that $x$ is fundamentally ``difficult'' because even well-performing models disagree on it. Several measures have been proposed to quantify multiplicity.

\begin{definition}[Pairwise Disagreement~\citep{black2022model}]\label{def: pairwise disagreement}
The pairwise disagreement of the empirical Rashomon set $\emprashomonset$, at a test instance $x$, is the average disagreement between pairs of models $f_{k},f_{l} \in \emprashomonset$ and is defined as,
\begin{align}
      \pairwisedisag(x) = \frac{1}{\abs{\emprashomonset}(\abs{\emprashomonset}-1)}\sum_{h_k,h_l \in \emprashomonset, h_k \ne _l}\Indic{h_{k}(x) \ne h_{l}(x)},
\end{align}  
where $h(x)= \Indic{f(x) \geq \frac{1}{2}}$ is the hard decision of $f$.
\end{definition}
\begin{definition}[Discrepancy~\citep{marx2020predictive}]\label{def: Discrepancy}The discrepancy of $\emprashomonset$, at a test instance x, is the fraction of conflicting predictions of models in $\emprashomonset$ w.r.t. a reference model and is defined as,
\begin{align}
     \disagens(x) =  \frac{1}{\abs{\emprashomonset}}\sum_{h_k \in  \emprashomonset}\Indic{h_k(x) \ne h_0(x)},
\end{align}
where we choose the reference model as $h_{0}(x) = \Indic{\frac{1}{\abs{\emprashomonset}}\sum_{f_k \in  \emprashomonset}f_k(x) \ge \frac{1}{2}}$.
    
\end{definition}
We also look at multiplicity in the neighborhood of $x$ using a new metric called the local discrepancy.
\begin{definition}[Local Discrepancy]\label{def: Local Discrepancy} The local discrepancy of $\emprashomonset$, at a test instance x, is the average fraction of conflicting predictions between models in $\emprashomonset$ over samples $\localtest_1,\dots.,\localtest_{\nogaussianpts} \in \ltwohsphere(x,\sigma)$,and the prediction of a reference model at x, and is defined as,
\begin{align}
     \localdisagens(x) = \frac{1}{\abs{\emprashomonset}}\sum_{h_k \in  \emprashomonset}\frac{1}{{\nogaussianpts}}\sum_{j=1}^{\nogaussianpts}\Indic{h_k(\localtest_j) \ne h_{0}(x)},
\end{align}
where we choose the reference model as $h_{0}(x) = \Indic{\frac{1}{\abs{\emprashomonset}}\sum_{f_k \in  \emprashomonset}f_k(x) \ge \frac{1}{2}}$
\end{definition}

We also consider two multiplicity metrics: prediction variance and prediction range, which focus on predicted probabilities rather than the `hard' predicted label. 

 \begin{definition}[Prediction Variance~\citep{hamman2024quantifying,watson2023predictive}]\label{def: Prediction Variance}
    The prediction variance of $\emprashomonset$ is the variance of model outputs for a given test instance $x$, and is defined as
    \begin{align}
        \predvar(x) = \frac{1}{\abs{\emprashomonset}}\sum_{f_k \in  \emprashomonset}(f_{k}(x) - \frac{1}{\abs{\emprashomonset}}\sum_{f_k \in  \emprashomonset}f_{k}(x))^{2}.
    \end{align}
\end{definition}

\begin{definition}[Prediction Range \citep{hamman2024quantifying,watson2023predictive}]\label{def: Prediction Range} The prediction range of $\emprashomonset$ is the range of predicted probabilities of models for a given test instance $x$, and is defined as,
\begin{align}
    \predrange(x) =  \max_{f_k \in  \emprashomonset}f_{k}(x) -  \min_{f_k \in  \emprashomonset}f_{k}(x).
\end{align}
\end{definition}

\subsection{Related measures}\label{sec: Related measures}
 
We define a few quantities that are related to our proposed consistency measure $\consistency(\retailset,x)$ in Definition \ref{def: Consistency measure}, and local consistency $\localmarginconsistency(\retailset,\localtestmat, x)$ in Definition \ref{def: local consistency}.

\begin{definition}[Margin]\label{def: margin} Given a Rashomon \smallset{} set $\retailset = \{f_k :  k \in [\Nretail]\}$, the ensemble model $\ensmretset(\retailset,x)$ of size $\Nretail$, we define the margin of  $\ensmretset(\retailset,x)$ at a fixed instance $x$ as, 
\begin{align}
    \margin(\retailset,x) = \babs{\ensmretset(\retailset,x) &- \frac{1}{2}}
\end{align}
    
\end{definition}

\begin{definition}[Local margin]\label{def: Local margin} Given a Rashomon \smallset{} set $\retailset = \{f_k: k \in [\Nretail]\}$  of size $\Nretail$, a set $\localtestmat = \{\localtest_{j} : j \in [\nogaussianpts]\} \subseteq \ltwohsphere(x,\sigma)$ of $\nogaussianpts$ perturbed samples of $x$, and the local ensemble $\bar{f}(\retailset, \localtestmat)$, we define the local margin of $\bar{f}(\retailset, \localtestmat)$ at a fixed instance $x$ as, 
\begin{align}
    \localmargin(\retailset,\localtestmat,x) = \babs{\ensmretset(\retailset,\localtestmat) &- \frac{1}{2}}
\end{align}    
\end{definition}


\begin{definition}[Local variability]\label{def: Local variability} Given a Rashomon \smallset{} set $\retailset = \{f_{k}: k \in [\Nretail]\}$ of size $\Nretail$, and a set $\localtestmat = \{\localtest_{j} : j \in [\nogaussianpts]\} \subseteq \ltwohsphere(x,\sigma)$ of $\nogaussianpts$ perturbed samples of $x$, we define the local variability at a fixed instance $x$ as, 
\begin{align}
    \localvar(\retailset,x) =  \frac{1}{\nogaussianpts}\sum_{j=1}^{\nogaussianpts}\frac{1}{\Nretail}\sum_{j=1}^{\Nretail}\babs{f_k(x) - f_k(\localtest_j)} 
\end{align}
\end{definition}

\begin{definition}[HDMLD Local Stability \cite{hamman2024quantifying}]\label{def: Hamman stability}
 Given a model from the Rashomon set $f_{k} \in \emprashomonset$, and a set $\localtestmat = \{\localtest_{j} : j \in [\nogaussianpts]\} \subseteq \ltwohsphere(x,\sigma)$ of $\nogaussianpts$ perturbed samples of $x$, we define the HDMLD local stability of the model at a fixed instance $x$ as
\begin{align}
    \hammancons(f_k,x)
    =
    \frac{1}{\nogaussianpts}
    \sum_{j=1}^{\nogaussianpts}
    \left(
        f_k(\localtest_j)
        -
        \babs{
            f_k(x)-f_k(\localtest_j)
        }
    \right).
\end{align}    
\end{definition}

\subsection{Additional Experiments}\label{sec: Additional Experiments}

\subsubsection{Additional plots showing high consistency thresholds substantially reduce audit risk for Rashomon \largeset{}}\label{sec: Additional beta tradeoff}

Section~\ref{sec: Thresholdvsauditoutcomes} presented the auditing results for the \largeset{} ensemble as a function of the threshold $\beta$ for paraphrase detection on the MRPC dataset using ensembles of independently fine-tuned BERT models. Figures~\ref{fig:costrisk} and Figure~\ref{fig:divrisk} report the audit cost ratio, audit risk ratio, diversion fraction, and audit risk fraction curves from Section~\ref{sec: Thresholdvsauditoutcomes} for the remaining datasets and model architectures. Across all settings, we observe the same qualitative behavior as in Section \ref{sec: Thresholdvsauditoutcomes}. Increasing the audit threshold $\beta$ makes the auditing system more conservative, substantially reducing audit risk before causing a sharp increase in audit cost and diversions. The vertical dashed line in each plot marks the value of threshold $\beta$ for different datasets chosen according to the discussion in Section \ref{sec: Thresholdvsauditoutcomes}.
\begin{figure*}[t]
\centering
\begin{subfigure}{\textwidth}
\centering
\begin{minipage}{0.24\textwidth}
    \centering
    \includegraphics[width=\linewidth]{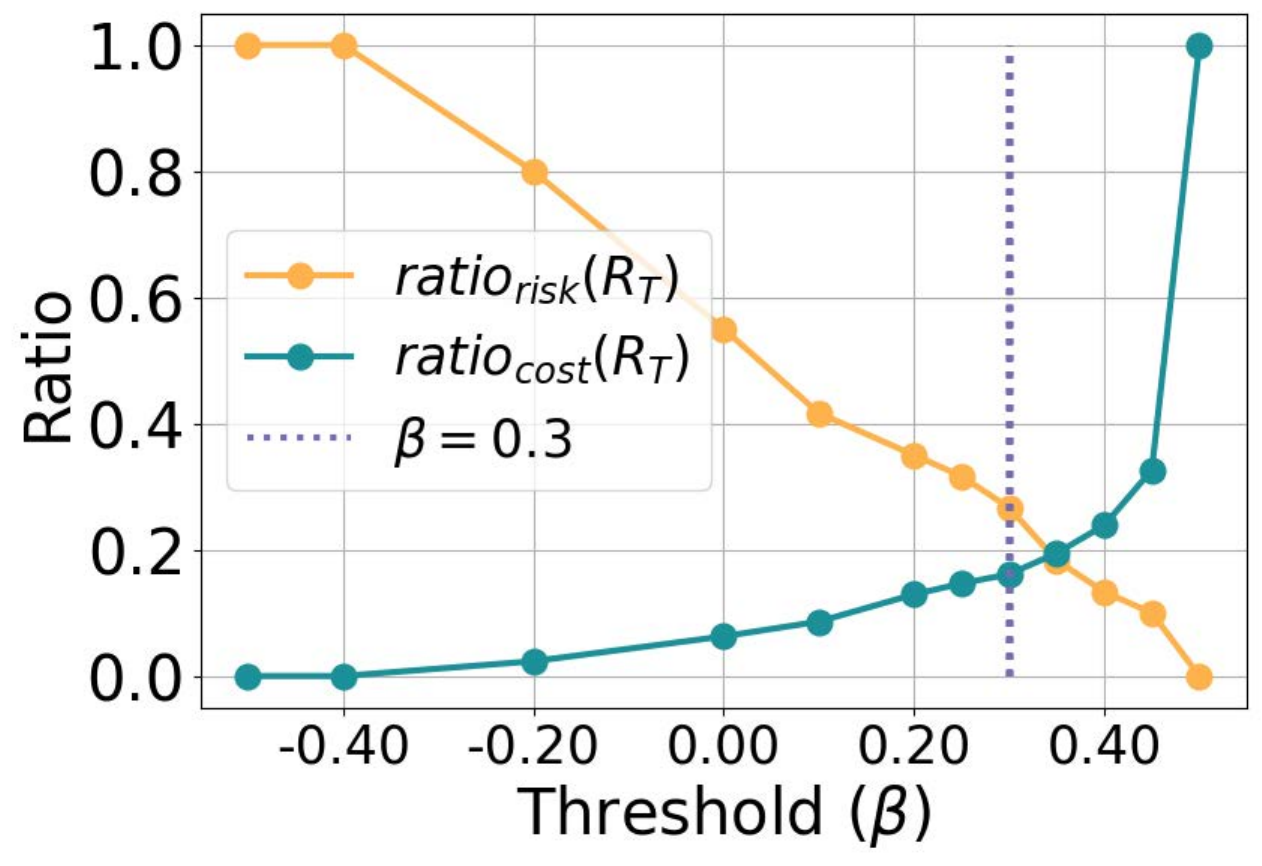}
\end{minipage}
\hfill
\begin{minipage}{0.24\textwidth}
    \centering
    \includegraphics[width=\linewidth]{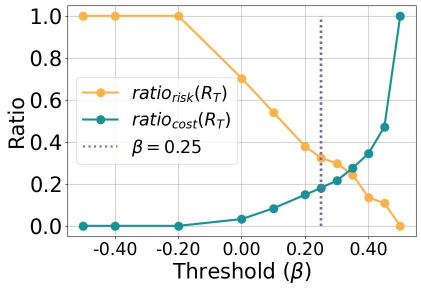}
\end{minipage}
\begin{minipage}{0.24\textwidth}
    \centering
    \includegraphics[width=\linewidth]{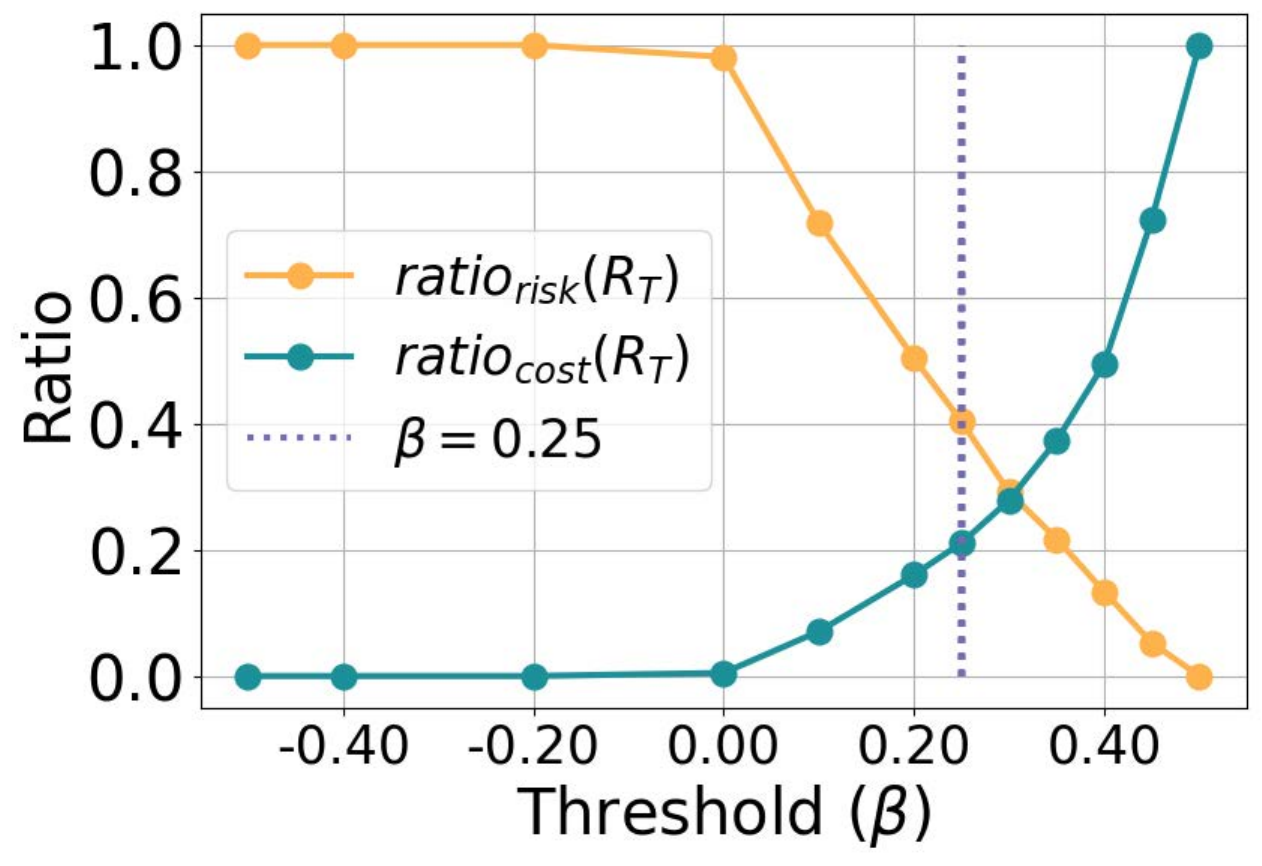}
\end{minipage}
\begin{minipage}{0.24\textwidth}
    \centering
    \includegraphics[width=\linewidth]{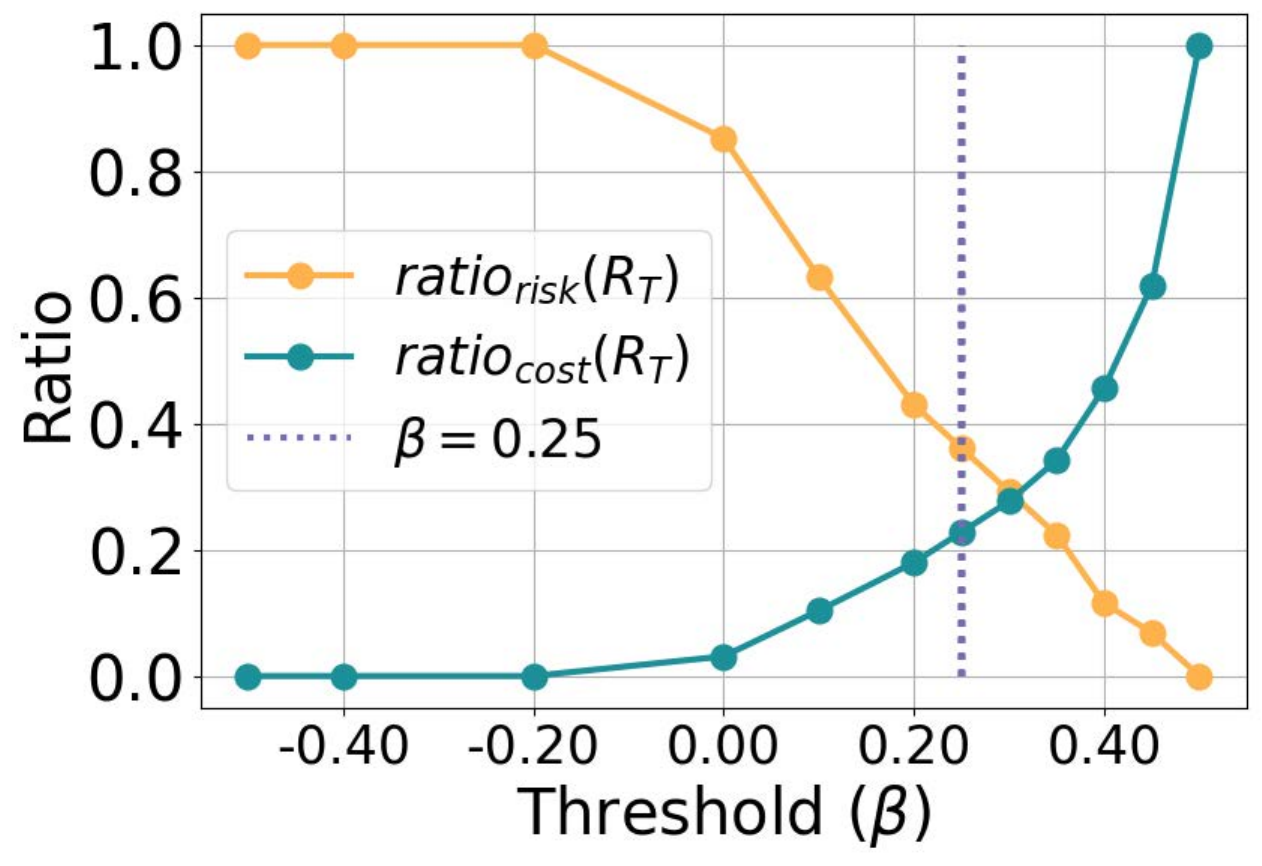}
\end{minipage}
\caption{Audit cost ratio ($\auditcostratio(\retailset_{\Nlib},x)$) versus audit risk ratio
($\auditriskratio(\retailset_{\Nlib},x))$. Results are shown (Left to Right) for: \textbf{BERT} on \textbf{SST2}, \textbf{RoBERTa} on \textbf{MRPC}, \textbf{BIG-SCIENCE T0} on \textbf{Bank}, \textbf{BIG-SCIENCE T0} on \textbf{Adult}. Increasing $\beta$ decreases audit risk at the expense of higher audit cost.}
\label{fig:costrisk}
\end{subfigure}

\vspace{1em}
\begin{subfigure}{\textwidth}
\centering
\begin{minipage}{0.24\textwidth}
    \centering
    \includegraphics[width=\linewidth]{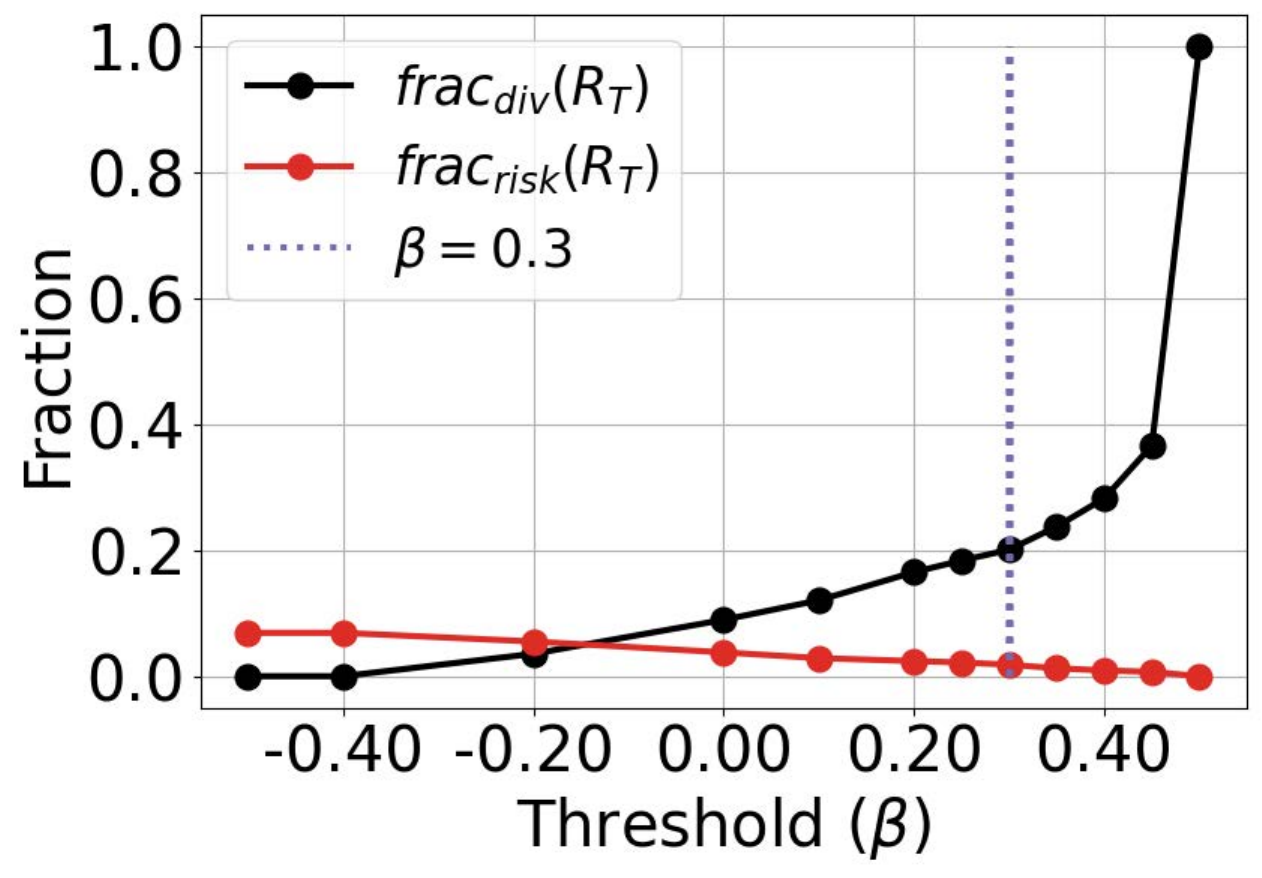}
\end{minipage}
\hfill
\begin{minipage}{0.24\textwidth}
    \centering
    \includegraphics[width=\linewidth]{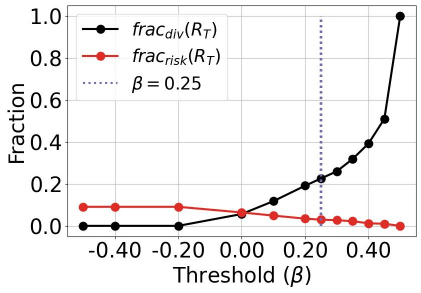}
\end{minipage}
\begin{minipage}{0.24\textwidth}
    \centering
    \includegraphics[width=\linewidth]{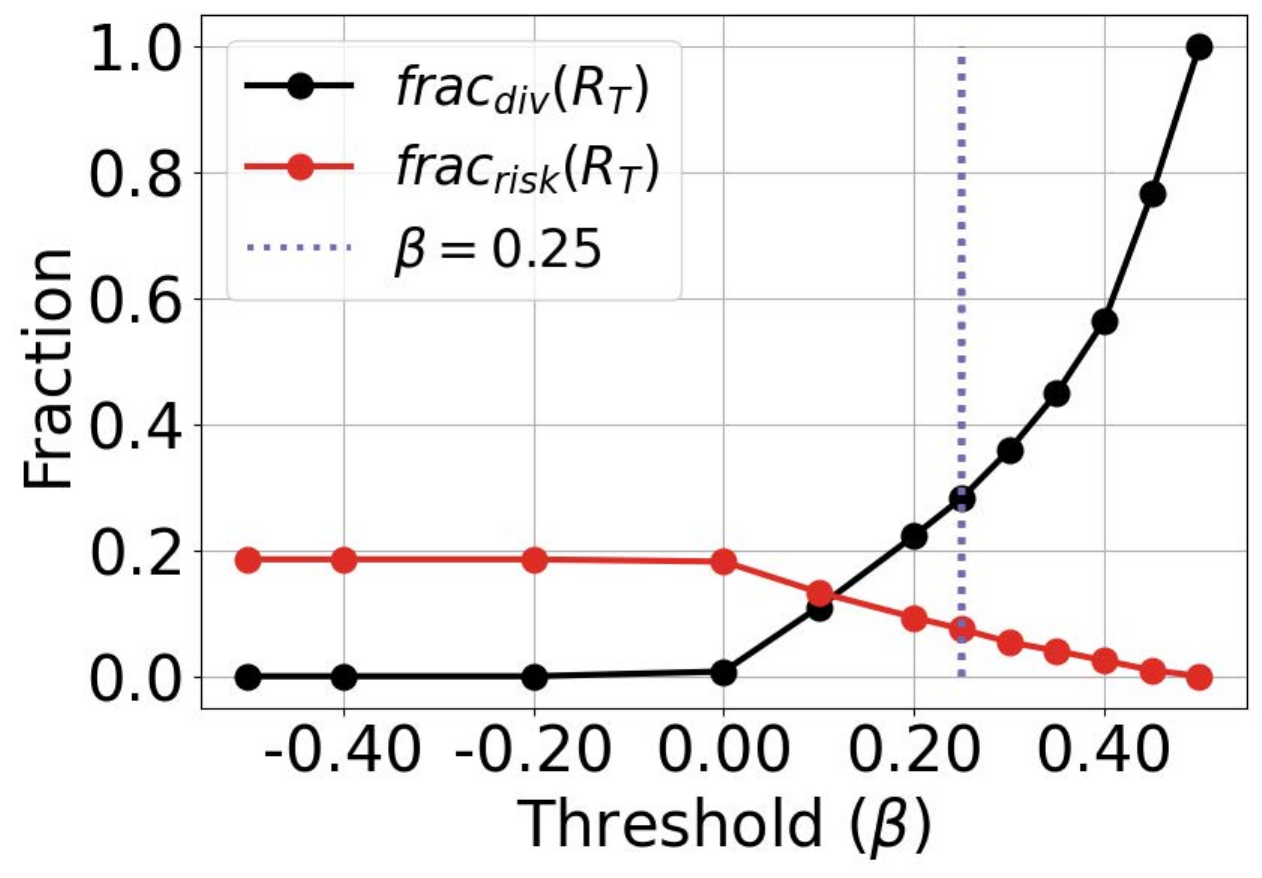}
\end{minipage}
\begin{minipage}{0.24\textwidth}
    \centering
    \includegraphics[width=\linewidth]{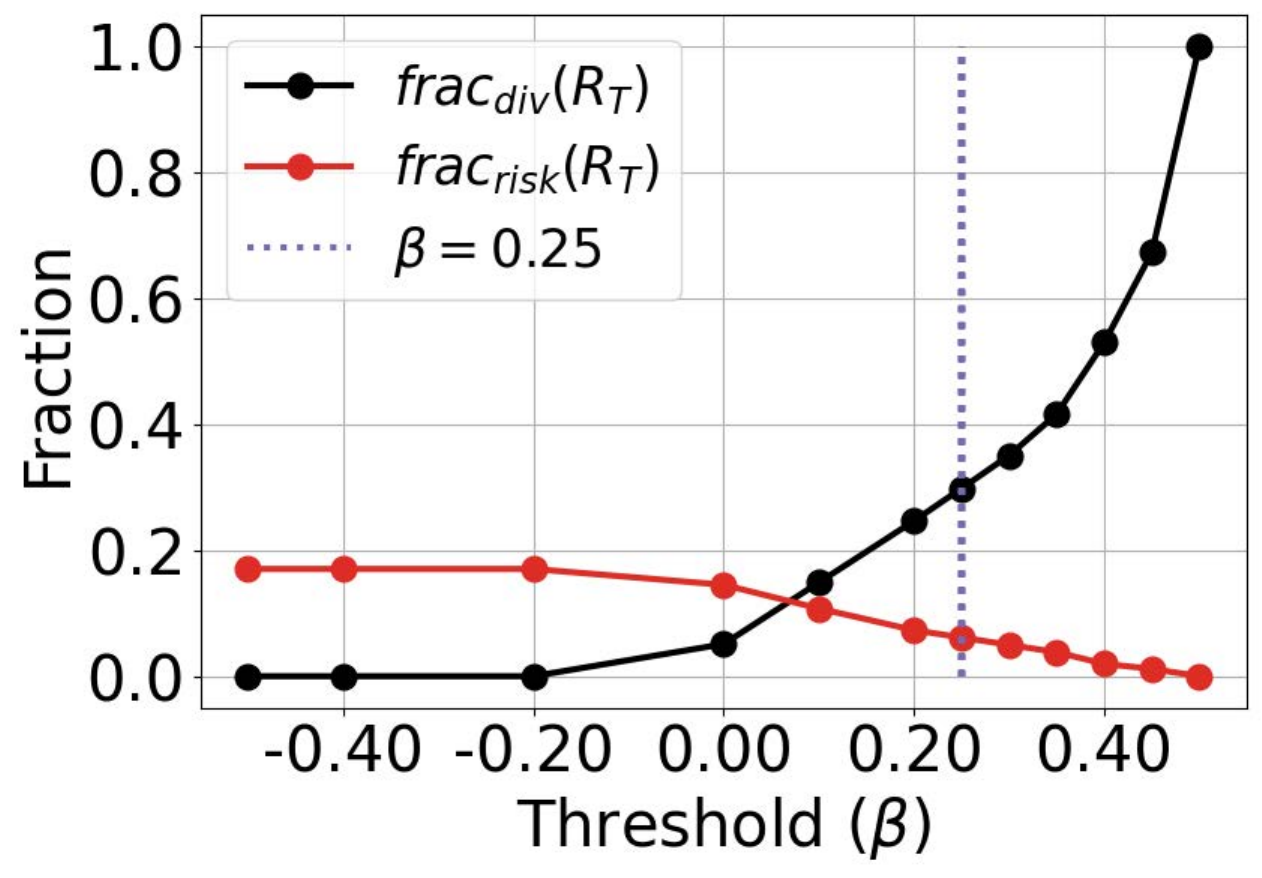}
\end{minipage}
\caption{Diversion fraction ($\auditdivfrac(\retailset_{\Nlib},x)$) versus audit risk fraction
($\auditriskfrac(\retailset_{\Nlib},x))$.
Results are shown (Left to Right) for: \textbf{BERT} on \textbf{SST2}, \textbf{RoBERTa} on \textbf{MRPC}, \textbf{BIG-SCIENCE T0} on \textbf{Bank}, \textbf{BIG-SCIENCE T0} on \textbf{Adult}.
Increasing $\beta$ increases the diversion fraction while reducing overall audit risk.}
\label{fig:divrisk}
\end{subfigure}

\caption{Comparing the audit performance metrics in Definition~\ref{def:costsandrisks}
for the \largeset{}\ ensemble model as a function of the threshold $\beta$ used in the
consistency check of Algorithm~\ref{alg:consistent test point search}. The threshold $\beta$ is chosen based on our discussion in Section~\ref{sec: Thresholdvsauditoutcomes}.}
\label{fig:combined_metrics}
\end{figure*}

\begin{figure*}[t]
\centering

\begin{subfigure}{\textwidth}
\centering
\begin{minipage}{0.24\textwidth}
    \centering
    \includegraphics[width=\linewidth]{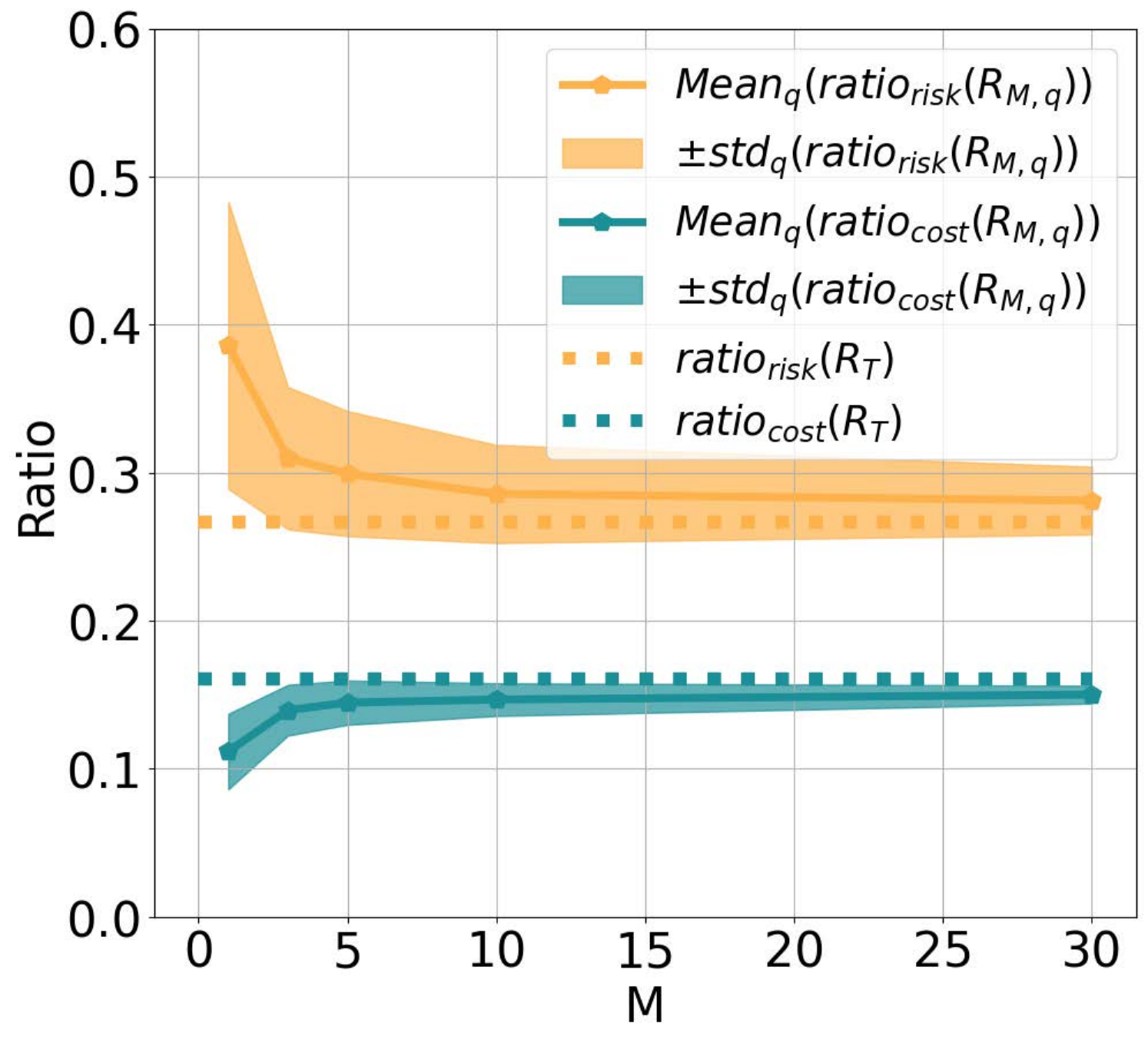}
\end{minipage}
\hfill
\begin{minipage}{0.24\textwidth}
    \centering
    \includegraphics[width=\linewidth]{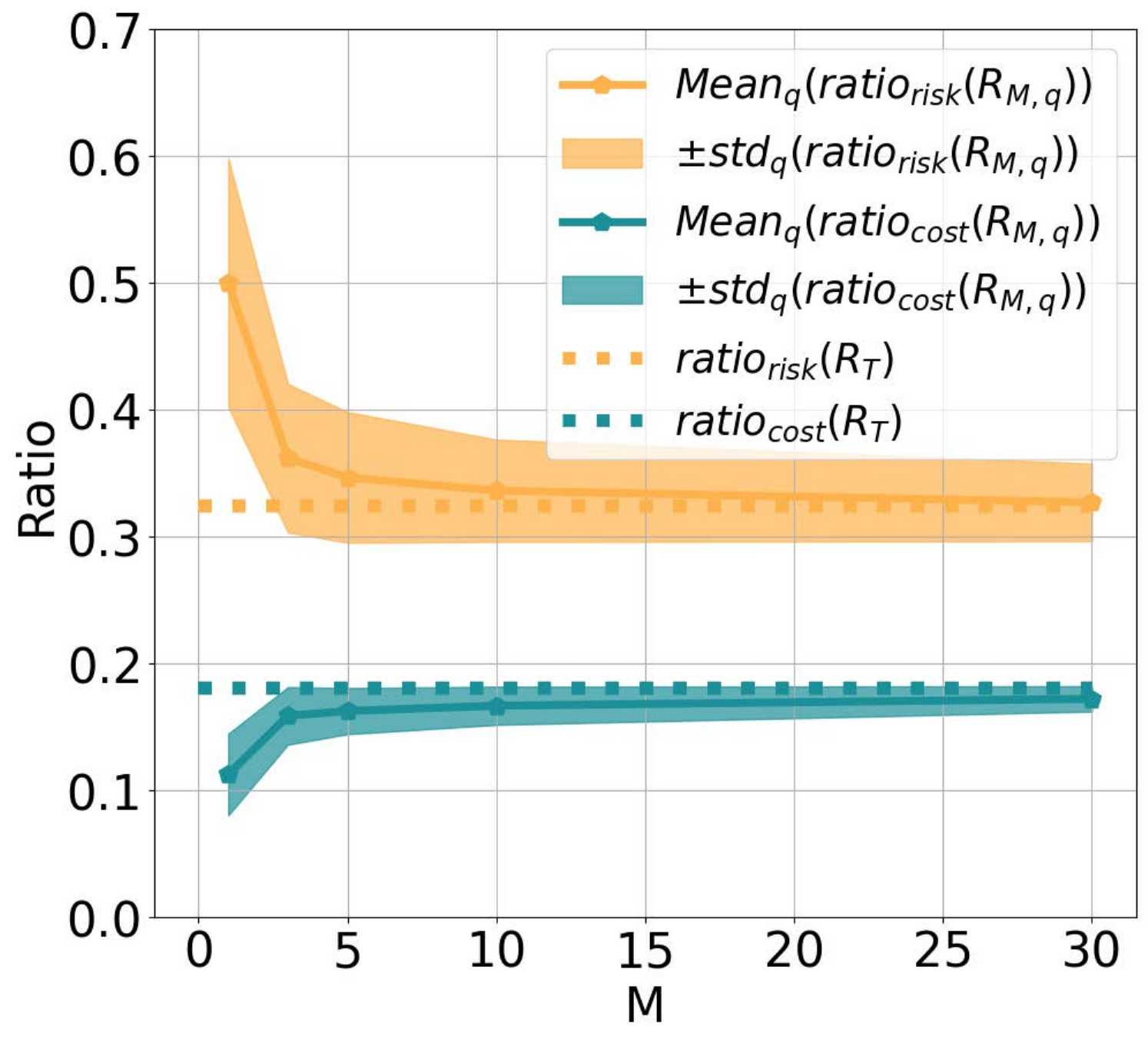}
\end{minipage}
\begin{minipage}{0.24\textwidth}
    \centering
    \includegraphics[width=\linewidth]{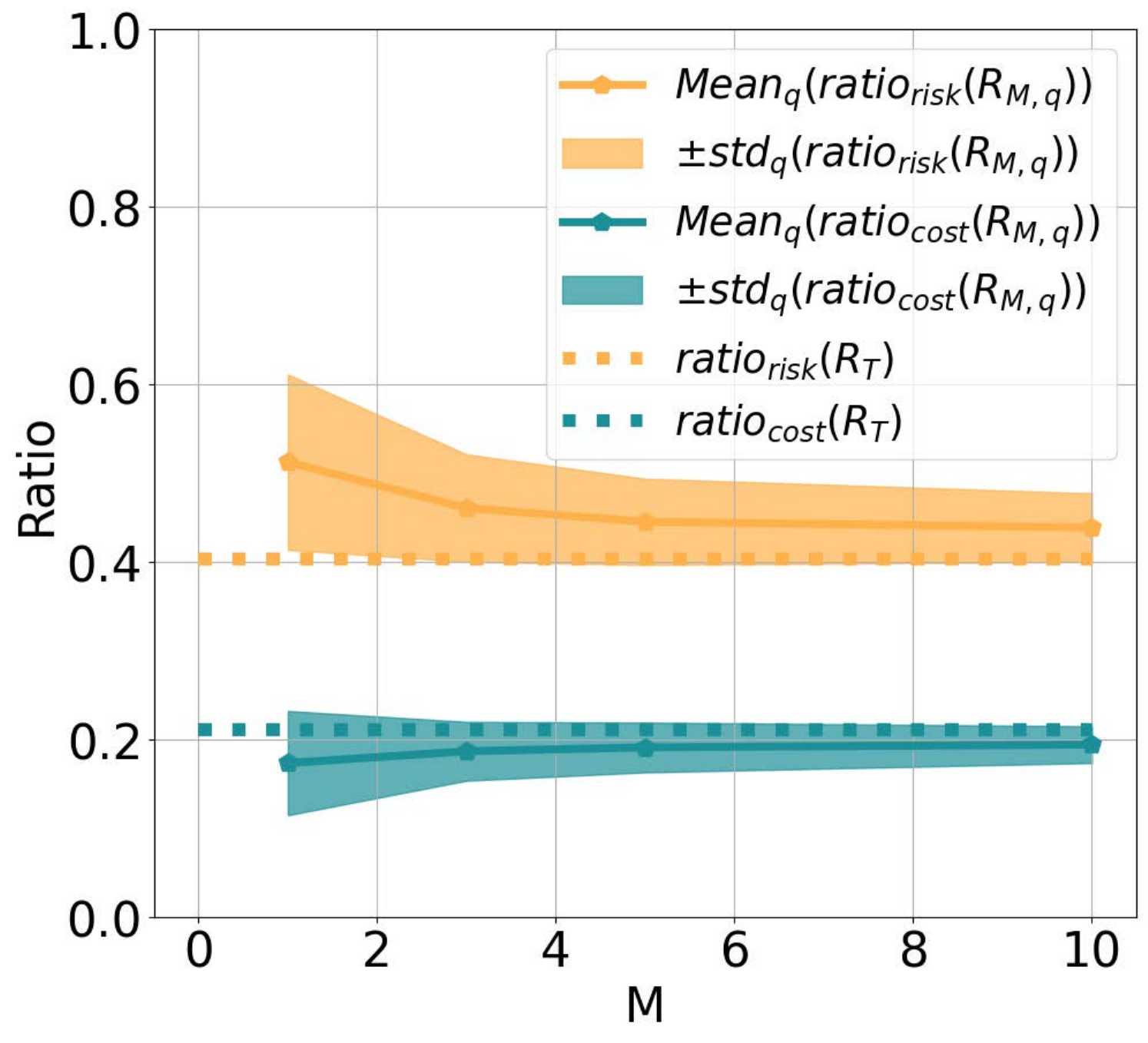}
\end{minipage}
\hfill
\begin{minipage}{0.24\textwidth}
    \centering
    \includegraphics[width=\linewidth]{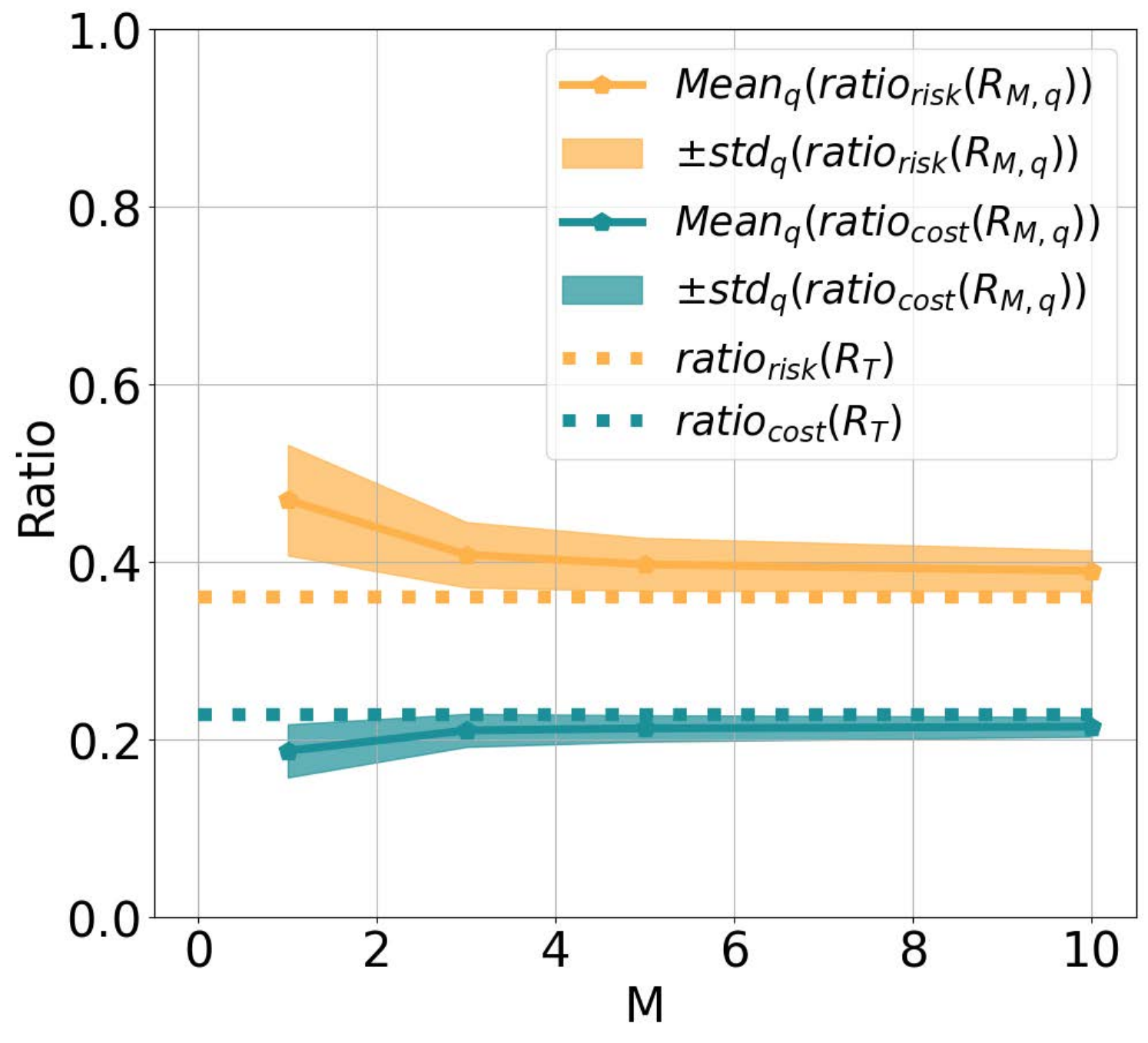}
\end{minipage}
\hfill
\caption{Audit cost ratio ($\auditcostratio$) versus audit risk ratio ($\auditriskratio$) as a function of the size of the \smallset{} ensemble. Results are shown (Left to Right) for: \textbf{BERT} on \textbf{SST2}, \textbf{RoBERTa} on \textbf{MRPC}, \textbf{BIG-SCIENCE T0} on \textbf{Bank}, \textbf{BIG-SCIENCE T0} on \textbf{Adult}. Increasing the ensemble size has a small impact in matching the \largeset{} once the ensemble is sufficiently large (around \Nretail = 10).}
\label{fig:praccostrisk}
\end{subfigure}

\vspace{1em}

\begin{subfigure}{\textwidth}
\centering
\begin{minipage}{0.24\textwidth}
    \centering
    \includegraphics[width=\linewidth]{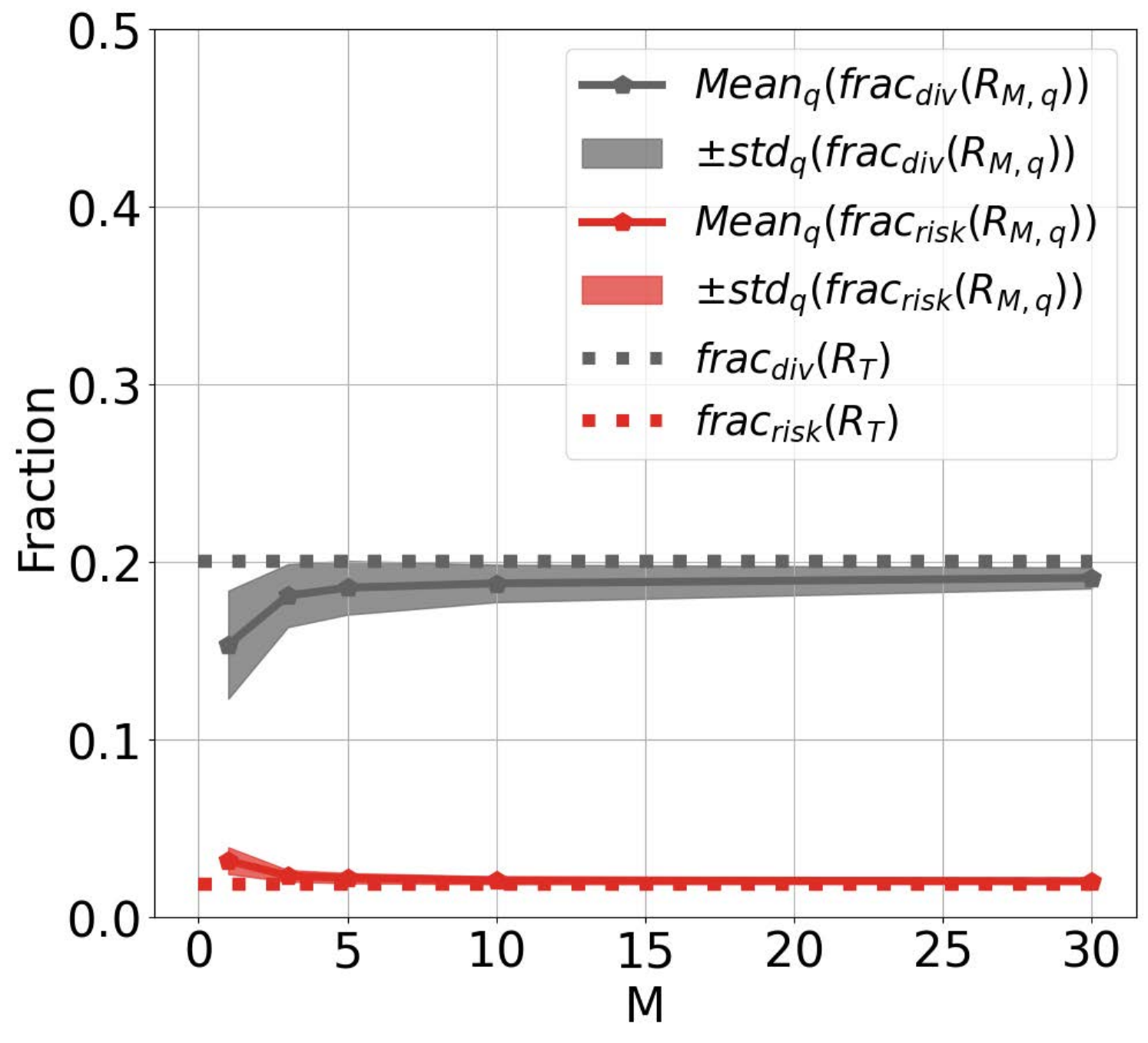}
\end{minipage}
\hfill
\begin{minipage}{0.24\textwidth}
    \centering
    \includegraphics[width=\linewidth]{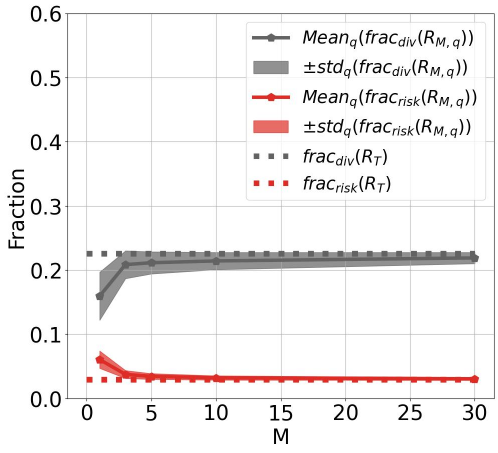}
\end{minipage}
\begin{minipage}{0.24\textwidth}
    \centering
    \includegraphics[width=\linewidth]{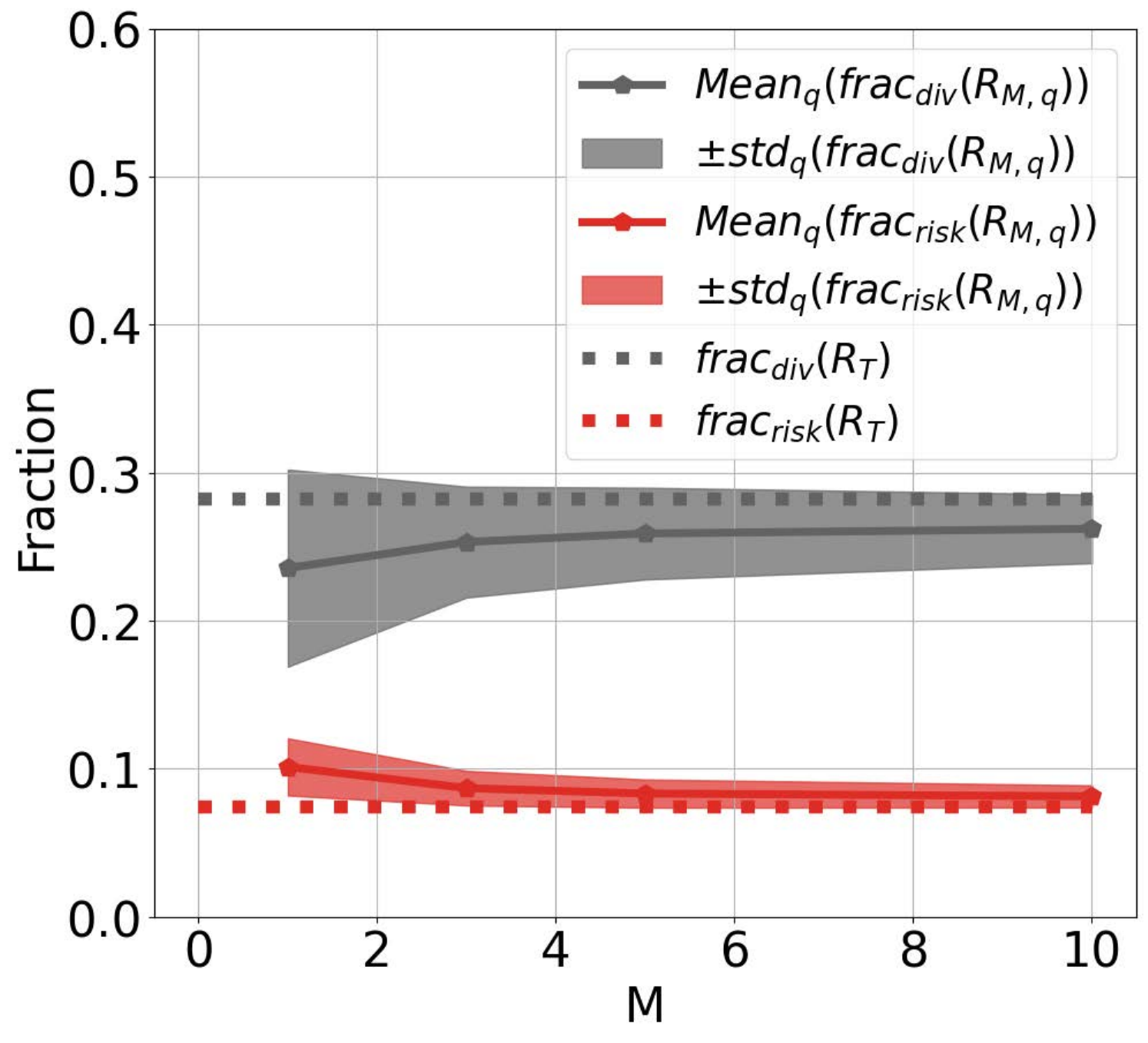}
\end{minipage}
\hfill
\begin{minipage}{0.24\textwidth}
    \centering
    \includegraphics[width=\linewidth]{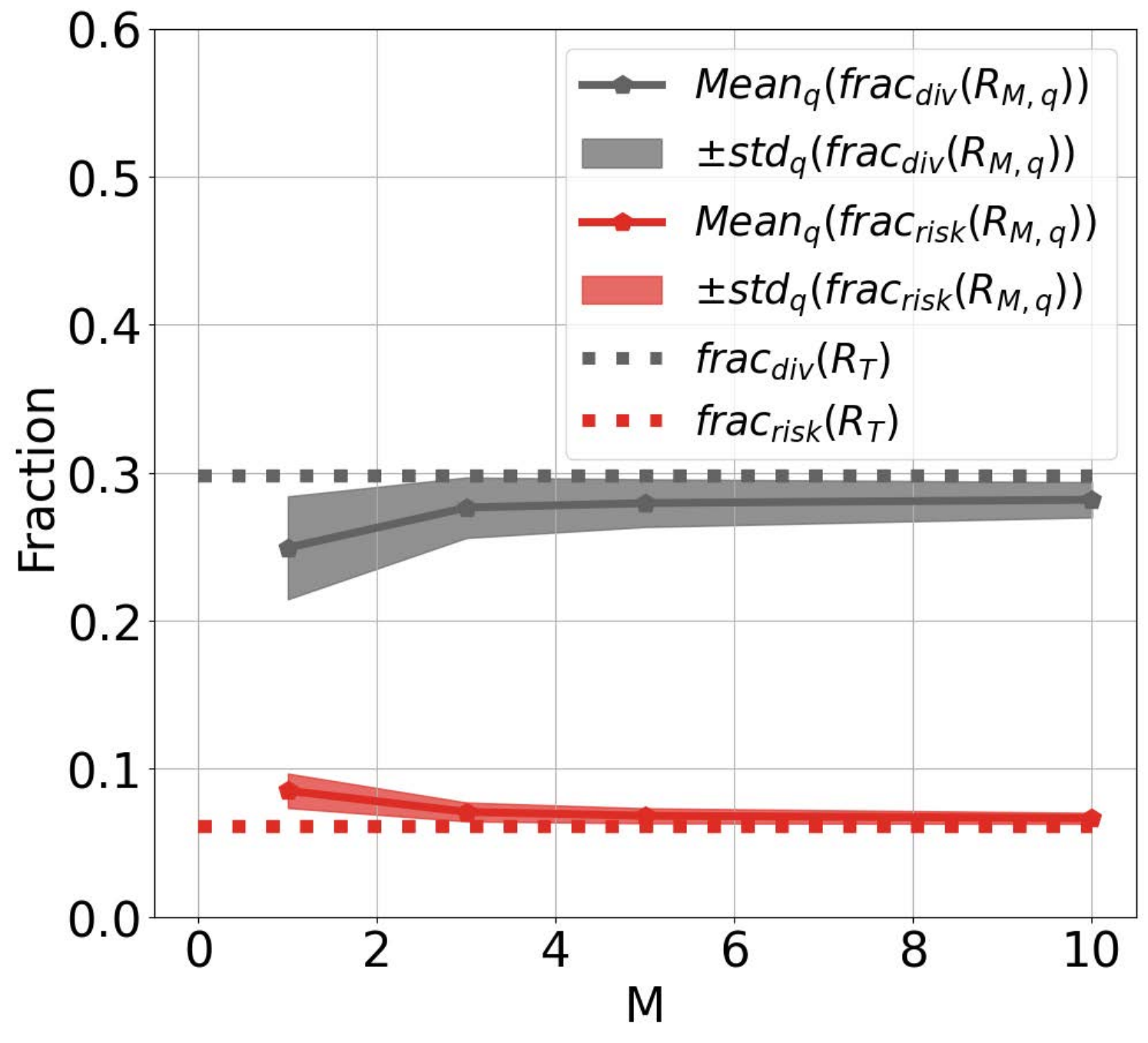}
\end{minipage}
\hfill
\caption{Diversion fraction ($\auditdivfrac$) versus audit risk fraction ($\auditriskfrac$) as a function of the size of the \smallset{} ensemble. Results are shown (Left to Right) for: \textbf{BERT} on \textbf{SST2}, \textbf{RoBERTa} on \textbf{MRPC}, \textbf{BIG-SCIENCE T0} on \textbf{Bank}, \textbf{BIG-SCIENCE T0} on \textbf{Adult}. Diversion and risk fractions nearly match those of the warehouse, even with relatively modest ensemble sizes.}
\label{fig:pracdivrisk}
\end{subfigure}

\caption{Comparing the audit metrics in Definition \ref{def:costsandrisks} for  \smallset{} ensemble models as a function of the number of models $\Nretail$. For each size $\Nretail$, $500$ ensembles of size $\Nretail$ were generated by sampling with replacement from the \largeset{} set. Figures show the mean $\pm$ standard deviation (std) of the metrics in Definition \ref{def:costsandrisks} taken over the $500$ \smallset{} ensembles.}
\label{fig:ensemble_size_metrics}
\end{figure*}

\subsubsection{Retail set audit outcomes match warehouse audit outcomes as $\Nretail$ increases}\label{sec:Addition retailauditoutcomes}
Section~\ref{sec: Retailauditoutcomes} presented the auditing results for finite sized ensembles for paraphrase detection on the MRPC dataset using ensembles of independently fine-tuned BERT models. This appendix provides the corresponding results for the remaining datasets and model architectures. Figure \ref{fig:ensemble_size_metrics} below report the audit cost ratio, audit risk ratio, diversion fraction, and audit risk fraction as functions of the retail ensemble size $\Nretail$, illustrating that the same convergence behaviour is consistently observed across all experimental settings.

\subsubsection{Understanding retail and warehouse audit disagreements}\label{sec: Combinedoutcomes}

The results in section \ref{sec: Retailauditoutcomes} compared retail and warehouse ensembles using aggregate auditing metrics from Definition~\ref{def:costsandrisks},  the audit cost ratio, audit risk ratio, diversion fraction, and audit risk fraction. While these metrics quantify the overall performance of the auditing system, they do not reveal how or why the retail and warehouse audit decisions differ.

To better understand these differences, we next examine the combined audit outcomes introduced in Table~\ref{tab: combined audit outcomes}. This allows us to identify whether disagreements arise from changes in audit cost, audit risk, or both relative to the baseline, and how these disagreement patterns evolve as the retail ensemble size increases.

Table~\ref{tab:test_instances_BERT_MRPC} summarizes how the audit outcomes produced by retail ensembles compare with those of the warehouse ensemble for paraphrase detection on the MRPC dataset using ensembles of fine-tuned BERT models. For $\Nretail=1$, corresponding to no ensembling, nearly $73\%$ of the test instances already fall into the baseline category, indicating that a single model agrees with the warehouse audit decision for the majority of inputs. At the same time, a very small fraction of test instances fall in the Neutral, Mixed, and Risk decreases category. Recall that both the mixed category and neutral category include test instances where the baseline and candidate models disagreed in their predicted labels. In the mixed category, the audit decisions for both the baseline and candidate models are incorrect: they either divert a correct prediction or automate an incorrect one. In the neutral category, on the other hand, the audit decisions for both the baseline and candidate models are correct: they either divert an incorrect prediction or automate a correct prediction. Across all experiments, we see that this combination of decisions disappears very quickly with only a small increase in ensemble size. 

We also note that more than $16.6\%$ of the test instances fall into the Cost decreases category, compared with only $4.2\%$ in the Cost increases category, indicating that a single model tends to audit fewer instances than the warehouse and therefore incurs lower audit cost. However, close to $5\%$ test instances also fall in the Risk increases category. This indicates that the reduction in audit cost achieved without ensembling comes primarily at the expense of increased audit risk.

As the retail ensemble size increases, the fraction of test instances in the Baseline category steadily rises from $73.1\%$ for $\Nretail=1$ to $90.9\%$ for $\Nretail=30$, demonstrating that retail audit outcomes increasingly match the warehouse audit outcomes with an increase in ensemble size. Correspondingly, the fractions of test instances assigned to all other categories decrease with increasing $\Nretail$. In particular, the Risk increases category decreases from almost $5\%$ to close to $1\%$, even though this comes at the expense of the Cost decreases category, which decreases from $16.6\%$ to $7.9\%$. These results show that while small retail ensembles benefit from lower audit costs, they also have a higher audit risk than the baseline, and increasing the ensemble size progressively recovers the warehouse auditing behavior. We observe a similar trend across all other architectures and datasets reported in Tables~\ref{tab:test_instances_BERT_SST2},~\ref{tab:test_instances_ROBERTA_MRPC},~\ref{tab:test_instances_BIGSCIENCE_T0_BANK}, and ~\ref{tab:test_instances_BIGSCIENCE_T0_ADULT}.

\begin{table*}[t]
\centering

\caption{Audit outcomes and aggregated audit outcomes for a paraphrase detection task on the \textbf{MRPC} dataset using ensembles of independently fine-tuned \textbf{BERT} models.}
\label{tab:test_instances_BERT_MRPC}

\begin{subtable}{\textwidth}
\centering
\renewcommand{\arraystretch}{1.5}
\setlength{\tabcolsep}{6pt}

\caption{Table reporting the fraction of test instances assigned to each outcome label $o\in\{0,\ldots,15\}$ from Table~\ref{tab: combined audit outcomes}. The baseline model is the ensemble of the Rashomon \largeset{}~$\retailset_{\Nlib}$, with model error $\modelerrorlabel(x; \retailset_{\Nlib})$, and audit decision $\riskassess(x ; \retailset_{\Nlib})$, while the candidate models are ensembles of $q=\{1,\dots,\nsampretail\}$ random draws of the Rashomon \smallset{} set $\retailset_{\Nretail,q}$, with model error $\modelerrorlabel(x ; \retailset_{\Nretail,q})$ and audit decision $\riskassess(x ; \retailset_{\Nretail,q})$. The tuple $(\modelerrorlabel(x; \retailset_{\Nlib}),\riskassess(x; \retailset_{\Nlib}),\modelerrorlabel(x; \retailset_{\Nretail,q}),\riskassess(x; \retailset_{\Nretail,q}))$ decides the combined audit outcomes, with label $o\in\{0,\ldots,15\}$. The table reports the mean $\pm$ standard deviation of the fraction of test instances assigned to each outcome label across $q=\{1,\dots,\nsampretail\}$ random retail ensemble draws for different values of $\Nretail$. A desirable auditing strategy is one that minimizes excess audit costs and risks by increasing the Baseline category.}
\label{tab:test_instances_outcomes_BERT_MRPC}

\begin{tabular}{llccccc}
\toprule
\textbf{o} &
\textbf{Meaning} &
$\mathbf{M=1}$ &
$\mathbf{M=3}$ &
$\mathbf{M=5}$ &
$\mathbf{M=10}$ &
$\mathbf{M=30}$\\
\midrule

\rowcolor{cbaseline}
0 & Baseline &
$0.586\pm0.012$ &
$0.585\pm0.010$ &
$0.591\pm0.008$ &
$0.596\pm0.005$ &
$0.601\pm0.002$\\

\rowcolor{ccost}
1 & Cost $\uparrow$ &
$0.015\pm0.011$ &
$0.018\pm0.010$ &
$0.012\pm0.008$ &
$0.007\pm0.005$ &
$0.002\pm0.002$\\

\rowcolor{crisk}
2 & Risk $\uparrow$ &
$0.000\pm0.000$ &
$0.000\pm0.000$ &
$0.000\pm0.000$ &
$0.000\pm0.000$ &
$0.000\pm0.000$\\

\rowcolor{cneutral}
3 & Neutral &
$0.002\pm0.003$ &
$0.000\pm0.000$ &
$0.000\pm0.000$ &
$0.000\pm0.000$ &
$0.000\pm0.000$\\

\rowcolor{ccostdown}
4 & Cost $\downarrow$ &
$0.133\pm0.022$ &
$0.083\pm0.019$ &
$0.076\pm0.017$ &
$0.068\pm0.014$ &
$0.062\pm0.011$\\

\rowcolor{cbaseline}
5 & Baseline &
$0.082\pm0.023$ &
$0.149\pm0.020$ &
$0.161\pm0.018$ &
$0.173\pm0.015$ &
$0.183\pm0.011$\\

\rowcolor{ctradeoff}
6 & Risk $\uparrow$, Cost $\downarrow$ &
$0.005\pm0.006$ &
$0.000\pm0.000$ &
$0.000\pm0.000$ &
$0.000\pm0.000$ &
$0.000\pm0.000$\\

\rowcolor{ccostdown}
7 & Cost $\downarrow$ &
$0.033\pm0.011$ &
$0.020\pm0.007$ &
$0.015\pm0.006$ &
$0.011\pm0.005$ &
$0.008\pm0.004$\\

\rowcolor{criskdown}
8 & Risk $\downarrow$ &
$0.000\pm0.000$ &
$0.000\pm0.000$ &
$0.000\pm0.000$ &
$0.000\pm0.000$ &
$0.000\pm0.000$\\

\rowcolor{ctradeoff}
9 & Risk $\downarrow$, Cost $\uparrow$ &
$0.000\pm0.001$ &
$0.000\pm0.000$ &
$0.000\pm0.000$ &
$0.000\pm0.000$ &
$0.000\pm0.000$\\

\rowcolor{cbaseline}
10 & Baseline &
$0.016\pm0.001$ &
$0.016\pm0.001$ &
$0.016\pm0.001$ &
$0.017\pm0.001$ &
$0.017\pm0.001$\\

\rowcolor{criskdown}
11 & Risk $\downarrow$ &
$0.001\pm0.001$ &
$0.001\pm0.001$ &
$0.001\pm0.001$ &
$0.000\pm0.001$ &
$0.000\pm0.001$\\

\rowcolor{cneutral}
12 & Neutral &
$0.006\pm0.005$ &
$0.000\pm0.001$ &
$0.000\pm0.000$ &
$0.000\pm0.000$ &
$0.000\pm0.000$\\

\rowcolor{ccost}
13 & Cost $\uparrow$ &
$0.027\pm0.010$ &
$0.023\pm0.007$ &
$0.019\pm0.006$ &
$0.015\pm0.006$ &
$0.009\pm0.004$\\

\rowcolor{crisk}
14 & Risk $\uparrow$&
$0.047\pm0.015$ &
$0.020\pm0.008$ &
$0.016\pm0.007$ &
$0.012\pm0.006$ &
$0.010\pm0.004$\\

\rowcolor{cbaseline}
15 & Baseline &
$0.047\pm0.015$ &
$0.084\pm0.010$ &
$0.092\pm0.009$ &
$0.100\pm0.007$ &
$0.108\pm0.005$\\

\bottomrule
\end{tabular}
\end{subtable}

\vspace{3mm}

\begin{subtable}{\textwidth}
\renewcommand{\arraystretch}{1.2}
\centering

\caption{Summary of the fraction of test instances assigned to each high-level audit category obtained by aggregating the corresponding outcome labels from Table~\ref{tab:test_instances_outcomes_BERT_MRPC}. Specifically, Baseline aggregates outcomes $\{0,5,10,15\}$, Risk increases aggregates $\{2,14\}$, Risk decreases aggregates $\{8,11\}$, Cost increases aggregates $\{1,13\}$, Cost decreases aggregates $\{4,7\}$, Neutral aggregates $\{3,12\}$, and Mixed aggregates $\{6,9\}$.}
\label{tab:audit_category_summary_BERT_MRPC}

\small
\begin{tabular}{lccccc}
\toprule
\textbf{Category} &
$\mathbf{M=1}$ &
$\mathbf{M=3}$ &
$\mathbf{M=5}$ &
$\mathbf{M=10}$ &
$\mathbf{M=30}$\\
\midrule
\cellcolor{cbaseline}\textbf{Baseline}
& 0.731 & 0.834 & 0.860 & 0.886 & 0.909\\

\cellcolor{crisk}\textbf{Risk increases}
& 0.047 & 0.020 & 0.016 & 0.012 & 0.010\\

\cellcolor{criskdown}\textbf{Risk decreases}
& 0.001 & 0.001 & 0.001 & 0.000 & 0.000\\

\cellcolor{ccost}\textbf{Cost increases}
& 0.042 & 0.041 & 0.031 & 0.022 & 0.011\\

\cellcolor{ccostdown}\textbf{Cost decreases}
& 0.166 & 0.103 & 0.091 & 0.079 & 0.070\\

\cellcolor{cneutral}\textbf{Neutral}
& 0.008 & 0.000 & 0.000 & 0.000 & 0.000\\

\cellcolor{ctradeoff}\textbf{Mixed}
& 0.005 & 0.000 & 0.000 & 0.000 & 0.000\\
\bottomrule
\end{tabular}

\end{subtable}

\end{table*}

\begin{table*}[t]
\centering

\caption{Audit outcomes and aggregated audit outcomes for a sentiment classification task on the \textbf{SST2} dataset using ensembles of independently fine-tuned \textbf{BERT} models.}
\label{tab:test_instances_BERT_SST2}

\begin{subtable}{\textwidth}
\centering
\renewcommand{\arraystretch}{1.5}
\setlength{\tabcolsep}{6pt}

\caption{Table reporting the fraction of test instances assigned to each outcome label $o\in\{0,\ldots,15\}$ from Table~\ref{tab: combined audit outcomes}. The baseline model is the ensemble of the Rashomon \largeset{}~$\retailset_{\Nlib}$, with model error $\modelerrorlabel(x; \retailset_{\Nlib})$, and audit decision $\riskassess(x ; \retailset_{\Nlib})$, while the candidate models are ensembles of $q=\{1,\dots,\nsampretail\}$ random draws of the Rashomon \smallset{} set $\retailset_{\Nretail,q}$, with model error $\modelerrorlabel(x ; \retailset_{\Nretail,q})$ and audit decision $\riskassess(x ; \retailset_{\Nretail,q})$. The tuple $(\modelerrorlabel(x; \retailset_{\Nlib}),\riskassess(x; \retailset_{\Nlib}),\modelerrorlabel(x; \retailset_{\Nretail,q}),\riskassess(x; \retailset_{\Nretail,q}))$ decides the combined audit outcomes, with label $o\in\{0,\ldots,15\}$. The table reports the mean $\pm$ standard deviation of the fraction of test instances assigned to each outcome label across $q=\{1,\dots,\nsampretail\}$ random retail ensemble draws for different values of $\Nretail$. A desirable auditing strategy is one that minimizes excess audit costs and risks by increasing the Baseline category.}
\label{tab:test_instances_outcomes_BERT_SST2}

\begin{tabular}{llccccc}
\toprule
\textbf{o} &
\textbf{Meaning} &
$\mathbf{M=1}$ &
$\mathbf{M=3}$ &
$\mathbf{M=5}$ &
$\mathbf{M=10}$ &
$\mathbf{M=30}$\\
\midrule

\rowcolor{cbaseline}
0 & Baseline &
$0.734\pm0.008$ &
$0.739\pm0.006$ &
$0.742\pm0.004$ &
$0.749\pm0.001$ &
$0.750\pm0.000$\\

\rowcolor{ccost}
1 & Cost $\uparrow$ &
$0.015\pm0.008$ &
$0.011\pm0.006$ &
$0.008\pm0.004$ &
$0.001\pm0.001$ &
$0.000\pm0.000$\\

\rowcolor{crisk}
2 & Risk $\uparrow$ &
$0.000\pm0.000$ &
$0.000\pm0.000$ &
$0.000\pm0.000$ &
$0.000\pm0.000$ &
$0.000\pm0.000$\\

\rowcolor{cneutral}
3 & Neutral &
$0.001\pm0.001$ &
$0.000\pm0.000$ &
$0.000\pm0.000$ &
$0.000\pm0.000$ &
$0.000\pm0.000$\\

\rowcolor{ccostdown}
4 & Cost $\downarrow$ &
$0.080\pm0.017$ &
$0.057\pm0.012$ &
$0.051\pm0.011$ &
$0.040\pm0.005$ &
$0.040\pm0.004$\\

\rowcolor{cbaseline}
5 & Baseline &
$0.077\pm0.017$ &
$0.110\pm0.012$ &
$0.120\pm0.011$ &
$0.137\pm0.005$ &
$0.138\pm0.004$\\

\rowcolor{ctradeoff}
6 & Risk $\uparrow$, Cost $\downarrow$ &
$0.002\pm0.002$ &
$0.000\pm0.000$ &
$0.000\pm0.000$ &
$0.000\pm0.000$ &
$0.000\pm0.000$\\

\rowcolor{ccostdown}
7 & Cost $\downarrow$ &
$0.023\pm0.007$ &
$0.014\pm0.004$ &
$0.010\pm0.003$ &
$0.004\pm0.002$ &
$0.004\pm0.002$\\

\rowcolor{criskdown}
8 & Risk $\downarrow$ &
$0.000\pm0.000$ &
$0.000\pm0.000$ &
$0.000\pm0.000$ &
$0.000\pm0.000$ &
$0.000\pm0.000$\\

\rowcolor{ctradeoff}
9 & Risk $\downarrow$, Cost $\uparrow$ &
$0.000\pm0.000$ &
$0.000\pm0.000$ &
$0.000\pm0.000$ &
$0.000\pm0.000$ &
$0.000\pm0.000$\\

\rowcolor{cbaseline}
10 & Baseline &
$0.012\pm0.001$ &
$0.012\pm0.001$ &
$0.012\pm0.001$ &
$0.013\pm0.000$ &
$0.013\pm0.000$\\

\rowcolor{criskdown}
11 & Risk $\downarrow$&
$0.001\pm0.001$ &
$0.001\pm0.001$ &
$0.001\pm0.001$ &
$0.000\pm0.000$ &
$0.000\pm0.000$\\

\rowcolor{cneutral}
12 & Neutral &
$0.001\pm0.002$ &
$0.000\pm0.000$ &
$0.000\pm0.000$ &
$0.000\pm0.000$ &
$0.000\pm0.000$\\

\rowcolor{ccost}
13 & Cost $\uparrow$ &
$0.011\pm0.003$ &
$0.008\pm0.003$ &
$0.006\pm0.003$ &
$0.002\pm0.001$ &
$0.001\pm0.001$\\

\rowcolor{crisk}
14 & Risk $\uparrow$&
$0.018\pm0.005$ &
$0.011\pm0.003$ &
$0.010\pm0.003$ &
$0.008\pm0.002$ &
$0.008\pm0.001$\\

\rowcolor{cbaseline}
15 & Baseline &
$0.026\pm0.006$ &
$0.037\pm0.004$ &
$0.040\pm0.004$ &
$0.047\pm0.002$ &
$0.048\pm0.002$\\

\bottomrule
\end{tabular}
\end{subtable}

\vspace{3mm}

\begin{subtable}{\textwidth}
\renewcommand{\arraystretch}{1.2}
\centering

\caption{Summary of the fraction of test instances assigned to each high-level audit category obtained by aggregating the corresponding outcome labels from Table~\ref{tab:test_instances_outcomes_BERT_SST2}. Specifically, Baseline aggregates outcomes $\{0,5,10,15\}$, Risk increases aggregates $\{2,14\}$, Risk decreases aggregates $\{8,11\}$, Cost increases aggregates $\{1,13\}$, Cost decreases aggregates $\{4,7\}$, Neutral aggregates $\{3,12\}$, and Mixed aggregates $\{6,9\}$.}
\label{tab:audit_category_summary_BERT_SST2}

\small
\small
\begin{tabular}{lccccc}
\toprule
\textbf{Category} &
$\mathbf{M=1}$ &
$\mathbf{M=3}$ &
$\mathbf{M=5}$ &
$\mathbf{M=10}$ &
$\mathbf{M=30}$\\
\midrule
\cellcolor{cbaseline}\textbf{Baseline} &
0.849 &
0.898 &
0.914 &
0.946 &
0.949\\

\cellcolor{crisk}\textbf{Risk increases} &
0.018 &
0.011 &
0.010 &
0.008 &
0.008\\

\cellcolor{criskdown}\textbf{Risk decreases} &
0.001 &
0.001 &
0.001 &
0.000 &
0.000\\

\cellcolor{ccost}\textbf{Cost increases} &
0.026 &
0.019 &
0.014 &
0.003 &
0.001\\

\cellcolor{ccostdown}\textbf{Cost decreases} &
0.103 &
0.071 &
0.061 &
0.044 &
0.044\\

\cellcolor{cneutral}\textbf{Neutral} &
0.002 &
0.000 &
0.000 &
0.000 &
0.000\\

\cellcolor{ctradeoff}\textbf{Mixed} &
0.002 &
0.000 &
0.000 &
0.000 &
0.000\\
\bottomrule
\end{tabular}

\end{subtable}

\end{table*}

\begin{table*}[t]
\centering

\caption{Audit outcomes and aggregated audit outcomes for a paraphrase detection task on the \textbf{MRPC} dataset using ensembles of independently fine-tuned \textbf{RoBERTa} models.}
\label{tab:test_instances_ROBERTA_MRPC}

\begin{subtable}{\textwidth}
\centering
\renewcommand{\arraystretch}{1.5}
\setlength{\tabcolsep}{6pt}

\caption{Table reporting the fraction of test instances assigned to each outcome label $o\in\{0,\ldots,15\}$ from Table~\ref{tab: combined audit outcomes}. The baseline model is the ensemble of the Rashomon \largeset{}~$\retailset_{\Nlib}$, with model error $\modelerrorlabel(x; \retailset_{\Nlib})$, and audit decision $\riskassess(x ; \retailset_{\Nlib})$, while the candidate models are ensembles of $q=\{1,\dots,\nsampretail\}$ random draws of the Rashomon \smallset{} set $\retailset_{\Nretail,q}$, with model error $\modelerrorlabel(x ; \retailset_{\Nretail,q})$ and audit decision $\riskassess(x ; \retailset_{\Nretail,q})$. The tuple $(\modelerrorlabel(x; \retailset_{\Nlib}),\riskassess(x; \retailset_{\Nlib}),\modelerrorlabel(x; \retailset_{\Nretail,q}),\riskassess(x; \retailset_{\Nretail,q}))$ decides the combined audit outcomes, with label $o\in\{0,\ldots,15\}$. The table reports the mean $\pm$ standard deviation of the fraction of test instances assigned to each outcome label across $q=\{1,\dots,\nsampretail\}$ random retail ensemble draws for different values of $\Nretail$. A desirable auditing strategy is one that minimizes excess audit costs and risks by increasing the Baseline category.}
\label{tab:test_instances_outcomes_ROBERTA_MRPC}

\begin{tabular}{llccccc}
\toprule
\textbf{o} &
\textbf{Meaning} &
$\mathbf{M=1}$ &
$\mathbf{M=3}$ &
$\mathbf{M=5}$ &
$\mathbf{M=10}$ &
$\mathbf{M=30}$\\
\midrule

\rowcolor{cbaseline}
0 & Baseline &
$0.683\pm0.018$ &
$0.692\pm0.010$ &
$0.699\pm0.007$ &
$0.706\pm0.005$ &
$0.712\pm0.002$\\

\rowcolor{ccost}
1 & Cost $\uparrow$ &
$0.025\pm0.016$ &
$0.021\pm0.010$ &
$0.014\pm0.007$ &
$0.007\pm0.005$ &
$0.002\pm0.002$\\

\rowcolor{crisk}
2 & Risk $\uparrow$ &
$0.001\pm0.001$ &
$0.000\pm0.000$ &
$0.000\pm0.000$ &
$0.000\pm0.000$ &
$0.000\pm0.000$\\

\rowcolor{cneutral}
3 & Neutral &
$0.005\pm0.004$ &
$0.000\pm0.001$ &
$0.000\pm0.000$ &
$0.000\pm0.000$ &
$0.000\pm0.000$\\

\rowcolor{ccostdown}
4 & Cost $\downarrow$ &
$0.090\pm0.017$ &
$0.062\pm0.014$ &
$0.055\pm0.013$ &
$0.047\pm0.011$ &
$0.039\pm0.008$\\

\rowcolor{cbaseline}
5 & Baseline &
$0.061\pm0.019$ &
$0.107\pm0.018$ &
$0.120\pm0.016$ &
$0.135\pm0.013$ &
$0.149\pm0.009$\\

\rowcolor{ctradeoff}
6 & Risk $\uparrow$, Cost $\downarrow$ &
$0.013\pm0.009$ &
$0.001\pm0.002$ &
$0.000\pm0.001$ &
$0.000\pm0.000$ &
$0.000\pm0.000$\\

\rowcolor{ccostdown}
7 & Cost $\downarrow$ &
$0.032\pm0.008$ &
$0.026\pm0.008$ &
$0.021\pm0.007$ &
$0.014\pm0.005$ &
$0.008\pm0.004$\\

\rowcolor{criskdown}
8 & Risk $\downarrow$ &
$0.000\pm0.000$ &
$0.000\pm0.000$ &
$0.000\pm0.000$ &
$0.000\pm0.000$ &
$0.000\pm0.000$\\

\rowcolor{ctradeoff}
9 & Risk $\downarrow$, Cost $\uparrow$ &
$0.000\pm0.001$ &
$0.000\pm0.000$ &
$0.000\pm0.000$ &
$0.000\pm0.000$ &
$0.000\pm0.000$\\

\rowcolor{cbaseline}
10 & Baseline &
$0.024\pm0.003$ &
$0.025\pm0.002$ &
$0.025\pm0.002$ &
$0.026\pm0.001$ &
$0.027\pm0.000$\\

\rowcolor{criskdown}
11 & Risk $\downarrow$&
$0.003\pm0.002$ &
$0.002\pm0.002$ &
$0.002\pm0.002$ &
$0.001\pm0.001$ &
$0.000\pm0.000$\\

\rowcolor{cneutral}
12 &Neutral&
$0.007\pm0.005$ &
$0.001\pm0.002$ &
$0.000\pm0.001$ &
$0.000\pm0.000$ &
$0.000\pm0.000$\\

\rowcolor{ccost}
13 & Cost $\uparrow$ &
$0.013\pm0.005$ &
$0.013\pm0.005$ &
$0.012\pm0.005$ &
$0.009\pm0.004$ &
$0.005\pm0.003$\\

\rowcolor{crisk}
14 & Risk $\uparrow$&
$0.023\pm0.006$ &
$0.012\pm0.005$ &
$0.009\pm0.004$ &
$0.006\pm0.003$ &
$0.004\pm0.002$\\

\rowcolor{cbaseline}
15 & Baseline &
$0.021\pm0.008$ &
$0.038\pm0.007$ &
$0.043\pm0.007$ &
$0.049\pm0.005$ &
$0.055\pm0.004$\\

\bottomrule
\end{tabular}
\end{subtable}

\vspace{3mm}

\begin{subtable}{\textwidth}
\renewcommand{\arraystretch}{1.2}
\centering

\caption{Summary of the fraction of test instances assigned to each high-level audit category obtained by aggregating the corresponding outcome labels from Table~\ref{tab:test_instances_outcomes_ROBERTA_MRPC}. Specifically, Baseline aggregates outcomes $\{0,5,10,15\}$, Risk increases aggregates $\{2,14\}$, Risk decreases aggregates $\{8,11\}$, Cost increases aggregates $\{1,13\}$, Cost decreases aggregates $\{4,7\}$, Neutral aggregates $\{3,12\}$, and Mixed aggregates $\{6,9\}$.}
\label{tab:audit_category_summary_ROBERTA_MRPC}

\small
\begin{tabular}{lccccc}
\toprule
\textbf{Category} &
$\mathbf{M=1}$ &
$\mathbf{M=3}$ &
$\mathbf{M=5}$ &
$\mathbf{M=10}$ &
$\mathbf{M=30}$\\
\midrule
\cellcolor{cbaseline}\textbf{Baseline}
& 0.789 & 0.862 & 0.887 & 0.916 & 0.943\\

\cellcolor{crisk}\textbf{Risk increases}
& 0.024 & 0.012 & 0.009 & 0.006 & 0.004\\

\cellcolor{criskdown}\textbf{Risk decreases}
& 0.003 & 0.002 & 0.002 & 0.001 & 0.000\\

\cellcolor{ccost}\textbf{Cost increases}
& 0.038 & 0.034 & 0.026 & 0.016 & 0.007\\

\cellcolor{ccostdown}\textbf{Cost decreases}
& 0.122 & 0.088 & 0.076 & 0.061 & 0.047\\

\cellcolor{cneutral}\textbf{Neutral}
& 0.012 & 0.001 & 0.000 & 0.000 & 0.000\\

\cellcolor{ctradeoff}\textbf{Mixed}
& 0.013 & 0.001 & 0.000 & 0.000 & 0.000\\
\bottomrule
\end{tabular}

\end{subtable}

\end{table*}

\begin{table*}[t]
\centering

\caption{Audit outcomes and aggregated audit outcomes for a binary classification task on the \textbf{Bank} dataset using ensembles of independently fine-tuned \textbf{BIG-SCIENCE T0} models.}
\label{tab:test_instances_BIGSCIENCE_T0_BANK}

\begin{subtable}{\textwidth}
\centering
\renewcommand{\arraystretch}{1.5}
\setlength{\tabcolsep}{6pt}

\caption{Table reporting the fraction of test instances assigned to each outcome label $o\in\{0,\ldots,15\}$ from Table~\ref{tab: combined audit outcomes}. The baseline model is the ensemble of the Rashomon \largeset{}~$\retailset_{\Nlib}$, with model error $\modelerrorlabel(x; \retailset_{\Nlib})$, and audit decision $\riskassess(x ; \retailset_{\Nlib})$, while the candidate models are ensembles of $q=\{1,\dots,\nsampretail\}$ random draws of the Rashomon \smallset{} set $\retailset_{\Nretail,q}$, with model error $\modelerrorlabel(x ; \retailset_{\Nretail,q})$ and audit decision $\riskassess(x ; \retailset_{\Nretail,q})$. The tuple $(\modelerrorlabel(x; \retailset_{\Nlib}),\riskassess(x; \retailset_{\Nlib}),\modelerrorlabel(x; \retailset_{\Nretail,q}),\riskassess(x; \retailset_{\Nretail,q}))$ decides the combined audit outcomes, with label $o\in\{0,\ldots,15\}$. The table reports the mean $\pm$ standard deviation of the fraction of test instances assigned to each outcome label across $q=\{1,\dots,\nsampretail\}$ random retail ensemble draws for different values of $\Nretail$. A desirable auditing strategy is one that minimizes excess audit costs and risks by increasing the Baseline category.}
\label{tab:test_instances_outcomes_BIGSCIENCE_T0_BANK}

\begin{tabular}{llcccc}
\toprule
\textbf{o} &
\textbf{Meaning} &
$\mathbf{M=1}$ &
$\mathbf{M=3}$ &
$\mathbf{M=5}$ &
$\mathbf{M=10}$\\
\midrule

\rowcolor{cbaseline}
0 & Baseline &
$0.562\pm0.024$ &
$0.579\pm0.009$ &
$0.583\pm0.007$ &
$0.586\pm0.003$\\

\rowcolor{ccost}
1 & Cost $\uparrow$ &
$0.024\pm0.021$ &
$0.009\pm0.009$ &
$0.005\pm0.007$ &
$0.002\pm0.003$\\

\rowcolor{crisk}
2 & Risk $\uparrow$ &
$0.000\pm0.000$ &
$0.000\pm0.000$ &
$0.000\pm0.000$ &
$0.000\pm0.000$\\

\rowcolor{cneutral}
3 & Neutral&
$0.002\pm0.003$ &
$0.000\pm0.000$ &
$0.000\pm0.000$ &
$0.000\pm0.000$\\

\rowcolor{ccostdown}
4 & Cost $\downarrow$ &
$0.097\pm0.029$ &
$0.081\pm0.021$ &
$0.075\pm0.018$ &
$0.071\pm0.015$\\

\rowcolor{cbaseline}
5 & Baseline &
$0.089\pm0.026$ &
$0.124\pm0.021$ &
$0.135\pm0.018$ &
$0.145\pm0.015$\\

\rowcolor{ctradeoff}
6 & Risk $\uparrow$, Cost $\downarrow$ &
$0.005\pm0.004$ &
$0.000\pm0.001$ &
$0.000\pm0.000$ &
$0.000\pm0.000$\\

\rowcolor{ccostdown}
7 & Cost $\downarrow$ &
$0.036\pm0.010$ &
$0.022\pm0.006$ &
$0.017\pm0.005$ &
$0.011\pm0.003$\\

\rowcolor{criskdown}
8 & Risk $\downarrow$ &
$0.000\pm0.000$ &
$0.000\pm0.000$ &
$0.000\pm0.000$ &
$0.000\pm0.000$\\

\rowcolor{ctradeoff}
9 & Risk $\downarrow$, Cost $\uparrow$ &
$0.000\pm0.001$ &
$0.000\pm0.000$ &
$0.000\pm0.000$ &
$0.000\pm0.000$\\

\rowcolor{cbaseline}
10 & Baseline &
$0.050\pm0.004$ &
$0.052\pm0.002$ &
$0.053\pm0.001$ &
$0.054\pm0.001$\\

\rowcolor{criskdown}
11 & Risk $\downarrow$&
$0.004\pm0.004$ &
$0.002\pm0.002$ &
$0.001\pm0.001$ &
$0.000\pm0.001$\\

\rowcolor{cneutral}
12 & Neutral&
$0.004\pm0.003$ &
$0.000\pm0.000$ &
$0.000\pm0.000$ &
$0.000\pm0.000$\\

\rowcolor{ccost}
13 & Cost $\uparrow$ &
$0.026\pm0.008$ &
$0.018\pm0.006$ &
$0.015\pm0.005$ &
$0.011\pm0.004$\\

\rowcolor{crisk}
14 & Risk $\uparrow$&
$0.047\pm0.015$ &
$0.034\pm0.010$ &
$0.030\pm0.009$ &
$0.028\pm0.007$\\

\rowcolor{cbaseline}
15 & Baseline &
$0.055\pm0.013$ &
$0.078\pm0.010$ &
$0.086\pm0.008$ &
$0.093\pm0.007$\\

\bottomrule
\end{tabular}
\end{subtable}

\vspace{3mm}

\begin{subtable}{\textwidth}
\renewcommand{\arraystretch}{1.2}
\centering

\caption{Summary of the fraction of test instances assigned to each high-level audit category obtained by aggregating the corresponding outcome labels from Table~\ref{tab:test_instances_outcomes_BIGSCIENCE_T0_BANK}. Specifically, Baseline aggregates outcomes $\{0,5,10,15\}$, Risk increases aggregates $\{2,14\}$, Risk decreases aggregates $\{8,11\}$, Cost increases aggregates $\{1,13\}$, Cost decreases aggregates $\{4,7\}$, Neutral aggregates $\{3,12\}$, and Mixed aggregates $\{6,9\}$.}
\label{tab:audit_category_summary_BIGSCIENCE_T0_BANK}

\small
\begin{tabular}{lcccc}
\toprule
\textbf{Category} &
$\mathbf{M=1}$ &
$\mathbf{M=3}$ &
$\mathbf{M=5}$ &
$\mathbf{M=10}$\\
\midrule
\cellcolor{cbaseline}\textbf{Baseline}
& 0.756 & 0.833 & 0.857 & 0.878\\

\cellcolor{crisk}\textbf{Risk increases}
& 0.047 & 0.034 & 0.030 & 0.028\\

\cellcolor{criskdown}\textbf{Risk decreases}
& 0.004 & 0.002 & 0.001 & 0.000\\

\cellcolor{ccost}\textbf{Cost increases}
& 0.050 & 0.027 & 0.020 & 0.013\\

\cellcolor{ccostdown}\textbf{Cost decreases}
& 0.133 & 0.103 & 0.092 & 0.082\\

\cellcolor{cneutral}\textbf{Neutral}
& 0.006 & 0.000 & 0.000 & 0.000\\

\cellcolor{ctradeoff}\textbf{Mixed}
& 0.005 & 0.000 & 0.000 & 0.000\\
\bottomrule
\end{tabular}

\end{subtable}

\end{table*}

\begin{table*}[t]
\centering

\caption{Audit outcomes and aggregated audit outcomes for a binary classification task on the \textbf{Adult} dataset using ensembles of independently fine-tuned \textbf{BIG-SCIENCE T0} models.}
\label{tab:test_instances_BIGSCIENCE_T0_ADULT}

\begin{subtable}{\textwidth}
\centering
\renewcommand{\arraystretch}{1.5}
\setlength{\tabcolsep}{6pt}

\caption{Table reporting the fraction of test instances assigned to each outcome label $o\in\{0,\ldots,15\}$ from Table~\ref{tab: combined audit outcomes}. The baseline model is the ensemble of the Rashomon \largeset{}~$\retailset_{\Nlib}$, with model error $\modelerrorlabel(x; \retailset_{\Nlib})$, and audit decision $\riskassess(x ; \retailset_{\Nlib})$, while the candidate models are ensembles of $q=\{1,\dots,\nsampretail\}$ random draws of the Rashomon \smallset{} set $\retailset_{\Nretail,q}$, with model error $\modelerrorlabel(x ; \retailset_{\Nretail,q})$ and audit decision $\riskassess(x ; \retailset_{\Nretail,q})$. The tuple $(\modelerrorlabel(x; \retailset_{\Nlib}),\riskassess(x; \retailset_{\Nlib}),\modelerrorlabel(x; \retailset_{\Nretail,q}),\riskassess(x; \retailset_{\Nretail,q}))$ decides the combined audit outcomes, with label $o\in\{0,\ldots,15\}$. The table reports the mean $\pm$ standard deviation of the fraction of test instances assigned to each outcome label across $q=\{1,\dots,\nsampretail\}$ random retail ensemble draws for different values of $\Nretail$. A desirable auditing strategy is one that minimizes excess audit costs and risks by increasing the Baseline category.}
\label{tab:test_instances_outcomes_BIGSCIENCE_T0_ADULT}

\begin{tabular}{llcccc}
\toprule
\textbf{o} &
\textbf{Meaning} &
$\mathbf{M=1}$ &
$\mathbf{M=3}$ &
$\mathbf{M=5}$ &
$\mathbf{M=10}$\\
\midrule

\rowcolor{cbaseline}
0 & Baseline &
$0.580\pm0.011$ &
$0.590\pm0.006$ &
$0.595\pm0.004$ &
$0.598\pm0.002$\\

\rowcolor{ccost}
1 & Cost $\uparrow$ &
$0.019\pm0.010$ &
$0.010\pm0.005$ &
$0.005\pm0.004$ &
$0.002\pm0.002$\\

\rowcolor{crisk}
2 & Risk $\uparrow$ &
$0.000\pm0.000$ &
$0.000\pm0.000$ &
$0.000\pm0.000$ &
$0.000\pm0.000$\\

\rowcolor{cneutral}
3 & Neutral&
$0.001\pm0.002$ &
$0.000\pm0.000$ &
$0.000\pm0.000$ &
$0.000\pm0.000$\\

\rowcolor{ccostdown}
4 & Cost $\downarrow$ &
$0.082\pm0.017$ &
$0.062\pm0.012$ &
$0.058\pm0.010$ &
$0.054\pm0.008$\\

\rowcolor{cbaseline}
5 & Baseline &
$0.108\pm0.019$ &
$0.144\pm0.014$ &
$0.154\pm0.012$ &
$0.163\pm0.009$\\

\rowcolor{ctradeoff}
6 & Risk $\uparrow$, Cost $\downarrow$ &
$0.005\pm0.004$ &
$0.000\pm0.001$ &
$0.000\pm0.000$ &
$0.000\pm0.000$\\

\rowcolor{ccostdown}
7 & Cost $\downarrow$ &
$0.035\pm0.008$ &
$0.023\pm0.005$ &
$0.018\pm0.004$ &
$0.013\pm0.003$\\

\rowcolor{criskdown}
8 & Risk $\downarrow$ &
$0.000\pm0.000$ &
$0.000\pm0.000$ &
$0.000\pm0.000$ &
$0.000\pm0.000$\\

\rowcolor{ctradeoff}
9 & Risk $\downarrow$, Cost $\uparrow$ &
$0.000\pm0.000$ &
$0.000\pm0.000$ &
$0.000\pm0.000$ &
$0.000\pm0.000$\\

\rowcolor{cbaseline}
10 & Baseline &
$0.046\pm0.002$ &
$0.048\pm0.001$ &
$0.048\pm0.001$ &
$0.049\pm0.001$\\

\rowcolor{criskdown}
11 & Risk $\downarrow$&
$0.004\pm0.002$ &
$0.002\pm0.001$ &
$0.001\pm0.001$ &
$0.001\pm0.001$\\

\rowcolor{cneutral}
12 & Neutral&
$0.004\pm0.003$ &
$0.000\pm0.000$ &
$0.000\pm0.000$ &
$0.000\pm0.000$\\

\rowcolor{ccost}
13 & Cost $\uparrow$ &
$0.026\pm0.005$ &
$0.020\pm0.004$ &
$0.016\pm0.004$ &
$0.012\pm0.003$\\

\rowcolor{crisk}
14 & Risk $\uparrow$&
$0.035\pm0.009$ &
$0.023\pm0.006$ &
$0.020\pm0.005$ &
$0.017\pm0.004$\\

\rowcolor{cbaseline}
15 & Baseline &
$0.056\pm0.010$ &
$0.078\pm0.007$ &
$0.084\pm0.007$ &
$0.091\pm0.005$\\

\bottomrule
\end{tabular}
\end{subtable}

\vspace{3mm}

\begin{subtable}{\textwidth}
\renewcommand{\arraystretch}{1.2}
\centering

\caption{Summary of the fraction of test instances assigned to each high-level audit category obtained by aggregating the corresponding outcome labels from Table~\ref{tab:test_instances_outcomes_BIGSCIENCE_T0_ADULT}. Specifically, Baseline aggregates outcomes $\{0,5,10,15\}$, Risk increases aggregates $\{2,14\}$, Risk decreases aggregates $\{8,11\}$, Cost increases aggregates $\{1,13\}$, Cost decreases aggregates $\{4,7\}$, Neutral aggregates $\{3,12\}$, and Mixed aggregates $\{6,9\}$.}
\label{tab:audit_category_summary_BIGSCIENCE_T0_ADULT}

\small
\begin{tabular}{lcccc}
\toprule
\textbf{Category} &
$\mathbf{M=1}$ &
$\mathbf{M=3}$ &
$\mathbf{M=5}$ &
$\mathbf{M=10}$\\
\midrule
\cellcolor{cbaseline}\textbf{Baseline}
& 0.790
& 0.860
& 0.881
& 0.901\\

\cellcolor{crisk}\textbf{Risk increases}
& 0.035
& 0.023
& 0.020
& 0.017\\

\cellcolor{criskdown}\textbf{Risk decreases}
& 0.004
& 0.002
& 0.001
& 0.001\\

\cellcolor{ccost}\textbf{Cost increases}
& 0.045
& 0.030
& 0.021
& 0.014\\

\cellcolor{ccostdown}\textbf{Cost decreases}
& 0.117
& 0.085
& 0.076
& 0.067\\

\cellcolor{cneutral}\textbf{Neutral}
& 0.005
& 0.000
& 0.000
& 0.000\\

\cellcolor{ctradeoff}\textbf{Mixed}
& 0.005
& 0.000
& 0.000
& 0.000\\
\bottomrule
\end{tabular}

\end{subtable}

\end{table*}

\subsubsection{Visualizing relation between proposed consistency measure and different multiplicity metrics}\label{sec: MM vs cons scatter plot}

Section~\ref{sec: Comp consistency multiplicity} showed that the proposed consistency measure exhibits strong correlations with the multiplicity metrics computed using the Rashomon warehouse. We also looked at the behavior of $\hammancons(x)$ proposed by \citet{hamman2024quantifying} and compared the behavior of this measure and our proposed consistency measure with respect to one multiplicity metric, pairwise disagreement, over randomly sampled test instances (Figure \ref{fig:bert_mrpc_hammancomp}). Results were reported for a paraphrase detection task on the MRPC dataset using an ensemble of independently fine-tuned BERT models. Figure~\ref{fig:bert_mrpc_all_metrics} and Figure~\ref{fig:bert_mrpc_all_metrics_hamman} provide additional visualizations, for other multiplicity metrics. In Figure~\ref{fig:bert_mrpc_all_metrics}, we observe the same trend: the proposed consistency measure shows a strong negative correlation with all multiplicity metrics, which increases with retail ensemble size.  As noted before, our measure assigns only high consistency scores to test instances with low multiplicity. On the other hand,  $\hammancons(x)$ identifies several consistent instances as inconsistent by assigning low scores to some test instances with low multiplicity, as observed in Figure~\ref{fig:bert_mrpc_all_metrics_hamman}.
\begin{figure*}[t]
\centering

\begin{minipage}[c]{0.001\textwidth}
\centering
\rotatebox{90}{\scriptsize Discrepancy}
\end{minipage}
\begin{minipage}[c]{0.95\textwidth}
\centering
\includegraphics[width=0.24\linewidth]{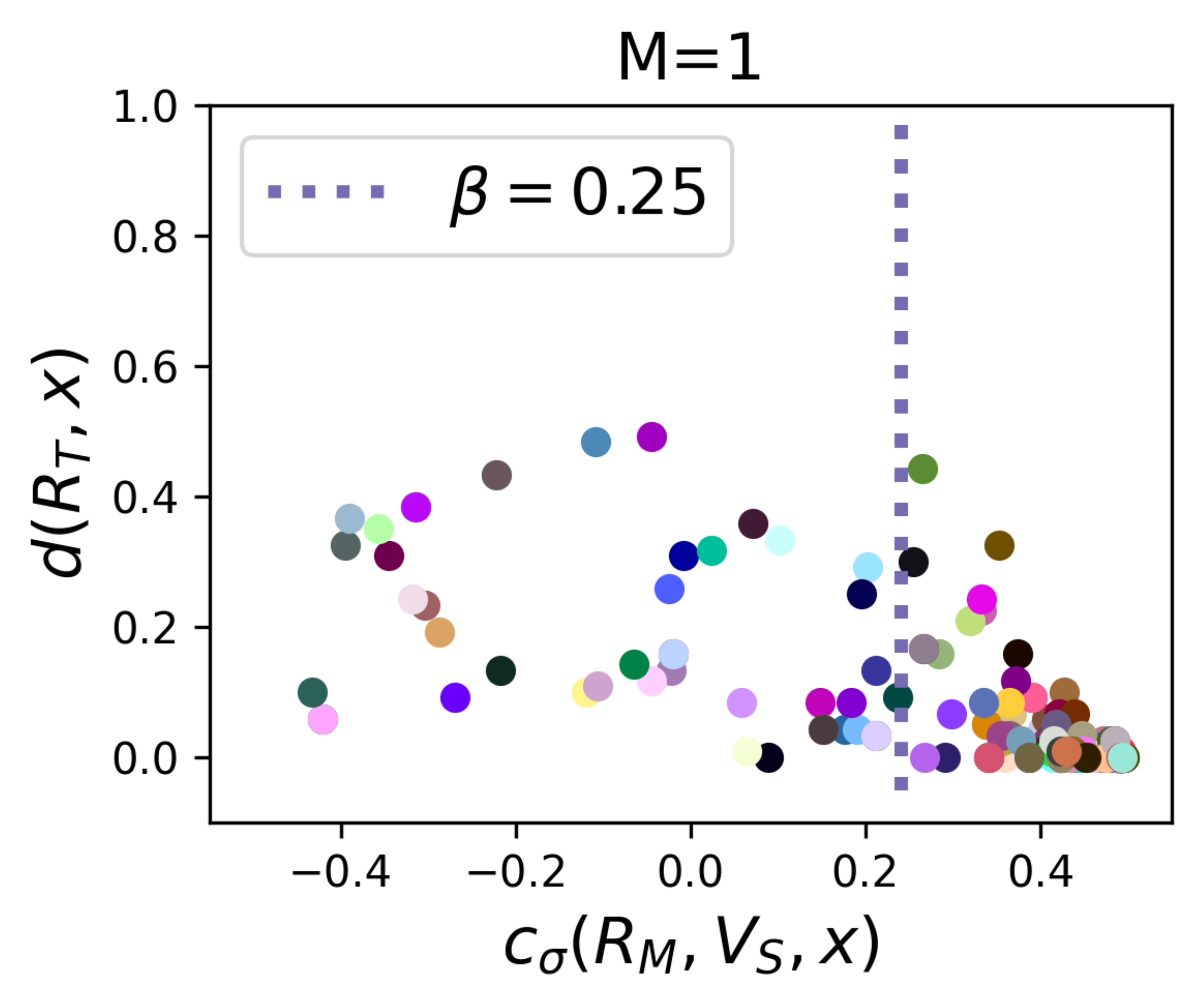}
\includegraphics[width=0.24\linewidth]{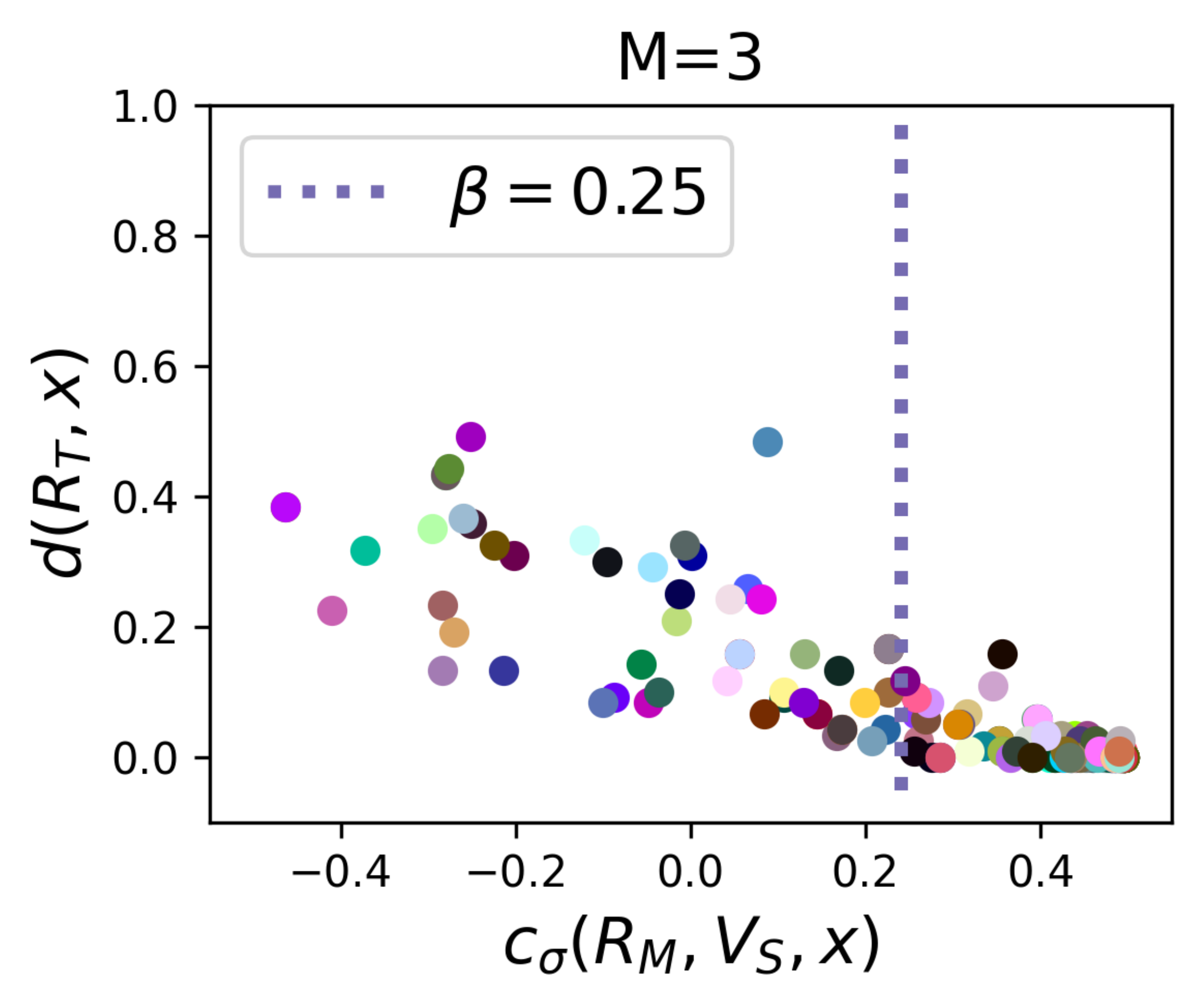}
\includegraphics[width=0.24\linewidth]{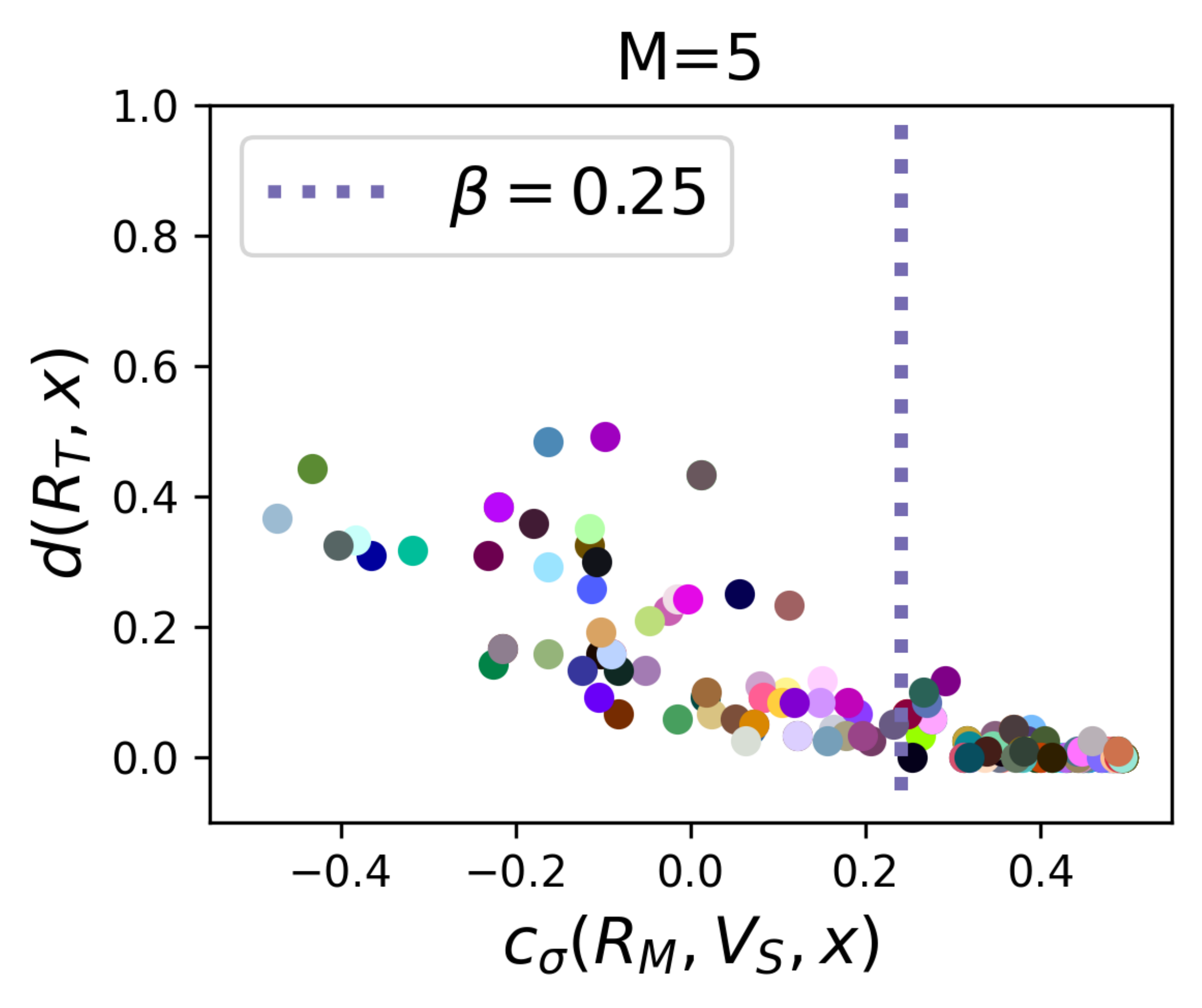}
\includegraphics[width=0.24\linewidth]{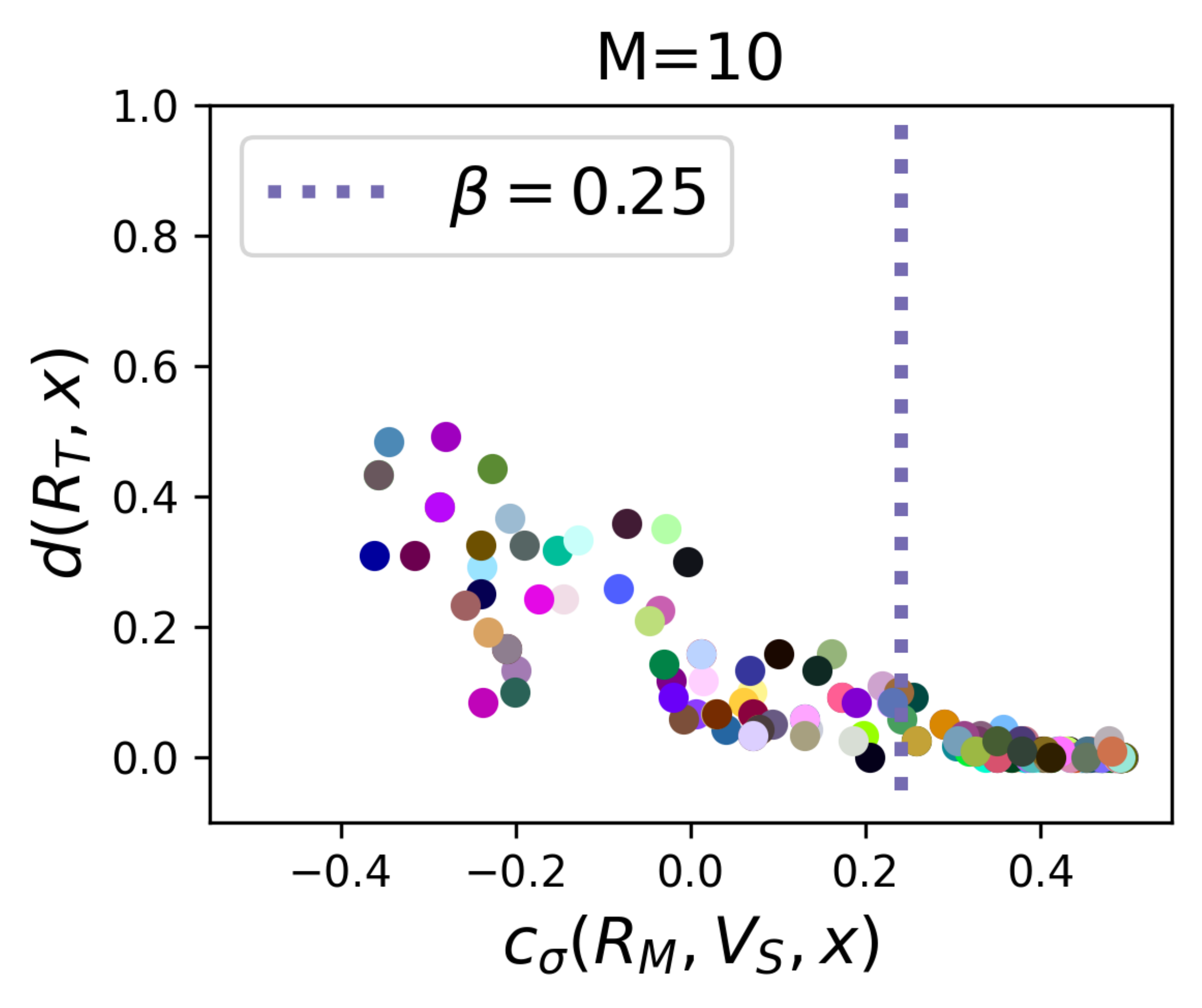}
\end{minipage}

\vspace{1ex}

\begin{minipage}[c]{0.001\textwidth}
\centering
\rotatebox{90}{\scriptsize Local discrepancy}
\end{minipage}
\begin{minipage}[c]{0.95\textwidth}
\centering
\includegraphics[width=0.24\linewidth]{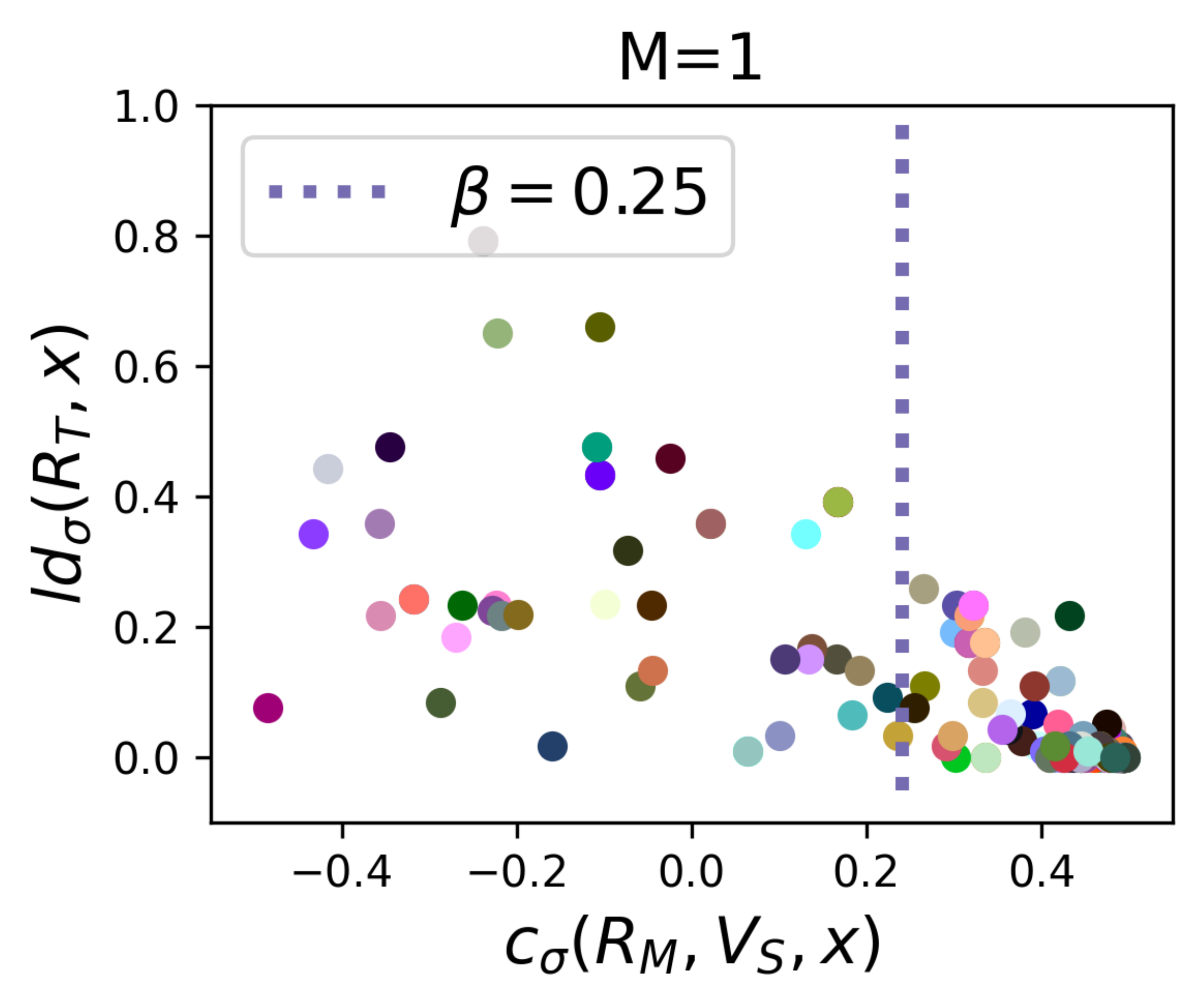}
\includegraphics[width=0.24\linewidth]{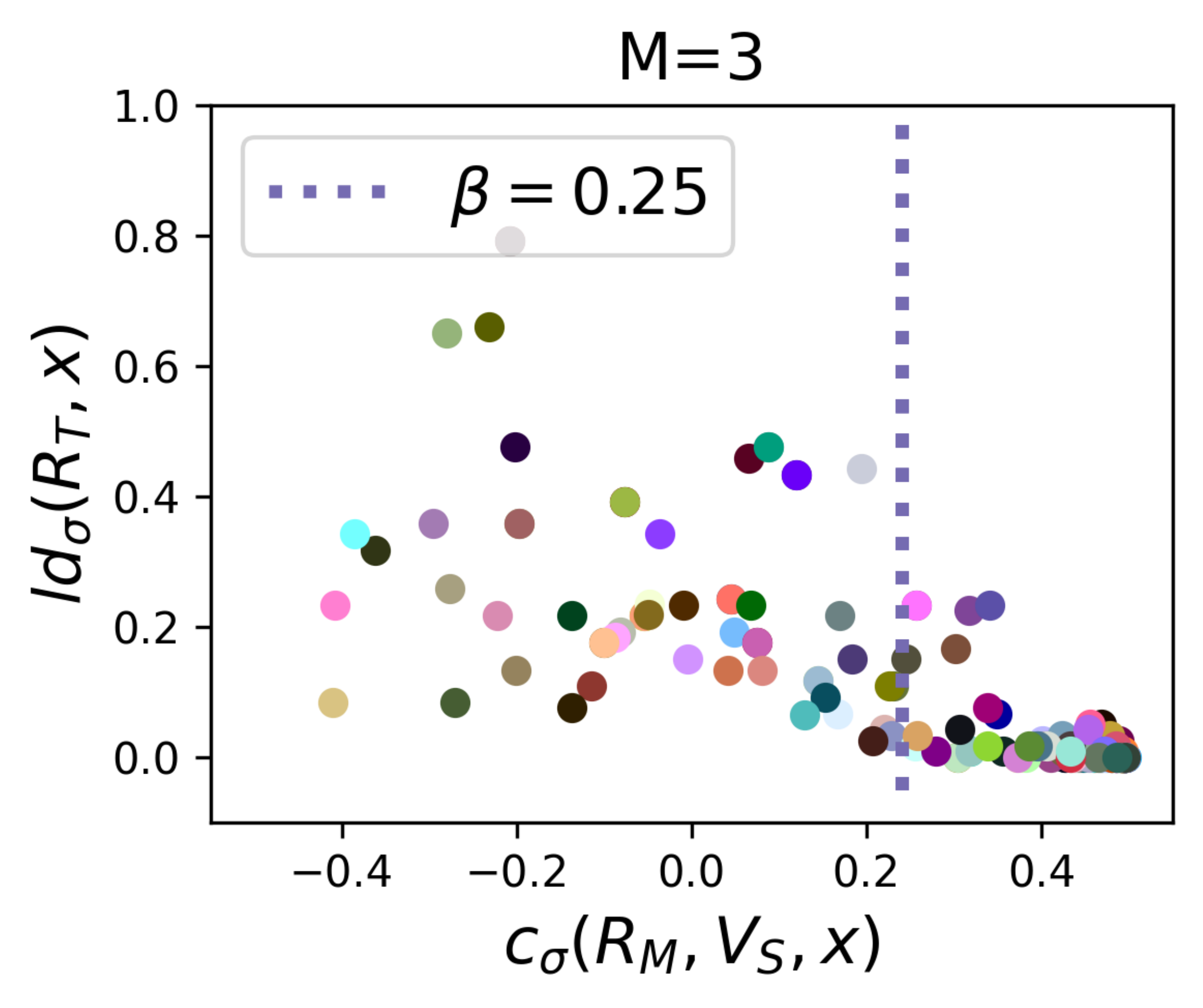}
\includegraphics[width=0.24\linewidth]{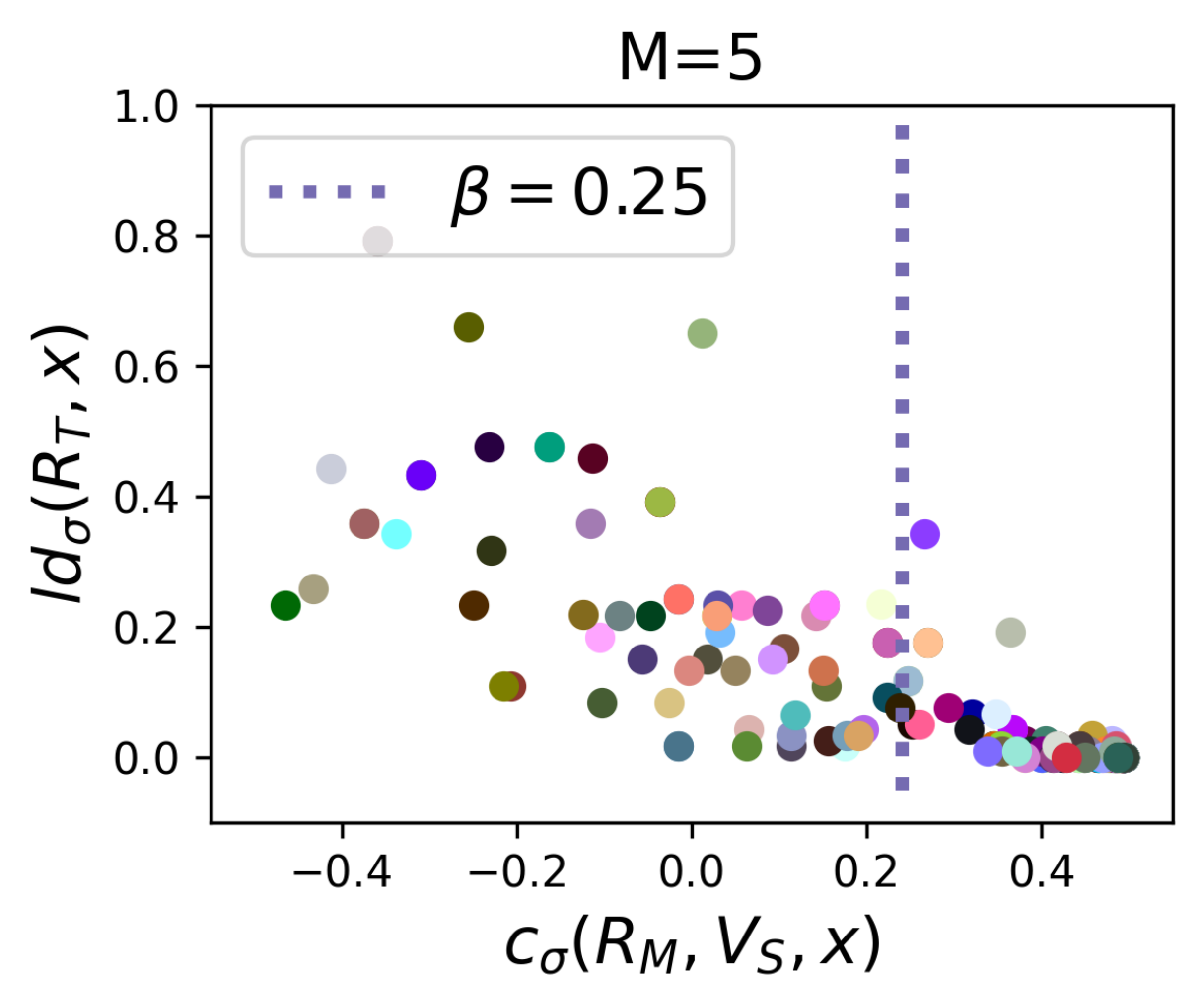}
\includegraphics[width=0.24\linewidth]{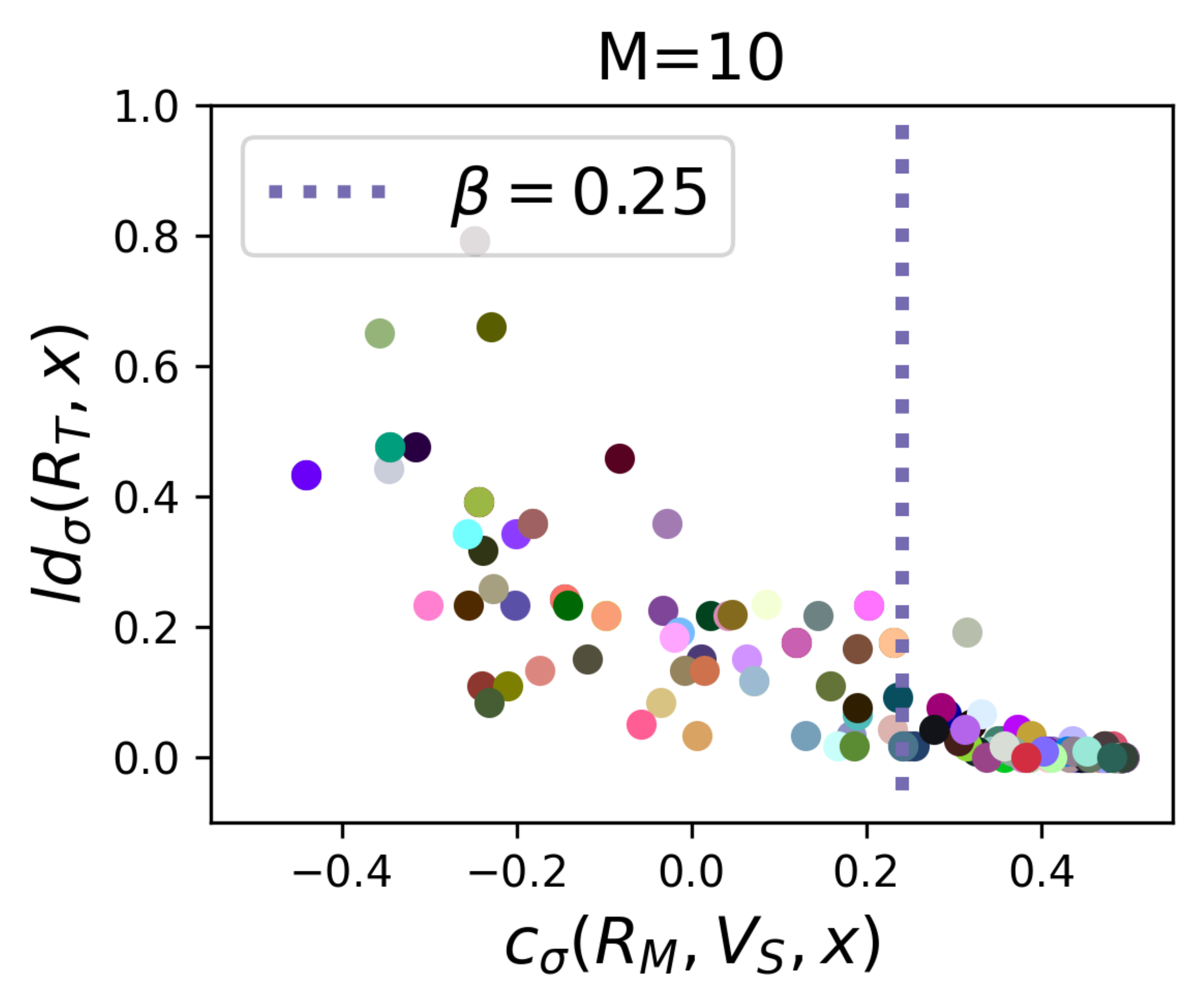}
\end{minipage}

\vspace{1ex}

\begin{minipage}[c]{0.001\textwidth}
\centering
\rotatebox{90}{\scriptsize Prediction range}
\end{minipage}
\begin{minipage}[c]{0.95\textwidth}
\centering
\includegraphics[width=0.24\linewidth]{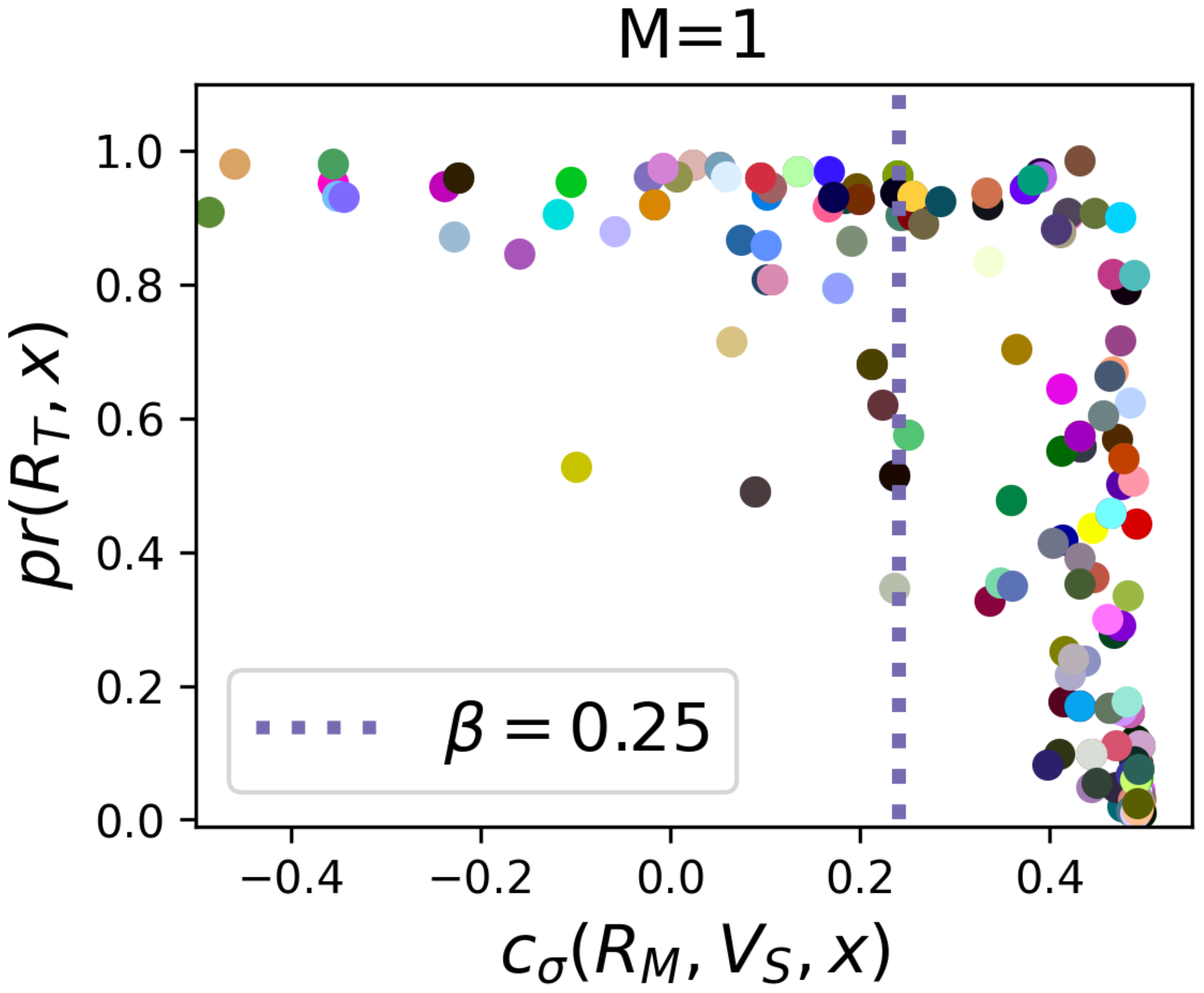}
\includegraphics[width=0.24\linewidth]{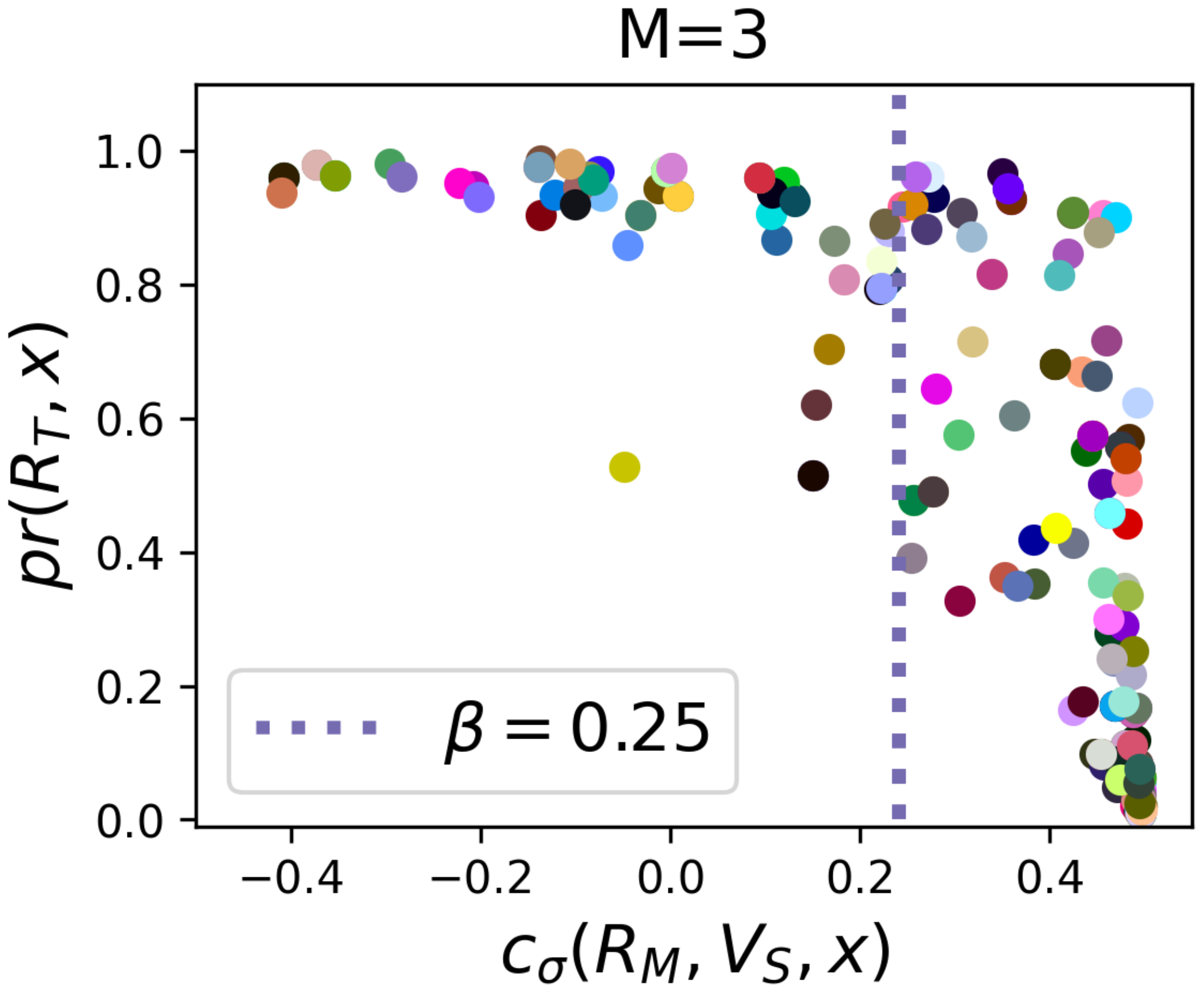}
\includegraphics[width=0.24\linewidth]{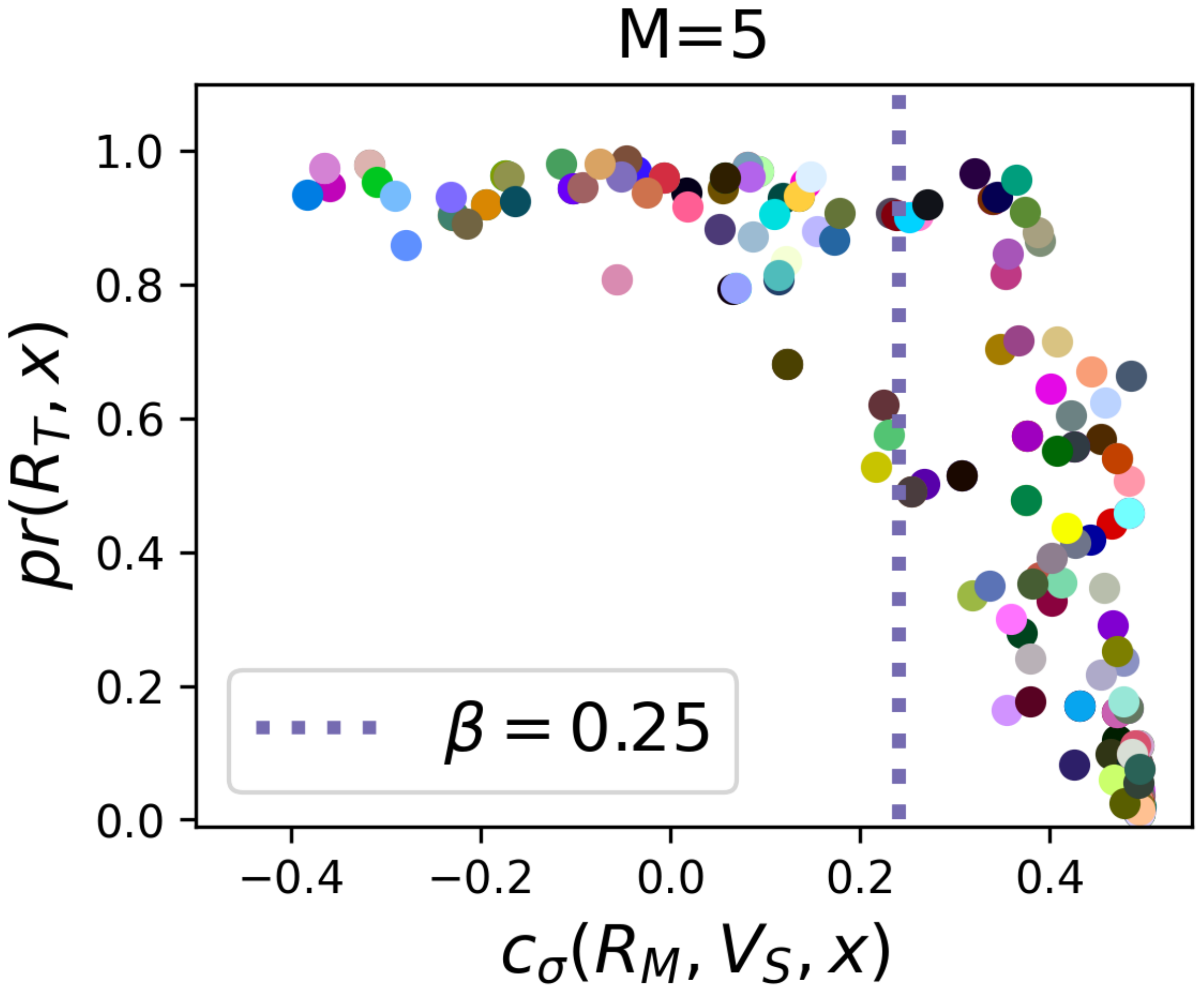}
\includegraphics[width=0.24\linewidth]{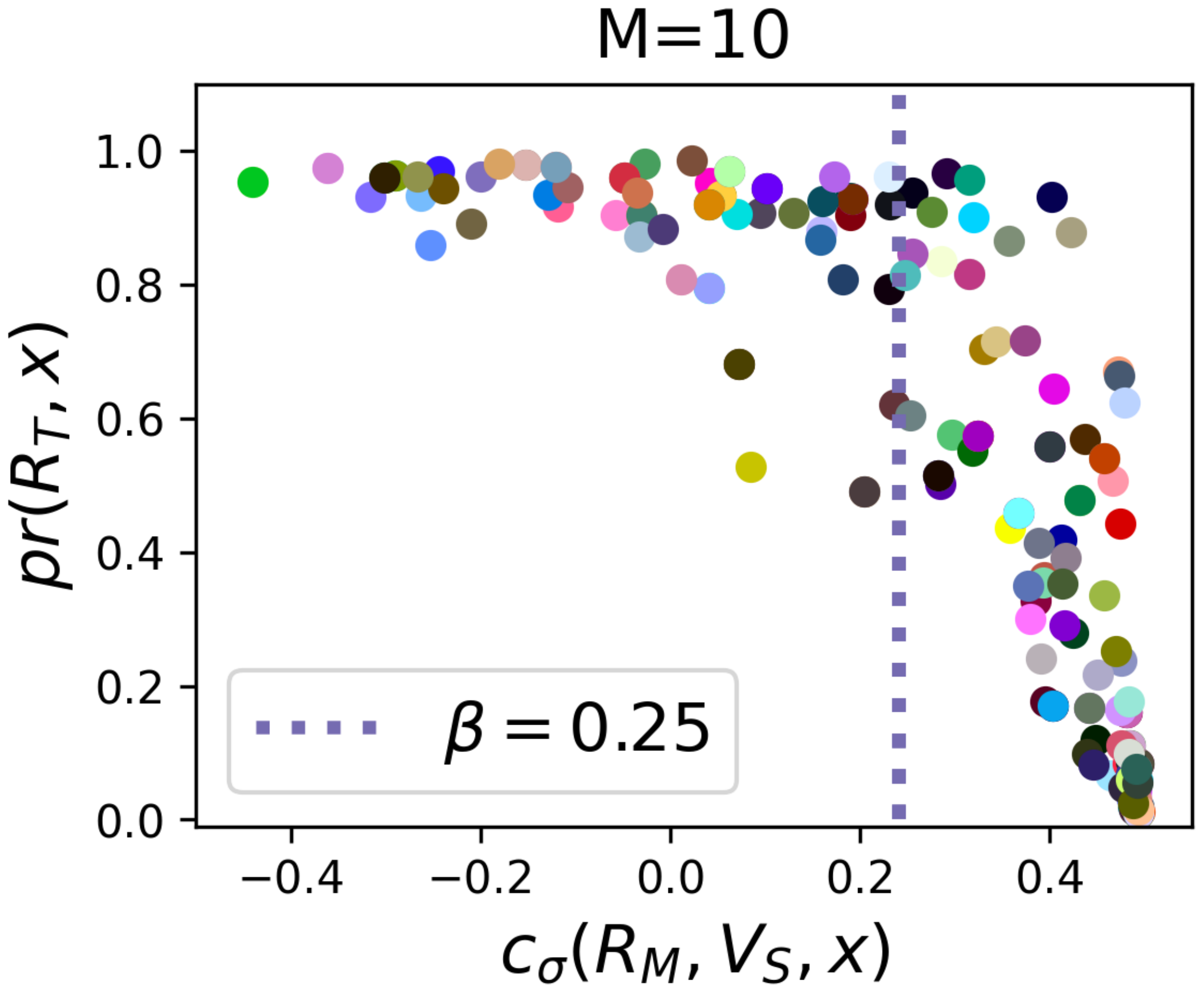}
\end{minipage}

\vspace{1ex}

\begin{minipage}[c]{0.001\textwidth}
\centering
\rotatebox{90}{\scriptsize Prediction variance}
\end{minipage}
\begin{minipage}[c]{0.95\textwidth}
\centering
\includegraphics[width=0.24\linewidth]{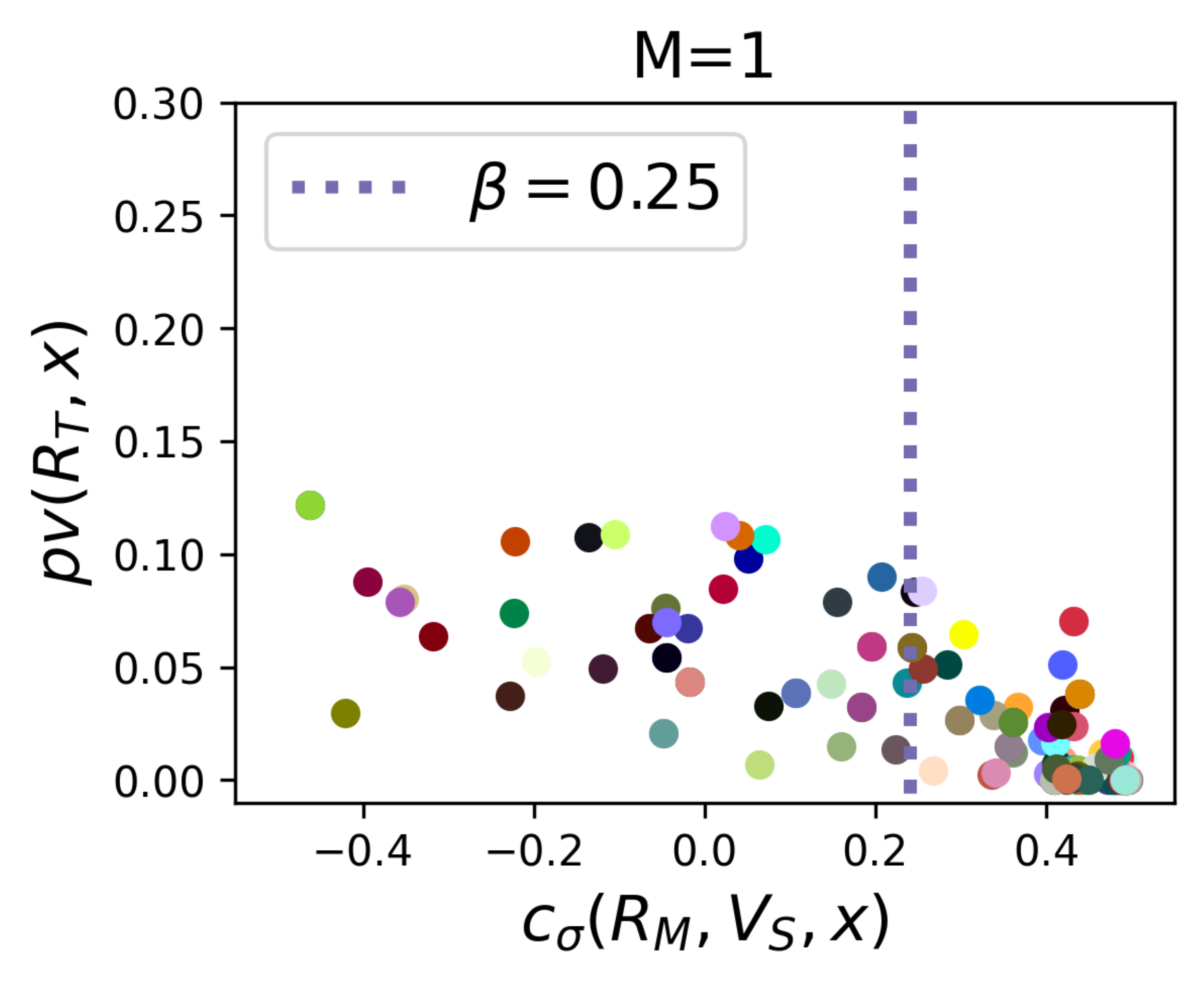}
\includegraphics[width=0.24\linewidth]{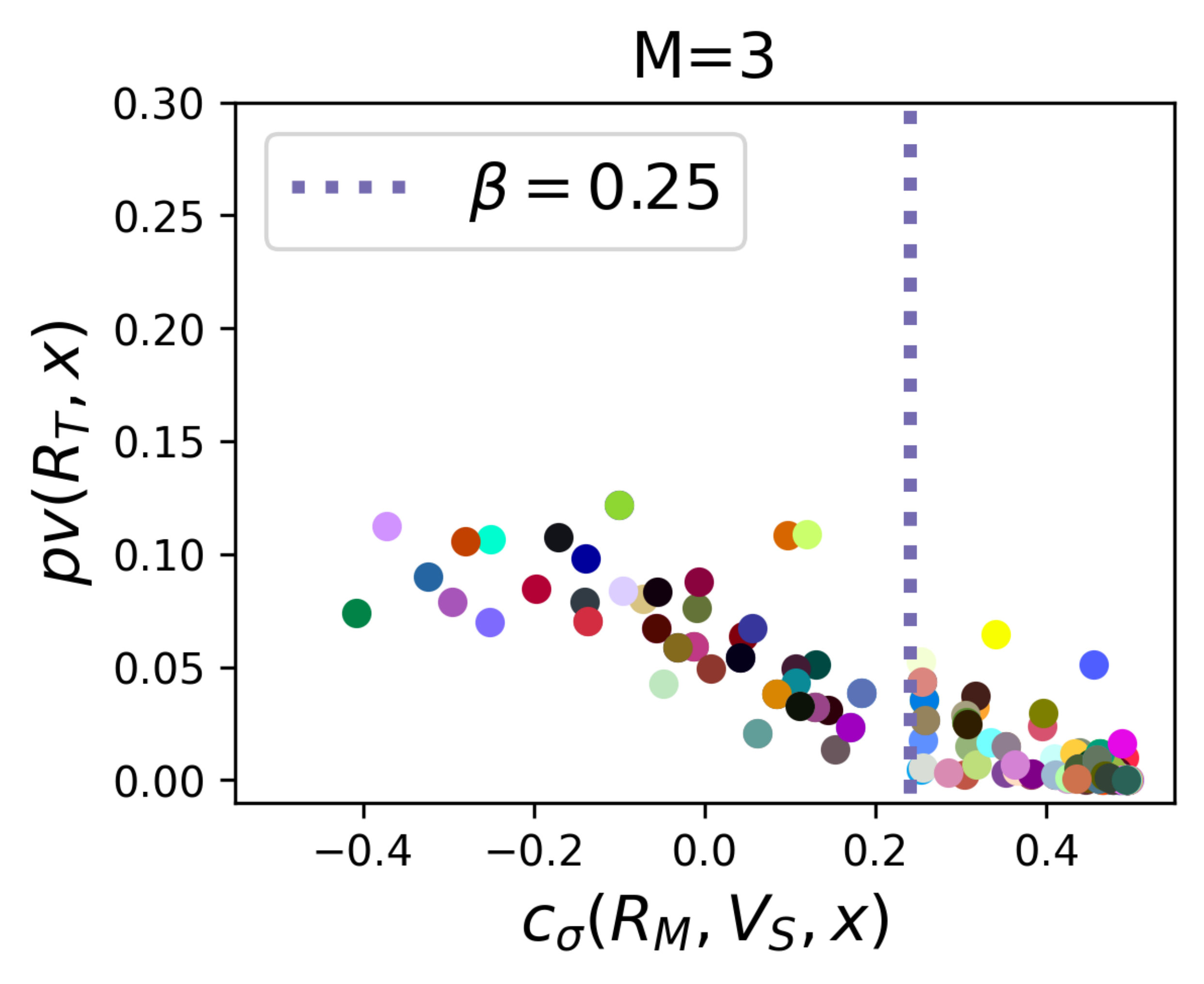}
\includegraphics[width=0.24\linewidth]{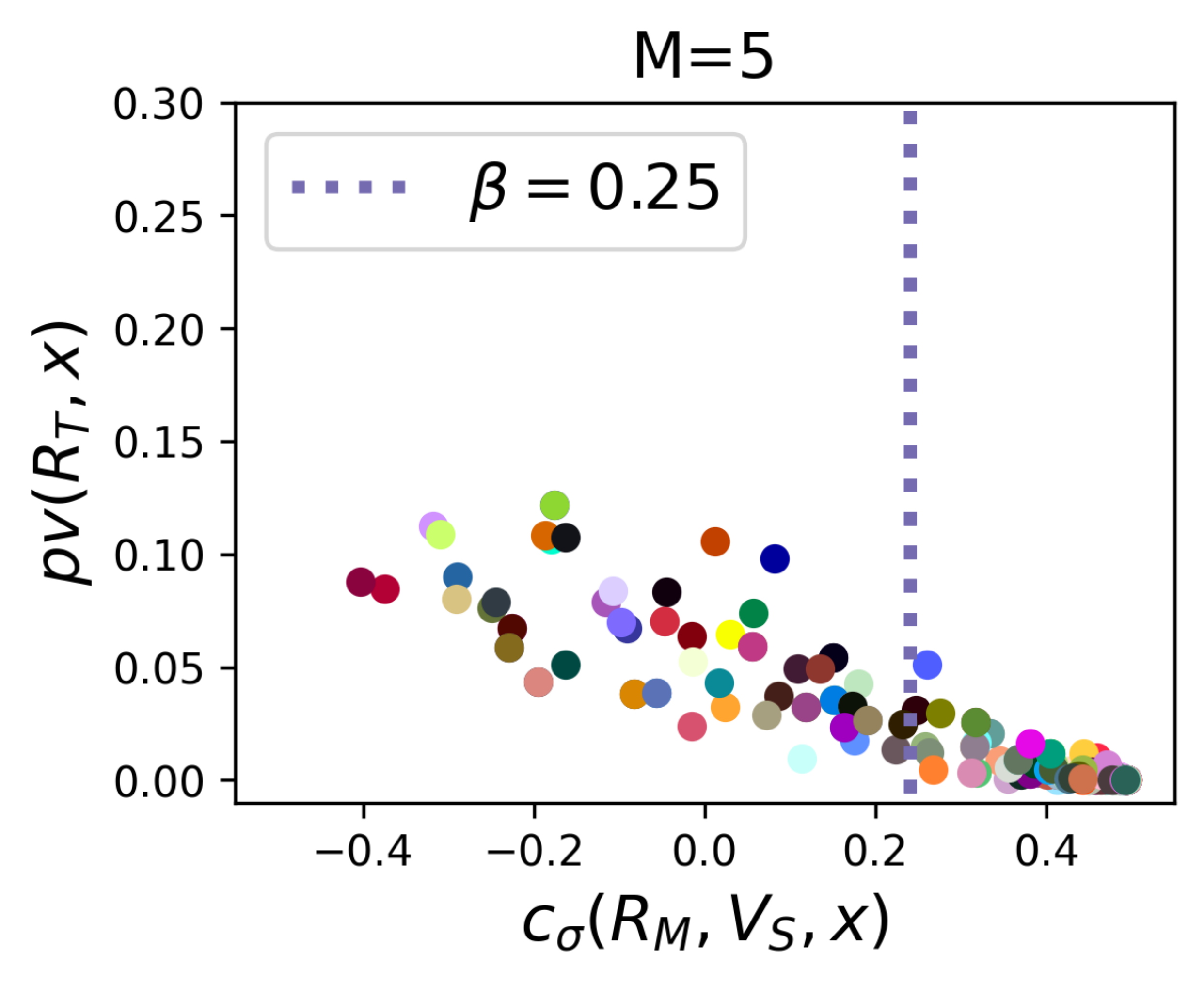}
\includegraphics[width=0.24\linewidth]{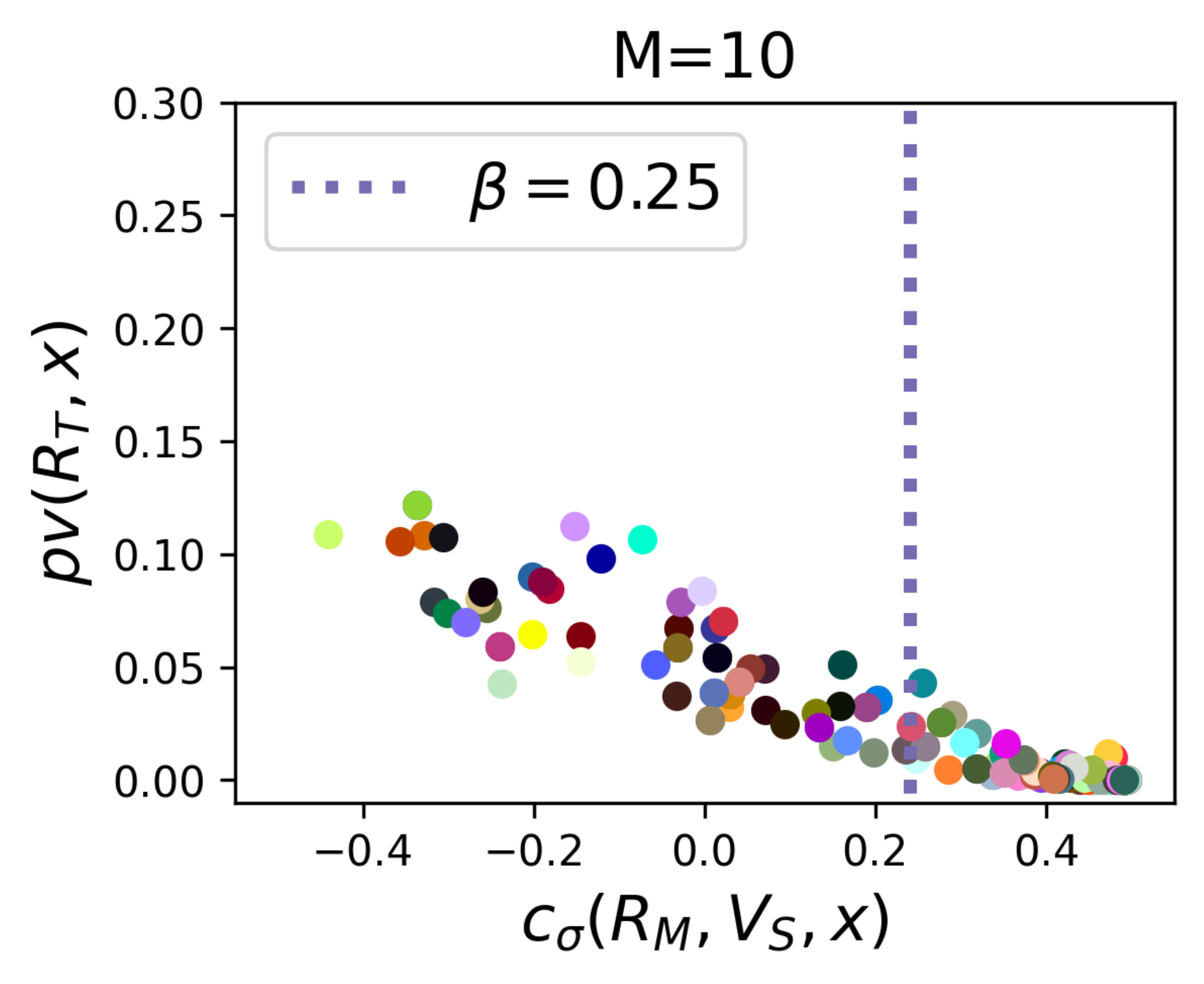}
\end{minipage}

\caption{
Scatter plot showing the relationship between our proposed consistency measure $\consistency$ at a test instance $x$ (x-axis), and four different multiplicity metrics (rows) computed over the Rashomon \largeset{}, and defined in Section~\ref{sec: Multiplicity metrics}, for different retail set sizes (columns). Each colored dot represents one of 200 randomly sampled test instances. Our proposed measure assigns higher scores to test instances with low multiplicity, and this trend becomes clearer as the ensemble size increases. Results are shown for paraphrase detection on the \textbf{MRPC} dataset using an ensemble of independently fine-tuned \textbf{BERT} models. 
}
\label{fig:bert_mrpc_all_metrics}
\end{figure*}

\begin{figure*}[t]

\centering

\begin{minipage}[c]{0.001\textwidth}
\centering
\rotatebox{90}{\scriptsize Discrepancy}
\end{minipage}
\begin{minipage}[c]{0.95\textwidth}
\centering
\includegraphics[width=0.24\linewidth]{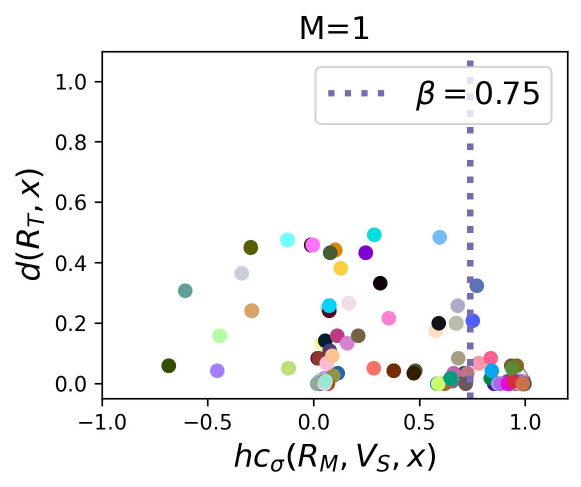}
\includegraphics[width=0.24\linewidth]{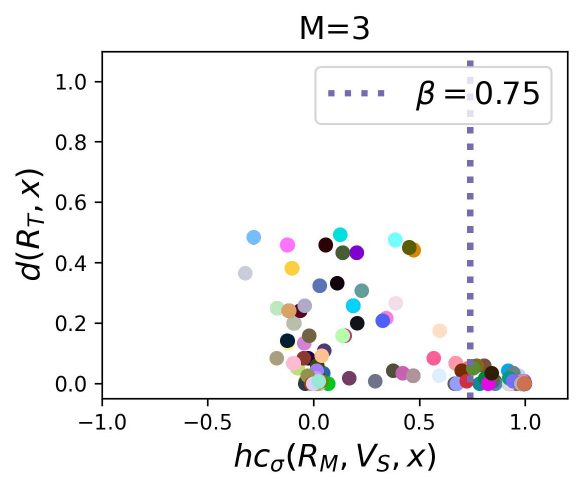}
\includegraphics[width=0.24\linewidth]{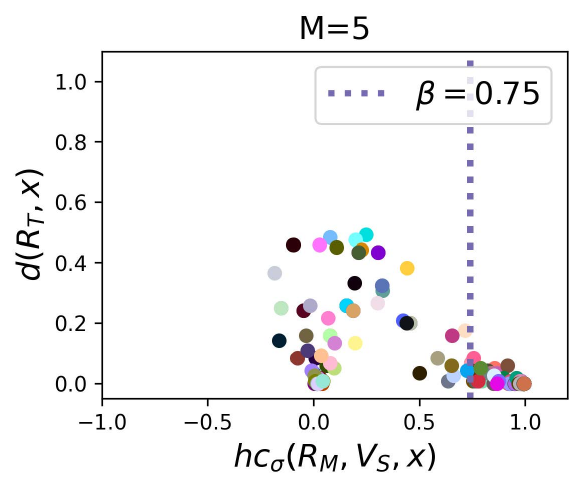}
\includegraphics[width=0.24\linewidth]{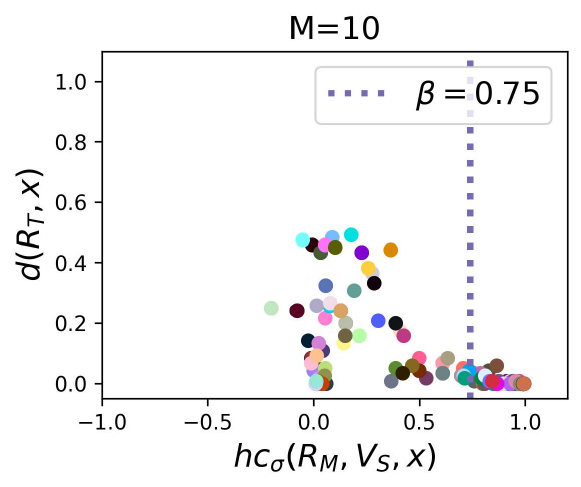}
\end{minipage}

\vspace{1ex}

\begin{minipage}[c]{0.001\textwidth}
\centering
\rotatebox{90}{\scriptsize Local discrepancy}
\end{minipage}
\begin{minipage}[c]{0.95\textwidth}
\centering
\includegraphics[width=0.24\linewidth]{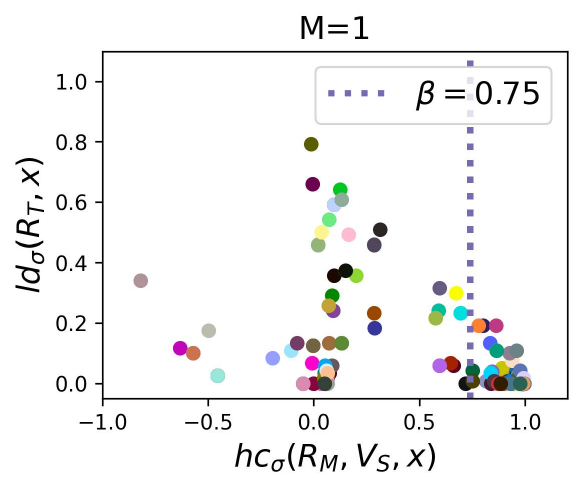}
\includegraphics[width=0.24\linewidth]{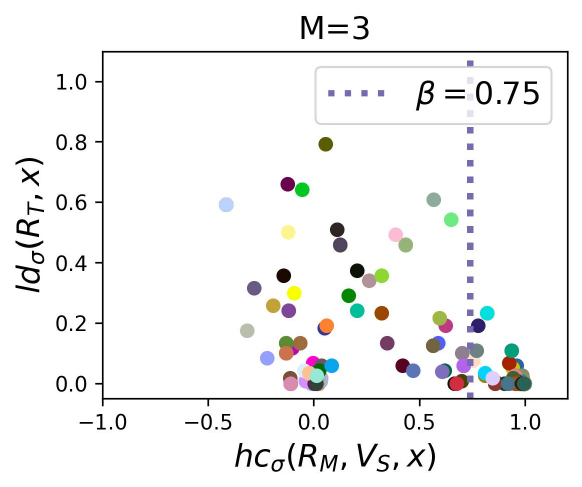}
\includegraphics[width=0.24\linewidth]{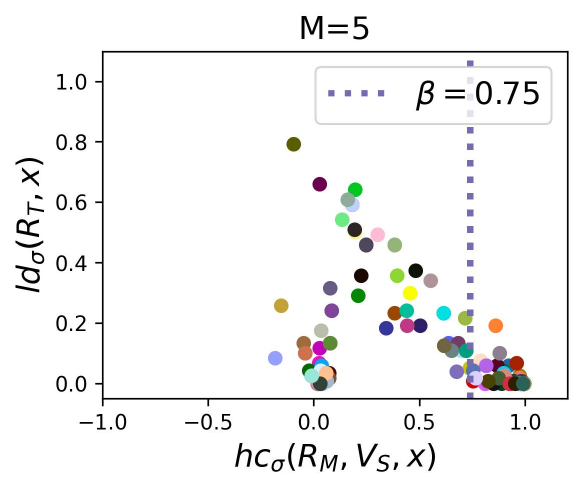}
\includegraphics[width=0.24\linewidth]{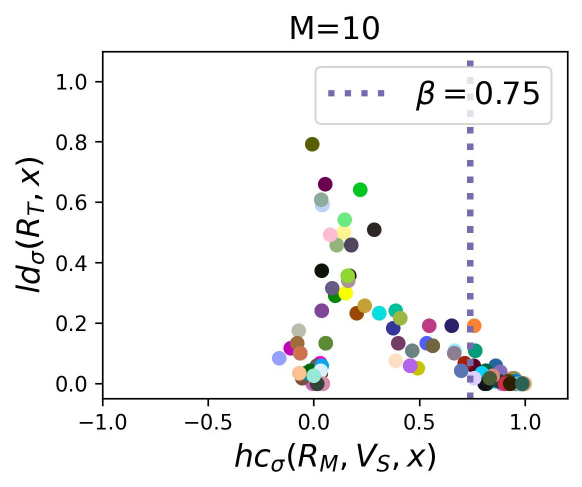}
\end{minipage}

\vspace{1ex}

\begin{minipage}[c]{0.001\textwidth}
\centering
\rotatebox{90}{\scriptsize Prediction range}
\end{minipage}
\begin{minipage}[c]{0.95\textwidth}
\centering
\includegraphics[width=0.24\linewidth]{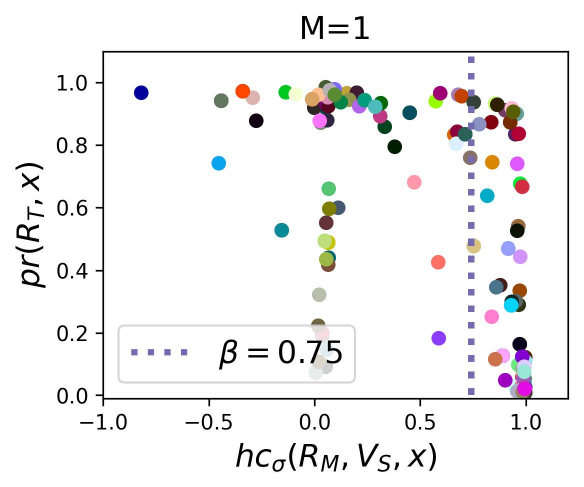}
\includegraphics[width=0.24\linewidth]{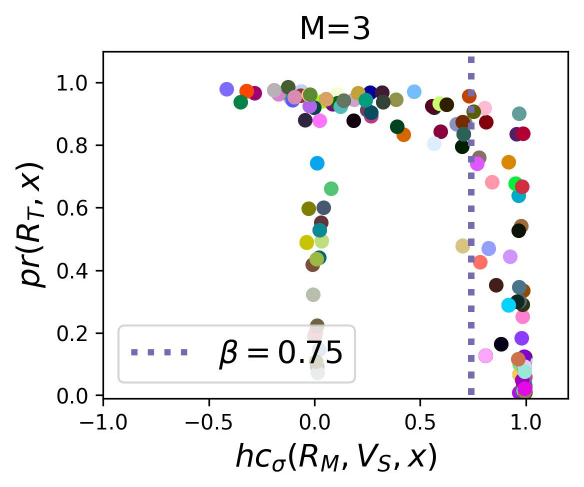}
\includegraphics[width=0.24\linewidth]{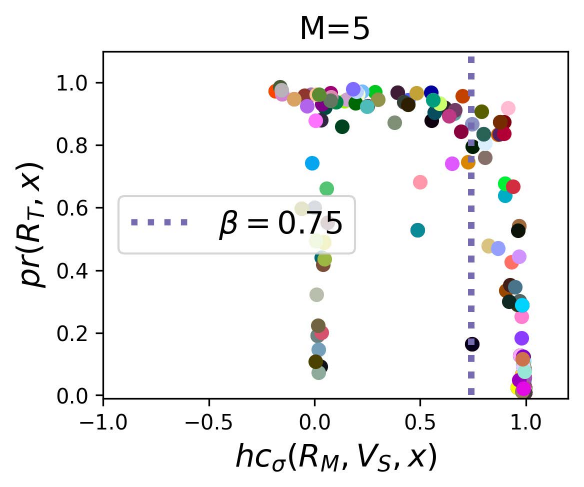}
\includegraphics[width=0.24\linewidth]{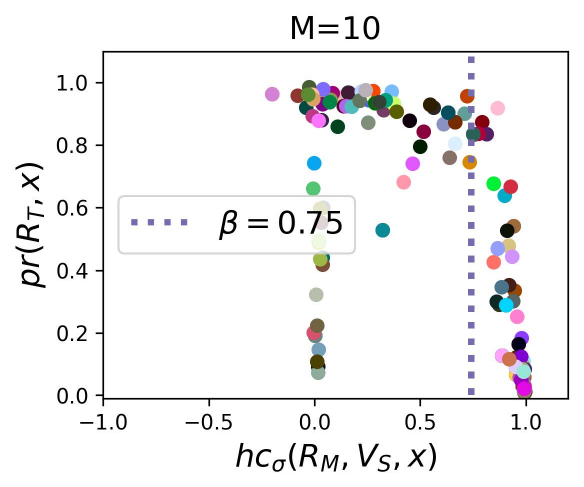}
\end{minipage}

\vspace{1ex}

\begin{minipage}[c]{0.001\textwidth}
\centering
\rotatebox{90}{\scriptsize Prediction variance}
\end{minipage}
\begin{minipage}[c]{0.95\textwidth}
\centering
\includegraphics[width=0.24\linewidth]{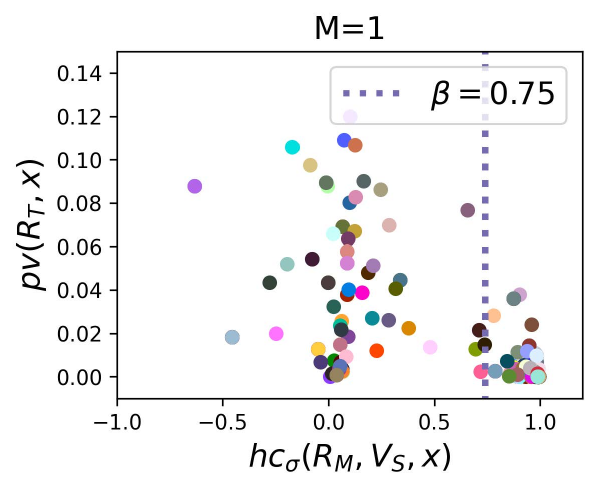}
\includegraphics[width=0.24\linewidth]{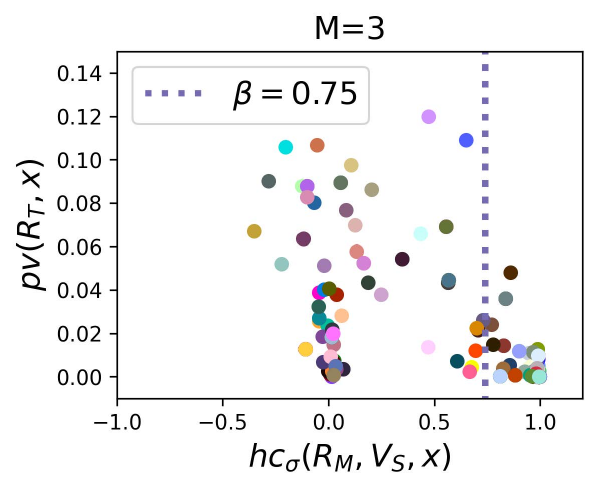}
\includegraphics[width=0.24\linewidth]{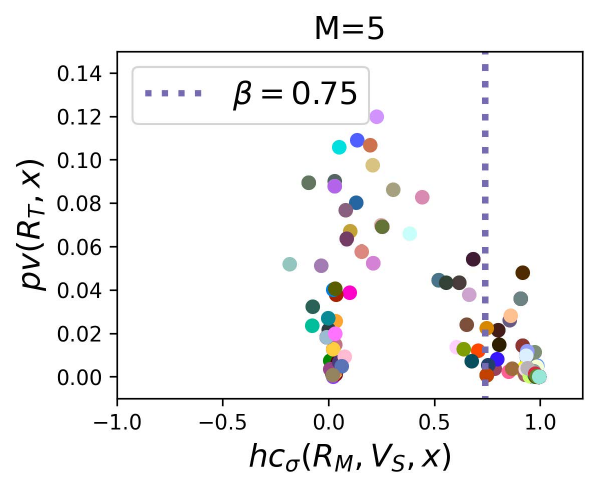}
\includegraphics[width=0.24\linewidth]{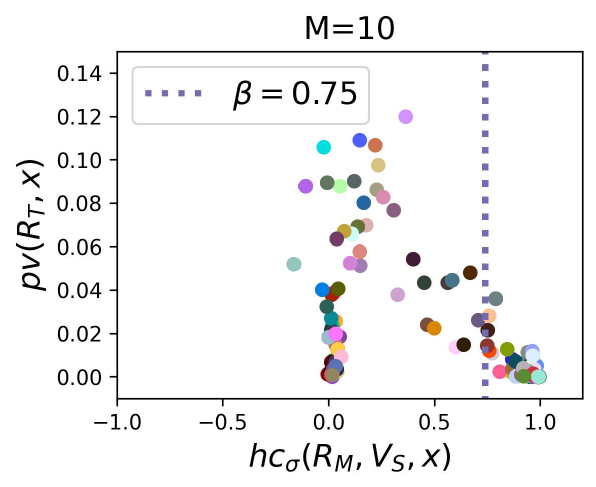}
\end{minipage}
\caption{
Scatter plot showing the relationship between $\hammancons$ from~\citet{hamman2024quantifying}, and four different multiplicity metrics (rows) computed over the Rashomon \largeset, and defined in Section~\ref{sec: Multiplicity metrics}, for different retail set sizes (columns). Each colored dot represents one of 200 randomly sampled test instances. Unlike our proposed measure $\consistency$, the $\hammancons$ measure assigns both high and low scores to test instances with low multiplicity, incorrectly identifying consistent test instances as inconsistent. Results are shown for paraphrase detection on the \textbf{MRPC} dataset using an ensemble of independently fine-tuned \textbf{BERT} models. 
}
\label{fig:bert_mrpc_all_metrics_hamman}
\end{figure*}

\subsubsection{Additional correlation tables from Section \ref{sec: Comp consistency multiplicity}}\label{sec: Comp consistency multiplicity app}

In Section~\ref{sec: Comp consistency multiplicity}, we evaluated whether consistency measures computed from a finite retail ensemble do well in capturing predictive multiplicity metrics computed using the Rashomon \largeset{}. We compared the proposed measure with related measures like margin ($\margin(x)$, Definition~\ref{def: margin}), local consistency ($\localmarginconsistency(c)$, Definition~\ref{def: local consistency}), local margin ($\localmargin(x)$, Definition~\ref{def: Local margin}), local variability ($\localvar(x)$, Definition~\ref{def: Local variability}), and the consistency measure $\hammancons(x)$ of~\citet{hamman2024quantifying}, using Spearman correlation with five warehouse multiplicity metrics: pairwise disagreement \citep{black2022model}, discrepancy \citep{marx2020predictive}, local discrepancy, prediction variance, and prediction range \citep{hamman2024quantifying,watson2023predictive}. Tables~\ref{tab:roberta_mrpc_corr},~\ref{tab:roberta_sst2_corr},~\ref{tab:bank_corr}, and~\ref{tab:adult_corr} extend this analysis to a paraphrase detection task on the MRPC dataset and sentiment classification on the SST2 dataset using independently fine-tuned RoBERTa models, and to a binary classification task on the Bank and Adult datasets using independently fine-tuned BIG-SCIENCE T0 models. Consistency measures are computed from a single retail ensemble of size $\Nretail=1,5,10$, while multiplicity metrics are computed using the full Rashomon warehouse.

Across these settings, we observe the same qualitative trends as in Section~\ref{sec: Comp consistency multiplicity}. The proposed consistency measure exhibits strong negative correlations with warehouse multiplicity metrics, and the magnitudes of these correlations increase with the ensemble size. Thus, higher consistency corresponds to lower predictive multiplicity, while lower consistency identifies instances with greater disagreement across the Rashomon warehouse. The strongest correlations are observed with prediction variance and prediction range, which capture variability in model confidence.

The results also highlight the benefit of combining ensemble margin with local prediction variability. Measures based only on margin or local variability generally exhibit weaker correlations than the proposed consistency measure and local consistency, suggesting that both proximity to the decision boundary and local stability are important for characterizing predictive multiplicity.

\begin{table*}[t]
\caption{Absolute Spearman correlation between consistency measures and multiplicity metrics when applied to the \eqref{tab:roberta_mrpc_corr} paraphrase detection task on the \textbf{MRPC} dataset, and \eqref{tab:roberta_sst2_corr} sentiment classification task on the \textbf{SST2} dataset using an ensemble of independently fine-tuned \textbf{RoBERTa} models. Higher values indicate greater agreement between the consistency measures (defined in Appendix~\ref{sec: Related measures}) computed for small retail ensembles and the multiplicity metrics (defined in Appendix~\ref{sec: Multiplicity metrics}) computed from the entire Rashomon warehouse set. Rows highlighted in light blue correspond to the proposed consistency measures, $\consistency(x)$ and $\localmarginconsistency(x)$.} 
\label{tab:correlation_with_multiplicity_metrics_Roberta}
\centering

\begin{subtable}[t]{0.56\textwidth}
\centering
\scriptsize
\setlength{\tabcolsep}{2.5pt}

\caption{\textbf{MRPC}}
\label{tab:roberta_mrpc_corr}

\resizebox{\linewidth}{!}{
\begin{tabular}{|p{1.6cm}|c|c|c|c|c|}
\hline
\multirow{4}{1.6cm}{\centering
\textbf{Consistency}\\[-0.5mm]
\textbf{measures}\\[-0.5mm]
\textbf{using \smallset}\\[-0.5mm]
\textbf{ensembles}}
&
\multicolumn{5}{c|}{}\\
&
\multicolumn{5}{c|}{\textbf{Multiplicity metrics}}\\
&
\multicolumn{5}{c|}{\textbf{using Rashomon \largeset{}}}\\
\cline{2-6}
&
$\pairwisedisag(x)$ &
$\disagens(x)$ &
$\localdisagens(x)$ &
$\predvar(x)$ &
$\predrange(x)$\\
\hline
\rowcolor{ourgrey}
\multicolumn{6}{|c|}{$\Nretail=1$}\\
\hline
\rowcolor{ourblue}
$\consistency(x)$ & $\mathbf{0.777}$ & $\mathbf{0.777}$ & $\mathbf{0.783}$ & $\mathbf{0.864}$ & $\mathbf{0.844}$\\
\hline
$\margin(x)$ & $0.763$ & $0.763$ & $0.769$ & $0.857$ & $0.835$\\
\hline
\rowcolor{ourblue}
$\localmarginconsistency(x)$ & $0.765$ & $0.765$ & $0.777$ & $0.849$ & $0.829$\\
\hline
$\localmargin(x)$ & $0.749$ & $0.749$ & $0.765$ & $0.838$ & $0.817$\\
\hline
$\localvar(x)$ & $0.734$ & $0.734$ & $0.739$ & $0.800$ & $0.782$\\
\hline
$\hammancons(x)$ & $0.558$ & $0.558$ & $0.561$ & $0.698$ & $0.689$\\
\hline
\rowcolor{ourgrey}
\multicolumn{6}{|c|}{$\Nretail=5$}\\
\hline
\rowcolor{ourblue}
$\consistency(x)$ & $\mathbf{0.875}$ & $\mathbf{0.875}$ & $0.888$ & $\mathbf{0.952}$ & $\mathbf{0.927}$\\
\hline
$\margin(x)$ & $0.873$ & $0.873$ & $0.883$ & $0.951$ & $0.924$\\
\hline
\rowcolor{ourblue}
$\localmarginconsistency(x)$ & $0.864$ & $0.864$ & $0.888$ & $0.942$ & $0.916$\\
\hline
$\localmargin(x)$ & $0.860$ & $0.860$ & $\mathbf{0.888}$ & $0.938$ & $0.911$\\
\hline
$\localvar(x)$ & $0.808$ & $0.808$ & $0.829$ & $0.889$ & $0.872$\\
\hline
$\hammancons(x)$ & $0.582$ & $0.582$ & $0.595$ & $0.733$ & $0.723$\\
\hline
\rowcolor{ourgrey}
\multicolumn{6}{|c|}{$\Nretail=10$}\\
\hline
\rowcolor{ourblue}
$\consistency(x)$ & $\mathbf{0.891}$ & $\mathbf{0.891}$ & $0.896$ & $\mathbf{0.968}$ & $\mathbf{0.944}$\\
\hline
$\margin(x)$ & $0.887$ & $0.887$ & $0.887$ & $0.966$ & $0.941$\\
\hline
\rowcolor{ourblue}
$\localmarginconsistency(x)$ & $0.882$ & $0.882$ & $\mathbf{0.902}$ & $0.960$ & $0.933$\\
\hline
$\localmargin(x)$ & $0.877$ & $0.877$ & $0.899$ & $0.956$ & $0.928$\\
\hline
$\localvar(x)$ & $0.858$ & $0.858$ & $0.878$ & $0.937$ & $0.918$\\
\hline
$\hammancons(x)$ & $0.588$ & $0.588$ & $0.599$ & $0.743$ & $0.734$\\
\hline
\end{tabular}
}
\end{subtable}
\begin{subtable}[t]{0.39\textwidth}
\centering
\scriptsize
\setlength{\tabcolsep}{2.5pt}

\caption{\textbf{SST2}}
\label{tab:roberta_sst2_corr}

\resizebox{\linewidth}{!}{%
\begin{tabular}{|c|c|c|c|c|}
\hline
\multicolumn{5}{|c|}{}\\
\multicolumn{5}{|c|}{\textbf{Multiplicity metrics}}\\
\multicolumn{5}{|c|}{\textbf{using Rashomon \largeset{}}}\\
\hline
$\pairwisedisag(x)$ &
$\disagens(x)$ &
$\localdisagens(x)$ &
$\predvar(x)$ &
$\predrange(x)$\\
\hline
\rowcolor{ourgrey}
\multicolumn{5}{|c|}{$\Nretail=1$}\\
\hline
\rowcolor{ourblue}
$\mathbf{0.661}$ & $\mathbf{0.661}$ & $0.679$ & $\mathbf{0.866}$ & $\mathbf{0.845}$\\
\hline
$0.631$ & $0.631$ & $0.646$ & $0.845$ & $0.820$\\
\hline
\rowcolor{ourblue}
$\mathbf{0.665}$ & $\mathbf{0.665}$ & $\mathbf{0.683}$ & $0.852$ & $0.832$\\
\hline
$0.639$ & $0.639$ & $0.666$ & $0.821$ & $0.802$\\
\hline
$0.647$ & $0.647$ & $0.664$ & $0.819$ & $0.797$\\
\hline
$0.287$ & $0.287$ & $0.295$ & $0.530$ & $0.507$\\
\hline
\rowcolor{ourgrey}
\multicolumn{5}{|c|}{$\Nretail=5$}\\
\hline
\rowcolor{ourblue}
$\mathbf{0.711}$ & $\mathbf{0.711}$ & $0.725$ & $\mathbf{0.941}$ & $\mathbf{0.922}$\\
\hline
$0.698$ & $0.698$ & $0.703$ & $0.897$ & $0.872$\\
\hline
\rowcolor{ourblue}
$0.705$ & $0.705$ & $\mathbf{0.726}$ & $0.937$ & $0.922$\\
\hline
$0.697$ & $0.697$ & $0.721$ & $0.930$ & $0.914$\\
\hline
$0.703$ & $0.703$ & $0.720$ & $0.925$ & $0.910$\\
\hline
$0.210$ & $0.210$ & $0.210$ & $0.475$ & $0.452$\\
\hline
\rowcolor{ourgrey}
\multicolumn{5}{|c|}{$\Nretail=10$}\\
\hline
\rowcolor{ourblue}
$\mathbf{0.730}$ & $\mathbf{0.730}$ & $0.749$ & $\mathbf{0.962}$ & $0.943$\\
\hline
$0.727$ & $0.727$ & $0.734$ & $0.929$ & $0.908$\\
\hline
\rowcolor{ourblue}
$\mathbf{0.725}$ & $\mathbf{0.725}$ & $\mathbf{0.750}$ & $0.960$ & $\mathbf{0.944}$\\
\hline
$0.711$ & $0.711$ & $0.738$ & $0.947$ & $0.932$\\
\hline
$0.723$ & $0.723$ & $0.747$ & $0.959$ & $0.943$\\
\hline
$0.284$ & $0.284$ & $0.301$ & $0.563$ & $0.541$\\
\hline
\end{tabular}}
\end{subtable}
\end{table*}

\begin{table*}[t]
\caption{Absolute Spearman correlation between consistency measures and multiplicity metrics when applied to a binary classification task on the \eqref{tab:bank_corr} \textbf{Bank} dataset, and the \eqref{tab:adult_corr} \textbf{Adult} dataset using an ensemble of independently fine-tuned \textbf{BIG-SCIENCE T0} models. Higher values indicate greater agreement between the consistency measures (defined in Appendix~\ref{sec: Related measures}) computed for small retail ensembles and the multiplicity metrics (defined in Appendix~\ref{sec: Multiplicity metrics}) computed from the entire Rashomon warehouse set. Rows highlighted in light blue correspond to the proposed consistency measures, $\consistency(x)$ and $\localmarginconsistency(x)$.}

\label{tab:correlation_with_multiplicity_metrics_bigscience}
\centering

\begin{subtable}[t]{0.558\textwidth}
\centering
\scriptsize
\setlength{\tabcolsep}{2.5pt}

\caption{\textbf{Bank}}
\label{tab:bank_corr}

\resizebox{\linewidth}{!}{
\begin{tabular}{|p{1.6cm}|c|c|c|c|c|}
\hline
\multirow{4}{1.6cm}{\centering
\textbf{Consistency}\\[-0.5mm]
\textbf{measures}\\[-0.5mm]
\textbf{using \smallset}\\[-0.5mm]
\textbf{ensembles}}
&
\multicolumn{5}{c|}{}\\
&
\multicolumn{5}{c|}{\textbf{Multiplicity metrics}}\\
&
\multicolumn{5}{c|}{\textbf{using Rashomon \largeset{}}}\\
\cline{2-6}
&
$\pairwisedisag(x)$ &
$\disagens(x)$ &
$\localdisagens(x)$ &
$\predvar(x)$ &
$\predrange(x)$\\
\hline
\rowcolor{ourgrey}
\multicolumn{6}{|c|}{$\Nretail=1$}\\
\hline
\rowcolor{ourblue}
$\consistency(x)$ & $0.792$ & $0.792$ & $0.803$ & $0.892$ & $0.869$\\
\hline
$\margin(x)$ & $0.791$ & $0.791$ & $0.802$ & $0.890$ & $0.868$\\
\hline
\rowcolor{ourblue}
$\localmarginconsistency(x)$ & $\mathbf{0.795}$ & $\mathbf{0.795}$ & $\mathbf{0.807}$ & $\mathbf{0.895}$ & $\mathbf{0.873}$\\
\hline
$\localmargin(x)$ & $0.794$ & $0.794$ & $0.806$ & $0.893$ & $0.871$\\
\hline
$\localvar(x)$ & $0.530$ & $0.530$ & $0.533$ & $0.646$ & $0.625$\\
\hline
$\hammancons(x)$ & $0.068$ & $0.068$ & $0.057$ & $0.126$ & $0.118$\\
\hline
\rowcolor{ourgrey}
\multicolumn{6}{|c|}{$\Nretail=5$}\\
\hline
\rowcolor{ourblue}
$\consistency(x)$ & $0.840$ & $0.840$ & $0.857$ & $0.940$ & $0.919$\\
\hline
$\margin(x)$ & $0.840$ & $0.840$ & $0.856$ & $0.939$ & $0.918$\\
\hline
\rowcolor{ourblue}
$\localmarginconsistency(x)$ & $\mathbf{0.856}$ & $\mathbf{0.856}$ & $\mathbf{0.869}$ & $\mathbf{0.957}$ & $\mathbf{0.933}$\\
\hline
$\localmargin(x)$ & $0.856$ & $0.856$ & $0.869$ & $0.956$ & $0.932$\\
\hline
$\localvar(x)$ & $0.631$ & $0.631$ & $0.647$ & $0.763$ & $0.745$\\
\hline
$\hammancons(x)$ & $0.061$ & $0.061$ & $0.048$ & $0.120$ & $0.112$\\
\hline
\rowcolor{ourgrey}
\multicolumn{6}{|c|}{$\Nretail=10$}\\
\hline
\rowcolor{ourblue}
$\consistency(x)$ & $\mathbf{0.859}$ & $\mathbf{0.859}$ & $\mathbf{0.875}$ & $\mathbf{0.950}$ & $\mathbf{0.932}$\\
\hline
$\margin(x)$ & $0.858$ & $0.858$ & $0.875$ & $0.949$ & $0.932$\\
\hline
\rowcolor{ourblue}
$\localmarginconsistency(x)$ & $\mathbf{0.859}$ & $\mathbf{0.859}$ & $\mathbf{0.875}$ & $\mathbf{0.950}$ & $\mathbf{0.932}$\\
\hline
$\localmargin(x)$ & $0.858$ & $0.858$ & $0.875$ & $0.948$ & $0.931$\\
\hline
$\localvar(x)$ & $0.666$ & $0.666$ & $0.681$ & $0.783$ & $0.771$\\
\hline
$\hammancons(x)$ & $0.058$ & $0.058$ & $0.046$ & $0.116$ & $0.110$\\
\hline
\end{tabular}
}
\end{subtable}%
\begin{subtable}[t]{0.39\textwidth}
\centering
\scriptsize
\setlength{\tabcolsep}{2.5pt}

\caption{\textbf{Adult}}
\label{tab:adult_corr}

\resizebox{\linewidth}{!}{%
\begin{tabular}{|c|c|c|c|c|}
\hline
\multicolumn{5}{|c|}{}\\
\multicolumn{5}{|c|}{\textbf{Multiplicity metrics}}\\
\multicolumn{5}{|c|}{\textbf{using Rashomon \largeset{}}}\\
\hline
$\pairwisedisag(x)$ &
$\disagens(x)$ &
$\localdisagens(x)$ &
$\predvar(x)$ &
$\predrange(x)$\\
\hline
\rowcolor{ourgrey}
\multicolumn{5}{|c|}{$\Nretail=1$}\\
\hline
\rowcolor{ourblue}
$\mathbf{0.734}$ & $\mathbf{0.733}$ & $\mathbf{0.779}$ & $\mathbf{0.835}$ & $\mathbf{0.828}$\\
\hline
$0.728$ & $0.728$ & $0.750$ & $0.833$ & $0.823$\\
\hline
\rowcolor{ourblue}
$0.713$ & $0.713$ & $\mathbf{0.775}$ & $0.819$ & $0.810$\\
\hline
$0.704$ & $0.704$ & $0.753$ & $0.813$ & $0.801$\\
\hline
$0.592$ & $0.592$ & $0.710$ & $0.694$ & $0.701$\\
\hline
$0.011$ & $0.011$ & $0.045$ & $0.121$ & $0.125$\\
\hline
\rowcolor{ourgrey}
\multicolumn{5}{|c|}{$\Nretail=5$}\\
\hline
\rowcolor{ourblue}
$0.797$ & $0.797$ & $0.823$ & $0.876$ & $0.871$\\
\hline
$\mathbf{0.802}$ & $\mathbf{0.802}$ & $0.800$ & $\mathbf{0.884}$ & $\mathbf{0.876}$\\
\hline
\rowcolor{ourblue}
$0.796$ & $0.796$ & $\mathbf{0.839}$ & $0.870$ & $0.865$\\
\hline
$0.801$ & $0.801$ & $0.827$ & $0.876$ & $0.868$\\
\hline
$0.538$ & $0.538$ & $0.667$ & $0.653$ & $0.661$\\
\hline
$0.009$ & $0.009$ & $0.047$ & $0.132$ & $0.137$\\
\hline
\rowcolor{ourgrey}
\multicolumn{5}{|c|}{$\Nretail=10$}\\
\hline
\rowcolor{ourblue}
$0.865$ & $0.865$ & $0.885$ & $0.946$ & $0.940$\\
\hline
$\mathbf{0.869}$ & $\mathbf{0.869}$ & $0.868$ & $\mathbf{0.950}$ & $\mathbf{0.941}$\\
\hline
\rowcolor{ourblue}
$0.859$ & $0.859$ & $\mathbf{0.895}$ & $0.940$ & $0.932$\\
\hline
$0.864$ & $0.864$ & $0.886$ & $0.944$ & $0.933$\\
\hline
$0.628$ & $0.628$ & $0.744$ & $0.737$ & $0.743$\\
\hline
$0.005$ & $0.005$ & $0.050$ & $0.135$ & $0.137$\\
\hline
\end{tabular}}
\end{subtable}

\end{table*}
\end{document}